\newtheorem{assumption}{Assumption}[section]
\newcommand{\pref}[1]{\prettyref{#1}}
\newcommand{\savehyperref}[2]{\texorpdfstring{\hyperref[#1]{#2}}{#2}}
\newmdtheoremenv{test}{Test}
\definecolor{DarkRed}{rgb}{0.75,0,0}
\definecolor{DarkGreen}{rgb}{0,0.5,0}
\definecolor{DarkPurple}{rgb}{0.5,0,0.5}
\definecolor{DarkBlue}{rgb}{0,0,0.7}
\newcommand{\pushright}[1]{\ifmeasuring@#1\else\omit\hfill$\displaystyle#1$\fi\ignorespaces}
\newcommand{\pushleft}[1]{\ifmeasuring@#1\else\omit$\displaystyle#1$\hfill\fi\ignorespaces}
\def\ddefloop#1{\ifx\ddefloop#1\else\ddef{#1}\expandafter\ddefloop\fi}
\def\ddef#1{\expandafter\def\csname bb#1\endcsname{\ensuremath{\mathbb{#1}}}}
\def\ddefloop#1{\ifx\ddefloop#1\else\ddef{#1}\expandafter\ddefloop\fi}
\def\ddef#1{\expandafter\def\csname b#1\endcsname{\ensuremath{\mathbf{#1}}}}
\def\ddef#1{\expandafter\def\csname c#1\endcsname{\ensuremath{\mathcal{#1}}}}
\def\ddef#1{\expandafter\def\csname h#1\endcsname{\ensuremath{\widehat{#1}}}}
\def\ddef#1{\expandafter\def\csname hc#1\endcsname{\ensuremath{\widehat{\mathcal{#1}}}}}
\def\ddef#1{\expandafter\def\csname t#1\endcsname{\ensuremath{\widetilde{#1}}}}
\def\ddef#1{\expandafter\def\csname tc#1\endcsname{\ensuremath{\widetilde{\mathcal{#1}}}}}
\newcommand{\poly}{\text{poly}}
\newcommand{\alg}[1][]{\ifthenelse{\equal{#1}{}}{\bbA}{\bbA_{#1}}}
\newcommand{\master}{\bbM}
\newcommand{\env}{\bbB}
\newcommand{\regret}{\operatorname{Reg}}
\newcommand{\pseudoregret}{\operatorname{PseudoReg}}
\newcommand{\maxrew}{\operatorname{MaxRew}}
\newcommand{\tildecrew}{\widetilde{\operatorname{CRew}}}
\newcommand{\crew}{\mathrm{CRew}}
\newcommand{\indicator}[1]{\mathbf{1}\left\{ #1\right\}}
\newcommand{\conc}[1][]{\ifthenelse{\equal{#1}{}}{\mathrm{D}}{\mathrm{D}_{#1}}}
\newcommand{\postdeadline}[1]{#1} %
\newcommand\numberthis{\addtocounter{equation}{1}\tag{\theequation}}
\title{Best of Both Worlds Model Selection}
\author{%
 Aldo Pacchiano \\
  Microsoft Research, NYC \\
  \texttt{apacchiano@microsoft.com} \\
  \And
  Christoph Dann \\
  Google, NYC \\
  \texttt{cdann@cdann.net}\\
  \And
  Claudio Gentile \\
  Google, NYC \\
  \texttt{cgentile@google.com}
}
\begin{document}

\maketitle

\begin{abstract}
We study the problem of model selection in bandit scenarios in the presence of nested policy classes, with the goal of 
obtaining simultaneous adversarial and stochastic (``best of both worlds") high-probability regret guarantees. 
Our approach requires that each base learner comes with a candidate regret bound that may or may not hold, while our meta algorithm plays each base learner according to a schedule that keeps the base learner's candidate regret bounds balanced until they are detected to violate their guarantees. We develop careful mis-specification tests specifically designed to blend the above model selection criterion with the ability to leverage the (potentially benign) nature of the environment.
We recover the model selection guarantees of the CORRAL~\citep{agarwal2017corralling} algorithm for adversarial environments, but with the additional benefit of achieving high probability regret bounds, specifically in the case of nested adversarial linear bandits. More importantly, our model selection results also hold simultaneously in stochastic environments under gap assumptions. These are the first theoretical results that achieve best of both world (stochastic and adversarial) guarantees while performing model selection in (linear) bandit scenarios. 
\end{abstract}

\section{Introduction}
A fundamental challenge in sequential decision-making is the ability of the learning agent to adapt to the unknown properties of the environment they interact with. While adversarial environments may require some caution, we would also like to leverage situations where more benign scenarios may be disclosed as a result of this interaction. 

In the literature on multi-armed bandits, this adaptation capabilities has often taken two forms:\\
\textbf{Best-of-both worlds guarantees} which were pioneered by \cite{bs12} and subsequently studied by a number of authors \citep[e.g.][]{ss14,ac16,sl17,wl18,zs19,DBLP:conf/icml/LeeLWZ021}. Here, the goal is to design algorithms achieving both stochastic and adversarial environment guarantees simultaneously, without knowing the type of environment in advance.
Similar in spirit is the stream of literature on stochastic rewards with adversarial corruptions \citep{lmpl18,gkt19,zs21,DBLP:conf/icml/LeeLWZ021,wei2021model}, where an adversary is assumed to corrupt the stochastic rewards observed by the algorithm, and the regret guarantees are expected to degrade gracefully with the total amount of corruption, without knowing this amount in advance.\\
\textbf{Model selection guarantees} which were initiated by \cite{agarwal2017corralling}, and subject since then to intense investigations \citep[e.g.][]{foster2019model,abbasi2020regret,pacchiano2020model,arora2020corralling, ghosh2020problem,chatterji2020osom, bibaut2020rate,foster2020adapting,lee2020online,pacchiano2020regret,pmlr-v139-cutkosky21a}. Here, we assume we have access to a pool of $M$ base bandit algorithms each operating, say, within a different class of models or under different assumptions on the environment, and the aim is to design a bandit meta-algorithm that learns to simulate the best base algorithm in hindsight, without knowing in advance which one will be best for the environment at hand. This approach has often been used in the literature to capture bandit model selection problems. In fact, a natural instantiation of this framework is when we have a sequence of nested policy classes,
and the goal is to single out the best policy %
within this nested family, by paying as price only the complexity %
of the policy class 
the optimal policy 
falls into.
In some sense, Item 2 is more general than Item 1, since one may attempt to achieve a best-of-both-world performance by pooling a stochastic bandit algorithm with an adversarial bandit one, and expect the meta-algorithm on top of them to eventually learn to follow one of the two. Similarly, in the setting of stochastic rewards with adversarial corruptions, one can pool together stochastic algorithms operating with increasing guessed levels of corruptions, and let the meta-algorithm learn to single out the one corresponding to the true corruption level.\footnote
{
Recent relevant papers, working with model mis-specification instead of reward corruption, include \cite{pacchiano2020model,foster2020adapting}.
}

In this paper, we combine the two items above into a bandit algorithm that exhibits both model selection and best-of-both worlds regret guarantees simultaneously. Our framework encompasses in particular a well-known linear bandit model selection scenario, 
where the action set $\mathcal{A}$ is a (finite but large) subset of $\mathbb{R}^{d_M}$, for some maximal dimension $d_M>0$, and we deal with a hierarchy of possible dimensions
$d_1 < \ldots < d_M$. 
At time $t$, the learner plays an action $\mathbf{a}_t \in \mathcal{A}$ and receives as reward either $r_t = \mathbf{a}_t^\top \boldsymbol{\omega}_t$, where $\boldsymbol{\omega}_t \in \mathbb{R}^{d_{M}}$ is an adversarially-generated reward vector (adversarial setting), or $r_t = \mathbf{a}_t^\top \boldsymbol{\omega}$ + white noise, where $\boldsymbol{\omega} \in \mathbb{R}^{d_{M}}$ is a fixed but unknown reward vector (stochastic setting).
Associated with dimensions $d_1 < \ldots < d_M$ are a nested family of policy classes $\Pi_1 \subset \ldots \subset \Pi_M$, and $M$ base algorithms $\alg[1], \ldots, \alg[M]$, where the $i$-th algorithm $\alg[i]$ is a linear bandit algorithm that works under the assumption that only the first $d_{i}$ components of $\boldsymbol{\omega}_t$ (or $\boldsymbol{\omega}$) are nonzero. Hence $\alg[i]$ operates with dimension $d_i$ in that all policies $\pi \in \Pi_i$ are probability distributions which are projections over action set $\mathcal{A}_i$ of the set of probability distributions in $\Pi_M$. Here, 
$\mathcal{A}_i$ is the projection of the full-dimensional action set $\mathcal{A}$ onto its first $d_i$ dimensions. Notice that
this implies that the policy classes are nested: $\Pi_1 \subseteq \Pi_2 \ldots \subseteq \Pi_M$.
If only the first $d_{i_\star}$ dimensions ($i_\star$ being unknown to the learner) of each $\boldsymbol{\omega}_t$ (adversarial setting) or 
$\boldsymbol{\omega}$ (stochastic setting) are nonzero, we design an algorithm whose regret bound scales as ${\mbox{poly}}(d_{i_\star})\sqrt{T}$ in the adversarial case and, simultaneously, as $\frac{{\mbox{poly}}(d_M)\log T}{\Delta}$ if the environment happens to be stochastic.
Specific consideration is given to the dependence on $d_{i_\star}$. We show through a lower bounding argument that in the stochastic case a guarantee of the form $\frac{d_M\,\log T}{\Delta}$, that is, where $d_M$ replaces the more desirable factor $d_{i_\star}$, is {\em inevitable}, if we want to insist on obtaining a $\frac{\log T}{\Delta}$-like result. Thus our bounds reflect the best achievable regret rates depending on $d_{M}$ in the stochastic case and on ${\mbox{poly}}(d_{i_\star})$ in the adversarial case.

In order to achieve these best-of-both-world results one cannot easily rely on general corralling techniques, like the one contained in \citep{agarwal2017corralling}, since the granularity offered by adversarial aggregation algorithms is no better than $\sqrt{T}$, which is not adequate for stochastic settings; hence our choice towards the model selection technique known as {\em regret balancing}. Yet, even in this case, the literature does not provide off-the-shelf solutions:
We first need to extend regret balancing from stochastic rewards \citep{pmlr-v139-cutkosky21a, pacchiano2020regret, abbasi2020regret} and corrupted stochastic rewards \citep{wei2021model} to adversarial rewards. This involves adding extra actions to the action space at a given level of the hierarchy so as to enable the base learner operating at that level to compete with higher levels.
This seems to be needed because in the adversarial case, base algorithms may even incur a negative regret, thus making it hard to compare
the relative performance of algorithms operating at different levels during the regret balancing operations. Specific technical hurdles arise in the linear bandit case, where mis-specification may cause low-level base learners to behave in a maliciously erratic way. In this case, mis-specification tests have to be designed with care
so as to ensure that base learners that are mis-specified but not yet eliminated incur manageable regret. At the same time, these tests should also detect as soon as possible if the environment is stochastic through ad hoc regret balancing gap estimation procedures.

For our results to hold, some technical conditions are required on the base algorithms, like a notion of (high probability) stability and a notion of action space extendability, formally defined in Section \ref{sec:setup}. We show that these conditions are fulfilled by known algorithms, like an anytime variant of the GeometricHedge.P algorithm in \cite{bartlett2008high}, and a high-probability variant of the EXP4 algorithm in \cite{a+03}.

\noindent{\bf Related work. }
Among the references we mentioned above, those which are most relevant to our work, as directly related to model selection and best-of-both-worlds guarantees are perhaps \citep{arora2020corralling,pmlr-v139-cutkosky21a,DBLP:conf/icml/LeeLWZ021,wei2021model}.
In both \cite{arora2020corralling} and \cite{pmlr-v139-cutkosky21a}, model selection regret guarantees for stochastic contextual bandits 
are given which take the form ${\mbox{poly}}(d_{i_\star})\sqrt{T}$. Yet, no best-of-both world model selection results are contained in those papers. Closer in spirit are 
\cite{DBLP:conf/icml/LeeLWZ021} and \cite{wei2021model}. In \cite{DBLP:conf/icml/LeeLWZ021}, the authors consider model selection problems on top of adversarially corrupted stochastic linear bandit problems, where the total level of corruption $C$ is unknown in advance.
It is worth stressing that this is a model selection problem which is substantially different (and actually easier, see Section \ref{s:advbalancing}) than ours, as the selection applies to $C$ instead of the dimensions $d_i$.
In fact, the model selection procedure in \cite{DBLP:conf/icml/LeeLWZ021} looks very different from ours, and can be roughly seen as a robust version of a doubling trick applied to $C$.
These results are extended in \cite{wei2021model}, where the authors also consider RL settings, and more general forms of corruptions than \cite{DBLP:conf/icml/LeeLWZ021}. Like ours, their model selection algorithm follows the idea of regret balancing and elimination of \cite{abbasi2020regret,pmlr-v139-cutkosky21a}.
On the lower bound side, relevant papers include \cite{zn21,mz21}, where the authors investigate the Pareto frontier of model selection for (contextual) stochastic bandits. It is shown in particular that a regret upper bound of the form %
$\sqrt{d_{i_\star}T}$ 
cannot be achieved. However, these papers do not explicitly cover gap-dependent regret guarantees.

\section{Problem Setting and Assumptions}\label{sec:setup}
We start off by defining the adversarial scenario. An adversarial contextual bandit problem is a repeated game between a learner $\alg$  and an environment $\bbB$.
We consider the general decision-making scenario where the learner $\alg$ has at its disposal a class of policies $\Pi_{\alg}$ made up of functions $\pi$ of the form $\pi : \mathcal{X} \rightarrow \Delta_{\mathcal{A}}$, where $\Delta_{\mathcal{A}}$ denotes the set of probability distributions over ${\mathcal A}$. At each round $t$, the interaction between $\alg$ and $\bbB$ is as follows:
\begin{enumerate}
    \item Learner $\alg$ selects a policy $\pi_t \in \Pi_{\alg}$ 
    \vspace{-0.03in}
    \item Simultaneously, the environment $\bbB$ selects context $x_t \in \mathcal{X}$ and reward function $r_t\,:\, \mathcal{A} \times \mathcal{X} \rightarrow [0,1]$, and reveals $x_t$ to $\alg$
    \vspace{-0.03in}
    \item Learner $\alg$ takes an action  $a_t \sim \pi_t(\cdot\, |\, x_t)$
    and observes reward $o_t = r_t(a_t,x_t)$. 
\end{enumerate}
The regret $\regret_{\alg}(\cT, \Pi')$ of algorithm $\alg$ in rounds $\cT \subseteq \bbN$ against a policy class $\Pi'$ is the difference between the learner's accumulated reward in $\cT$ and the performance of the best {\em fixed} policy from $\Pi'$:
\begin{equation*}
   \regret_{\alg}(\cT, \Pi') = \max_{\pi \in \Pi'} \sum_{\ell \in \cT} \left[\bbE_{a \sim \pi}\left[ r_\ell( a,x_\ell ) \right] - r_\ell(a_\ell,x_\ell)\right]~,
\end{equation*}
where $\bbE_{a \sim \pi}[r_\ell(a, x_\ell)] = \sum_{a \in \cA} \pi(a | x_\ell) r_\ell(a, x_\ell)$ denotes the expectation over the action drawn from the given policy $\pi$ which may itself be a random quantity.
For ease of notation, we will omit the comparator policy class when $\Pi' = \Pi_{\alg}$ is the policy class of $\alg$ and replace $\cT$ by $t$ when $\cT = [t] := \{1,\ldots, t\}$. Hence, $\regret_{\alg}(t)$ is the regret of $\alg$ against its own policy class up to round $t$.

The stochastic scenario we consider is similar to the above, except in the way contexts $x_t$ and reward values $r_t(a_t,x_t)$ are generated. Specifically, $\env$ generates contexts $x_t$ in an i.i.d. fashion according to a fixed (but arbitrary and unknown) distribution ${\mathcal D}$ over ${\mathcal X}$, while the reward $r_t(a_t,x_t)$ is such that for all fixed $(a,x) \in {\mathcal A}\times {\mathcal X}$, we have $\bbE[r_t(a,x)] = r(a,x)$, for some fixed function $r(\cdot,\cdot)$ in some known class of reward functions. In this case, we measure performance through pseudo-regret
\begin{equation*}
   \pseudoregret_{\alg}(\cT, \Pi') = \max_{\pi \in \Pi'} \sum_{\ell \in \cT}  \left[\bbE_{x \sim \mathcal{D},a \sim \pi}\left[r( a,x)\right] - \bbE_{x \sim \mathcal{D},a \sim \pi_\ell}\left[r(a,x)\right]\right]~,
\end{equation*}
where $\bbE_{x \sim \mathcal{D},a \sim \pi}$ is the expectation over contexts $x$ and the action $a \sim \pi(\cdot | x)$ drawn from a policy $\pi$ that may itself be a random quantity.
We say an environment $\env$ is stochastic with gap $\Delta > 0$ if there is a policy $\pi_\star \in \Pi$ such that
\begin{equation*}
    \mathbb{E}_{x\sim \mathcal{D}, a \sim \pi_\star} \left[ r(a,x)  \right] \geq \max_{\pi \in \Pi \backslash \{ \pi^*\} } \mathbb{E}_{x \sim \mathcal{D}, a \sim \pi} \left[ r(a,x) \right] + \Delta~.
\end{equation*}
A notable example of the above is the following linear bandit scenario. In the adversarial case (``adversarial linear bandits'') the action space ${\mathcal A}$ is a subset of $\mathbb{R}^{d}$, for some dimension $d$, the context $x_t$ is irrelevant and, upon playing action $\mathbf{a}_t \in {\mathcal A}$, the environment generates a reward which is a {\em linear} function of the actions, $r_t(\mathbf{a}_t,x_t) = r_t(\mathbf{a}_t) = \mathbf{a}_t^\top \boldsymbol{\omega}_t$, where $\boldsymbol{\omega}_t$ is chosen adversarially at every round within some known class of $d$-dimensional vectors. In the stochastic case (``stochastic linear bandits''), the only difference is that we simply have $r_t(\mathbf{a}_t,x_t) = r_t(\mathbf{a}_t) = \mathbf{a}_t^\top \boldsymbol{\omega} + \epsilon_t$, where $\boldsymbol{\omega}$ is a fixed and unknown vector in the class of $d$-dimensional vectors, and $\epsilon_t$ is a sub-Gaussian noise. Moreover, we have gap $\Delta$ if there is $\pi^\star \in \Pi$ such that
$
\bbE_{a \sim \pi^\star} \left[ \mathbf{a}^\top \boldsymbol{\omega}  \right] \geq \max_{\pi \in \Pi \backslash \{ \pi^\star\} } \mathbb{E}_{a \sim \pi} \left[ \mathbf{a}^\top \boldsymbol{\omega} \right] + \Delta~.
$

Many algorithms $\alg$ in the literature exist that enjoy a regret bound against their policy class $\Pi_{\alg}$, holding in all environments $\env$ satisfying certain conditions. 
If this regret bound holds, we say that $\alg$ is \emph{adapted} to this environment. Here, we allow algorithms to also compete against policy classes $\Pi'$ other than $\Pi_{\alg}$, and focus on algorithms that come with regret bounds of the following form.
\begin{definition}[Adapted]\label{def:adapted}
We call $\alg$ adapted to environment $\bbB$ and policy class $\Pi'$ if, with probability at least $1 - \delta$, the following bound holds simultaneously for all $t \in \bbN$:\footnote
{
Here and throughout, the $\Ocal$-notation only hides absolute constants.
}
\begin{equation*}
    \regret_{\alg}(t, \Pi') =\Ocal\left( R(\Pi_{\alg}) \sqrt{t \ln{ \frac{t}{\delta}} }\right)~.
\end{equation*}
The term $R(\Pi_{\alg})$ is a (known) measure of complexity of the policy class $\Pi_{\alg}$ used by $\alg$.
\end{definition}

Examples of algorithms satisfying \pref{def:adapted} include a version of the Geometric Hedge algorithm in linear bandits \citep{bartlett2008high} and the EXP4 algorithm \citep{a+03} in contextual bandits with finite action sets. We will discuss them in detail below.
Before introducing the model selection questions addressed in this work, we present two stronger versions of the above definition, which we call \emph{high-probability stability} and \emph{extendability}. These will be useful for model selection among multiple learners. As we will show with examples later, these stricter conditions can be established for several common settings under the same conditions for which adaptivity from \pref{def:adapted} is guaranteed.

\noindent{\bf Additional conditions.}
The first condition, \emph{high-probability stability} (or \emph{h-stability}) generalizes adaptivity to the case where $\alg$ only observes a certain noisy version of the reward. To state the condition formally, consider a more general interaction protocol between $\alg$ and the environment $\env$, where Step 3 above is replaced by
\begin{enumerate}
\vspace{-0.05in}
\item[3a.] Let $b_t \sim \operatorname{Bernoulli}(\rho)$ for a fixed and known $\rho \in (0, 1]$.
Learner $\alg$ takes an action  $a_t \sim \pi_t(\cdot\, |\, x_t)$ and observes an importance-sampled version of the reward 
$
o_t = b_t \frac{r_t(a_t,x_t)}{\rho}~.
$
\end{enumerate}
This encompasses the original protocol with $\rho = 1$ as a special case.
An algorithm is \emph{h-stable} if it maintains its regret guarantee up to a $1 / \sqrt{\rho}$ penalty in this more general interaction protocol:

\begin{definition}[h-stability]\label{def:high_probability_stability}
An algorithm $\alg$ is high probability stable \textbf{(h-stable)} in an environment $\bbB$ against policy class $\Pi'$ if it satisfies for any constant $\rho \in (0, 1]$ a regret bound
\begin{equation*}
    \regret_{\alg}(t, \Pi') 
    =
    \Ocal\left(R(\Pi_{\alg}) \sqrt{\frac{t}{\rho} \ln{ \frac{t}{\delta}} }\right)
\end{equation*}
that holds with probability at least $1-\delta$ simultaneously for all $t \in \bbN$. 
\end{definition}
Note that $\alg$ is adapted to $\env$ and $\Pi'$ if it is h-stable for $\rho = 1$. Compared to the notion of stability proposed in \cite{agarwal2017corralling}, the one in Definition \ref{def:high_probability_stability} is more demanding, in that it requires the importance-weighted regret bound to hold with high probability, rather than in expectation. Since we aim for high-probability regret bounds in adversarial and stochastic settings, this stronger notion is natural.

The second condition, \emph{extendability}, generalizes h-stability to environments with additional actions, as specified next.
Let $\bbB$ be the original environment with action set $\cA$ and let $\bar \cA_k = \cA \cup \{a'_1, a'_2, \dots, a'_{k}\}$ be the action set extended by $k$ extra special actions $a'_1, a'_2, \dots, a'_{k}$. 
Further, let $\bar \Pi =\{\pi\, \colon\, \cX \rightarrow \Delta_{\bar \cA_k}\}$ be an extended version of the original policy set $\Pi = \{ \pi\,\colon\, \cX \rightarrow \Delta_{\cA}\}$ that contains all policies of the original policy set and the single-action policies $\indicator{a'_i}$ that always choose a certain special action $a'_i$, i.e., $\bar \Pi \supseteq \Pi \cup \{ \indicator{a'_1}, \indicator{a'_2}, \dots, \indicator{a'_k}\}$.\footnote{Policies $\pi \in \Pi$ are naturally extended from $\Delta_{\cA}$ to $\Delta_{\bar \cA_k}$ by assigning probability $0$ to all special actions $a'_i$.} 
We further allow the environment on the extended action space $\bar \cA_k$ to choose any values for rewards $r_t(a'_i,x_t)$, possibly depending on the entire history and all rewards $r_t(a,x_t)$ assigned to other actions $a \neq a'_i$ in that round. We denote the set of all such extended environments by $\cB_k(\bbB, \bar \cA_k)$. An algorithm $\alg$ is now extendable if we can run it with the extended policy set $\bar \Pi_k$ and it competes well against $\Pi'$ and policies $\indicator{a'_i}$ in all such extended environments.

\begin{definition}[Extendability]
\label{def:extendable}
Consider an algorithm $\alg$ with policy set $\Pi_{\alg}  \subseteq \cX \to \Delta_{\cA}$ in environment $\bbB$. 
We call $\alg$ $k$-extendable in $\bbB$ against $\Pi'$ if there is an extended policy set $\bar \Pi_k \supseteq \Pi_{\alg} \cup \{ \indicator{a'_1}, \dots \indicator{a'_k}\}$ such that $\alg$ equipped with the extended policy set $\bar \Pi_k$ is h-stable against $\Pi'  \cup \{ \indicator{a'_1}, \dots \indicator{a'_k}\}$ in all environments $\bbB' \in \cB(\bbB, \bar \cA_k)$ that extend $\bbB$ from action space $\cA$ to action space $\bar \cA_k = \cA \cup \{a'_1, \dots, a'_k\}$.  
\end{definition}

One relevant example of h-stable and extendable algorithm working in the adversarial linear bandit scenario is an anytime variant of the Geometric Hedge algorithm from \cite{bartlett2008high} with exploration ruled by John's ellipsoid (e.g., \cite{bubeck2012regret}) -- see \pref{app:geometric_hedge}. Another example is a high-probability variant of the EXP4 algorithm from \cite{a+03}, which operates with finite sets of policies.

\subsection{Model selection and best-of-both-worlds regret}\label{sec:ms_setting}
\vspace{-0.05in}
Our model selection for best-of-both worlds regret guarantees can be described as follows.
We have a nested family of policy classes $\Pi_1 \subseteq \ldots \subseteq \Pi_M$, with (known) complexities $R(\Pi_1) \leq \ldots \leq R(\Pi_M)$. A meta-learning algorithm \master\, has access to $M$ base algorithms $\alg[1], \ldots, \alg[M]$, algorithm $\alg[i]$ operating with policy class $\Pi_i$. These algorithms we sometimes refer to as {\em base learners}.
Let $i_\star$ be the smallest index of the base learner that competes against the largest policy class $\Pi_M$ in the following sense:
\[
    i_\star = \min\left\{i \in [M] \colon  \alg[i] \textrm{ is $(M-i)$-extendable and h-stable in } \bbB \textrm{ against } \Pi_{M} \right\}~.
\]
The goal of model selection is to devise a meta-algorithm $\master$ that has access to the base learners $\alg[1], \dots, \alg[M]$, and which achieves with probability $1-\delta$ a regret bound of the form\footnote
{
Here, $\operatorname{poly}(a,b,c)$ is a polynomial function of the three arguments separately.
}
\begin{align*}
\regret_{\master}(t, \Pi_M) = \Ocal\Bigl(\operatorname{poly}\bigl(M,  R(\Pi_{i_\star}) , \ln R(\Pi_M) \bigl) \sqrt{t \ln{ \frac{t}{\delta}} }\Bigl)~,
\end{align*}
holding for all $t$, whenever $\alg[M]$ is h-stable and the environment $\bbB$ is adversarial. Simultaneously, if $\bbB$ is stochastic with gap $\Delta$, then we must have
\[
\pseudoregret_{\master}(t, \Pi_M) = \Ocal\Bigl(\frac{\operatorname{poly}( M, R(\Pi_M))}{\Delta} \log\frac{t}{\delta}\Bigl)~.
\]
Notice that the above requirement on the pseudo-regret in stochastic environments only requires a dependence on $R(\Pi_M)$, instead of $R(\Pi_{i_\star})$. 
This is motivated by the fact that in stochastic environments with gap $\Delta$ it is generally impossible to obtain model selection guarantees of the form
\(
\frac{R(\Pi_{i_\star}) \log t}{\Delta}
\)
-- see \pref{app:lower_bound} for a proof of this claim.

\vspace{-0.05in}
\section{Adversarial Model Selection Using Regret Balancing}\label{s:advbalancing}
\vspace{-0.05in}
We now introduce our algorithm for model selection in adversarial bandit problems with high-probability regret guarantees. The algorithm is shown in \pref{alg:adversarial_epoch_balancing}, and follows the regret balancing principle. This principle has been applied successfully to model selection in bandit and RL problems with stochastic rewards \citep{pmlr-v139-cutkosky21a, pacchiano2020regret, abbasi2020regret} and with corrupted stochastic rewards \citep{wei2021model}. Our work is the first to extend this approach to adversarial rewards.

\begin{algorithm2e}[t]
\textbf{Input:} initial time $t_0$, index $s$ of smallest active base learner, failure probability $\delta$\\
Initialize base learners $\alg[s], \alg[s+1], \dots, \alg[M]$
with extended policy classes $\tilde \Pi_s, \tilde \Pi_{s+1}, \dots, \tilde \Pi_{M}$
\\
Set sampling probabilities $\rho_i = \frac{R(\tilde \Pi_i)^{-2}}{\sum_{j=s}^M R(\tilde \Pi_j)^{-2}}$ for all $i \in \{s,\ldots, M\}$, and $\rho_i = 0$ for $i < s$\\

\For{round $t=t_0 + 1, t_0 + 2, \dots $}{
Sample base learner index $b_t \sim \operatorname{Categorical}(\rho_1, \cdots, \rho_M)$\\
Get context $x_t$ and
compute $a_t^i \sim \pi_{t}^{i}(\cdot | x_t)$, the action each base learner $\alg[i]$ proposes for $x_t$\\
Play action $a_t = a^{b_t}_t$ (resolve linked actions if necessary) and receive reward $r_t(a_t,x_t)$\\
Update all base learners $\alg[i]$ with reward $\frac{\indicator{b_t = i} r_t(a_t,x_t)}{\rho_i}$\\ 
Set
\vspace{-0.15in}
   \begin{equation}
       \tildecrew_{i}(t_0, t) = \sum_{\ell=t_0 + 1}^t \frac{\indicator{b_\ell = i} r_\ell(a_\ell,x_\ell)}{\rho_i},\ \ \ \  
       \conc_i(t_0, t) = \Ocal \left(\sqrt{\frac{t - t_0}{\rho_i} \ln \frac{\ln t}{\delta}} + \postdeadline{\frac{1}{\rho_i} \ln \frac{\ln t}{\delta}}\right)
       \label{eqn:tildecrewdef}
   \end{equation}
    Test for all $i, j \in \{s,\ldots, M\}$ with $i < j$:
    \begin{align}\label{eqn:algorithm_misspecification_test_condition}
        \tildecrew_{j}(t_0, t) &> \tildecrew_{i}(t_0, t) + \conc_i(t_0, t) + \conc_j(t_0, t) +  R(\tilde \Pi_i) \sqrt{\frac{t - t_0}{\rho_i} \ln \frac{t}{\delta}}
    \end{align}\\
    {\bf if} test triggers for $\alg[i]$ {\bf then}
    \textbf{restart} algorithm by running \textsf{Arbe}$(\delta, i+1, t)$
}
\caption{\textsf{Arbe}$(\delta, s = 1, t_0 = 0)$ Adversarial Regret Balancing and Elimination}
\label{alg:adversarial_epoch_balancing}
\end{algorithm2e}

\noindent{\bf Regret Balancing.} In each round, we choose the index of a base learner $b_t$ by sampling from a categorical distribution with probabilities $\rho_s, \rho_{s+1}, \dots, \rho_M$ that remain fixed throughout the epoch. The policy of learner $\alg[b_t]$ is then used to sample the action $a_t$ that is passed to the environment. After receiving the reward $r_t(a_t,x_t)$, \pref{alg:adversarial_epoch_balancing} updates each base learner $\alg[i]$ with $r_t(a_t,x_t)$ importance-weighted by the probability that the learner was selected,
i.e., $\frac{\indicator{b_t = i} r_t(a_t,x_t)}{\rho_i}$. Thus, we update all base learners. This is closer to the \textsf{Corral} algorithm \cite{agarwal2017corralling} than to the regret balancing approaches for the stochastic setting, which only update the selected learner $\alg[b_t]$.

If the probabilities are set to $\rho_i \propto \frac{1}{R(\Pi_i)^2}$ then after $t$ rounds, each learner $\alg[i]$ is followed roughly $\rho_i t$ times. To see why the regrets of all learners are balanced if they are $h$-stable against $\Pi_M$ in this case, consider the following. 
\newcommand{\regr}{\operatorname{reg}}
Denote by $\regr_t(a) = \left[\bbE_{a \sim \pi^{\star}}\left[ r_t( a,x_t ) \right] - r_t(a, x_t)\right]$ the regret in round $t$ of action $a$ where $\pi^{\star} \in \argmax_{\pi \in \Pi_M} \sum_{t =1}^T \bbE_{a \sim \pi}\left[ r_t( a,x_t) \right]$ is the best policy after all rounds $T$.
The regret in rounds where $\alg[i]$ was selected can be bounded using standard concentration arguments as
\begin{align*}
    \sum_{t =1}^T \indicator{b_t = i} \regr_t(a_t)
    = \rho_i \sum_{t =1}^T \frac{\indicator{b_t = i}}{\rho_i} \regr_t(a_t^i)
    \approx \rho_i  \sum_{t =1}^T \regr_t(a_t^i) 
    \leq R(\tilde \Pi_{i}) \tilde O\left(\sqrt{\rho_i T}\right),
\end{align*}
where the final inequality holds because $\alg[i]$ is h-stable. From the definition of $\rho_i$ we have $R(\tilde \Pi_i)\sqrt{\rho_i} = \left(\sum_{j=s}^M R(\tilde \Pi_j)^{-2}\right)^{-1/2} \leq R(\tilde \Pi_s)$, which shows that the total regret in those rounds is bounded by $\tilde O(R(\tilde \Pi_s)\sqrt{T})$.
Thus, if base learners are all h-stable, then the regret incurred by each of them is comparable to the regret incurred by $\alg[s]$, the learner with the smallest complexity $R(\Pi_s)$.

\noindent{\bf Eliminating Base Learners. }
If a learner $\alg[i]$ is not h-stable and may have linear regret, then the probabilistic schedule above which plays this learner roughly $\rho_i t$ times yields linear regret. We therefore monitor the performance of each learner and terminate the epoch whenever a learner performs significantly worse than expected, and thus cannot be h-stable in the environment.
To identify such cases, we compare estimates of the rewards of all pairs of base learners as follows.
For each learner $\alg[i]$,
$\tildecrew_{i}(t_0, t)$ in \pref{eqn:tildecrewdef}
is an unbiased estimate of the learners reward sequence $\crew_{i}(t_0, t) = \sum_{\ell=t_0 + 1}^t r_\ell(a^i_\ell, x_\ell)$ (see the appendix for details about how these estimates are computed). Further, using confidence bounds $\conc[i](t - t_0)$ from \pref{eqn:tildecrewdef}, we can constructs a confidence interval for $\crew_{i}(t_0, t)$ as  $\left[\tildecrew_{i}(t_0, t) - \conc_{i}(t - t_0), \tildecrew_{i}(t_0, t) + \conc_{i}(t - t_0)\right]$. If the confidence intervals for two learners with indices $i < j$ are more than the h-stable regret bound of $i$ apart, see \pref{eqn:algorithm_misspecification_test_condition}, then  deem all learners with index up to $i$ not h-stable and restart the algorithm with a reduced set of base learners.

This kind of elimination condition has already been used in stochastic environments \citep{pacchiano2020model, pmlr-v139-cutkosky21a, wei2021model}, but it requires substantially more care in an adversarial setting.
In settings with stochastic rewards (even with corruption, e.g., \cite{wei2021model}), no learner can achieve rewards that are significantly higher than the optimal policy. In contrast, in the adversarial setting, it is possible to have negative regret against any fixed policy. Thus, if we were to use the elimination condition in \pref{eqn:algorithm_misspecification_test_condition} with our base learners as is, then we may eliminate an h-stable base learner $i$ when another base learner $j$ has negative regret against the best fixed policy $\pi^\star_t$. This could in turn lead to undesirable regret in subsequent epochs when only learners with $R(\Pi_i) \gg R(\Pi_{i_\star})$ are left.

\noindent{\bf Linking Base Learner Performances.} To address this issue, we link the performance of base learners. Instead of instantiating each learner $\alg[i]$ with its original policy set $\Pi_i$, we apply it to an extended problem with $M - i$ additional actions $\tilde a_{i+1}, \tilde a_{i+2}, \dots, \tilde a_{M}$, and an extended policy set $\tilde \Pi_i$ that includes all original policies, along with policies that only choose one of the additional actions $\tilde a_{i}$, that is
$\tilde \Pi_i \supseteq \Pi_{i} \cup \{ \indicator{\tilde a_{i+1}}, \dots \indicator{\tilde a_M}\}$.
Whenever a base learner $\alg[i]$ chooses one of the additional actions $\tilde a_{j}$, then the action proposed by $\alg[j]$ is followed. Essentially, running each base learner with such an extended action set allows it to choose to follow the actions proposed by any learner with higher index in the hierarchy. 
The benefit of linking base learners this way is that each base learner now not only competes against the best fixed policy in their set, but also against all learners above them in the hierarchy. Thus, if $\alg[i]$ is h-stable and extendable (see \pref{def:extendable}), then it satisfies a regret bound of the form\footnote{We here naturally extend the domain of $r_\ell$ to linked actions as $r_\ell(\tilde a_{i}, x_\ell) = r_\ell(a^i_\ell, x_\ell)$ for all $i \in [M]$.}
\vspace{-0.05in}
\begin{align}
    \sum_{\ell=t_0 + 1}^t \left[ \bbE_{a \sim \pi^{j}_\ell}\left[r_\ell(a, x_\ell) \right] -\bbE_{a \sim \pi^{i}_\ell}\left[ r_\ell(a, x_\ell)  \right]\right] 
    &=  \tilde \Ocal\left(R(\tilde \Pi_i) \sqrt{\frac{t - t_0}{\rho_i}}\right)
    \label{eqn:pseudoregret_linking}
\end{align}
against any learner $\alg[j]$ with $j > i$. As a result, since the LHS of \pref{eqn:pseudoregret_linking} is approximately $\tildecrew_{j}(t_0, t) - \tildecrew_{i}(t_0, t) \pm \conc[i](t_0, t) \pm \conc[j](t_0, t)$,
the test in \pref{eqn:algorithm_misspecification_test_condition} cannot trigger for such $\alg[i]$, and only base learners that are not h-stable or extendable can be eliminated.  

In Appendix \ref{sa:adv}, we show that our regret balancing algorithm achieves the following regret bound:
\begin{restatable}{theorem}{arberegret}
\label{thm:arbe_main_regret}
Consider a run of \pref{alg:adversarial_epoch_balancing} with $\textsf{Arbe}(\delta, 1,0)$ and $M$ base algorithms \postdeadline{with $1 \leq R(\wt \Pi_1) \leq \dots \leq R(\wt \Pi_M)$} where $\tilde \Pi_i$ is the extended version of policy class $\Pi_i$ with $(M-i)$ additional actions. Then with probability at least $1 - \operatorname{poly}(M)\delta$ the regret $\regret(t, \Pi_M)$ for all rounds $t \geq \postdeadline{i_\star}$ is bounded by
\begin{align}\label{eqn:arbe_regret}
    \Ocal\left(  \left(\frac{R(\widetilde{\Pi}_{i_\star})}{R(\widetilde{\Pi}_{1})}\postdeadline{\sqrt{i_\star}} + \postdeadline{M} \right) R(\widetilde{\Pi}_{i_\star})\sqrt{ i_\star  t \ln \frac{t}{\delta} } \right)~,
\end{align}
where $i_\star$ is the smallest index of the base algorithm that is $h$-stable. 
\end{restatable}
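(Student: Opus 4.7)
The plan is to analyze \textsf{Arbe} in three stages: build a high-probability good event, use it to show $\alg[i_\star]$ is never eliminated so that the number of epochs is at most $i_\star$, and then bound the regret within each epoch before aggregating via Cauchy--Schwarz over epochs.

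First, I would define a good event $\mathcal{G}$ on which the following hold uniformly over all epochs, all active learners $i$, and all times $t$: (i) the importance-weighted reward concentrates, $|\tildecrew_i(t_0, t) - \crew_i(t_0, t)| \le \conc_i(t_0, t)$, via a time-uniform Freedman bound on the martingale $\tildecrew_i - \crew_i$; (ii) $\alg[i_\star]$ satisfies its h-stability regret bound against $\Pi_M$; (iii) by extendability of $\alg[i_\star]$, for each $j > i_\star$ the single-action policy $\indicator{\tilde a_j}\in \widetilde \Pi_{i_\star}$ yields $\crew_j(t_0,t) - \crew_{i_\star}(t_0,t) \le \widetilde{\mathcal{O}}\bigl(R(\widetilde \Pi_{i_\star})\sqrt{(t-t_0)/\rho_{i_\star}}\bigr)$; and (iv) a second Freedman bound controls $\sum_t \indicator{b_t = i}\,\regr_t(a_t^i) - \rho_i \sum_t \regr_t(a_t^i)$. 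A union bound over the at most $M$ epochs, $M$ learners, and $O(M^2)$ pairs gives $\Pr[\mathcal{G}] \ge 1 - \mathrm{poly}(M)\,\delta$. On $\mathcal{G}$, items (i) and (iii) give $\tildecrew_j - \tildecrew_{i_\star} \le \widetilde{\mathcal{O}}\bigl(R(\widetilde\Pi_{i_\star})\sqrt{(t-t_0)/\rho_{i_\star}}\bigr) + \conc_j + \conc_{i_\star}$, which with calibrated constants stays strictly below the test threshold in \pref{eqn:algorithm_misspecification_test_condition} when $i = i_\star$; hence $\alg[i_\star]$ remains active and the algorithm performs at most $i_\star$ epochs.

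For a single epoch $[t_0+1, t_1]$ of length $\tau = t_1 - t_0$ with smallest active index $s \le i_\star$, I would decompose $\regret_{\master}(t_0, t_1, \Pi_M) = \sum_{i=s}^M \sum_t \indicator{b_t = i}\regr_t(a_t^i)$; item (iv) reduces each inner sum to $\rho_i \sum_t \regr_t(a_t^i)$ up to $\widetilde{\mathcal{O}}(\sqrt{\rho_i \tau \ln(1/\delta)})$. For $i = i_\star$, (ii) gives $\sum_t \regr_t(a_t^{i_\star}) \le \widetilde{\mathcal{O}}(R(\widetilde\Pi_{i_\star})\sqrt{\tau/\rho_{i_\star}})$, and thus a contribution $\widetilde{\mathcal{O}}(R(\widetilde\Pi_{i_\star})\sqrt{\rho_{i_\star}\tau}) \le \widetilde{\mathcal{O}}(R(\widetilde\Pi_s)\sqrt{\tau})$, using the balancing identity $R(\widetilde\Pi_i)\sqrt{\rho_i} \le R(\widetilde\Pi_s)$ enforced by the definition of $\rho_i$. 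For $i \ne i_\star$, the non-triggered test combined with (i), (ii), (iii) yields $\sum_t \regr_t(a_t^i) \le \widetilde{\mathcal{O}}\bigl(R(\widetilde\Pi_i)\sqrt{\tau/\rho_i} + R(\widetilde\Pi_{i_\star})\sqrt{\tau/\rho_{i_\star}}\bigr)$, so each contribution becomes $\widetilde{\mathcal{O}}\bigl(R(\widetilde\Pi_s)\sqrt{\tau} + (R(\widetilde\Pi_{i_\star})/R(\widetilde\Pi_i))^2 R(\widetilde\Pi_s)\sqrt{\tau}\bigr)$; summing over the $\le M$ active learners gives per-epoch regret $\widetilde{\mathcal{O}}\bigl(M R(\widetilde\Pi_s)\sqrt{\tau} + R(\widetilde\Pi_{i_\star})^2/R(\widetilde\Pi_s) \cdot \sqrt{\tau}\bigr)$.

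Finally I would sum over the $\le i_\star$ epochs. Cauchy--Schwarz gives $\sum_k \sqrt{\tau_k} \le \sqrt{i_\star\, t}$ and, since the worst case is $R(\widetilde\Pi_s) = R(\widetilde\Pi_{i_\star})$, recovers the claim~\pref{eqn:arbe_regret}. The hardest step is obtaining the gap bound $\crew_{i_\star} - \crew_i \le \widetilde{\mathcal{O}}(R(\widetilde\Pi_i)\sqrt{\tau/\rho_i})$ for $i \ne i_\star$: the test only supplies inequalities of the form $\crew_j - \crew_i \le (\cdots)$ with $j > i$, so for $i > i_\star$ one cannot directly compare $\crew_i$ to $\crew_{i_\star}$. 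One must instead couple $\crew_i \le \crew_{i_\star} + \widetilde{\mathcal{O}}(\cdots)$ via the extendability (iii), and then use h-stability (ii) to close the chain to $\pi^\star$; getting the chaining right while keeping the accumulated $R(\widetilde\Pi_{i_\star})/R(\widetilde\Pi_i)$ ratios balanced across the sum over $i$ and over epochs is the technically delicate part.
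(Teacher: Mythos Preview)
Your overall architecture (good event, no elimination of $\alg[i_\star]$ so at most $i_\star$ epochs, per-epoch decomposition across learners, Cauchy--Schwarz over epochs) matches the paper. The gap is in your treatment of learners $i > i_\star$. You correctly note that the non-triggered test only controls $\tildecrew_j - \tildecrew_i$ for $j > i$, so it cannot lower-bound $\crew_i$ when $i > i_\star$. But your proposed fix---use extendability (iii) to get $\crew_i \le \crew_{i_\star} + \widetilde{\mathcal{O}}(\cdots)$ and chain through h-stability (ii)---goes in the wrong direction. To bound the regret contribution $\rho_i\bigl(\max_\pi \sum_\ell \mathbb{E}_{a\sim\pi}[r_\ell] - \crew_i\bigr)$ you need a \emph{lower} bound on $\crew_i$, whereas both extendability of $\alg[i_\star]$ against the linked action $\tilde a_i$ and the non-triggered test with indices $(i_\star, i)$ yield \emph{upper} bounds on $\crew_i$ relative to $\crew_{i_\star}$. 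Combining $\crew_i \le \crew_{i_\star} + (\cdots)$ with $\crew_{i_\star} \ge \mathrm{OPT} - (\cdots)$ produces only $\crew_i \le \mathrm{OPT} + (\cdots)$, which lower-bounds the regret of $\alg[i]$ rather than upper-bounding it. No combination of your items (i)--(iv) delivers the needed inequality if $\alg[i]$ itself carries no regret guarantee.

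The paper sidesteps this entirely by using that in the nested setting every learner $\alg[i]$ with $i \ge i_\star$ is \emph{also} h-stable and extendable against $\Pi_M$; this is how both the per-epoch lemma and the ``no elimination above $i_\star$'' lemma are actually proved. For $i \ge i_\star$ one bounds $\sum_\ell \bigl[\mathbb{E}_{a\sim\pi'}[r_\ell(a,x_\ell)] - r_\ell(a_\ell^i,x_\ell)\bigr]$ directly via h-stability of $\alg[i]$, giving a contribution $R(\widetilde\Pi_i)\sqrt{\rho_i\tau\ln(t/\delta)} \le R(\widetilde\Pi_{i_\star})\sqrt{\tau\ln(t/\delta)}$ by the balancing identity. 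Only for $i < i_\star$ does one invoke the non-triggered test (with $j = i_\star$), and there the inequality direction is the right one. With this split, $i > i_\star$ is the easy case, not the hard one; you should add ``$\alg[i]$ is h-stable for all $i \ge i_\star$'' to your good event and drop the chaining attempt.
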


In most settings, the complexity $\frac{R(\widetilde{\Pi}_{i_\star})}{R(\widetilde{\Pi}_{1})} \geq \sqrt{M}$ and the first term dominates. This regret recovers the expected regret guarantees of \textsf{Corral} \citep{agarwal2016corralling} when used with learning rates that do not require knowledge of $i_\star$ but as stronger high-probability bounds. To the best of our knowledge, \pref{thm:arbe_main_regret} is the first high-probability regret bound for adversarial model selection. To illustrate our result, consider to common problem of model selection with nested model classes of dimensions $d_i = 2^i$ for $i \in [M]$, discussed in the introduction. We can use  GeometricHedge.P as base learners which are h-stable and extendable if adapted (see \pref{app:geometric_hedge}) and the complexity of extended policy classes $R(\widetilde{\Pi}_{i}) = R(\Pi_{i}) + M - i \leq  d_i + M$ is not much larger than those of the original policy classes. \textsf{Arbe} with such base learners achieves a regret bound of order $\tilde O(d_{i_\star}^2 \sqrt{t})$ up to factors of $M$ and $\log$-factors which are typically small.

\vspace{-0.05in}
\section{Adversarial Model Selection with Best of Both Worlds Guarantees}\label{sec:bbw}
\vspace{-0.05in}

In this section, we present our algorithm for adversarial model selection that preserves a logarithmic regret guarantee in case it interacts with a stochastic environment with gap $\Delta > 0$. 
In order to achieve such a best-of-both-worlds guarantee, we will combine our adversarial regret balancing technique with the algorithmic strategy of \citet{bubeck2012best}.  The main result for our algorithm is:
\begin{restatable}{theorem}{mainthmmodselbbw}
\label{thm:arbegap_main}
Consider a run of \pref{alg:gap_estimation_balanced} with inputs $t_0 = 0$, arbitrary policy policy $\wh \pi \in \Pi_M$ and $M$ base learners $\alg[1], \dots, \alg[M]$. Then with probability at least $1 - \poly(M)\delta$, the following two conditions hold for all $t \geq M^2$ simultaneously. In any adversarial or stochastic environment $\bbB$, the regret is bounded as
\begin{align*}
       \regret(t, \Pi_M) &= \Ocal\left(  \left(M  + \ln(t) + \frac{R(\wt \Pi_{i_\star})}{R(\wt \Pi_{1})}  \sqrt{ i_\star}\right) R(\wt \Pi_{i_\star}) \sqrt{t (\ln(t) + i_\star) \ln \frac{t}{\delta}} \right)~.
\end{align*}
If $\bbB$ is stochastic and there is a unique policy with gap $\Delta > 0$, then the pseudo-regret is bounded as,
\begin{align}
    \pseudoregret_{\bbM}(t, \Pi_M) &= 
    \Ocal\Bigg(\frac{R(\Pi_{M})^2}{\Delta} \ln(t) \ln \frac{t}{\delta} + 
    \frac{ R(\wt \Pi_{i_\star})^2 R(\wt \Pi_M)^2}{R(\wt \Pi_1)^2}\frac{M^2 i_\star}{\Delta} \ln^2 \left(\frac{M R(\wt \Pi_M)}{\Delta \delta }\right) \Bigg).
    \label{eqn:arbegap_pseudoregret}
\end{align}
\end{restatable}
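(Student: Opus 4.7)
The plan is to analyze two regimes — adversarial and stochastic — by conditioning on a high-probability clean event that simultaneously covers every restart of the \textsf{Arbe} subroutine invoked by \pref{alg:gap_estimation_balanced} (each enjoying the guarantee of \pref{thm:arbe_main_regret}) and every empirical-reward concentration required by the gap-estimation step. Because the gap estimate is updated along a geometric schedule (guessing $\hat\Delta \in \{1, 1/2, 1/4, \dots\}$, equivalently phases of geometrically increasing length), the number of restarts up to round $t$ is $\Ocal(\log t)$, so a union bound preserves the failure probability $\poly(M)\delta$.

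\emph{Adversarial bound.} I would partition $[t]$ into at most $\Ocal(\log t)$ phases delimited by gap-estimation restarts, and within each phase apply \pref{thm:arbe_main_regret} to the freshly initialized \textsf{Arbe} instance. Summing the resulting $\sqrt{t_\ell\ln(t/\delta)}$ contributions and invoking Cauchy--Schwarz through $\sum_\ell \sqrt{t_\ell}\leq \sqrt{t\cdot(\#\text{phases})}$ produces the extra $\sqrt{\ln t}$ factor inside the square root of the target bound, while the additive $\ln t$ in the outer parenthesis accounts for the cumulative $\poly(M)$ start-up cost incurred at each of the $\Ocal(\log t)$ restarts.

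\emph{Stochastic bound.} If $\bbB$ is stochastic with unique gap $\Delta$, the argument proceeds in two steps, mirroring \citet{bubeck2012best}. First, I would show that there is a threshold $T^\star = \widetilde{\Ocal}(R(\Pi_M)^2/\Delta^2)$ such that, within any phase of length at least $T^\star$, the gap-estimation test on the cumulative rewards of \textsf{Arbe} identifies a policy $\widehat{\pi}^\star$ with estimated gap $\widehat{\Delta}=\Theta(\Delta)$; this uses sub-Gaussian concentration of the meta-algorithm's per-round rewards together with the h-stability of the surviving base learners. The pre-commitment pseudo-regret is then controlled by a gap-dependent argument: a $\widehat\Delta$-suboptimal action can be played at most $\widetilde{\Ocal}(R(\Pi_M)^2/\widehat\Delta^2)$ times, yielding the first summand $R(\Pi_M)^2\ln(t)\ln(t/\delta)/\Delta$ of \pref{eqn:arbegap_pseudoregret}. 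After commitment, playing $\widehat{\pi}^\star$ incurs zero per-round pseudo-regret and the verification tests never trigger on the clean event. The second summand of \pref{eqn:arbegap_pseudoregret} is then exactly the adversarial regret of \textsf{Arbe} accumulated over the first $T^\star$ pre-commitment rounds, obtained by substituting $t = T^\star$ into the bound of \pref{thm:arbe_main_regret}.

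\emph{Main obstacle.} The hardest step is to reconcile the regret-balancing schedule with the gap-estimation test: base learner $\alg[i]$ contributes to the aggregate reward only with probability $\rho_i \propto R(\wt\Pi_i)^{-2}$, so the meta-algorithm's empirical mean reward concentrates at rate $\Ocal(R(\wt\Pi_M)/\sqrt{t})$ rather than $\Ocal(1/\sqrt{t})$. The gap-test threshold must be calibrated with this inflated deviation, which is precisely what drives the extra $R(\wt\Pi_{i_\star})^2 R(\wt\Pi_M)^2/R(\wt\Pi_1)^2$ factor in the second term of \pref{eqn:arbegap_pseudoregret}. A secondary subtlety is the role of the unique-optimum assumption: without it, two near-tied policies could cause the verification test to oscillate, triggering an unbounded chain of restarts and destroying the $\Ocal(\log t)$ phase count on which both the adversarial and stochastic analyses rely.
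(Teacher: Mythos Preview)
Your proposal has the right high-level shape for the first-phase analysis (partition into restarts, per-restart \textsf{Arbe}-style bound, Cauchy--Schwarz over phases), but it misreads the algorithm and, more importantly, skips the piece that carries the real difficulty in the adversarial bound.

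First, the mechanism: \pref{alg:gap_estimation_balanced} does not run a doubling trick on $\wh\Delta$. It augments the \textsf{Arbe} hierarchy with an extra learner $\alg[M+1]$ on $\Pi_M\setminus\{\wh\pi\}$, estimates the gap of the \emph{current candidate} $\wh\pi$ from $\tildecrew_M-\tildecrew_{M+1}$, and restarts either when a base learner is eliminated (at most $i_\star-1$ times) or when some policy other than $\wh\pi$ has been played in $>3t/4$ rounds (at most $\ln t$ times). The $\Ocal(\ln t)$ restart count you invoke is correct, but for the latter reason, not for a geometric $\wh\Delta$ schedule.

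The substantive gap is the adversarial regret \emph{after} the gap test fires. Even in an adversarial environment the condition in Line~\ref{lin:gap_test} can trigger, and the algorithm then enters \textsf{Arbe-GapExploit}, whose regret (\pref{lem:main_bound_exploitation_guarantee_combined}) is $\Ocal\bigl(R(\Pi_M)^2/\wh\Delta\cdot\ln(t)\ln(t/\delta)+\sqrt{t}\bigr)$. As written, this has a $R(\Pi_M)^2$ dependence, not $R(\wt\Pi_{i_\star})$, so your summing of \textsf{Arbe} bounds alone cannot recover the stated model-selection rate. The paper closes this by proving \pref{lem:gap_tgap_relationship}: the gap test can only fire once $W(t_0,t)$ is small, which forces $1/\wh\Delta=\Ocal\bigl(R(\wt\Pi_{i_\star})R(\wt\Pi_M)^{-2}\sqrt{t_{\mathrm{gap}}/\ln(t_{\mathrm{gap}}/\delta)}\bigr)$. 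Substituting this cancels the $R(\Pi_M)^2$ and leaves $R(\wt\Pi_{i_\star})\sqrt{t_{\mathrm{gap}}}\,\textrm{polylog}(t)$, which is what produces the additive $\ln(t)$ in the outer parenthesis of the final bound --- not the ``start-up cost'' you point to. Without this lemma (or an equivalent argument), your adversarial analysis does not go through.

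For the stochastic bound your sketch is closer, though note that \textsf{Arbe-GapExploit} does not play $\wh\pi$ deterministically after commitment; it continues to sample $\alg[M]$ on $\Pi_M\setminus\{\wh\pi\}$ with probability $\approx R(\Pi_M)^2/(\wh\Delta^2 t)$ for monitoring, and the first term in \pref{eqn:arbegap_pseudoregret} is the pseudo-regret from \emph{those} rounds, not from pre-commitment exploration.
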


The algorithm has the same regret bound as \textsf{Arbe} up to a $\sqrt{\ln t}$ factor. However, in addition, it also maintains poly-logarithmic pseudo-regret if the environment is stochastic and the best policy exhibits a positive gap. Our pseudo-regret bound depends polynomially on $R(\Pi_M)$ in contrast to the $R(\Pi_{i_\star})$ dependency for the adversarial regret rate. This may seem undesirable but, as mentioned in \pref{sec:ms_setting}, model selection in stochastic environments with $\poly(R(\Pi_{i_\star}), \ln(R(\Pi_M)) \frac{\ln (t)}{\Delta}$ regret is impossible (see \pref{app:lower_bound}), let alone while maintaining a best-of-both-worlds guarantee with $\sqrt{T}$ regret in adversarial environments. Hence, our algorithm achieves the best kind of guarantee we can hope for, up to improvements in the order of polynomial dependencies.

Our algorithm proceeds in two distinct phases. The first, $\textsf{Arbe-Gap}$ shown in \pref{alg:gap_estimation_balanced} is designed to identify a suitable candidate for the optimal policy and to estimate its gap. If such a policy that performs significantly better than any other policy emerges, the algorithm enters the second phase, $\textsf{Arbe-GapExploit}$ which hones in on this policy by playing it most of the time while monitoring its regret in case the environment turns out to be adversarial after all (in which case we simply run \textsf{Arbe}). We now describe both phases.

\begin{algorithm2e}[t]
\newcommand\mycommfont[1]{\textit{\textcolor{DarkBlue}{#1}}}
\SetCommentSty{mycommfont}
\textbf{Input:} failure probability $\delta$, timestep $t_0$, focus policy $\widehat{\pi}$, \postdeadline{number of (re)starts $n$,} learners $(\alg[i])_{i=s}^M$ \\%
Set $\Pi_{M+1} = \Pi_M \backslash \{ \widehat{\pi}\}$ and $\alg[M+1]$ as a copy of $\alg[M]$ with policy class $\Pi_{M+1}$\\
Initialize base learners $\alg[s], \alg[s+1], \dots, \alg[M+1]$ with extended policy classes $\tilde \Pi_s, \tilde \Pi_{s+1}, \dots, \tilde \Pi_{M+1}$\label{lin:init_arbegap}\\
Set sampling probabilities $\rho_i = \frac{R(\tilde \Pi_i)^{-2}}{\sum_{j=s}^{M+1} R(\tilde \Pi_j)^{-2}}$ for all $i \in \{s,\ldots, M+1\}$, and $\rho_i = 0$ for $i < s$\\
\For{round $t= t_0+1,t_0+2, \dots $}{
    Sample base learner index $b_t \sim \operatorname{Categorical}(\rho_1, \cdots, \rho_M)$\\
    Get context $x_t$ and
    compute $a_t^i \sim \pi_{t}^{i}(\cdot | x_t)$, the action each base learner $\alg[i]$ proposes for $x_t$\\
    Play action $a_t = a^{b_t}_t$ (resolve linked actions if necessary) and receive reward $r_t(a_t,x_t)$\\
   Update all base learners $\alg[i]$ with reward $\frac{\indicator{b_t = i} r_t(a_t,x_t)}{\rho_i}$\\ 
    \If{\pref{eqn:algorithm_misspecification_test_condition} holds between $\alg[i],\alg[j]$ for $s \leq i < j \leq M+1$}{
        restart algorithm by running \textsf{Arbe-Gap}$(\delta, t, \wh \pi, \postdeadline{n+1}, (\alg[i+1], \cdots, \alg[M]))$ \label{lin:arbegap2}
    }
    \tcp{Gap Test: $\alg[M]$ better than $\alg[M+1]$?}
    Set\ \  $\textrm{W}(t_0, t) = \Theta\left( \postdeadline{\sqrt{\frac{R(\wt \Pi_M)^2}{\rho_M(t - t_0)}\ln \frac{n(t - t_0)}{\delta}} + \frac{\ln \frac{n\ln (t - t_0)}{\delta}}{\rho_M(t - t_0)}} \right)$\\
    Set $ \wh \Delta_t = \frac{\tildecrew_M(t_0, t) - \tildecrew_{M+1}(t_0,t)}{t - t_0} - \textrm{W}(t_0, t)$\label{lin:gap_estimate}\\
    \If{$\postdeadline{2}\textrm{W}(t_0, t) \leq \wh \Delta_t \postdeadline{\leq R(\wt \Pi_M)^2}$\label{lin:gap_test}}{
    Run \textsf{Arbe-GapExploit} (\pref{app:bestofboth}) with inputs $\delta, t_0=t, \alg[M], \wh \pi$ and  $\wh \Delta = \wh \Delta_t$\\
    \postdeadline{Run \textsf{Arbe} with inputs $t_0, s, \delta$} \label{lin:adv_fallback}
    }

    \tcp{New Candidate Policy Test}
    \If{a policy $ \pi \in \Pi_M \setminus \{ \wh \pi\}$ has been selected in more than $\frac{3t}{4}$ of all $t \postdeadline{\geq 9}$ rounds \label{lin:policy_test}}{
    restart algorithm by running \textsf{Arbe-Gap}$(\delta, t, \pi, \postdeadline{n+1}, (\alg[i])_{i=s}^M)$ 
}
}
\caption{\textsf{Arbe-Gap}$(\delta, t, \postdeadline{\wh \pi}, \postdeadline{n}, (\alg[i])_{i=s}^M)$}
\label{alg:gap_estimation_balanced}
\end{algorithm2e}

\vspace{-0.05in}
\subsection{First Phase: Candidate Policy Identification and Gap Estimation}
\vspace{-0.05in}
This goal of this phase is to always maintain the desired adversarial model selection regret guarantee and simultaneously identify the gap between the best policy and the rest if one exists. To achieve the first goal, we employ the regret balancing and elimination technique of \textsf{Arbe}, see Lines~\ref{lin:init_arbegap}--\ref{lin:arbegap2} of \pref{alg:gap_estimation_balanced} which are virtually identical to \pref{alg:adversarial_epoch_balancing}.

For the second goal, determining the gap, \textsf{Arbe-Gap} maintains a candidate $\wh \pi \in \Pi_M$ for the optimal policy and estimates its gap as follows. The learner hierarchy $\alg[1], \dots, \alg[M]$ is augmented at the top with an additional learner $\alg[M+1]$ operating on the policy class $\Pi_{M+1} = \Pi_M \setminus \{ \wh \pi\}$. Since $\alg[M+1]$ is identical to $\alg[M]$ except that it does not have access to $\wh \pi$, we can obtain a gap estimate for $\wh \pi$ by monitoring the difference in reward estimates $\tildecrew_{M+1}$ and $\tildecrew_{M}$ of $\alg[M+1]$ and $\alg[M]$. 
In fact, $\wh \Delta_t$ in Line~\ref{lin:gap_estimate} is a lower confidence bound on the difference between the best policies in $\wh \Pi_M$ and $\wh \Pi_M \setminus \{ \wh \pi\}$ and thus, the gap of $\wh \pi$. We test at every round whether $\wh \Delta_t$ exceeds its confidence width $\postdeadline{2} \textrm{W}(t_0, t)$. If this test triggers then $\wh \pi$ must have a positive gap $\Delta$ of order $\textrm{W}(t_0, t)$ and further, $\wh \Delta_t$ must be a multiplicative estimate of $\Delta$, that is, $\wh \Delta_t \leq \Delta \leq 2 \wh \Delta_t$. The latter holds, because $\wh \Delta_t + \postdeadline{2} \textrm{W}(t_0, t)$ is an upper-confidence bound on $\Delta$ and the test condition implies $2 \wh \Delta_t \geq \wh \Delta_t + \postdeadline{2} \textrm{W}(t_0, t) \geq \Delta$. Since in this case, we have determined that $\wh \pi$ is optimal with a gap of order $\wh \Delta_t$, we can move on to the second phase $\textsf{Arbe-GapExploit}$ discussed later.

Assume the candidate policy $\wh \pi$ is indeed optimal and exhibits a positive gap $\Delta$. Since $\wh \Delta_t$ concentrates around $\Delta$ at a rate of $\textrm{W}(t_0, t) \approx \frac{\poly(R_M(\wt \Pi_M))}{ \sqrt{t - t_0}}$, the condition of the gap test in Line~\ref{lin:gap_test} must trigger after at most $t - t_0 \lesssim \frac{\poly(R_M(\wt \Pi_M))}{\Delta^2}$ rounds. Finally, since $\textsf{Arbe-Gap}$ always maintains the $\poly(R(\wt \Pi_{i_\star}) \sqrt{t - t_0}$ adversarial regret rate, the total pseudo-regret incurred until the test triggers is of order $\frac{\poly(R_M(\wt \Pi_M))}{\Delta}$, leading to the second term in the bound in \pref{eqn:arbegap_pseudoregret}.

With the techniques above, we can reliably detect a positive gap if the candidate policy $\wh \pi$ exhibits one. It remains to identify a suitable candidate $\wh \pi$ of the optimal policy $\pi^\star$ in stochastic environments. To do so, we use the following two observations: $\textsf{Arbe-Gap}$ maintains a $\poly(R(\wt \Pi_{i_\star}) \sqrt{t}$ adversarial regret rate overall and each policy but $\pi^\star$ will incur on average at least a regret of $\Delta$ per round. Hence, in order to maintain that regret,  $\textsf{Arbe-Gap}$ must select $\pi^\star$ in the majority of all rounds when $t = \omega(\frac{\poly(R(\wt \Pi_{i_\star}))}{\Delta^2})$. Otherwise the regret grows as $\Omega(t\Delta) = \omega(\poly(R(\wt \Pi_{i_\star})) \sqrt{t})$ violating the adversarial rate.
To leverage this observation and identify a suitable candidate policy $\wh \pi$, Line~\ref{lin:policy_test} of \pref{alg:gap_estimation_balanced} always checks whether there is a policy other than the current candidate policy that has been selected in at least $3/4$ of all rounds.\footnote{Due to linking learners, there is slight ambiguity in defining the selected policy per round. We here determine the selected policy as the policy chosen by the learner that eventually picked $a_t$ \emph{after resolving linked actions}.} If so, the algorithm is restarted with this policy as candidate $\wh \pi$. One can show that there are at most $\Ocal(\ln t)$ restarts due to candidate policy switches, only increasing the adversarial regret rate of $\textsf{Arbe-Gap}$ by a factor of $\Ocal(\sqrt{\ln t})$ compared to $\textsf{Arbe}$'s.

Our candidate policy selection approach is similar to that by \citet{wei2021model} for the corrupted reward setting but they require each adapted base learner to achieve a logarithmic regret rate in the first place. Instead, our approach only requires an adversarial regret rate from the base learner. %

\vspace{-0.05in}
\subsection{Second Phase: Exploitation}
\vspace{-0.05in}
Since each base learner only needs to satisfy a $\sqrt{T}$ regret rate even in stochastic environments, we generally cannot hope to recover logarithmic pseudo-regret by only selecting among them. For logarithmic pseudo-regret, we need to ensure that the given policy $\wh \pi$ with gap estimate $\wh \Delta$ is played sufficiently often. However, we also need to monitor its regret against all other policies in case the environments turns out to be adversarial. If at any point, $\wh \pi$ fails to maintain a gap of order $\wh \Delta$, we can conclude that the environment is adversarial. Then \textsf{Arbe-GapExploit} returns and we simply play an instance of $\textsf{Arbe}$ (Line~\ref{lin:adv_fallback} of \pref{alg:gap_estimation_balanced}). 

We will present a brief summary of the main intuition behind our exploitation phase approach here and defer a longer discussion and the detailed pseudo-code to \pref{app:bestofboth}.
In each round, we play policy $\widehat \pi$ with probability approximately $1- \frac{\postdeadline{\poly(R(\Pi_M))}}{\wh \Delta^2 t}$, and with the remaining probability $\frac{\postdeadline{\poly(R(\Pi_M))}}{\wh \Delta^2 t}$ a version of base learner $\alg[M]$ with policy class $\Pi_M \setminus \{ \widehat \pi \}$. 

If the environment is indeed stochastic, then $\wh \pi = \pi^\star$ does not incur any pseudo-regret and the total regret in other $t' \approx t \cdot \frac{\postdeadline{\poly(R(\Pi_M))}}{\wh \Delta^2 t} \approx \frac{\postdeadline{\poly(R(\Pi_M))}}{\wh \Delta^2}$ rounds can be bounded as $t' \cdot \Delta + \regret_{\alg[M]}(t', \Pi_M \setminus \{ \widehat \pi \}) \approx \frac{\poly(R(\Pi_M))}{\Delta} \ln(t)$. The first term is the regret of the best policy in $\Pi_M \setminus \{ \widehat \pi \}$ and the second term is the regret of $\alg[M]$ against that policy. Since $\alg[M]$ is h-stable on $\Pi \setminus \{\wh \pi\}$, its regret against that policy is at most $\poly(R(\Pi_M)) \sqrt{t'}$  and we get the desired pseudo-regret.

To ensure good regret in the adversarial case, we need to detect quickly enough when $\wh \pi$ does not exhibit a performance gap anymore and fall back to a fully adversarial algorithm. Similar to $\wh \Delta_t$ in the first phase, we use a lower confidence bound on the average performance gap and continuously test whether it falls below $\frac{\wh \Delta}{2}$. This simple approach would give $\sqrt{t}$ adversarial regret but may exhibit a $\poly(R(\Pi_M))$ dependency. Fortunately, we can avoid this dependency and retain the desired model selection regret rates by extending the intuition above to also test an upper bound on the gap. For details, see \pref{app:bestofboth}.

\vspace{-0.05in}
\section{Conclusions}
\vspace{-0.05in}
We have described and analyzed a novel model selection scheme for bandit algorithms that benefits from best-of-both-worlds high probability regret guarantees. Our machinery can be specifically applied to adversarial/stochastic linear bandit tasks, where model selection is performed on the unknown dimensionality of the linear reward function. This has required extending the regret balancing technique of model selection from stochastic to adversarial rewards and a very careful handling of the associated mis-specification tests. The base learners aggregated by our meta-algorithm have to satisfy an anytime high probability regret guarantee in the adversarial case, along with regret stability and action space extendability properties which we have shown are satisfied by (variants of) known algorithm in the bandits literature. 

Our best-of-both world model selection regret guarantees cannot in general be improved, specifically in stochastic environments with gaps, where it is generally impossible to obtain $\log t$-like model selection bounds that only depend on the complexity of $\Pi_{i_\star}$. On the other hand, it would be nice to see in \pref{thm:arbegap_main} a better polynomial dependence on $M$ and $R(\Pi_M)$.

\bibliographystyle{abbrvnat}
\bibliography{manual}

\appendix
\newpage

\tableofcontents
\addtocontents{toc}{\protect\setcounter{tocdepth}{3}}

\section{Lower Bound on Model Selection for Stochastic Environments}\label{app:lower_bound}
Considering a simple multi-armed bandit problem suffices to prove that in stochastic environments with gap $\Delta$ it is generally impossible to obtain model selection guarantees of the form
\[
\frac{R(\Pi_{i_\star}) \log T}{\Delta}~.
\]
\begin{theorem}\label{lemma::lower_bound_logarithmic}
Let $K_1, K_2 \in \bbN$ with $K_1 < K_2$ and let $\Delta \in \bbR^+$ be fixed. Further, let $c : \mathbb{N} \rightarrow \mathbb{R}$ be an arbitrary function over the real numbers. If $c(K_1) < K_2 - K_1$, there is a class of $K_2$ multi-armed bandit problems with one optimal arm and gaps in $[\Delta, 2 \Delta]$ and a $T_0 \in \bbN$  such that the following holds.
For any algorithm $\bbA$ and for all $T \geq T_0$, there is a problem instance such that
\begin{align*}
\bbE[\regret_{\bbA}(T)] > 
    \begin{cases}
     \frac{c(K_1)}{\Delta} \ln T & \textrm{ if the optimal arm } a^\star \in [K_1]\\
     \frac{c(K_2)}{\Delta} \ln T & \textrm{ otherwise}~.
    \end{cases}
\end{align*}
\end{theorem}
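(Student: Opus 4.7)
The plan is to prove this via a standard change-of-measure lower bound argument (using the Bretagnolle--Huber inequality) applied to a carefully chosen family of $K_2$ Gaussian bandit instances indexed by which arm is optimal. The key insight is that if the regret on instances with optimal arm in $[K_1]$ is only $c(K_1)\ln T/\Delta$ with $c(K_1) < K_2 - K_1$, then by pigeonhole there must be some arm in $\{K_1+1,\ldots,K_2\}$ that is pulled fewer than $\ln T/\Delta^2$ times, which is insufficient to distinguish it from being optimal.

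First I would fix the family: for each $i \in [K_2]$, let $\nu_i$ be the $K_2$-armed bandit with arm $i$ distributed as $\mathcal{N}(1/2 + \Delta,1)$ and every other arm as $\mathcal{N}(1/2,1)$. Each instance has a unique optimal arm with gap $\Delta \in [\Delta, 2\Delta]$ as required. I argue by contradiction: suppose $\bbE_i[\regret_{\bbA}(T)] \leq \frac{c(K_1)}{\Delta}\ln T$ for every $i \in [K_1]$ and $\bbE_i[\regret_{\bbA}(T)] \leq \frac{c(K_2)}{\Delta}\ln T$ for every $i \in \{K_1+1,\ldots,K_2\}$. Focusing on $\nu_1$, since $\regret_{\bbA}(T) = \Delta\sum_{j\neq 1} N_j(T)$ where $N_j(T)$ counts pulls of arm $j$, the assumption gives $\sum_{j=K_1+1}^{K_2}\bbE_1[N_j(T)] \leq c(K_1)\ln T/\Delta^2$. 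Pigeonhole then produces $j^\star \in \{K_1+1,\ldots,K_2\}$ with $\bbE_1[N_{j^\star}(T)] \leq \frac{c(K_1)\ln T}{(K_2-K_1)\Delta^2}$, and crucially $\alpha := c(K_1)/(K_2-K_1) < 1$ by hypothesis.

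Next I compare $\nu_1$ and $\nu_{j^\star}$, which differ only in the distribution of arm $j^\star$. The standard divergence decomposition for bandits yields
\begin{equation*}
\KL(\bbP_1^T \,\|\, \bbP_{j^\star}^T) = \bbE_1[N_{j^\star}(T)]\cdot \KL\bigl(\mathcal{N}(1/2,1)\,\|\,\mathcal{N}(1/2+\Delta,1)\bigr) = \bbE_1[N_{j^\star}(T)]\cdot \tfrac{\Delta^2}{2} \leq \tfrac{\alpha}{2}\ln T.
\end{equation*}
Apply Bretagnolle--Huber to the event $A = \{N_{j^\star}(T) \geq T/2\}$: one has $\bbP_1(A) + \bbP_{j^\star}(A^c) \geq \tfrac{1}{2}\exp(-\KL) \geq \tfrac{1}{2}T^{-\alpha/2}$. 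Markov's inequality gives $\bbP_1(A) \leq 2\bbE_1[N_{j^\star}(T)]/T = O(\ln T/T) = o(T^{-\alpha/2})$, so for all sufficiently large $T$ we get $\bbP_{j^\star}(A^c) \geq \tfrac{1}{4}T^{-\alpha/2}$. Under $\nu_{j^\star}$, the event $A^c$ forces at least $T/2$ suboptimal pulls, each contributing $\Delta$ to the regret, so $\bbE_{j^\star}[\regret_{\bbA}(T)] \geq \tfrac{T\Delta}{2}\cdot \bbP_{j^\star}(A^c) \geq \tfrac{T^{1-\alpha/2}\Delta}{8}$.

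Finally, since $1 - \alpha/2 > 1/2 > 0$, for any fixed $c(K_2)$ and $\Delta$ there is a threshold $T_0$ such that $T^{1-\alpha/2}\Delta/8 > c(K_2)\ln T/\Delta$ for all $T \geq T_0$, contradicting the assumed upper bound on $\bbE_{j^\star}[\regret_{\bbA}(T)]$. This yields the claim. The main obstacle is simply the clean bookkeeping around the Bretagnolle--Huber constants and quantifying $T_0$ in terms of $\alpha, c(K_2), \Delta$; the structural argument itself is a routine instantiation of the divergence-decomposition lower bound machinery, with the novel element being that the pigeonhole step exploits the gap $K_2 - K_1 - c(K_1) > 0$ to drive the KL below $\ln T$ and hence produce polynomially many undistinguishable rounds.
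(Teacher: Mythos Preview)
Your overall architecture (pigeonhole on pull counts under the ``easy'' instance, divergence decomposition, Bretagnolle--Huber) is exactly the paper's, but your instance family is broken in a way that invalidates the KL step. In your family, $\nu_1$ has arm $1$ at mean $1/2+\Delta$ and arm $j^\star$ at mean $1/2$, while $\nu_{j^\star}$ has arm $1$ at mean $1/2$ and arm $j^\star$ at mean $1/2+\Delta$. These instances differ in \emph{two} arms, not one, so the divergence decomposition is
\[
\KL(\bbP_1^T\,\|\,\bbP_{j^\star}^T)=\bbE_1[N_1(T)]\cdot\tfrac{\Delta^2}{2}+\bbE_1[N_{j^\star}(T)]\cdot\tfrac{\Delta^2}{2}.
\]
Under your own contradiction hypothesis, $\bbE_1[N_1(T)]\geq T-\tfrac{c(K_1)\ln T}{\Delta^2}$, so the first term is $\Theta(T\Delta^2)$, not $O(\ln T)$. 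The Bretagnolle--Huber step then gives nothing useful: $\exp(-\KL)$ is exponentially small in $T$ rather than polynomially small.

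The paper avoids this by making the construction asymmetric: it fixes $\mu_1=(\Delta,0,\dots,0)$ and, for $i>K_1$, sets $\mu_i=(\Delta,0,\dots,\underset{i}{2\Delta},\dots,0)$. Now $\mu_1$ and $\mu_i$ differ only in arm $i$ (arm $1$ keeps mean $\Delta$ in every instance), and the KL becomes $\bbE_1[N_i(T)]\cdot\tfrac{(2\Delta)^2}{2}$, which your pigeonhole bound controls. The price is that the gaps in $\mu_i$ are now $\Delta$ (for arm $1$) and $2\Delta$ (for the rest), which is why the theorem allows gaps in $[\Delta,2\Delta]$ rather than exactly $\Delta$. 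Once you swap in this family, the rest of your argument goes through essentially unchanged; the paper uses the event $\{N_1(T)\leq T/2\}$ rather than $\{N_{j^\star}(T)\geq T/2\}$, but either choice works.
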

\begin{remark}
The proof below actually shows the following stronger version
\begin{align*}
\bbE[\regret_{\bbA}(T)] > 
    \begin{cases}
     \frac{c(K_1)}{\Delta} \ln T & \textrm{ if the optimal arm } a^\star \in [K_1]\\
     \frac{\Delta}{16}T^{1-\frac{c(K_1)}{K_2 - K_1}}  & \textrm{ otherwise}~.
    \end{cases}
\end{align*}
This shows that if we aim to obtain an instance-dependent logarithmic regret bound that only scales with $K_1$ when $a^\star \in [K_1]$, then we cannot recover sublinear regret for $K_2 \gg K_1$ when $a^\star \notin [K_1]$.
\end{remark}

\begin{proof}
We will show the statement for $K_1 = 1$ but it can be trivially generalized to $K_1 > 1$. The rewards of all arms and in all instances are drawn from a Gaussian distribution with variance $1$ but different means. We denote by $\mu_i^{j}$ the mean reward of arm $j$ in instance $i$. We identify each bandit instance in the family by its mean rewards $\mu_i$. They are given by
\begin{align}
\mu_i &= [\Delta, 0, \dots, 0]&\textrm{ for $i = 1$} \label{equation::mean_definition_lower_bound1}\\
    \mu_i &= [\Delta, 0, \dots, \underset{i\textrm{'th pos}}{\underbrace{2 \Delta}}, 0, \dots, 0] &\textrm{ for $i > 1$} \label{equation::mean_definition_lower_bound2}
\end{align}
Thus, arm $i$ is optimal in instance $i$.
Consider any algorithm $\bbA$ and assume it violates the lower bound for the case where $a^\star \in [K_1]$, i.e., in the first problem instance,
\begin{align*}
    \bbE\left[\regret_{\mathbb{A}}(T, \mu_1) \right]\leq \frac{c(K_1)\log(T)}{\Delta}
\end{align*}
holds for some $T \geq T_0$ (we will specify $T_0$ later).
Otherwise, the statement is already true. We will now show that this algorithm has to satisfy the regret lower bound in one of the other problem instances for $T$.
By definition of $\mu_1$, we can write the expected regret as
\begin{equation*}
    \bbE \regret_{\mathbb{A}}( T, \mu_1) \geq 2\Delta  \sum_{i=K_1 + 1}^{K_2} \bbE_1\left[ T_i(T)\right]~,
\end{equation*}
where $T_i(T)$ is the (random) number of times $\mathbb{A}$ has pulled arm $i$ up to time $T$. $\bbE_j$ and $\bbP_j$ denote the expectation and the probability distribution induced by algorithm $\bbA$ in instance $j$, respectively. Consider now a problem instance $\wh i \in \argmin_{K_1 + 1 \leq i \leq K_2} \bbE_1 \left[ T_i(T)\right]$. As a consequence of the previous two observations,
\begin{equation*}
  2 \Delta (K_2 - K_1) \bbE_1 \left[ T_{\widehat{i}}(T)\right] \leq  \frac{c(K_1)\log(T)}{\Delta}~,
\end{equation*}
hence
\begin{equation*}
    \bbE_1 \left[ T_{\widehat{i}}(T)\right] \leq  \frac{c(K_1)\log(T)}{2(K_2 - K_1)\Delta^2}~.
\end{equation*}
By the divergence decomposition \citep[Lemma 15.1]{lattimore2018bandit}, 
\begin{align*}
    \mathrm{KL}\left( \bbP_1, \bbP_{\wh i} \right)
    &= \bbE_1 \left[ T_{\widehat{i}}(T) \right] \frac{(2\Delta)^2}{2} \leq \frac{c(K_1)\log(T)}{K_2 - K_1}~.
\end{align*}
Define the event $\mathcal{E} = \{ T_1(T) \leq \frac{T}{2}\}$. Notice that,
\begin{equation*}
   \frac{T}{2} \Delta   \bbP_1 \left( \mathcal{E} \right) 
   \leq   \bbE_1 \regret_{\bbA}(T, \mu_1) \leq \frac{c(K_1) \log(T)}{\Delta}
\end{equation*}
which implies,
\begin{equation}\label{equation::upper_bound_probability_1}
     \bbP_1 (\cE)  \leq \frac{2c(K_1) \log(T)}{\Delta^2 T}~.
\end{equation}
By the Bretagnolle-Huber inequality,
\begin{equation*}
    \bbP_1 \left( \mathcal{E} \right)  + \bbP_{\wh i} \left( \mathcal{E}^c \right)  \geq \frac{1}{2} \exp\left(-\frac{c(K_1)\log(T)}{K_2 - K_1}  \right) =\frac{1}{2} \left(\frac{1}{T} \right)^{\frac{c(K_1)}{K_2 - K_1}}~,
\end{equation*}
and by combining the last two inequalities, we can lower bound $\bbP_{\wh i} \left( \mathcal{E}^c \right) $ as 
\begin{equation*}
    \bbP_{\wh i} \left( \mathcal{E}^c \right) 
    \geq 
    \frac{1}{2} \left(\frac{1}{T} \right)^{\frac{c(K_1)}{K_2 - K_1}} - \frac{2c(K_1) \log(T)}{\Delta^2 T}~.
\end{equation*}
Since $ \bbE_{\wh i}\regret_{\bbA}(T, \mu_{\wh i})  \geq \frac{T \Delta}{2} \bbP_{\wh i} \left( \mathcal{E}^c \right)$ this implies,
\begin{equation*}
    \bbE_{\wh i}\regret_{\bbA}(T, \mu_{\wh i})  
    \geq 
    \frac{T\Delta}{2} \left( \frac{1}{2} \left(\frac{1}{T} \right)^{\frac{c(K_1)}{K_2 - K_1}} - \frac{2c(K_1) \log(T)}{\Delta T} \right) = \frac{\Delta}{4}T^{1-\frac{c(K_1)}{K_2 - K_1}} - c(K_1) \log(T)~.
\end{equation*}
By setting $T_0$ sufficiently large as a function of $\Delta, K_1, K_2$ and $c$, we can conclude that 
 \begin{equation*}
    \bbE_{\wh i}\regret_{\bbA}(T, \mu_{\wh i}) \geq 
    \frac{\Delta}{4}T^{1-\frac{c(K_1)}{K_2 - K_1}} - c(K_1) \log(T)
    \geq \frac{\Delta}{8}T^{1-\frac{c(K_1)}{K_2 - K_1}} 
    > \frac{c(K_2) \log(T)}{\Delta}~.
\end{equation*}
Specifically, it suffices to set $T_0$ as the smallest value of $T$ that satisfies the last two inequalities in the display.
This shows that the lower bound holds at time $T$ for problem instance $\wh i$.
\end{proof}

\section{Examples of h-Stability and Extendability}\label{app:geometric_hedge}
This appendix shows examples of h-stability and extendabilty. The first example, contained in Section \ref{ssa:geomhedge}, is a variant of the Geometric Hedge algorithm from \cite{bartlett2008high}. We sketch a second example in Section \ref{ssa:exp4}, where we deal with a high probability variant of the Exp4 algorithm from \cite{a+03}.

\subsection{Geometric Hedge for Adversarial Linear Bandits}\label{ssa:geomhedge}
Let us start by introducing a weighted version of the Geometric Hedge Algorithm that satisfies the h-stability condition of Definition~\ref{def:high_probability_stability}. We consider the setting where the action $\mathcal{A} \subset \mathbb{R}^d$ is finite (but potentially large). We denote by $\mathbf{a}_t$ and $\boldsymbol{\omega}_t$ the learner's action selection and the adversary's reward vector at time $t$, respectively. The associated reward $r_t = \mathbf{a}_t^\top\boldsymbol{\omega}_t$ is assumed to lie in the interval $[-1,1]$. Moreover, let $\delta \in (0,1)$ be a probability parameter and set for brevity $\delta' = \frac{\delta}{|\mathcal{A}|}$. 

The algorithm, an anytime variant of the Geometric Hedge algorithm from \cite{bartlett2008high} with John's ellipsoid exploration (e.g., \cite{bubeck2012regret}) is detailed in Algorithm \ref{Alg:AnytimeGeoHedge}. We call the algorithm Anytime Weighted Geometric Hedge.
The algorithm takes in input the set of actions $\Acal$, the failure probability $\delta$, and a weighting probability $\rho \in (0,1]$ which will play the role of an importance weight. As is standard, the algorithm maintains over time a distribution $p_t$ over $\Acal$, which is itself a mixture of an exponential weight distribution $q_t$ and an exploration distribution $p_E$. The exploration distribution $p_E$ is defined beforehand to be the John's ellipsoid distribution associated with $\Acal$ (see \cite{bubeck2012regret} for details). As in Geometric Hedge \cite{bartlett2008high}, the algorithm builds a covariance matrix $\boldsymbol{\Sigma}_t$ by computing the expectation of $\mathbf{a}\mathbf{a}^\top$ where $\mathbf{a}$ is drawn according to the current distribution $p_t$ over actions.\footnote
{
In this section, we denote by $\mathcal{F}_{t-1}$ the $\sigma$-algebra generated by all past random variables up to, but excluding, the random draw of $\mathbf{a}_t$ (so that $p_t$ is $\mathcal{F}_{t-1}$-measurable).
}
Then, the algorithm samples a Bernoulli random variable $b_t$, and computes an (importance-weighted) unbiased estimator $\widehat{\boldsymbol{\omega}}_t$ of $\boldsymbol{\omega}_t$, which is plugged into the (biased) reward estimator $\widetilde{r}_t(\mathbf{a})$ that the algorithm associates with every action $\mathbf{a} \in \Acal$. The factors $\widetilde{r}_t(\mathbf{a})$ are those that determine the exponential update of distribution $p_t$.

\begin{algorithm2e}
\textbf{Input:} Action set $\mathcal{A}$, failure probability $\delta$, weighting probability $\rho \in (0,1]$. \\
Initialize $w_1(\mathbf{a}) = 1$, $W_1 = |\mathcal{A}|$ and $q_1(\mathbf{a}) = \frac{1}{|\mathcal{A}|}$ for all $\mathbf a \in \Acal$\\
\For{$t=1, 2, \cdots $}{
Compute sampling distribution 
\begin{align}
p_t(\mathbf{a}) = (1-\gamma_t) q_t(\mathbf{a}) + \gamma_tp_E(\mathbf{a})\qquad
\textrm{where } q_t(\mathbf{a}) = \frac{w_t(\mathbf{a}) }{W_t}\label{equation::AnytimeGeoHedgeWeightProbDef}
\end{align}

Adversary generates reward vector $\boldsymbol{\omega}_t$\\
Sample action $\mathbf{a}_t \sim p_t$\\
Observe and gather reward $r_t = \mathbf{a}_t^\top\boldsymbol{\omega}_t$\\
Build covariance matrix $\boldsymbol{\Sigma}_t = \mathbb{E}_{\mathbf{a} \sim p_t}\left[ \mathbf{a} \mathbf{a}^\top | \mathcal{F}_{t-1} \right]$\\
Sample $b_t \sim \mathrm{Ber}(\rho)$\\
Compute unbiased reward vector estimator $\widehat{\boldsymbol{\omega}}_t = b_t \frac{r_t  \boldsymbol{\Sigma}_t ^{-1} \mathbf{a}_t}{\rho}$ \\
Compute the reward upper bounds $$
\widetilde{r}_t(\mathbf{a}) = \mathbf{a}^\top\widehat{\boldsymbol{\omega}}_t  + 2\mathbf{a}^\top \boldsymbol{\Sigma}_t^{-1} \mathbf{a} \sqrt{\frac{\ln(12t^2/\delta')}{\rho\, d\,t} }\ \ \
\forall \mathbf{a} \in \Acal
$$
Update distribution
\begin{equation}
w_{t+1}(\mathbf{a}) =  \exp\left( \eta_{t+1} \sum_{\ell=1}^t \widetilde{r}_\ell(\mathbf{a}) \right)\ \ \forall \mathbf{a} \in \Acal \label{equation::AnytimeGeoHedgeWeightUpdate}
\end{equation}
Update normalization factor $W_{t+1} = \sum_{\mathbf{a} \in \Acal} w_{t+1}(\mathbf{a})$\\
}
\caption{Anytime Weighted Geometric Hedge.}
\label{Alg:AnytimeGeoHedge}
\end{algorithm2e}

\begin{remark}
For simplicity, the algorithm is formulated for the case where the action space $\Acal$ is finite. When $\Acal$ is infinite, we can still formulate an algorithm that applies to a $\epsilon$-cover of $\Acal$ that is restarted at exponentially increasing time-steps $t_0 = 1, 2, 4, 8, \ldots,$. At the beginning of each epoch, the covering level $\epsilon$ is set to $\Ocal(1/t_0)$ so as to obtain an anytime algorithm. A very similar bound to the one in Theorem \ref{theorem::main_weighted_geomhedge_constantprob_simplified} is obtained, where $|\Acal|$ therein is replaced by $|\Acal_{1/t}|$, $\Acal_{1/t}$ being a $1/t$-cover of $\Acal$ w.r.t. the infinity norm.
\end{remark}

\begin{remark}
Notice that we are using the fact that $\boldsymbol{\Sigma}_t$ is invertible for all $t$. This is no loss of generality, as we can always assume that $\Acal$ spans the whole $d$-dimensional space (if this is not the case, we can project each $\mathbf{a}$ onto the space spanned by $\Acal$ and reduce to this case). Combined with the fact that, for all $t$, distribution $p_t$ assigns a nonzero probability to each action, this implies that the expectation $\boldsymbol{\Sigma}_t = \mathbb{E}_{\mathbf{a} \sim p_t}\left[ \mathbf{a} \mathbf{a}^\top | \mathcal{F}_{t-1} \right]$ must be full rank.
\end{remark}

\subsubsection{h-Stability}
Setting the mixture factor $\gamma_t$ and the learning rate $\eta_t$ appropriately, we can prove the following regret guarantee for Algorithm~\ref{Alg:AnytimeGeoHedge}.

\begin{theorem}\label{theorem::main_weighted_geomhedge_constantprob_simplified}
Let the Anytime Weighted Geometric Hedge Algorithm be run with 
$$
\eta_t = \Ocal\left(\frac{\rho \gamma_t}{d + \sqrt{\frac{d}{t}} \sqrt{\rho \ln |\mathcal{A}| \ln \frac{t}{\delta} }}\right)~,
\qquad 
\gamma_t =  \min\left\{\sqrt{\frac{d \ln|\mathcal{A}|\ln \frac{t}{\delta}}{\rho\,t}},\frac{1}{2}\right\}~,
$$
and $\rho \in (0,1]$. Then with probability at least $1-\delta(3+2d^2)$, simultaneously for all $t,$ the regret $\regret(t)$ after $t$ rounds can be bounded as
\begin{align*}
\mathrm{Reg}(t) = \Ocal\left( \sqrt{  \frac{ d\, t\, \ln |\Acal|}{\rho}  \ln \frac{t}{\delta} } + \rho \ln |\mathcal{A}| \ln \frac{t}{\delta} \ln t 
+ \left( \frac{\ln |\mathcal{A}|}{d} \ln \frac{t}{\delta}  \right)^{1/4} t^{1/4}\left(\frac{1}{\rho} \right)^{3/4} + \frac{1}{\rho} \ln \frac{t}{\delta}\right)~.
\end{align*}
In the above, the big-oh notation only hides absolute constants. 
\end{theorem}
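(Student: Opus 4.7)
The plan is to run the classical optimistic bandit exponential-weights analysis, adapted to the importance-weighting factor $\rho$. I decompose the regret into five standard pieces: (I) an Azuma-type martingale term contributed by $r_\ell - \mathbb{E}_{\mathbf{a}\sim p_\ell}[\mathbf{a}^\top\boldsymbol{\omega}_\ell]$, which gives $\mathcal{O}(\sqrt{t\ln(t/\delta)})$; (II) an exploration loss $\mathcal{O}(\sum_{\ell\le t}\gamma_\ell)$ arising from the mixture $p_\ell=(1-\gamma_\ell)q_\ell+\gamma_\ell p_E$, using $\|p_\ell-q_\ell\|_1\le 2\gamma_\ell$ and bounded rewards; (III) an ``optimism gap'' $\sum_\ell (\mathbf{a}^{\star\top}\boldsymbol{\omega}_\ell - \widetilde{r}_\ell(\mathbf{a}^\star))$ that the bonus is designed to make non-positive with high probability; (IV) the Hedge regret on the surrogate rewards $\widetilde{r}_\ell$ played under $q_\ell$; and (V) a bias term $\sum_\ell \mathbb{E}_{q_\ell}[\widetilde{r}_\ell(\mathbf{a})-\mathbf{a}^\top\boldsymbol{\omega}_\ell]$ equal to the averaged bonus. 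Each of these is standard in isolation; the work is in the interaction with $\rho$ and with the time-varying $\eta_\ell,\gamma_\ell$.

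\textbf{Spectral estimates from John's exploration.} Two bounds do almost all of the work. First, since $p_\ell\ge\gamma_\ell p_E$, we have $\boldsymbol{\Sigma}_\ell\succeq\gamma_\ell\boldsymbol{\Sigma}_E$, and John's property yields the worst-case $\mathbf{a}^\top\boldsymbol{\Sigma}_\ell^{-1}\mathbf{a}\le d/\gamma_\ell$ for every $\mathbf{a}\in\mathcal{A}$; in particular $|\mathbf{a}^\top\widehat{\boldsymbol{\omega}}_\ell|\le d/(\rho\gamma_\ell)$ by Cauchy--Schwarz. Second, the identity $(1-\gamma_\ell)\mathbb{E}_{q_\ell}[\mathbf{a}\mathbf{a}^\top]\preceq\boldsymbol{\Sigma}_\ell$ gives the much sharper in-expectation estimate $\mathbb{E}_{q_\ell}[\mathbf{a}^\top\boldsymbol{\Sigma}_\ell^{-1}\mathbf{a}]\le d/(1-\gamma_\ell)\le 2d$. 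A direct computation then shows $\mathbb{E}[\mathbf{a}^\top\widehat{\boldsymbol{\omega}}_\ell\mid\mathcal{F}_{\ell-1}]=\mathbf{a}^\top\boldsymbol{\omega}_\ell$ and $\mathbb{E}[(\mathbf{a}^\top\widehat{\boldsymbol{\omega}}_\ell)^2\mid\mathcal{F}_{\ell-1}]\le \mathbf{a}^\top\boldsymbol{\Sigma}_\ell^{-1}\mathbf{a}/\rho$, so Freedman's inequality applied to $\sum_\ell\mathbf{a}^\top(\widehat{\boldsymbol{\omega}}_\ell-\boldsymbol{\omega}_\ell)$ with this conditional variance and range $d/(\rho\gamma_\ell)$ produces, after a dyadic union bound in $t$ and a union bound over $\mathcal{A}$ (costing $\delta'=\delta/|\mathcal{A}|$), an upper bound matched term-by-term by the bonus $2\mathbf{a}^\top\boldsymbol{\Sigma}_\ell^{-1}\mathbf{a}\sqrt{\ln(12\ell^2/\delta')/(\rho d\ell)}$. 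This makes (III) non-positive with probability at least $1-\mathcal{O}(d^2\delta)$, up to an additive $\frac{1}{\rho}\ln(t/\delta)$ slack from the Freedman range term. Term (V) is then bounded by the sum of averaged bonuses, which using $\mathbb{E}_{q_\ell}[\mathbf{a}^\top\boldsymbol{\Sigma}_\ell^{-1}\mathbf{a}]\le 2d$ is $\mathcal{O}(\sqrt{dt\ln(t/\delta)/\rho})$.

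\textbf{Hedge step and tuning.} For (IV) I use the anytime Hedge inequality, bounding $\sum_\ell\widetilde{r}_\ell(\mathbf{a}^\star)-\sum_\ell\mathbb{E}_{q_\ell}[\widetilde{r}_\ell(\mathbf{a})]$ by $\frac{\ln|\mathcal{A}|}{\eta_t}+\sum_\ell\eta_\ell\mathbb{E}_{q_\ell}[\widetilde{r}_\ell(\mathbf{a})^2]$ up to a telescoping correction, valid as long as $\eta_\ell|\widetilde{r}_\ell(\mathbf{a})|\lesssim 1$. Using $\mathbb{E}_{q_\ell}[(\mathbf{a}^\top\widehat{\boldsymbol{\omega}}_\ell)^2]\le d/\rho$ and the small-in-expectation size of the bonus, the variance contribution per round is $\mathcal{O}(\eta_\ell d/\rho)$, and the range ceiling $|\widetilde{r}_\ell|\lesssim d/(\rho\gamma_\ell)+\text{bonus}$ forces the schedule $\eta_\ell\asymp \rho\gamma_\ell/\bigl(d+\sqrt{d/\ell}\sqrt{\rho\ln|\mathcal{A}|\ln(\ell/\delta)}\bigr)$ stated in the theorem. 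Plugging in $\gamma_\ell=\min\{\sqrt{d\ln|\mathcal{A}|\ln(\ell/\delta)/(\rho\ell)},1/2\}$ balances (II), (IV) and (V), giving the leading $\sqrt{d\,t\ln|\mathcal{A}|\ln(t/\delta)/\rho}$ rate; the three subleading terms in the statement track, respectively, the Freedman range summed across rounds ($\rho\ln|\mathcal{A}|\ln(t/\delta)\ln t$), the bonus--variance coupling when $\gamma_\ell$ saturates at $1/2$ or when the bonus contributes second order (the $t^{1/4}/\rho^{3/4}$ term), and the uniform-in-$t$ range slack from Freedman ($\rho^{-1}\ln(t/\delta)$).

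\textbf{Main obstacle.} The technical heart of the argument is the high-probability (rather than in-expectation) control of $\widehat{\boldsymbol{\omega}}_\ell$ \emph{uniformly in $t$ and in $\mathbf{a}\in\mathcal{A}$}, while preserving the $1/\sqrt{\rho}$ scaling required for h-stability. The summand range $d/(\rho\gamma_\ell)$ rules out Hoeffding, but Freedman's input variance $\mathbf{a}^\top\boldsymbol{\Sigma}_\ell^{-1}\mathbf{a}/\rho$ itself depends on the time-varying $\gamma_\ell$, so the peeling over dyadic time blocks must be coordinated with the schedule so that the chosen bonus dominates the cumulative tail and simultaneously does not inflate (V). This is also what dictates the specific form $2\mathbf{a}^\top\boldsymbol{\Sigma}_\ell^{-1}\mathbf{a}\sqrt{\ln(12\ell^2/\delta')/(\rho d\ell)}$ of the bonus. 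Once this concentration is established, the remaining pieces --- the anytime Hedge inequality, the exploration and Azuma terms, and the parameter tuning --- are essentially standard bookkeeping.
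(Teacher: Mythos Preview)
Your five-term decomposition through $q_\ell$ is a legitimate variant of the paper's route, which instead runs the potential argument through $p_\ell$ and thereby picks up an extra ``exploration-distribution'' term $-\sum_\ell\frac{\gamma_\ell}{1-\gamma_\ell}\mathbb{E}_{\mathbf{a}\sim p_E}[\widetilde r_\ell(\mathbf{a})]$. The paper controls that term by applying the weighted Freedman lemma to every action in $\operatorname{supp}(p_E)$, which is precisely where the $2d^2$ in the failure probability $1-(3+2d^2)\delta$ originates. By peeling off the exploration loss deterministically as your (II), you avoid that term altogether, so your failure-probability claim ``$1-\mathcal O(d^2\delta)$ for (III)'' is unmotivated in your own decomposition (a union bound over $\mathcal A$ with $\delta'=\delta/|\mathcal A|$ costs only $\delta$).

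There is, however, a genuine gap: you single out (III) as the only place requiring Freedman and call (IV), (V) ``standard bookkeeping'', but both need exactly the same kind of high-probability control. For (V), the quantity $\mathbb{E}_{q_\ell}[\widetilde r_\ell(\mathbf{a})-\mathbf{a}^\top\boldsymbol{\omega}_\ell]$ is \emph{not} the averaged bonus: it equals $\bar{\mathbf{a}}_\ell^\top(\widehat{\boldsymbol{\omega}}_\ell-\boldsymbol{\omega}_\ell)+\mathbb{E}_{q_\ell}[\text{bonus}(\mathbf{a})]$, and the first summand is a martingale increment with conditional variance $\le d/\rho$ and range $d/(\rho\gamma_\ell)$ that must be summed and bounded via Freedman (this is the paper's Lemma on term~$\mathbf{II}$). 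For (IV), the Hedge variance term $\sum_\ell\eta_\ell\mathbb{E}_{q_\ell}[\widetilde r_\ell(\mathbf{a})^2]$ averages only over $\mathbf{a}\sim q_\ell$ with $\widehat{\boldsymbol{\omega}}_\ell$ realized; it equals (up to constants) $\frac{b_\ell r_\ell^2}{\rho^2}\mathbf{a}_\ell^\top\boldsymbol\Sigma_\ell^{-1}\mathbf{a}_\ell$, a random quantity whose \emph{conditional expectation} is $\le d/\rho$ but whose value can be as large as $d/(\rho^2\gamma_\ell)$, so ``$\mathbb{E}_{q_\ell}[(\mathbf{a}^\top\widehat{\boldsymbol{\omega}}_\ell)^2]\le d/\rho$'' is false pathwise and another Freedman step is needed (the paper's Lemma on term~$\mathbf{III}$). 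Both repairs use the tool you already have, but as written the proposal undercounts two of the four concentration events and treats expectation bounds as almost-sure bounds.
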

Hence we have the following corollary.
\begin{corollary}
Let the complexity $R(\Pi)$ of the policy space $\Pi$ be defined as $R(\Pi) = \sqrt{d\,\log |\Acal|}$. Then with the same assumptions and setting as in Theorem \ref{theorem::main_weighted_geomhedge_constantprob_simplified}, the Anytime Weighted Geometric Hedge algorithm is h-stable in that, for $t \rightarrow \infty$ and constant $\rho$ independent of $t$, its regret $\regret(t)$ satisfies
\begin{equation*}
\regret(t) = \Ocal\left(R(\Pi)\,\sqrt{\frac{t}{\rho}  \ln \frac{t}{\delta} }\right)~,
\end{equation*}
with probability at least $1-\delta$,
where the big-oh hides terms in $t$ which are lower order than $\sqrt{t\log t}$ as $t \rightarrow \infty$.
\end{corollary}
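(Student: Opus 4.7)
The plan is to treat the Corollary as a simple asymptotic consequence of \pref{theorem::main_weighted_geomhedge_constantprob_simplified}, not to re-prove any regret estimate. I would first apply the Theorem with its failure parameter set to $\delta/(3+2d^2)$ in place of $\delta$, so that the cumulative failure probability over all $t$ becomes at most $\delta$ as required; the only effect inside the bound is to replace $\ln\tfrac{t}{\delta}$ by $\ln\tfrac{t(3+2d^2)}{\delta}$, which differs from $\ln\tfrac{t}{\delta}$ only by an additive $\ln(3+2d^2)$ that is independent of $t$ and hence harmless in the asymptotic analysis.

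Next, with $R(\Pi)\ldef\sqrt{d\ln|\mathcal{A}|}$, I would rewrite the first (dominant) term of the Theorem's bound verbatim as
\[
\sqrt{\frac{d\,t\,\ln|\mathcal{A}|}{\rho}\,\ln\frac{t}{\delta}} \;=\; R(\Pi)\sqrt{\frac{t}{\rho}\ln\frac{t}{\delta}},
\]
which already matches the target rate. The remaining work is therefore a purely elementary growth comparison: I must check that the other three terms in the Theorem's bound are $o\!\left(\sqrt{t\ln t}\right)$ as $t\to\infty$ with $\rho,\delta,d,|\mathcal{A}|$ held fixed.

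Concretely, the second term is $\rho\ln|\mathcal{A}|\,\ln\tfrac{t}{\delta}\ln t = O((\ln t)^2)$, the third term is $\bigl(\tfrac{\ln|\mathcal{A}|}{d}\ln\tfrac{t}{\delta}\bigr)^{1/4}t^{1/4}\rho^{-3/4} = O(t^{1/4}(\ln t)^{1/4})$, and the fourth term is $\rho^{-1}\ln\tfrac{t}{\delta} = O(\ln t)$. Each of these is strictly dominated by $\sqrt{t\ln t}$ for large $t$, so they can all be absorbed inside the big-oh that the Corollary explicitly allows to hide ``terms in $t$ which are lower order than $\sqrt{t\log t}$''. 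Combining the three comparisons with the rewriting of the leading term yields the stated bound, and the high-probability statement follows from the rescaling of $\delta$ done at the outset.

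There is essentially no obstacle here: the Corollary is a repackaging of \pref{theorem::main_weighted_geomhedge_constantprob_simplified} that (i) identifies the leading term with $R(\Pi)\sqrt{(t/\rho)\ln(t/\delta)}$ under the definition $R(\Pi)=\sqrt{d\ln|\mathcal{A}|}$, (ii) absorbs the polylogarithmic and $t^{1/4}$ terms into lower-order asymptotics, and (iii) rescales the failure probability from $\delta(3+2d^2)$ to $\delta$ by a constant-in-$t$ reparametrization. The only mild care required is to make explicit that the hidden constants (and the suppressed additive $\ln(3+2d^2)$ from the rescaling) may depend on $d$, $|\mathcal{A}|$, and $\rho$, which is permitted because these are treated as constants in the limit $t\to\infty$ that the Corollary considers.
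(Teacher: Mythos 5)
Your proposal is correct and matches what the paper intends: the corollary is an immediate repackaging of Theorem~\ref{theorem::main_weighted_geomhedge_constantprob_simplified}, obtained by identifying the leading term $\sqrt{dt\ln|\Acal|\ln(t/\delta)/\rho}$ with $R(\Pi)\sqrt{(t/\rho)\ln(t/\delta)}$, absorbing the remaining $O((\ln t)^2)$, $O(t^{1/4}(\ln t)^{1/4})$, and $O(\ln t)$ terms into the lower-order slack the corollary explicitly permits, and rescaling $\delta$ by the constant factor $3+2d^2$. The paper gives no separate proof precisely because the argument is this routine, and your handling of the $\delta$-rescaling and the $d,|\Acal|,\rho$-dependent constants is the right level of care.
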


\subsubsection{Extendability}
The extendability of the Anytime Weighted Geometric Hedge algorithm is easily obtained by simply observing that, since $\Acal \subseteq \mathbb{R}^d$, we can extend the dimensionality $d$ to $d+k$. Then we extend each $\mathbf{a} =(a_1, \ldots, a_d) \in \Acal$ to $\mathbf{a'} =(a_1, \ldots, a_d, \underbrace{0,\ldots,0}_{k\ \ {\mbox{zeros}}})$, and add $k$ new actions $\mathbf{b}_i$ of the form
\[
\mathbf{b}_i = (\underbrace{0,\ldots,0}_{d\ \   {\mbox{zeros}}},\underbrace{0,\ldots, 1,\ldots,0}_{{\mbox{position $i$}}})
\]
for $i = 1, \ldots, k$. Denote the extended policy space by $\Acal'$.
Consistent with this extension, any policy $\pi$ in the original policy space $\Pi$ has to be interpreted as a probability distribution in $\Delta_{\Acal'}$, whose last $k$ components are zero, while the $k$ extra indicator policies $\indicator{b_1}, \ldots,\indicator{b_k}$ are degenerate probability distributions in $\Delta_{\Acal'}$, where
$\indicator{b_i}$ places all its probability mass on the $(d+i)$-th component.
Finally the adversary reward vector $\boldsymbol{\omega}_t \in \mathbb{R}^d$ turns into the $(d+k)$-dimensional vector $\boldsymbol{\omega}'_t \in \mathbb{R}^{d+k}$, where the first $d$ components of the two vectors are the same, and the $(d+i)$-th component of $\boldsymbol{\omega}'_t$ is simply the regret generated for the $i$-th extra action $\mathbf{b}_i$.

\subsubsection{Removing an Individual Policy For Best of Both Worlds Regret}\label{section::removing_individual_policy} 

For the best of both worlds regret in \pref{sec:bbw} we adopt the following view. Here, each policy corresponds to a single action $a \in \cA$, i.e., the policy space of the learner is $\Pi = \{ \indicator{a} \colon a \in \cA\}$. In this case, the random draw of $\mathbf{a}_t \sim p_t$ is interpreted as a random choice of \emph{policy}. Note that this view does not impact the regret guarantee and is consistent with approach for extendability above. This view is necessary for a positive gap to be possible and removing a certain policy can be easily implemented by removing the corresponding action.
\subsubsection{Proofs}

We work under the assumption all the rewards $r_t$ have values between $-1$ and $1$:
\begin{assumption}[Boundedness]
\label{ass:boundedness}
The true rewards $r_t$ 
are bounded in that $\forall \mathbf{a} \in \mathcal{A}$ and $\forall t \in \mathbb{N}$ we have
$
|\mathbf{a}^\top \boldsymbol{\omega}_t| \leq 1$. %
\end{assumption}

For all $\mathbf{a} \in \mathcal{A}$, define 
$$
\widehat{r}_t(\mathbf{a}) = \mathbf{a}^\top \widehat{\boldsymbol{\omega}}_t\ \ \ {\mbox{where}}\ \ \ \widehat{\boldsymbol{\omega}}_t = b_t\frac{r_t \left( \boldsymbol{\Sigma}_t \right)^{-1} \mathbf{a}_t}{\rho}~.
$$ 
We will use the notation $\mathbb{E}[ \cdot | \mathcal{F}_t]$ to denote the conditional expectation where the sigma algebra $\mathcal{F}_t$ is generated by the random variables $\left( \boldsymbol{\omega}_1, b_1,  \mathbf{a}_1,  \cdots,\boldsymbol{\omega}_{t-1}, b_{t-1}, \mathbf{a}_{t-1}, \boldsymbol{\omega}_t, b_t, \mathbf{a}_t \right)$. Let $\mathcal{F}_t^-$ be the sigma algebra generated by $\left( \boldsymbol{\omega}_1, b_1, \mathbf{a}_1,  \cdots,\boldsymbol{\omega}_{t-1}, b_{t-1}, \mathbf{a}_{t-1}, \boldsymbol{\omega}_t \right)$. Observe that $\widehat{\boldsymbol{\omega}}_t$, $\widehat{r}_t(\cdot)$ and $\widetilde{r}_t(\cdot)$ are $\mathcal{F}_{t}$ measurable, $\boldsymbol{\Sigma}_t$ is $\mathcal{F}_{t-1}$ measurable, $\mathbb{E}\left[    \widehat{\boldsymbol{\omega}}_t | \mathcal{F}^-_{t}\right] = \boldsymbol{\omega}_t $ and $\mathbb{E}\left[b_t | \mathcal{F}_{t-1} \right]= \mathbb{E}\left[b_t | \mathcal{F}^{-}_{t} \right] = \rho$. When considering $\mathbb{E}\left[ \cdot | \mathcal{F}_{t}^{-} \right] $, the expectation is over $\mathbf{a}_t$, $b_t$ holding $\boldsymbol{\omega}_t$ fixed.  
Every time we consider an expectation of the form 
$\mathbb{E}_{\mathbf{a} \sim p_t} \left[ \widehat{r}_t(\mathbf{a}) |\mathcal{F}_t^{-}\right] $, 
$\mathbb{E}_{\mathbf{a} \sim p_t} \left[ \widetilde{r}_t(\mathbf{a}) |\mathcal{F}_t^{-}\right] $, $\mathbb{E}_{\mathbf{a} \sim p_t} \left[ \widehat{r}_t(\mathbf{a}) | \mathcal{F}_t\right] $ 
or 
$\mathbb{E}_{\mathbf{a} \sim p_t} \left[ \widetilde{r}_t(\mathbf{a}) | \mathcal{F}_t\right] $ the random variable $\mathbf{a}$ is a sample from $p_t$ conditionally independent from $\mathbf{a}_t$ given $\mathcal{F}_t$ or $\mathcal{F}_t^{-}$. 
Moreover, for notational simplicity, whenever possible, we will omit the absolute multiplicative constants, and instead resort to a big-oh notation.

\begin{lemma}\label{lemma::useful_properties}
Let 
\begin{equation}\label{equation::upper_bound_quadratic_form}
    \sup_{\mathbf{a}, \mathbf{b} \in \mathcal{A}} \mathbf{a}^\top \boldsymbol{\Sigma}_t^{-1}  \mathbf{b} \leq \frac{c(d)}{\gamma_t}~,
\end{equation}
for some function $c(\cdot)$ whose value will be detailed later on.
Then for any fixed $\mathbf{a} \in \mathcal{A}$ and $t \in \mathbb{N}$ the following holds:
\begin{enumerate}
\item $|\widehat{r}_t(\mathbf{a}) | \leq \frac{c(d)}{\rho\gamma_t}$~;
\item $\mathbf{a}^\top \left(  \boldsymbol{\Sigma}_t\right)^{-1} \mathbf{a} \leq \frac{c(d)}{\gamma_t}$~;
\item $\mathbb{E}_{\widetilde{\mathbf{a}} \sim p_t} \left[ \widetilde{\mathbf{a}}^\top \left( \boldsymbol{\Sigma}_t \right)^{-1} \widetilde{\mathbf{a}}    \Big|  \mathcal{F}_{t-1} \right] = d$\qquad {\mbox{ and}}\qquad $\mathbb{E}_{\widetilde{\mathbf{a}} \sim p_t} \left[ \widetilde{\mathbf{a}}^\top \left( \boldsymbol{\Sigma}_t \right)^{-1} \widetilde{\mathbf{a}}    \Big|  \mathcal{F}_{t}^{-} \right] = d$~; %
\item $\mathbb{E} \left[ \widehat{r}_t^2(\mathbf{a}) | \mathcal{F}^{-}_t \right] \leq \frac{\mathbf{a}^\top \left(  \boldsymbol{\Sigma}_t\right)^{-1} \mathbf{a}}{\rho}$~.
\end{enumerate}
\end{lemma}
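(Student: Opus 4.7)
The plan is to verify each of the four items by a direct unfolding of the definitions, with items 1 and 2 being essentially algebraic consequences of the hypothesis in \pref{eqn:upper_bound_quadratic_form}, while items 3 and 4 require a short computation that exploits the definition of $\boldsymbol{\Sigma}_t$ as the conditional covariance of $\mathbf{a}_t$ together with the trace trick. Throughout, I will keep careful track of which variables are measurable with respect to $\mathcal{F}_{t-1}$, $\mathcal{F}_t^-$, and $\mathcal{F}_t$: in particular $\boldsymbol{\Sigma}_t$ and $p_t$ are $\mathcal{F}_{t-1}$-measurable, $\boldsymbol{\omega}_t$ and hence $\mathbf{v} \ldef \boldsymbol{\Sigma}_t^{-1}\mathbf{a}$ are $\mathcal{F}_t^-$-measurable, while $b_t$ and $\mathbf{a}_t$ are integrated over when taking $\mathbb{E}[\,\cdot\,|\mathcal{F}_t^-]$.

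For item 1, I write $\widehat{r}_t(\mathbf{a}) = b_t r_t (\mathbf{a}^\top \boldsymbol{\Sigma}_t^{-1} \mathbf{a}_t)/\rho$ and bound each factor: $|b_t|\le 1$, $|r_t|=|\mathbf{a}_t^\top \boldsymbol{\omega}_t|\le 1$ by Assumption \ref{ass:boundedness}, and $|\mathbf{a}^\top \boldsymbol{\Sigma}_t^{-1} \mathbf{a}_t|\le c(d)/\gamma_t$ by \pref{eqn:upper_bound_quadratic_form}. Item 2 is the same hypothesis applied with $\mathbf{b}=\mathbf{a}$.

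For item 3, I use the trace trick: since $\boldsymbol{\Sigma}_t$ is $\mathcal{F}_{t-1}$-measurable (and also $\mathcal{F}_t^-$-measurable, as extending to $\mathcal{F}_t^-$ only adds $\boldsymbol{\omega}_t$),
\begin{align*}
\mathbb{E}_{\widetilde{\mathbf{a}}\sim p_t}\!\bigl[\widetilde{\mathbf{a}}^\top \boldsymbol{\Sigma}_t^{-1} \widetilde{\mathbf{a}}\,\big|\,\mathcal{F}_{t-1}\bigr]
= \operatorname{tr}\!\bigl(\boldsymbol{\Sigma}_t^{-1}\, \mathbb{E}_{\widetilde{\mathbf{a}}\sim p_t}[\widetilde{\mathbf{a}}\widetilde{\mathbf{a}}^\top\,|\,\mathcal{F}_{t-1}]\bigr)
= \operatorname{tr}\!\bigl(\boldsymbol{\Sigma}_t^{-1}\boldsymbol{\Sigma}_t\bigr) = d,
\end{align*}
by the defining identity $\boldsymbol{\Sigma}_t=\mathbb{E}_{\mathbf{a}\sim p_t}[\mathbf{a}\mathbf{a}^\top\,|\,\mathcal{F}_{t-1}]$. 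The same argument with $\mathcal{F}_t^-$ in place of $\mathcal{F}_{t-1}$ gives the second equality, because conditioning further on $\boldsymbol{\omega}_t$ does not affect the (independent) sampling distribution of $\widetilde{\mathbf{a}}$.

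For item 4, expand $\widehat{r}_t(\mathbf{a})^2 = b_t^2 r_t^2 (\mathbf{a}^\top \boldsymbol{\Sigma}_t^{-1} \mathbf{a}_t)^2/\rho^2$; use $b_t^2=b_t$ (as $b_t\in\{0,1\}$) and $\mathbb{E}[b_t\,|\,\mathcal{F}_t^-]=\rho$ together with the independence of $b_t$ from $\mathbf{a}_t$ given $\mathcal{F}_{t-1}$ to peel off the $b_t$ factor, then use $r_t^2\le 1$ to obtain
\begin{align*}
\mathbb{E}\!\bigl[\widehat{r}_t(\mathbf{a})^2\,\big|\,\mathcal{F}_t^-\bigr]
\;\le\; \frac{1}{\rho}\,\mathbb{E}\!\bigl[(\mathbf{a}^\top \boldsymbol{\Sigma}_t^{-1} \mathbf{a}_t)^2\,\big|\,\mathcal{F}_t^-\bigr]
\;=\; \frac{1}{\rho}\,\mathbf{v}^\top\,\mathbb{E}[\mathbf{a}_t\mathbf{a}_t^\top\,|\,\mathcal{F}_t^-]\,\mathbf{v},
\end{align*}
where $\mathbf{v}=\boldsymbol{\Sigma}_t^{-1}\mathbf{a}$. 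The inner conditional expectation equals $\boldsymbol{\Sigma}_t$, so the right-hand side telescopes to $\mathbf{a}^\top \boldsymbol{\Sigma}_t^{-1}\mathbf{a}/\rho$, as desired. I don't expect any genuine obstacle here — the only thing one must be careful about is the measurability bookkeeping, specifically that $\boldsymbol{\Sigma}_t$ and $p_t$ are frozen at the $\mathcal{F}_{t-1}$ level while $b_t,\mathbf{a}_t$ remain random under $\mathbb{E}[\,\cdot\,|\mathcal{F}_t^-]$.
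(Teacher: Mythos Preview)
Your proposal is correct and follows essentially the same approach as the paper's proof: items 1 and 2 are immediate from the hypothesis and $|r_t|\le 1$, item 3 is the trace trick, and item 4 expands $\widehat r_t(\mathbf{a})^2$, uses $b_t^2=b_t$ with $\mathbb{E}[b_t\mid\mathcal{F}_t^-]=\rho$ and $r_t^2\le 1$, and then collapses $\mathbb{E}[\mathbf{a}_t\mathbf{a}_t^\top\mid\mathcal{F}_t^-]=\boldsymbol{\Sigma}_t$. Your measurability bookkeeping is a bit more explicit than the paper's, but the argument is otherwise identical.
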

\begin{proof}
Item 1 simply follows by recalling that $ \widehat{r}_t(\mathbf{a})  = b_t \frac{ r_t\, \mathbf{a}^\top \left(  \boldsymbol{\Sigma}_t\right)^{-1} \mathbf{a}_t}{\rho}$ with $|r_t| \leq 1$. The condition from Equation~\ref{equation::upper_bound_quadratic_form} then implies the result. 
Item 2 follows from the same condition. Item 3 follows by observing that
\begin{align*}
\mathbb{E}_{\widetilde{\mathbf{a}} \sim p_t} [\widetilde{\mathbf{a}}^\top \left(  \boldsymbol{\Sigma}_t\right)^{-1} \widetilde{\mathbf{a}} | \mathcal{F}_{t-1}] 
=
\mathbb{E}_{\widetilde{\mathbf{a}} \sim p_t} [\operatorname{tr}(\widetilde{\mathbf{a}}\widetilde{\mathbf{a}}^\top \boldsymbol{\Sigma}_t^{-1})| \mathcal{F}_{t-1}]
=
\operatorname{tr}\left(\mathbb{E}_{\widetilde{\mathbf{a}} \sim p_t} [\widetilde{\mathbf{a}}\widetilde{\mathbf{a}}^\top]\boldsymbol{\Sigma}_t^{-1}\right) = d~.
\end{align*}
Item 4 follows from the definition of $\widehat{r}_t(\mathbf{a})$. In fact, for any fixed $\mathbf{a}$, we can write
\begin{align*}
    \mathbb{E}\left[  \widehat{r}_t^2(\mathbf{a})\,|\, \mathcal{F}^{-}_{t} \right] &= \frac{1}{\rho^2} \mathbb{E}_{b_t \sim \mathrm{Ber}(\rho), \mathbf{a}_t \sim p_t}[ b_t^2 r_t^2 \mathbf{a}^\top \left(  \boldsymbol{\Sigma}_t\right)^{-1} \mathbf{a}_t \mathbf{a}_t^\top \left(  \boldsymbol{\Sigma}_t\right)^{-1} \mathbf{a} \,|\, \mathcal{F}^{-}_{t} ]\\
    &= \frac{1}{\rho} \mathbb{E}_{ \mathbf{a}_t \sim p_t}[  r_t^2 \mathbf{a}^\top \left(  \boldsymbol{\Sigma}_t\right)^{-1} \mathbf{a}_t \mathbf{a}_t^\top \left(  \boldsymbol{\Sigma}_t\right)^{-1} \mathbf{a}  \,|\, \mathcal{F}^{-}_{t} ]\\
    &\stackrel{(i)}{\leq} \frac{1}{\rho}\mathbb{E}_{\mathbf{a}_t \sim p_t}[  \mathbf{a}^\top \left(  \boldsymbol{\Sigma}_t\right)^{-1} \mathbf{a}_t \mathbf{a}_t^\top \left(  \boldsymbol{\Sigma}_t\right)^{-1} \mathbf{a} \,|\, \mathcal{F}^{-}_{t}  ]\\
    &= \frac{1}{\rho}\mathbf{a}^\top \left(  \boldsymbol{\Sigma}_t\right)^{-1} \mathbb{E}_t\left[ \mathbf{a}_t \mathbf{a}_t^\top\right] \left(  \boldsymbol{\Sigma}_t\right)^{-1} \mathbf{a}\\
    &= \frac{1}{\rho}\mathbf{a}^\top \left(  \boldsymbol{\Sigma}_t\right)^{-1} \mathbf{a}~,
\end{align*}
where $(i)$ holds because $|r_t| \leq 1$. 
\end{proof}

This allows us to prove the following version of Lemma 5 in~\cite{bartlett2008high},

\begin{lemma}\label{lemma::weighted_lower_bound_fixed_action_high_prob}
Let $\{ \alpha_\ell\}_{\ell=1}^\infty$ be a sequence of deterministic nonnegative weights satisfying $\alpha_\ell \leq 1$ for all $\ell \in \mathbb{N}$. Let $\delta' = \frac{\delta}{|\mathcal{A}|} $. Then with probability at least $1-\delta$, simultaneously for all $\mathbf{a} \in \mathcal{A}$ and all $t \in \mathbb{N}$,
\begin{align}
    \sum_{\ell=1}^t \alpha_\ell \widetilde{r}_\ell(\mathbf{a})\geq  \sum_{\ell = 1}^t  \alpha_\ell\, \mathbf{a}^\top \boldsymbol{\omega}_\ell -   \Ocal\left(\sqrt{\frac{dt}{\rho}\ln \frac{t}{\delta'}} + B_t \ln \frac{t}{\delta'}\right)~,
\end{align}
where $B_t = \max_{\ell \leq t} \frac{c(d)\alpha_\ell}{\rho \gamma_\ell} + \alpha_\ell$. 
\end{lemma}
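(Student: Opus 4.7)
Fix an action $\mathbf{a} \in \mathcal{A}$ and define the martingale difference sequence
\[
X_\ell \;=\; \alpha_\ell\bigl(\mathbf{a}^\top \boldsymbol{\omega}_\ell - \widehat{r}_\ell(\mathbf{a})\bigr),
\]
adapted to $\{\mathcal{F}_\ell^{-}\}$; it is centered because $\mathbb{E}[\widehat{\boldsymbol{\omega}}_\ell \mid \mathcal{F}_\ell^-] = \boldsymbol{\omega}_\ell$. The proof will proceed in four steps: (i) control the magnitude and conditional variance of $X_\ell$ via \pref{lemma::useful_properties}; (ii) apply a time-uniform Freedman-type inequality to upper bound $\sum_\ell X_\ell$ for this fixed $\mathbf{a}$; (iii) use a single AM-GM split to show that the resulting variance term is dominated, term-by-term, by the bonus $2\alpha_\ell\,\mathbf{a}^\top\boldsymbol{\Sigma}_\ell^{-1}\mathbf{a}\sqrt{\ln(12\ell^2/\delta')/(\rho d\ell)}$ baked into $\widetilde{r}_\ell(\mathbf{a})$, modulo the advertised $\sqrt{dt/\rho\,\ln(t/\delta')}$ residual; (iv) union bound over $\mathbf{a} \in \mathcal{A}$ with $\delta' = \delta/|\mathcal{A}|$.

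For (i), Assumption~\ref{ass:boundedness} together with \pref{lemma::useful_properties}(1) yields $|X_\ell| \leq \alpha_\ell(c(d)/(\rho\gamma_\ell) + 1) \leq B_t$ for all $\ell \leq t$; \pref{lemma::useful_properties}(4) and $\alpha_\ell^2 \leq \alpha_\ell$ give $V_t := \sum_{\ell=1}^t \mathbb{E}[X_\ell^2 \mid \mathcal{F}_\ell^{-}] \leq \tfrac{1}{\rho}\sum_{\ell=1}^t \alpha_\ell\,\mathbf{a}^\top\boldsymbol{\Sigma}_\ell^{-1}\mathbf{a}$. For (ii), an anytime Freedman bound (e.g. by dyadic peeling over $t$, which turns the usual $\log(1/\delta')$ into $\log(t/\delta')$ while accommodating an envelope $B_t$ that grows slowly in $t$) yields, with probability at least $1-\delta'$, simultaneously for all $t \in \mathbb{N}$,
\[
\sum_{\ell=1}^t X_\ell \;\leq\; \Ocal\!\left(\sqrt{V_t\,\ln(t/\delta')} \;+\; B_t\,\ln(t/\delta')\right).
\]

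For (iii), I apply the AM-GM inequality $\sqrt{xy} \leq \lambda x + y/(4\lambda)$ to $x = V_t$, $y = \ln(t/\delta')$ with $\lambda = 2\sqrt{\rho\,\ln(12 t^2/\delta')/(dt)}$. Because the per-round bonus coefficient $\sqrt{\ln(12\ell^2/\delta')/\ell}$ is non-increasing in $\ell$ beyond a small constant, this choice of $\lambda$ satisfies $\lambda/\rho \leq 2\sqrt{\ln(12\ell^2/\delta')/(\rho d\ell)}$ for every $\ell \leq t$, so the $\lambda V_t$ term collapses term-by-term into the bonus sum,
\[
\lambda V_t \;\leq\; \sum_{\ell=1}^t 2\alpha_\ell\,\mathbf{a}^\top\boldsymbol{\Sigma}_\ell^{-1}\mathbf{a}\sqrt{\tfrac{\ln(12\ell^2/\delta')}{\rho d\ell}} \;=\; \sum_{\ell=1}^t \alpha_\ell\bigl(\widetilde{r}_\ell(\mathbf{a}) - \widehat{r}_\ell(\mathbf{a})\bigr),
\]
while the residual $\ln(t/\delta')/(4\lambda) = \Ocal(\sqrt{dt\,\ln(t/\delta')/\rho})$, using $\ln(t/\delta') \leq \ln(12 t^2/\delta')$. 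Rearranging $\sum_\ell \alpha_\ell \widetilde{r}_\ell(\mathbf{a}) = \sum_\ell \alpha_\ell \mathbf{a}^\top \boldsymbol{\omega}_\ell - \sum_\ell X_\ell + (\text{bonus sum})$ and substituting the Freedman bound with the AM-GM split yields the claim for fixed $\mathbf{a}$; the union bound in (iv) multiplies the failure probability by $|\mathcal{A}|$ and gives the stated uniform-in-$\mathbf{a}$ guarantee.

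\textbf{Main obstacle.} The delicate point is the AM-GM calibration in step (iii): one must choose a single $\lambda$ that, when multiplied against the aggregate $V_t$, still dominates the per-round bonus at every $\ell \leq t$, and simultaneously yields a residual of the exact target rate $\sqrt{dt\,\ln(t/\delta')/\rho}$. This tightness is precisely what pins down the specific logarithmic constant (the factor $12$ inside the bonus) and its dependence on $\ell$ rather than $t$. A secondary technical nuisance is invoking a clean time-uniform Freedman inequality whose range parameter is allowed to be the slowly growing envelope $B_t$ rather than a constant, which is routine but must be stated carefully to obtain uniformity over all $t$.
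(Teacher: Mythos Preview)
Your proposal is correct and follows essentially the same route as the paper's proof: the paper also fixes $\mathbf{a}$, defines the martingale difference $M_\ell = \alpha_\ell(\mathbf{a}^\top\boldsymbol{\omega}_\ell - \widehat r_\ell(\mathbf{a}))$, invokes \pref{lemma::useful_properties} for the range and variance bounds, applies the anytime Freedman inequality of \pref{lemma:simplified_freedman_norm}, and then uses the AM--GM split together with the monotonicity of $\ell \mapsto \ln(\ell/\delta')/\ell$ (plus $\alpha_\ell \leq 1$) to absorb the variance term into the per-round bonus, leaving the $\sqrt{dt\ln(t/\delta')/\rho}$ residual. Your identification of the calibration of $\lambda$ and the monotonicity argument as the crux is exactly right; the only cosmetic difference is that the paper writes the AM--GM as $\sqrt{ab}\le (a+b)/2$ with $a = \tfrac{1}{\sqrt{dt}}\sum_\ell \alpha_\ell^2 \mathbf{a}^\top\boldsymbol{\Sigma}_\ell^{-1}\mathbf{a}/\sqrt{\rho}$ and $b=\sqrt{dt/\rho}$ rather than introducing an explicit $\lambda$.
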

\begin{proof}
Fix $\mathbf{a} \in \mathcal{A}$, and recall the definition of $\widetilde{r}_\ell(\mathbf{a})$ in Algorithm \ref{Alg:AnytimeGeoHedge}.
Define $M_t(\mathbf{a}) = \alpha_t\,  \mathbf{a}^\top \boldsymbol{\omega}_t  -  \alpha_t \widehat{r}_t(\mathbf{a}) $, and notice that $\{M_t(\mathbf{a})\}_{t=1,2,\ldots}$ is a martingale difference sequence. Using Lemma~\ref{lemma::useful_properties} (Item 1), along with Assumption \ref{ass:boundedness},
we see that
\[
| M_t(\mathbf{a})| \leq \frac{c(d) \alpha_t}{\rho\gamma_t}  + \alpha_t ~.
\] 
Let $V_t(\mathbf{a}) = \sum_{\ell=1}^t \mathrm{Var}[M_\ell(\mathbf{a})\,|\,\mathcal{F}_\ell^- ]$ be the sum of conditional variances of variables $M_{\ell}(\mathbf{a})$.
Using Lemma~\ref{lemma:simplified_freedman_norm} we see that with probability at least $1-\delta'$, simultaneously for all $t$,
\begin{align}\label{equation::applying_freedman1_weighted}
    \sum_{\ell=1}^t \alpha_\ell \widehat{r}_\ell(\mathbf{a}) \geq \sum_{\ell = 1}^t  \alpha_\ell\, \mathbf{a}^\top \boldsymbol{\omega}_\ell - \Ocal\left(\sqrt{V_t \ln \frac{t}{\delta'}} + B_t \ln \frac{t}{\delta'}\right)~.
\end{align}
Since 
$\mathrm{Var}[M_t(\mathbf{a})\,|\,\mathcal{F}_\ell^- ] 
\leq \mathbb{E}[M^2_t(\mathbf{a})\,|\,\mathcal{F}_\ell^- ]  
\leq 
\alpha_t^2\mathbb{E}_t\left[ \widehat{r}^2_t(\mathbf{a})\right]$,
by Lemma~\ref{lemma::useful_properties} (Item 4) we can write
\begin{align*}
\sqrt{V_t(\mathbf{a})} &\leq  \sqrt{ \sum_{\ell=1}^t \frac{\alpha^2_\ell  \mathbf{a}^\top \left(  \boldsymbol{\Sigma}_\ell\right)^{-1}  \mathbf{a}}{\rho} } \\
&\leq  
\sqrt{ \left( \frac{1}{\sqrt{dt}} \sum_{\ell=1}^t \frac{\alpha^2_\ell  \mathbf{a}^\top \left(  \boldsymbol{\Sigma}_\ell\right)^{-1}  \mathbf{a}}{\sqrt{\rho}} \right) \sqrt{\frac{dt}{\rho}} } \\
&\leq
\frac{1}{2}\left( \frac{1}{\sqrt{dt}} \sum_{\ell=1}^t \frac{\alpha^2_\ell \mathbf{a}^\top \left(  \boldsymbol{\Sigma}_\ell\right)^{-1}  \mathbf{a}}{\sqrt{\rho }} +  \sqrt{\frac{dt}{\rho}} \right)~,
\end{align*}

the last inequality being the arithmetic-geometric inequality $\sqrt{ab} \leq \frac{1}{2}(a+b)$. Substituting back into Eq.~(\ref{equation::applying_freedman1_weighted}) gives
\begin{align*}
\sum_{\ell=1}^t \alpha_\ell  \widehat{r}_\ell(\mathbf{a}) &\geq \sum_{\ell = 1}^t  \alpha_\ell\,  \mathbf{a}^\top \boldsymbol{\omega}_\ell - \Ocal\left( \left( \frac{ \sum_{\ell=1}^t \alpha^2_\ell \mathbf{a}^\top \left(  \boldsymbol{\Sigma}_\ell\right)^{-1}  \mathbf{a}     }{\sqrt{\rho dt}} + \sqrt{\frac{dt}{\rho}} \right)\sqrt{\ln \frac{t}{\delta'}} + B_t \ln \frac{t}{\delta'}\right)
\end{align*}
with probability at least $1-\delta'$ for all $t\in\mathbb{N}$.
Since the function $g(t) = \frac{\ln \frac{t}{\delta'}}{t}$
is decreasing for all $t \geq 1$ we conclude that  $\frac{\ln \frac{\ell}{\delta'}}{d\ell}$ is a decreasing function of $\ell$. Using this last fact together with the condition $\alpha_\ell \leq 1$ we see that 
\[
\alpha_\ell \frac{\mathbf{a}^\top \left( \boldsymbol{\Sigma}_\ell \right)^{-1}\mathbf{a}}{\sqrt{\rho d\ell} }\sqrt{\ln \frac{\ell}{\delta'} } \geq \alpha_\ell \frac{\mathbf{a}^\top \left( \boldsymbol{\Sigma}_\ell \right)^{-1}\mathbf{a}}{\sqrt{\rho dt} }\sqrt{\ln \frac{t}{\delta'} } \geq \alpha^2_\ell \frac{\mathbf{a}^\top \left( \boldsymbol{\Sigma}_\ell \right)^{-1}\mathbf{a}}{\sqrt{\rho dt} }\sqrt{\ln \frac{t}{\delta'} }~,
\]
and therefore
\begin{align*}
\sum_{\ell=1}^t \alpha_\ell \widehat{r}_\ell(\mathbf{a}) + \alpha_\ell \frac{\mathbf{a}^\top \left( \boldsymbol{\Sigma}_\ell \right)^{-1}\mathbf{a}}{\sqrt{\rho d\ell} }\sqrt{\ln \frac{\ell}{\delta'} } 
&\geq 
\sum_{\ell=1}^t \alpha_\ell \widehat{r}_\ell(\mathbf{a}) + \alpha_\ell \frac{\mathbf{a}^\top \left( \boldsymbol{\Sigma}_\ell \right)^{-1}\mathbf{a}}{\sqrt{\rho dt} }\sqrt{\ln \frac{t}{\delta'} } \\
&\geq 
\sum_{\ell = 1}^t  \alpha_\ell \mathbf{a}^\top \boldsymbol{\omega}_\ell -  \Ocal\left(\sqrt{\frac{dt}{\rho} \ln \frac{t}{\delta'}}  + B_t \ln \frac{t}{\delta'}\right)
\end{align*}
with probability at least $1-\delta'$ for all $t \in \mathbb{N}$. The result follows by taking a union bound over all $\mathbf{a} \in \mathcal{A}$.
\end{proof}

In particular when all weights $\alpha_{\ell} = 1$ Lemma~\ref{lemma::weighted_lower_bound_fixed_action_high_prob} implies the following.

\begin{corollary}\label{corollary::lower_bound_fixed_action_high_prob} 
With the same notation as in Lemma~\ref{lemma::weighted_lower_bound_fixed_action_high_prob}, with probability at least $1-\delta$ simultaneously for all $\mathbf{a} \in \mathcal{A}$ and all $t \in \mathcal{N}$,
\begin{equation*}
      \sum_{\ell=1}^t  \widetilde{r}_\ell(\mathbf{a}) \geq  \sum_{\ell = 1}^t  \mathbf{a}^\top \boldsymbol{\omega}_\ell -  \Ocal\left(\sqrt{\frac{dt}{\rho}\ln \frac{t}{\delta'}} + B_t \ln \frac{t}{\delta'}\right)~,
\end{equation*}
where $ B_t = \max_{\ell \leq t} \frac{c(d)}{\rho \gamma_\ell} + 1$.
\end{corollary}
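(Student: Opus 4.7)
The corollary is an immediate specialization of Lemma~\ref{lemma::weighted_lower_bound_fixed_action_high_prob} to uniform weights, so the plan is simply to invoke that lemma with the appropriate choice of $\{\alpha_\ell\}_{\ell=1}^\infty$ and then tidy up the resulting $B_t$ expression.

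Concretely, I would set $\alpha_\ell = 1$ for every $\ell \in \mathbb{N}$. This sequence is deterministic and nonnegative, and the required boundedness hypothesis $\alpha_\ell \leq 1$ is trivially met. Plugging this choice into the conclusion of Lemma~\ref{lemma::weighted_lower_bound_fixed_action_high_prob}, the left-hand side $\sum_{\ell=1}^t \alpha_\ell \widetilde{r}_\ell(\mathbf{a})$ collapses to $\sum_{\ell=1}^t \widetilde{r}_\ell(\mathbf{a})$, and the comparator term $\sum_{\ell=1}^t \alpha_\ell \mathbf{a}^\top \boldsymbol{\omega}_\ell$ collapses to $\sum_{\ell=1}^t \mathbf{a}^\top \boldsymbol{\omega}_\ell$, matching both sides of the claimed inequality.

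The only other thing to check is that the $B_t$ quantity reduces to the form in the corollary. Because the lemma defines $B_t = \max_{\ell \leq t} \frac{c(d)\alpha_\ell}{\rho \gamma_\ell} + \alpha_\ell$, substituting $\alpha_\ell = 1$ immediately yields $B_t = \max_{\ell \leq t} \frac{c(d)}{\rho \gamma_\ell} + 1$, which is exactly the expression stated in the corollary. The probability bound and the uniformity over $\mathbf{a} \in \mathcal{A}$ and $t \in \mathbb{N}$ are inherited verbatim from the lemma, so no additional union bound is needed.

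There is essentially no obstacle: the whole proof is one line noting that setting $\alpha_\ell \equiv 1$ in Lemma~\ref{lemma::weighted_lower_bound_fixed_action_high_prob} produces the statement. I would write it explicitly as such, perhaps simply: \emph{Apply Lemma~\ref{lemma::weighted_lower_bound_fixed_action_high_prob} with $\alpha_\ell = 1$ for all $\ell$.}
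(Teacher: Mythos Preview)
Your proposal is correct and matches the paper's approach exactly: the paper also obtains the corollary by specializing Lemma~\ref{lemma::weighted_lower_bound_fixed_action_high_prob} to the case $\alpha_\ell = 1$ for all $\ell$, with no further argument.
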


We now proceed to upper bound $|\widetilde{r}_t(\mathbf{a})|$. This will inform our choice for learning rate $\eta_t$. 

\begin{lemma}
Let $\delta' = \frac{\delta}{|\mathcal{A}|} $. For all $\mathbf{a} \in \mathcal{A}$, 
$| \widetilde{r}_t(\mathbf{a}) | = \Ocal\left(  \frac{c(d)}{\rho \gamma_t} + \left(\frac{c(d)}{ \gamma_t \sqrt{\rho dt}} \sqrt{\ln \frac{t}{\delta'} }\right)\right)$.
\end{lemma}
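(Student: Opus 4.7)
The plan is straightforward: apply the triangle inequality to the two terms in the definition of $\widetilde{r}_t(\mathbf{a})$, namely
\[
\widetilde{r}_t(\mathbf{a}) = \mathbf{a}^\top\widehat{\boldsymbol{\omega}}_t + 2\mathbf{a}^\top \boldsymbol{\Sigma}_t^{-1} \mathbf{a}\,\sqrt{\frac{\ln(12t^2/\delta')}{\rho\, d\, t}}\,,
\]
and bound each of them using \pref{lemma::useful_properties}. The first term equals $\widehat{r}_t(\mathbf{a})$ by definition, so Item~1 of \pref{lemma::useful_properties} gives $|\mathbf{a}^\top\widehat{\boldsymbol{\omega}}_t| \leq \frac{c(d)}{\rho\gamma_t}$, which accounts for the leading term in the claimed bound.

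For the second term, I would invoke Item~2 of \pref{lemma::useful_properties} to obtain $\mathbf{a}^\top \boldsymbol{\Sigma}_t^{-1}\mathbf{a} \leq \frac{c(d)}{\gamma_t}$, so that the entire additive correction is bounded by
\[
\frac{2 c(d)}{\gamma_t}\sqrt{\frac{\ln(12 t^2/\delta')}{\rho\, d\, t}} = \Ocal\!\left(\frac{c(d)}{\gamma_t \sqrt{\rho\, d\, t}}\sqrt{\ln\frac{t}{\delta'}}\right),
\]
where the equality absorbs the constant factor $2$ and the $\ln(12 t^2/\delta') = \Ocal(\ln(t/\delta'))$ into the big-oh notation. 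Combining the two bounds via the triangle inequality yields the claim.

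There is no real obstacle here: the lemma is essentially a direct consequence of \pref{lemma::useful_properties} combined with the condition in Equation~\ref{equation::upper_bound_quadratic_form}, used once to bound the importance-weighted reward estimator and once to bound the bonus-like confidence width. The only small point of care is to notice that $\ln(12 t^2/\delta') = \Theta(\ln(t/\delta'))$ so that the logarithmic factor can be written in the simplified form appearing in the statement.
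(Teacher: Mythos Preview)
Your proposal is correct and follows essentially the same approach as the paper: triangle inequality on the two summands of $\widetilde{r}_t(\mathbf{a})$, then Items~1 and~2 of \pref{lemma::useful_properties} to bound $|\widehat{r}_t(\mathbf{a})|$ and $\mathbf{a}^\top\boldsymbol{\Sigma}_t^{-1}\mathbf{a}$ respectively, with the $\ln(12t^2/\delta')$ absorbed into $\Ocal(\ln(t/\delta'))$. If anything, your write-up is more explicit than the paper's own two-line proof.
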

\begin{proof}
For each $\mathbf{a} \in \mathcal{A}$, we can write
\begin{align*}
| \widetilde{r}_t(\mathbf{a} ) | 
&= \Ocal\left( |\widehat{r}_t(\mathbf{a}) | + \frac{\mathbf{a}^\top \left( \boldsymbol{\Sigma}_t \right)^{-1}\mathbf{a}}{\sqrt{\rho dt} }\sqrt{\ln \frac{t}{\delta'}}\right)    \\
&=\Ocal\left( \frac{c(d)}{\rho \gamma_t} + \left(\frac{c(d)}{\gamma_t \sqrt{\rho dt}} \sqrt{\ln \frac{t}{\delta'} } \right)\right)~, 
\end{align*}
the last inequality holding as a consequence of Lemma~\ref{lemma::useful_properties}. 
\end{proof}

For the analysis of exponential weights, we will insure that $|\eta_t \widetilde{r}_t(\mathbf{a}) | \leq 1 $ for all $\mathbf{a}$ and $t$. This imposes the restriction of the following form 
\begin{equation}\label{equation::constraint_eta}
\eta_t 
=
\Ocal\left( \frac{\rho}{\frac{c(d)}{\gamma_t} + \left(\frac{c(d)}{\gamma_t \sqrt{dt}} \sqrt{\rho \ln \frac{t}{\delta'} }\right)} \right)
= 
\Ocal\left(\frac{\rho \gamma_t}{c(d) + \frac{c(d)}{ \sqrt{dt}} \sqrt{\rho \ln \frac{t}{\delta'} }}\right)~.
\end{equation}

We are now ready to tackle the anytime high probability regret guarantees for Algorithm \ref{Alg:AnytimeGeoHedge}.

\begin{lemma}\label{l:basiclemma}
Let the condition in Eq. (\ref{equation::constraint_eta}) hold with a nonincreasing sequence of learning rates $\eta_t$. Then for all $\bar{\mathbf{a} }\in \mathcal{A}$ and $t \in \mathbb{N}$
\begin{equation*}
\sum_{\ell=1}^t \widetilde{r}_{\ell}(\bar{\mathbf{a}}) \leq 1+ \frac{\ln(\mathcal{A})}{\eta_{t}} + \sum_{\ell=1}^{t} \frac{1}{1-\gamma_\ell} \left(\mathbb{E}_{\mathbf{a} \sim p_{\ell}}\left[    \widetilde{r}_{\ell}(\mathbf{a}) + \eta_\ell \left( \widetilde{r}_{\ell}(\mathbf{a}) \right)^2 \Big|\mathcal{F}_{\ell} \right]  - \gamma_\ell \mathbb{E}_{\mathbf{a} \sim  p_{\mathrm{E}}(\mathbf{a}) }\left[ \widetilde{r}_\ell(\mathbf{a}) \Big| \mathcal{F}_{\ell} \right]\right)~.
\end{equation*}
\end{lemma}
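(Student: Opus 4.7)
The plan is to execute a time-varying learning-rate version of the standard exponential-weights potential analysis. Write $L_\ell(\mathbf{a}) = \sum_{j \leq \ell} \widetilde r_j(\mathbf{a})$ and introduce the auxiliary ``frozen'' normalizer $\widehat W_{\ell+1} := \sum_{\mathbf{a} \in \mathcal{A}} \exp(\eta_\ell L_\ell(\mathbf{a}))$, which differs from $W_{\ell+1}$ only in that it uses $\eta_\ell$ instead of $\eta_{\ell+1}$. Since $w_\ell(\mathbf{a}) = \exp(\eta_\ell L_{\ell-1}(\mathbf{a}))$ by Eq.~\ref{equation::AnytimeGeoHedgeWeightUpdate}, one has $q_\ell(\mathbf{a}) = w_\ell(\mathbf{a})/W_\ell$, so
\[
\tfrac{1}{\eta_\ell}\ln \tfrac{\widehat W_{\ell+1}}{W_\ell} = \tfrac{1}{\eta_\ell}\ln \mathbb{E}_{\mathbf{a} \sim q_\ell}\bigl[\exp(\eta_\ell \widetilde r_\ell(\mathbf{a}))\,\big|\,\mathcal{F}_\ell\bigr].
\]
I would then apply $e^x \leq 1 + x + x^2$ (valid for $|x|\leq 1$, which holds by the constraint Eq.~\ref{equation::constraint_eta} on $\eta_\ell$), followed by $\ln(1+y)\leq y$, substitute $q_\ell = (p_\ell - \gamma_\ell p_E)/(1-\gamma_\ell)$, and drop the non-positive $-\gamma_\ell \eta_\ell \mathbb{E}_{p_E}[\widetilde r_\ell^2]/(1-\gamma_\ell)$ piece to arrive at the per-step inequality
\[
\tfrac{1}{\eta_\ell}\ln \tfrac{\widehat W_{\ell+1}}{W_\ell} \leq \tfrac{1}{1-\gamma_\ell}\Bigl(\mathbb{E}_{\mathbf{a} \sim p_\ell}\bigl[\widetilde r_\ell(\mathbf{a}) + \eta_\ell \widetilde r_\ell(\mathbf{a})^2\,\big|\,\mathcal{F}_\ell\bigr] - \gamma_\ell \mathbb{E}_{\mathbf{a} \sim p_E}\bigl[\widetilde r_\ell(\mathbf{a})\,\big|\,\mathcal{F}_\ell\bigr]\Bigr),
\]
which is exactly the summand appearing in the conclusion.

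Next I move from $\widehat W_{\ell+1}$ (with $\eta_\ell$) to $W_{\ell+1}$ (with $\eta_{\ell+1}$). Because the prescribed $\{\eta_t\}$ is non-increasing, I would invoke the classical monotonicity that $\eta \mapsto \tfrac{1}{\eta}\ln\bigl(\tfrac{1}{|\mathcal{A}|}\sum_{\mathbf{a}} e^{\eta f(\mathbf{a})}\bigr)$ is non-decreasing in $\eta$ (an $L^p$-norm monotonicity with the uniform measure on $\mathcal{A}$). This gives $\tfrac{1}{\eta_{\ell+1}}\ln(W_{\ell+1}/|\mathcal{A}|) \leq \tfrac{1}{\eta_\ell}\ln(\widehat W_{\ell+1}/|\mathcal{A}|)$, which combined with the per-step inequality yields a clean one-step control on the potential $\Phi_\ell := \tfrac{1}{\eta_\ell}\ln(W_\ell/|\mathcal{A}|)$: namely, $\Phi_{\ell+1} - \Phi_\ell \leq \tfrac{1}{1-\gamma_\ell}(\cdots)$. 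Summing from $\ell=1$ to $t-1$ and using $\Phi_1 = 0$ (since $W_1 = |\mathcal{A}|$) then bounds $\tfrac{1}{\eta_t}\ln(W_t/|\mathcal{A}|)$ by $\sum_{\ell=1}^{t-1}\tfrac{1}{1-\gamma_\ell}(\cdots)$.

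Finally I extract $L_t(\bar{\mathbf{a}})$ by noting $\widehat W_{t+1} \geq \exp(\eta_t L_t(\bar{\mathbf{a}}))$, hence $L_t(\bar{\mathbf{a}}) \leq \tfrac{1}{\eta_t}\ln \widehat W_{t+1}$, and splitting $\tfrac{1}{\eta_t}\ln \widehat W_{t+1} = \tfrac{\ln|\mathcal{A}|}{\eta_t} + \tfrac{1}{\eta_t}\ln(W_t/|\mathcal{A}|) + \tfrac{1}{\eta_t}\ln(\widehat W_{t+1}/W_t)$. The middle term is handled by the telescoped bound just derived, while the last term is controlled by the per-step inequality at $\ell = t$. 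Combining delivers
\[
\sum_{\ell=1}^t \widetilde r_\ell(\bar{\mathbf{a}}) \leq \tfrac{\ln|\mathcal{A}|}{\eta_t} + \sum_{\ell=1}^{t}\tfrac{1}{1-\gamma_\ell}\Bigl(\mathbb{E}_{\mathbf{a}\sim p_\ell}[\widetilde r_\ell + \eta_\ell \widetilde r_\ell^2\,|\,\mathcal{F}_\ell] - \gamma_\ell \mathbb{E}_{\mathbf{a}\sim p_E}[\widetilde r_\ell\,|\,\mathcal{F}_\ell]\Bigr),
\]
and the additive $+1$ in the statement harmlessly absorbs the constant looseness inherited from the chain $e^x\leq 1+x+x^2$ and $\ln(1+y)\leq y$. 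The main delicate point is the two-learning-rate bookkeeping: getting the coefficient of $\ln|\mathcal{A}|$ to be $1/\eta_t$ rather than the weaker $1/\eta_{t+1}$ is precisely what the frozen-$\eta_\ell$ auxiliary potential $\widehat W_{\ell+1}$ is designed to make work, and it hinges on the monotonicity fact above together with the $|\eta_\ell \widetilde r_\ell|\leq 1$ constraint from Eq.~\ref{equation::constraint_eta}.
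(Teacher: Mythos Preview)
Your proposal is correct and follows essentially the same route as the paper: both introduce the ``frozen'' normalizer $\widehat W_{\ell+1}=W_{\ell+1}^{-}$ with learning rate $\eta_\ell$, apply $e^x\le 1+x+x^2$ and $\ln(1+y)\le y$, pass from $q_\ell$ to $p_\ell$ via the mixture identity, and telescope using the monotonicity of $\eta\mapsto \tfrac{1}{\eta}\ln\bigl(\tfrac{1}{|\mathcal A|}\sum_{\mathbf a}e^{\eta f(\mathbf a)}\bigr)$ (which the paper proves by computing $\Phi_\ell'(\eta)$ as a KL divergence, while you cite $L^p$-norm monotonicity). One cosmetic remark: your justification for the additive $+1$ is a bit off---there is no residual slack from $e^x\le 1+x+x^2$ beyond what is already in the bound---but since your argument actually delivers the inequality \emph{without} the $+1$, the stated (weaker) inequality follows trivially.
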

\begin{proof}
Recall that
\begin{equation*}
    w_\ell(\mathbf{a}) = \exp\left(\eta_\ell \sum_{\ell'=1}^{\ell-1} \widetilde{r}_{\ell'}(\mathbf{a}) \right)  
\end{equation*}
and $W_\ell = \sum_{\mathbf{a} \in \mathcal{A}} w_\ell(\mathbf{a}) $. Let us also define
\begin{equation*}
    w_\ell^{-}(\mathbf{a}) = \exp\left(  \eta_{\ell-1}\sum_{\ell'=1}^{\ell-1} \widetilde{r}_{\ell'}(\mathbf{a})    \right)
\end{equation*}
and, $W_\ell^{-} = \sum_{\mathbf{a} \in \mathcal{A}} w_\ell^{-}(\mathbf{a})$. Moreover, set for brevity $A_\ell = \ln\left( \frac{W_{\ell+1}^{-}}{W_\ell} \right)$. We can write
\begin{align*}
\exp(A_\ell)  
&= 
\frac{W_{\ell+1}^{-}}{W_\ell} \\
&= 
\frac{\sum_{\mathbf{a} \in \mathcal{A}} \exp\left( \eta_{\ell}\sum_{\ell'=1}^{\ell} \widetilde{r}_{\ell'}(\mathbf{a})   \right)  }{ \sum_{\mathbf{a} \in \mathcal{A}} \exp\left(  \eta_{\ell}\sum_{\ell'=1}^{\ell-1} \widetilde{r}_{\ell'}(\mathbf{a})   \right) } \\
&= 
\sum_{\mathbf{a} \in \mathcal{A}} q_\ell(\mathbf{a}) \exp\left( \eta_{\ell} \widetilde{r}_{\ell}(\mathbf{a})  \right) \\
&\leq
1 + \sum_{\mathbf{a} \in \mathcal{A}} q_\ell(\mathbf{a}) \eta_{\ell} \widetilde{r}_{\ell}(\mathbf{a}) + q_\ell(\mathbf{a}) \eta^2_{\ell} \left( \widetilde{r}_{\ell}(\mathbf{a}) \right)^2  
\end{align*}
the last inequality holding because $e^{x} \leq 1+x + x^2$ whenever $|x| \leq 1$. Taking logs and using the fact that $\ln(1+x) \leq x$ yields
\begin{align*}
   A_\ell &\leq  \eta_\ell \sum_{\mathbf{a} \in \mathcal{A}} q_\ell(\mathbf{a})  \widetilde{r}_{\ell}(\mathbf{a}) + q_\ell(\mathbf{a}) \eta_{\ell} \left( \widetilde{r}_{\ell}(\mathbf{a}) \right)^2\\
   &\stackrel{(i)}{\leq}  \eta_\ell \sum_{\mathbf{a} \in \mathcal{A}} q_\ell(\mathbf{a})  \widetilde{r}_{\ell}(\mathbf{a}) + \frac{p_\ell(\mathbf{a})}{1-\gamma_\ell} \eta_{\ell} \left( \widetilde{r}_{\ell}(\mathbf{a}) \right)^2\\
   &\stackrel{(ii)}{=} \frac{\eta_\ell}{1-\gamma_\ell} \sum_{\mathbf{a} \in \mathcal{A}} p_\ell(\mathbf{a})  \widetilde{r}_{\ell}(\mathbf{a}) + p_\ell(\mathbf{a})\eta_{\ell} \left( \widetilde{r}_{\ell}(\mathbf{a}) \right)^2 -\gamma_\ell p_{\mathrm{E}}(\mathbf{a}) \widetilde{r}_\ell(\mathbf{a})~,
\end{align*}
where $(i)$ follows because $q_\ell(\mathbf{a}) \leq \frac{p_\ell(\mathbf{a})}{1-\gamma_\ell}$ and $(ii)$ because $p_\ell = (1-\gamma_\ell) q_\ell(\mathbf{a}) + \gamma_\ell p_{\mathrm{E}}(\mathbf{a})$. Hence
\begin{align}
\sum_{\ell=1}^{t} \frac{A_\ell}{\eta_\ell} &=    \sum_{\ell=1}^{t} \frac{1}{\eta_\ell} \ln\left(  \frac{W_{\ell+1}^{-}}{W_\ell} \right) \notag \\
&\leq \sum_{\ell=1}^{t} \frac{1}{1-\gamma_\ell} \sum_{\mathbf{a} \in \mathcal{A}} p_\ell(\mathbf{a})  \widetilde{r}_{\ell}(\mathbf{a}) + p_\ell(\mathbf{a})\eta_{\ell} \left( \widetilde{r}_{\ell}(\mathbf{a}) \right)^2 -\gamma_\ell p_{\mathrm{E}}(\mathbf{a}) \widetilde{r}_\ell(\mathbf{a})~. \label{equation::upper_bound_potential}
\end{align}
Now, define the following potential
$$
\Phi_\ell(\eta) = \frac{1}{\eta} \ln\left( \frac{1}{|\mathcal{A}|} \sum_{\mathbf{a} \in \mathcal{A} } \exp\left( \eta \sum_{\ell'=1}^{\ell-1} \widetilde{r}_{\ell'}(\mathbf{a})   \right)  \right)~. 
$$ 
Notice that by de L'Hopital's rule, this implies $\lim_{\eta \rightarrow 0}\Phi_{\ell}(\eta) = \Phi_{\ell}(0) = 1$ for all $\ell$. 
Let $\eta_0 = 0$. We have
\begin{align*}
   1 +  \sum_{\ell=1}^{t} \frac{1}{\eta_\ell} \ln\left(  \frac{W_{\ell+1}^{-}}{W_\ell} \right) &= \Phi_1(\eta_0) +  \sum_{\ell=1}^{t} \Phi_{\ell +1}(\eta_\ell) -   \Phi_{\ell }(\eta_\ell) \\
    &=  \left(\sum_{\ell = 1}^{t} \left( \Phi_{\ell}(\eta_{\ell-1} ) - \Phi_\ell(\eta_\ell)  \right) \right) + \Phi_{t+1}(\eta_{t})~. 
\end{align*}
Next, we now show that for all $\ell$ the function $\Phi_{\ell}(\eta)$ is an increasing function of $\eta$. To this effect, let $p_\ell^\eta(\mathbf{a}) = \frac{\exp\left( \eta \sum_{\ell'=1}^{\ell-1} \widetilde{r}_{\ell'}(\mathbf{a}) \right) }{\sum_{\mathbf{a}' \in \mathcal{A}} \exp\left( \eta \sum_{\ell'=1}^{\ell-1} \widetilde{r}_{\ell'}(\mathbf{a}') \right) } $. Observe that the following relationship holds,
\begin{align*}
    \Phi_{\ell}'(\eta) &= \frac{-1}{\eta^2} \ln\left( \frac{1}{|\mathcal{A}|} \sum_{\mathbf{a} \in \mathcal{A}} \exp\left( \eta \sum_{\ell'}^{\ell-1} \widetilde{r}_{\ell'}(\mathbf{a}) \right) \right)  + \frac{1}{\eta} \frac{  \sum_{\mathbf{a} \in \mathcal{A} }\left[  \sum_{\ell'=1}^{\ell-1} \widetilde{r}_{\ell'}(\mathbf{a})\right] \exp\left( \eta \sum_{\ell'=1}^{\ell-1} \widetilde{r}_{\ell'}(\mathbf{a}) \right)  }{ \sum_{\mathbf{a} \in \mathcal{A}} \exp\left( \eta \sum_{\ell'=1}^{\ell-1} \widetilde{r}_{\ell'}(\mathbf{a})  \right)     } \\
    &= \frac{1}{\eta^2}\sum_{\mathbf{a} \in \mathcal{A}} p_\ell^\eta(\mathbf{a}) \left( \eta \sum_{\ell'=1}^{\ell-1} \widetilde{r}_{\ell'}(\mathbf{a}) - \ln\left( \frac{1}{|\mathcal{A}|} \sum_{\mathbf{a}' \in \mathcal{A}} \exp\left(  \eta \sum_{\ell'=1}^{\ell-1} \widetilde{r}_{\ell'}(\mathbf{a}') \right)\right)         \right)\\
    &= \frac{1}{\eta^2}\mathrm{KL}\left( p_\ell^\eta , \mathrm{Uniform}(\mathcal{A}) \right) \\
    &\geq 0~,
\end{align*}
where KL($\cdot$,$\cdot$) denotes the Kullback Leibler divergence between the two distributions at arguments.

Since we are assuming $\eta_{\ell} \leq \eta_{\ell-1}$, this implies that $\Phi_\ell(\eta_{\ell-1}) \geq \Phi_{\ell}(\eta_\ell)$. Thus,
\begin{align}
1+ \sum_{\ell=1}^{t} \frac{1}{\eta_\ell} \ln\left( \frac{W_{\ell+1}^{-}}{W_\ell} \right) 
=  \left(\sum_{\ell = 1}^{t} \left( \Phi_{\ell}(\eta_{\ell-1} ) - \Phi_\ell(\eta_\ell)  \right) \right)+ \Phi_{t+1}(\eta_{t})
\geq 
\Phi_{t+1}(\eta_{t}).\label{equation::lower_bound_potential}
\end{align}
Combining (\ref{equation::upper_bound_potential}) with~(\ref{equation::lower_bound_potential}) gives
\begin{align*}
\Phi_{t+1}(\eta_{t}) 
&\leq   
1+\sum_{\ell=1}^{t} \frac{1}{\eta_\ell} \ln\left(  \frac{W_{\ell+1}^{-}}{W_\ell} \right) \\
&\leq 
1 + \sum_{\ell=1}^{t} \frac{1}{1-\gamma_\ell} \sum_{\mathbf{a} \in \mathcal{A}} p_\ell(\mathbf{a})  \widetilde{r}_{\ell}(\mathbf{a}) + p_\ell(\mathbf{a})\eta_{\ell} \left( \widetilde{r}_{\ell}(\mathbf{a}) \right)^2 -\gamma_\ell p_{\mathrm{E}}(\mathbf{a}) \widetilde{r}_\ell(\mathbf{a})~.
\end{align*}
Since for any $\bar{\mathbf{a}} \in \mathcal{A}$ the potential $\Phi_{t+1}(\eta_{t})$ satisfies,
\begin{equation*}
   \Phi_{t+1}(\eta_{t}) = \frac{1}{\eta_{t}} \ln\left( \frac{1}{|\mathcal{A}|} \sum_{\mathbf{a} \in \mathcal{A} } \exp\left( \eta_{t} \sum_{\ell=1}^{t} \widetilde{r}_{\ell}(\mathbf{a})   \right)  \right) \geq \sum_{\ell=1}^t \widetilde{r}_{\ell}(\bar{\mathbf{a}})- \frac{\ln(\mathcal{A})}{\eta_{t}} 
\end{equation*}
we have
\begin{equation*}
    \sum_{\ell=1}^t \widetilde{r}_{\ell}(\bar{\mathbf{a}}) \leq 1+ \frac{\ln(\mathcal{A})}{\eta_{t}} + \sum_{\ell=1}^{t} \frac{1}{1-\gamma_\ell} \sum_{\mathbf{a} \in \mathcal{A}} \left(p_\ell(\mathbf{a})  \widetilde{r}_{\ell}(\mathbf{a}) + p_\ell(\mathbf{a})\eta_{\ell} \left( \widetilde{r}_{\ell}(\mathbf{a}) \right)^2 -\gamma_\ell p_{\mathrm{E}}(\mathbf{a}) \widetilde{r}_\ell(\mathbf{a})\right)
\end{equation*}

The claimed result now follows by simply observing that
\begin{align*}
  \sum_{\mathbf{a} \in \mathcal{A}} p_\ell(\mathbf{a})  \widetilde{r}_{\ell}(\mathbf{a}) + p_\ell(\mathbf{a})\eta_{\ell} \left( \widetilde{r}_{\ell}(\mathbf{a}) \right)^2 &= \mathbb{E}_{\mathbf{a} \sim p_{\ell}}\left[    \widetilde{r}_{\ell}(\mathbf{a}) + \eta_\ell \left( \widetilde{r}_{\ell}(\mathbf{a}) \right)^2 \Big|\mathcal{F}_{\ell} \right]~.
\end{align*}
\end{proof}

In the sequel, we shall impose the restriction
\begin{equation}\label{e:gammarestriction}
\gamma_t \in (0,1/2]~,
\end{equation}
holding for all $t$,
so that $\frac{1}{1-\gamma_t} \leq 2$ and $\frac{\gamma_t}{1-\gamma_t} \leq 1$. 

To get a high probability anytime bound starting from Lemma \ref{l:basiclemma}, we are required to prove high probability bounds for each of the terms $\mathbf{I}, \mathbf{II}$, $\mathbf{III}$ and $\mathbf{IV}$ defined below: 
\begin{align*}
    \mathbf{I} &= \sum_{\ell=1}^t \widetilde{r}_\ell(\bar{\mathbf{a}} ).\\
    \mathbf{II} &= \sum_{\ell=1}^t \frac{1}{1-\gamma_\ell}   \mathbb{E}_{\mathbf{a} \sim p_\ell}\left[  \widetilde{r}_{\ell}(\mathbf{a}) | \mathcal{F}_\ell \right] . \\
    \mathbf{III} &= \sum_{\ell=1}^t \frac{\eta_\ell}{1-\gamma_\ell} \mathbb{E}_{\mathbf{a} \sim p_\ell} \left[  \left( \widetilde{r}_\ell(\mathbf{a})\right)^2 | \mathcal{F}_\ell \right]\\
    \mathbf{IV} &= -\sum_{\ell=1}^{t} \frac{\gamma_\ell}{1-\gamma_\ell} \mathbb{E}_{\mathbf{a} \sim p_E} \left[  \widetilde{r}_\ell(\mathbf{a}) | \mathcal{F}_\ell \right] 
\end{align*}

We proceed by (upper or lower) bounding the four terms above in turn.

\paragraph{Bounding term $\mathbf{I}$.}

By Corollary~\ref{corollary::lower_bound_fixed_action_high_prob} with probability at least $1-\delta$ for all $\bar{\mathbf{a}} \in \mathcal{A}$ and all $t \in \mathbb{N}$ simultaneously, 
\begin{align*}
    \mathbf{I} &\geq \sum_{\ell = 1}^t  \bar{\mathbf{a}}^\top \boldsymbol{\omega}_\ell -  \Ocal\left(\sqrt{\frac{dt}{\rho}\ln \frac{t}{\delta'}} - \left( \max_{\ell \leq t} \frac{c(d)}{\rho \gamma_\ell} + 1\right) \ln \frac{t}{\delta'}\right)~. 
\end{align*}
Let us denote the event where this bound holds by $\mathcal{E}_{\mathbf{I}}$. The preceding discussion implies $\mathbb{P}\left( \mathcal{E}_{\mathbf{I}} \right) \geq 1-\delta$.

\paragraph{Bounding term $\mathbf{II}$.}
Recalling the definition of $\widetilde{r}_\ell(\mathbf{a})$, we can write
\begin{align}
\mathbf{II} 
&= 
\sum_{\ell=1}^t \frac{1}{1-\gamma_\ell} \mathbb{E}_{\mathbf{a} \sim p_{\ell}} \left[ \widetilde{r}_\ell(\mathbf{a}) |\mathcal{F}_\ell   \right] \notag \\
&= 
\sum_{\ell=1}^t \frac{1}{1-\gamma_\ell} \mathbb{E}_{\mathbf{a} \sim p_{\ell}}\left[   \widehat{r}_\ell(\mathbf{a}) + \Ocal\left(\frac{\mathbf{a}^\top \Sigma_\ell^{-1} \mathbf{a} }{ \sqrt{\rho d\ell} } \sqrt{\ln\left( \frac{\ell}{\delta'}\right) }\right)    \Big|\mathcal{F}_\ell \right] \notag \\
&\stackrel{(i)}{=} 
\sum_{\ell=1}^t \frac{1}{1-\gamma_\ell} \mathbb{E}_{\mathbf{a} \sim p_{\ell}}\left[   \widehat{r}_\ell(\mathbf{a})  |\mathcal{F}_\ell    \right] + \Ocal\left(\frac{1}{(1-\gamma_\ell)}\sqrt{\frac{d}{\rho \ell}}\sqrt{\ln\left( \frac{\ell}{\delta'}\right) }\right) \notag \\
&\stackrel{(ii)}{\leq}
\sum_{\ell=1}^t \frac{1}{1-\gamma_\ell} \mathbb{E}_{\mathbf{a} \sim p_{\ell}}\left[   \widehat{r}_\ell(\mathbf{a})    |\mathcal{F}_\ell  \right]  + \Ocal\left(\sqrt{\frac{d t}{\rho}\ln\left(\frac{ t}{\delta'}\right)}\right)~, \label{equation::upper_bounding_II}
\end{align}

where $(i)$ follows from Item $3$ in Lemma~\ref{lemma::useful_properties}, and in $(ii)$ we have used $\frac{1}{1-\gamma_\ell} \leq 2$, $\ln\left(\frac{\ell}{\delta'}\right) \leq \ln\left(\frac{t}{\delta'}\right)$, along with $\sum_{\ell=1}^t \sqrt{\frac{d}{\ell}} \leq 2 \sqrt{dt}$ . 

We are left to prove a high probability upper bound for $\sum_{\ell=1}^t \frac{1}{1-\gamma_\ell} \mathbb{E}_{\mathbf{a} \sim p_{\ell}}\left[   \widehat{r}_\ell(\mathbf{a})   | \mathcal{F}_\ell  \right]$ which we achieve through the following Lemma.

\begin{lemma}\label{lemma::bounding_event_II}
With probability at least $1-\delta$ for all $t \in \mathbb{N}$,
\begin{equation*}
  \sum_{\ell=1}^t \frac{1}{1-\gamma_\ell} \mathbb{E}_{\mathbf{a} \sim p_{\ell}}\left[   \widehat{r}_\ell(\mathbf{a}) | \mathcal{F}_\ell    \right] \leq \sum_{\ell=1}^t \frac{r_\ell}{1-\gamma_\ell}  +  \Ocal\left( \sqrt{\frac{dt}{\rho} \ln \frac{t}{\delta} }  + \max_{\ell \leq t}\left( \frac{c(d)}{\rho \gamma_\ell} + 1\right) \ln \frac{t}{\delta} \right) ~.
\end{equation*}
\end{lemma}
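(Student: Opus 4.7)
The plan is to recognize the left-hand side as the sum of a martingale difference sequence plus the target sum $\sum_\ell r_\ell/(1-\gamma_\ell)$, and then apply the time-uniform Freedman inequality (Lemma~\ref{lemma:simplified_freedman_norm}, invoked earlier in the paper) with range and variance bounds borrowed from Lemma~\ref{lemma::useful_properties}.

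\medskip

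\noindent\textbf{Step 1: identify the MDS.} Define
\[
M_\ell \;=\; \frac{1}{1-\gamma_\ell}\Bigl( \mathbb{E}_{\mathbf{a}\sim p_\ell}\bigl[\widehat{r}_\ell(\mathbf{a})\,|\,\mathcal{F}_\ell\bigr] \;-\; r_\ell\Bigr).
\]
Let $\mu_\ell = \mathbb{E}_{\mathbf{a}\sim p_\ell}[\mathbf{a}]$, which is $\mathcal{F}_{\ell-1}$-measurable. Since $\widehat{\boldsymbol{\omega}}_\ell$ is $\mathcal{F}_\ell$-measurable, we have $\mathbb{E}_{\mathbf{a}\sim p_\ell}[\widehat r_\ell(\mathbf{a})\,|\,\mathcal{F}_\ell] = \mu_\ell^\top \widehat{\boldsymbol{\omega}}_\ell$, and hence
\[
\mathbb{E}\bigl[\mu_\ell^\top \widehat{\boldsymbol{\omega}}_\ell \,\big|\, \mathcal{F}_{\ell-1}\bigr] \;=\; \mu_\ell^\top \boldsymbol{\omega}_\ell \;=\; \mathbb{E}[\mathbf{a}_\ell^\top \boldsymbol{\omega}_\ell \,|\, \mathcal{F}_{\ell-1}] \;=\; \mathbb{E}[r_\ell\,|\,\mathcal{F}_{\ell-1}],
\]
using unbiasedness of $\widehat{\boldsymbol{\omega}}_\ell$ given $\mathcal{F}^-_\ell$ (which contains $\boldsymbol{\omega}_\ell$). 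Since $\gamma_\ell$ is deterministic, $\mathbb{E}[M_\ell\,|\,\mathcal{F}_{\ell-1}]=0$, so $\{M_\ell\}$ is an MDS adapted to $\{\mathcal{F}_\ell\}$.

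\medskip

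\noindent\textbf{Step 2: range bound.} From $1-\gamma_\ell \geq 1/2$ (by restriction~\pref{e:gammarestriction}), Item~1 of Lemma~\ref{lemma::useful_properties}, and $|r_\ell|\leq 1$ (Assumption~\ref{ass:boundedness}),
\[
|M_\ell| \;\leq\; 2\Bigl(\sup_{\mathbf{a}\in\mathcal{A}}|\widehat{r}_\ell(\mathbf{a})| + |r_\ell|\Bigr) \;\leq\; 2\Bigl(\tfrac{c(d)}{\rho\gamma_\ell} + 1\Bigr) \;=\; \mathcal{O}(B_\ell),
\]
with $B_t = \max_{\ell\leq t}\bigl(\tfrac{c(d)}{\rho\gamma_\ell}+1\bigr)$ as in earlier lemmas.

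\medskip

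\noindent\textbf{Step 3: conditional variance.} Using Jensen and then Item~4 of Lemma~\ref{lemma::useful_properties},
\[
\mathbb{E}\bigl[M_\ell^2 \,|\, \mathcal{F}_{\ell-1}\bigr]
\leq \tfrac{2}{(1-\gamma_\ell)^2}\Bigl(\mathbb{E}[\mathbb{E}_{\mathbf{a}\sim p_\ell}[\widehat{r}_\ell(\mathbf{a})\,|\,\mathcal{F}_\ell]^2\,|\,\mathcal{F}_{\ell-1}] + \mathbb{E}[r_\ell^2\,|\,\mathcal{F}_{\ell-1}]\Bigr).
\]
For the first term I swap the conditional expectations (legitimate because the fresh draw $\mathbf{a}$ is conditionally independent of $\mathbf{a}_\ell,b_\ell$ given $\mathcal{F}_{\ell-1}$) and apply Jensen once more inside, yielding $\leq \mathbb{E}_{\mathbf{a}\sim p_\ell}\bigl[\mathbb{E}[\widehat{r}_\ell^2(\mathbf{a})\,|\,\mathcal{F}^-_\ell]\bigr] \leq \frac{1}{\rho}\mathbb{E}_{\mathbf{a}\sim p_\ell}[\mathbf{a}^\top \boldsymbol{\Sigma}_\ell^{-1}\mathbf{a}] = \frac{d}{\rho}$ by Item~3. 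The second term is at most $1\leq d/\rho$. Hence the cumulative variance is $\sum_{\ell=1}^t \mathbb{E}[M_\ell^2\,|\,\mathcal{F}_{\ell-1}] = \mathcal{O}\bigl(\tfrac{dt}{\rho}\bigr)$.

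\medskip

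\noindent\textbf{Step 4: Freedman and conclude.} Applying the time-uniform Freedman inequality (Lemma~\ref{lemma:simplified_freedman_norm}) to $\sum_\ell M_\ell$ with the range and variance bounds above, with probability at least $1-\delta$, simultaneously for all $t\in\mathbb{N}$,
\[
\sum_{\ell=1}^t M_\ell \;=\; \mathcal{O}\!\left(\sqrt{\tfrac{dt}{\rho}\,\ln\tfrac{t}{\delta}}\; +\; B_t\ln\tfrac{t}{\delta}\right).
\]
Rearranging, since $\sum_\ell M_\ell = \sum_\ell \frac{1}{1-\gamma_\ell}\mathbb{E}_{\mathbf{a}\sim p_\ell}[\widehat{r}_\ell(\mathbf{a})\,|\,\mathcal{F}_\ell] - \sum_\ell \frac{r_\ell}{1-\gamma_\ell}$, yields the statement of the lemma.

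\medskip

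\noindent\textbf{Expected main obstacle.} The subtle point is the filtration bookkeeping in Step~1: one must verify that $\mathbb{E}_{\mathbf{a}\sim p_\ell}[\widehat{r}_\ell(\mathbf{a})\,|\,\mathcal{F}_\ell]$, which depends on the realized $(b_\ell,\mathbf{a}_\ell,r_\ell)$ through $\widehat{\boldsymbol{\omega}}_\ell$, has the same $\mathcal{F}_{\ell-1}$-conditional mean as the observed reward $r_\ell=\mathbf{a}_\ell^\top\boldsymbol{\omega}_\ell$. Once this cancellation is set up correctly, the rest of the proof is routine: Items~1, 3, and 4 of Lemma~\ref{lemma::useful_properties} provide exactly the range and variance control needed to match the target $\sqrt{dt/\rho}$ and $B_t$ terms through Freedman's inequality.
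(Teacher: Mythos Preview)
Your proof is correct and follows essentially the same approach as the paper: both define the martingale difference sequence $M_\ell = \frac{1}{1-\gamma_\ell}(\mu_\ell^\top \widehat{\boldsymbol{\omega}}_\ell - r_\ell)$ (the paper writes $\mu_\ell$ as $\bar{\mathbf{a}}_\ell$), establish the range bound via Item~1 of Lemma~\ref{lemma::useful_properties}, obtain the $O(d/\rho)$ conditional variance via Items~3 and~4 together with Jensen, and finish with the time-uniform Freedman inequality. The only cosmetic differences are your choice of filtration $\{\mathcal{F}_{\ell-1}\}$ versus the paper's $\{\mathcal{F}_\ell^-\}$, and your use of $(a+b)^2\le 2a^2+2b^2$ in the variance step where the paper drops $r_\ell$ directly.
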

\begin{proof}
Let $\bar{\mathbf{a}}_\ell =  \mathbb{E}_{\mathbf{a} \sim p_\ell } \left[ \mathbf{a} | \mathcal{F}_{\ell} \right]$ where the samples $\mathbf{a} \sim p_\ell$ are conditionally independent from $\mathbf{a}_\ell$. Observe that 
$$
\sum_{\ell=1}^t  \frac{1}{1-\gamma_\ell} \mathbb{E}_{\mathbf{a} \sim p_{\ell}}\left[   \widehat{r}_\ell(\mathbf{a})  | \mathcal{F}_{\ell}    \right] = \sum_{\ell=1}^t \frac{1}{1-\gamma_\ell}\mathbb{E}_{\mathbf{a} \sim p_{\ell}}\left[  \widehat{\boldsymbol{\omega}}_\ell^\top \mathbf{a}| \mathcal{F}_{\ell}    \right] = \sum_{\ell=1}^t \frac{1}{1-\gamma_\ell}  \widehat{\boldsymbol{\omega}}_\ell^\top \bar{\mathbf{a}}_\ell~. 
$$
The proof of this lemma follows closely the proof of Lemma 6 in~\cite{bartlett2008high}. Consider the martingale difference sequence $Y_\ell = \frac{\widehat{\boldsymbol{\omega}}_\ell^\top \bar{\mathbf{a}}_\ell - r_\ell}{1-\gamma_\ell}$ with respect to the filtration $\{ \mathcal{F}_{\ell}^{-}\}_{\ell=1}^\infty$, where we recall that $\widehat{\boldsymbol{\omega}}_\ell =  b_\ell \frac{r_\ell \left( \boldsymbol{\Sigma}_\ell \right)^{-1} \mathbf{a}_\ell}{\rho}$. The process $\{Y_\ell\}_{\ell = 1}^\infty$ is a martingale difference sequence w.r.t. the filtration $\{ \mathcal{F}_{\ell}^-\}_{\ell=1}^\infty$ since $\mathbb{E}\left[ \widehat{\boldsymbol{\omega}}_\ell^\top \bar{\mathbf{a}}_\ell | \mathcal{F}_\ell^{-}  \right] = \boldsymbol{\omega}_\ell^\top \bar{\mathbf{a}}_\ell =  \mathbb{E}[r_\ell | \mathcal{F}_\ell^-]$, and therefore $\mathbb{E}\left[ \widehat{\boldsymbol{\omega}}_\ell^\top \bar{\mathbf{a}}_\ell - r_\ell| \mathcal{F}_\ell^{-}  \right]  = 0$.

The conditional variance of $Y_\ell$ can be bounded as follows:

\begin{align*}
    \mathrm{Var}\left[ Y_\ell  | \mathcal{F}_{\ell}^- \right]  &=  \mathbb{E} \left[ \left(Y_\ell \right)^2 | \mathcal{F}_{\ell}^-  \right]\\
    &= \frac{\mathbb{E} \left[  \left(  \widehat{\boldsymbol{\omega}}_\ell^\top \bar{\mathbf{a}}_\ell - r_\ell   \right)^2   | \mathcal{F}_{\ell}^-\right] }{(1-\gamma_\ell)^2} \\
    &\stackrel{(i)}{\leq} 4 \mathbb{E} \left[ \left(   \widehat{\boldsymbol{\omega}}_\ell^\top \bar{\mathbf{a}}_\ell  \right)^2    \Big| \mathcal{F}_{\ell}^- \right] \\
    &\stackrel{(ii)}{\leq} \frac{4 \bar{\mathbf{a}}_\ell^\top \Sigma_\ell^{-1}  \bar{\mathbf{a} }_\ell }{\rho} \\
    &\stackrel{(iii)}{\leq} \frac{4 \mathbb{E}_{\mathbf{a} \sim p_{\ell} }\left[ \mathbf{a}^\top \Sigma_{\ell}^{-1} \mathbf{a}  | \mathcal{F}_{\ell-1} \right] }{\rho} \\
    &\stackrel{(iv)}{=} \frac{4d}{\rho}~,
\end{align*}
where:
$(i)$ holds because $\mathbb{E}\left[ \left( \widehat{\omega}_\ell^\top \bar{\mathbf{a}}_\ell - r_\ell \right)^2 | \mathcal{F}_\ell^{-} \right] \leq \mathbb{E}\left[ \left( \widehat{\omega}_\ell^\top \bar{\mathbf{a}}_\ell  \right)^2 | \mathcal{F}_\ell^{-} \right]$ and $\frac{1}{1-\gamma_\ell} \leq 2$; 
$(ii)$ is a consequence of Item 4 of Lemma~\ref{lemma::useful_properties} (treating $\bar{\mathbf{a}}_\ell$ as a fixed vector); 
$(iii)$ holds by Jensen's inequality; 
$(iv)$ holds by Item 3 of Lemma~\ref{lemma::useful_properties}. Thus, $\mathrm{Var}\left[ Y_\ell  | \mathcal{F}_\ell^{-} \right] \leq \frac{4d}{\rho}$

As a consequence, $\sum_{\ell=1}^t \mathrm{Var}\left[ Y_\ell  | \mathcal{F}_\ell^{-} \right] \leq \frac{4td}{\rho}$. Furthermore, $| Y_\ell| \leq \frac{2c(d)}{\rho \gamma_\ell} + 2$  by Item 1 in Lemma~\ref{lemma::useful_properties} and because $\frac{1}{1-\gamma_\ell} \leq 2$. 

We are in a position to apply Lemma~\ref{lemma::super_simplified_freedman_norm} (setting therein $V_t = \frac{4td}{\rho}$ and $B_t = \frac{2c(d)}{\rho \gamma_t} + 2$ ) to the martingale differences sequence $\{ Y_\ell\}_{\ell=1}^\infty$. Rearranging terms this gives the claimed bound.
\end{proof}

Denote by $\mathcal{E}_{\mathbf{II}}$ the event where the bound of Lemma~\ref{lemma::bounding_event_II} holds. By the previous result we have $\mathbb{P}\left( \mathcal{E}_{\mathbf{II}} \right) \geq 1-\delta$. Lemma~\ref{lemma::bounding_event_II} along with $|r_\ell| \leq 1$ for all $\ell$ (see Assumption~\ref{ass:boundedness}) together imply the following. 

\begin{corollary}\label{corollary::bounding_term_II}
If $\mathcal{E}_{\mathbf{II}}$ holds then
\begin{align*}
 \sum_{\ell=1}^t \frac{1}{1-\gamma_\ell} \mathbb{E}_{\mathbf{a} \sim p_{\ell}}\left[   \widehat{r}_\ell(\mathbf{a})   | \mathcal{F}_\ell  \right] \leq \sum_{\ell=1}^t r_\ell + 2\gamma_\ell  +  \Ocal\left( \sqrt{\frac{dt}{\rho} \ln \frac{t}{\delta}}  + \max_{\ell \leq t} \left( \frac{c(d)}{\rho \gamma_\ell} + 1\right) \ln \frac{t}{\delta} \right)~.
\end{align*}
\end{corollary}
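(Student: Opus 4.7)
The plan is to derive the stated inequality by repackaging the bound of \pref{lemma::bounding_event_II} under the event $\mathcal{E}_{\mathbf{II}}$. The $\Ocal$-term is already in the desired form, so the only discrepancy is that the Lemma produces $\sum_{\ell=1}^t \frac{r_\ell}{1-\gamma_\ell}$ while the Corollary wants $\sum_{\ell=1}^t (r_\ell + 2\gamma_\ell)$; it therefore suffices to bound the former by the latter in a term-by-term fashion.

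To do that, I would apply the elementary identity $\frac{1}{1-\gamma_\ell} = 1 + \frac{\gamma_\ell}{1-\gamma_\ell}$, which rewrites each summand as $r_\ell + \frac{\gamma_\ell r_\ell}{1-\gamma_\ell}$. Boundedness from \pref{ass:boundedness} gives $|r_\ell|\le 1$, and the standing mixing restriction~\eqref{e:gammarestriction} forces $\frac{1}{1-\gamma_\ell}\le 2$, so the residual $\frac{\gamma_\ell r_\ell}{1-\gamma_\ell}$ is bounded in absolute value by $2\gamma_\ell$. Summing this pointwise estimate over $\ell \in \{1,\dots,t\}$ and substituting back into \pref{lemma::bounding_event_II} yields the corollary, without any change in the $\Ocal$-term.

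There is no substantive obstacle here: the corollary is a cosmetic repackaging of \pref{lemma::bounding_event_II} whose purpose is to isolate the learner's cumulative reward $\sum_\ell r_\ell$ (the natural comparator for the subsequent regret decomposition into terms $\mathbf{I}$--$\mathbf{IV}$) from a benign exploration-induced additive penalty $\sum_\ell 2\gamma_\ell$, which is straightforward to control once the schedule $\{\gamma_\ell\}$ fixed in \pref{theorem::main_weighted_geomhedge_constantprob_simplified} is plugged in and summed.
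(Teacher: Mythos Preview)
Your proposal is correct and follows essentially the same approach as the paper: start from \pref{lemma::bounding_event_II}, expand $\frac{1}{1-\gamma_\ell} = 1 + \frac{\gamma_\ell}{1-\gamma_\ell}$, and use $|r_\ell|\le 1$ together with $\frac{1}{1-\gamma_\ell}\le 2$ to bound the residual term by $2\gamma_\ell$.
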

\begin{proof}
Since $\frac{1}{1-\gamma_\ell} - 1 = \frac{\gamma_\ell}{1-\gamma_\ell}$ we immediately see that
\begin{align*}
    \sum_{\ell=1}^t \frac{r_\ell}{1-\gamma_\ell} = \sum_{\ell=1}^t r_\ell + \frac{\gamma_\ell r_\ell}{1-\gamma_\ell} 
\end{align*}

And from $|r_\ell| \leq 1$ and $\frac{1}{1-\gamma_\ell} \leq 2$,
\begin{equation*}
      \sum_{\ell=1}^t \frac{r_\ell}{1-\gamma_\ell} = \sum_{\ell=1}^t r_\ell + 2\gamma_\ell~,
\end{equation*}
thereby concluding the proof.
\end{proof}

Finally, (\ref{equation::upper_bounding_II}) and Corollary~\ref{corollary::bounding_term_II} imply that, in case $\mathcal{E}_{\mathbf{II}}$ holds,
\begin{align*}
\mathbf{II} 
\leq 
\sum_{\ell=1}^t r_\ell + 2\gamma_\ell  +  \Ocal\left(\sqrt{\frac{dt}{\rho}  \ln \frac{t}{\delta}}  + \max_{\ell \leq t} \left( \frac{c(d)}{\rho \gamma_\ell} + 1\right) \ln \frac{t}{\delta} + \sqrt{\frac{d t}{\rho}\ln\left(\frac{t}{\delta'}\right)}\right)~.
\end{align*}

\paragraph{Bounding term $\mathbf{III}$.}

By definition of $\widetilde{r}_\ell(\mathbf{a})$, the fact that $\frac{1}{1-\gamma_\ell} \leq 2$, along with the inequality $(a+b)^2 \leq 2a^2+2b^2$, we can write
\begin{align*}
    \sum_{\ell=1}^t \frac{\eta_\ell}{1-\gamma_\ell} \mathbb{E}_{\mathbf{a} \sim p_\ell}\left[ \widetilde{r}^2_\ell(\mathbf{a}) | \mathcal{F}_\ell \right]  &\leq \sum_{\ell=1}^t  4\eta_\ell   \mathbb{E}_{\mathbf{a} \sim p_\ell} \left[ \left(\widehat{r}_\ell(\mathbf{a})\right)^2 + \frac{4\ln\left(\frac{12\ell^2}{\delta'}\right)}{\rho d\ell } \left(\mathbf{a}^\top \Sigma_\ell^{-1} \mathbf{a}\right)^2 \Big| \mathcal{F}_\ell \right]  \\ 
    &= \underbrace{\sum_{\ell=1}^t  4\eta_\ell   \mathbb{E}_{\mathbf{a} \sim p_\ell} \left[ \left(\widehat{r}_\ell(\mathbf{a})\right)^2 \Big| \mathcal{F}_\ell\right] }_{\mathbf{A}} \\
    &\quad \underbrace{ +\Ocal\left(\sum_{\ell=1}^t \eta_\ell   \mathbb{E}_{\mathbf{a} \sim p_\ell} \left[  \frac{\left(\mathbf{a}^\top \Sigma_\ell^{-1} \mathbf{a}\right)^2 \ln\left(\frac{\ell}{\delta'}\right)}{\rho d\ell }  \Big| \mathcal{F}_\ell \right]\right) }_{\mathbf{B}}~.
 \end{align*}

We proceed to upper bound $\mathbf{A}$ and $\mathbf{B}$ separately. Let us start from term $\mathbf{B}$. We have
\begin{align}
 \mathbf{B}    
&\stackrel{(i)}{=} 
\Ocal\left( \sum_{\ell=1}^t \frac{\eta_\ell c(d) \ln\left(\frac{\ell}{\delta'}\right)  }{\rho \ell \gamma_\ell }   \mathbb{E}_{\mathbf{a} \sim p_\ell} \left[ \mathbf{a}^\top \Sigma_\ell^{-1} \mathbf{a} \right]\right)
\stackrel{(ii)}{=} 
\sum_{\ell=1}^t \Ocal\left(\frac{\eta_\ell c(d) d\ln\left(\frac{\ell}{\delta'}\right)  }{\rho \ell \gamma_\ell } \right)~, \label{equation::upper_bound_term_B}
\end{align}
where $(i)$ follows from Item 2. of Lemma~\ref{lemma::useful_properties}, and $(ii)$ follows from Item 3. of the same lemma. Notice that this upper bound holds deterministically. Let us now turn to handling term $\mathbf{A}$ now. We use a similar argument as Lemma 8 in~\cite{bartlett2008high}.

\begin{lemma}\label{lemma::bounding_term_A}
With probability at least $1-\delta$ simultaneously for all $t \in \mathbb{N}$,
\begin{align}\label{equation::upper_bound_term_A}
\mathbf{A} 
= \Ocal\left( \sum_{\ell=1}^t \frac{\eta_\ell d}{\rho} +   \sqrt{ \ln\left( \frac{t}{\delta}\right)\sum_{\ell=1}^t \frac{\eta_\ell^2 c(d) d}{ \gamma_\ell \rho^3}   } +
\max_{\ell \leq t } \frac{c(d)\eta_\ell}{\rho^2 \gamma_\ell}\ln \frac{t}{\delta}\right)~.
\end{align}
\end{lemma}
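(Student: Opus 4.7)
The plan is to split $\mathbf{A}$ into its one-step conditional expectation (which yields the dominant deterministic term) and a martingale residual, then control the residual with an anytime Freedman-type bound exactly as in the proof of \pref{lemma::bounding_event_II}.

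First, I would compute the inner expectation explicitly. Substituting $\widehat{r}_\ell(\mathbf{a}) = b_\ell r_\ell\, \mathbf{a}^\top \Sigma_\ell^{-1} \mathbf{a}_\ell/\rho$ and using $\mathbb{E}_{\mathbf{a} \sim p_\ell}[\mathbf{a}\mathbf{a}^\top | \mathcal{F}_{\ell-1}] = \Sigma_\ell$ together with $b_\ell^2 = b_\ell$, one obtains
\[
\mathbb{E}_{\mathbf{a} \sim p_\ell}\!\left[\left(\widehat{r}_\ell(\mathbf{a})\right)^2 \,\Big|\, \mathcal{F}_\ell\right] = \frac{b_\ell r_\ell^2}{\rho^2}\, \mathbf{a}_\ell^\top \Sigma_\ell^{-1} \mathbf{a}_\ell.
\]
Taking a further conditional expectation over $b_\ell, \mathbf{a}_\ell$ given $\mathcal{F}_\ell^-$, using $|r_\ell|\le 1$ (Assumption~\ref{ass:boundedness}) and Item~3 of Lemma~\ref{lemma::useful_properties}, this is at most $d/\rho$. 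This accounts for the first term $\sum_\ell \eta_\ell d/\rho$ in the target bound.

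Next, define the martingale differences $Z_\ell = 4\eta_\ell\bigl(\mathbb{E}_{\mathbf{a} \sim p_\ell}[(\widehat{r}_\ell(\mathbf{a}))^2 | \mathcal{F}_\ell] - \mathbb{E}[\,\cdot\, | \mathcal{F}_\ell^-]\bigr)$ with respect to $\{\mathcal{F}_\ell^-\}$. For the almost-sure envelope, combine the closed form above with Item~2 of \pref{lemma::useful_properties} to get $|Z_\ell| = \Ocal(\eta_\ell c(d)/(\rho^2 \gamma_\ell))$. For the conditional variance, I would apply the refinement $(\mathbf{a}_\ell^\top \Sigma_\ell^{-1}\mathbf{a}_\ell)^2 \le (c(d)/\gamma_\ell)\, \mathbf{a}_\ell^\top \Sigma_\ell^{-1}\mathbf{a}_\ell$ (again Item~2) before taking the expectation, and then use Item~3, which yields
\[
\mathbb{E}[Z_\ell^2 | \mathcal{F}_\ell^-] \le 16\eta_\ell^2 \cdot \frac{c(d)\,d}{\gamma_\ell \rho^3}.
\]
The main obstacle is precisely this variance estimate: the naive bound obtained from Item~1 alone gives $(\widehat{r}_\ell(\mathbf{a}))^2 \le (c(d)/(\rho\gamma_\ell))^2$, which produces a $\gamma_\ell^{-2}$ dependence and fails to match the target rate. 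Using the refined Item~2 inequality inside the expectation before bounding $(\mathbf{a}_\ell^\top \Sigma_\ell^{-1}\mathbf{a}_\ell)^2$ saves one factor of $\gamma_\ell^{-1}$.

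Finally, I would invoke the anytime martingale concentration inequality \pref{lemma::super_simplified_freedman_norm} with $V_t = \sum_{\ell \le t} \mathrm{Var}[Z_\ell | \mathcal{F}_\ell^-]$ and $B_t = \max_{\ell \le t} 8\eta_\ell c(d)/(\rho^2 \gamma_\ell)$ to conclude that, with probability at least $1-\delta$ simultaneously for all $t$,
\[
\sum_{\ell=1}^t Z_\ell = \Ocal\!\left(\sqrt{\ln(t/\delta)\sum_{\ell=1}^t \frac{\eta_\ell^2 c(d)\,d}{\gamma_\ell \rho^3}} + \max_{\ell \le t}\frac{c(d)\eta_\ell}{\rho^2 \gamma_\ell}\ln\frac{t}{\delta}\right).
\]
Combining this martingale tail with the deterministic mean from the first step and absorbing constants into the $\Ocal$-notation yields the claimed bound on $\mathbf{A}$.
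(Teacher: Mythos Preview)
Your proposal is correct and essentially mirrors the paper's proof: both compute the closed form $\mathbb{E}_{\mathbf{a}\sim p_\ell}[(\widehat r_\ell(\mathbf a))^2\mid\mathcal F_\ell]=b_\ell r_\ell^2\,\mathbf a_\ell^\top\Sigma_\ell^{-1}\mathbf a_\ell/\rho^2$, identify the conditional mean $4\eta_\ell d/\rho$, bound the conditional variance via the crucial refinement $(\mathbf a_\ell^\top\Sigma_\ell^{-1}\mathbf a_\ell)^2\le (c(d)/\gamma_\ell)\,\mathbf a_\ell^\top\Sigma_\ell^{-1}\mathbf a_\ell$ from Item~2 of Lemma~\ref{lemma::useful_properties}, and then apply Lemma~\ref{lemma::super_simplified_freedman_norm}. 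The only cosmetic difference is that the paper first discards $r_\ell^2\le 1$ to work with the simpler sum $\sum_\ell \tfrac{4\eta_\ell}{\rho^2}b_\ell\,\mathbf a_\ell^\top\Sigma_\ell^{-1}\mathbf a_\ell$ and applies Freedman to that, whereas you center the exact expression; this changes nothing.
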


\begin{proof}
Recalling that $\widehat{\boldsymbol{\omega}}_\ell =  b_\ell \frac{r_\ell \left( \boldsymbol{\Sigma}_\ell \right)^{-1} \mathbf{a}_\ell}{\rho}$, we first observe that
\begin{align*}
       \mathbb{E}_{\mathbf{a} \sim p_\ell} \left[ \left(\widehat{r}_\ell(\mathbf{a})\right)^2 \Big| \mathcal{F}_\ell\right] &= \sum_{\mathbf{a} \in \mathcal{A}} p_\ell(\mathbf{a}) \widehat{\boldsymbol{\omega}}_\ell^\top \mathbf{a} \mathbf{a}^\top \widehat{\boldsymbol{\omega}}_\ell\\
     &= \widehat{\boldsymbol{\omega}}_\ell^\top \left(\sum_{\mathbf{a} \in \mathcal{A}} p_\ell(\mathbf{a}) \mathbf{a}\mathbf{a}^\top       \right) \widehat{\boldsymbol{\omega}}_\ell\\
     &= \frac{r_\ell^2 b_\ell^2}{\rho^2} \mathbf{a}_\ell^\top \Sigma_\ell^{-1} \Sigma_\ell \Sigma_{\ell}^{-1} \mathbf{a}_\ell  \\
     &= \frac{r_\ell^2 b_\ell }{\rho^2}\mathbf{a}_\ell^\top \Sigma_\ell^{-1} \Sigma_\ell \Sigma_{\ell}^{-1} \mathbf{a}_\ell\\
     &\leq \frac{ b_\ell \mathbf{a}_\ell^\top \Sigma_\ell^{-1} \mathbf{a}_\ell}{\rho^2}~.
\end{align*}
Summing over $\ell$ and multiplying by $4\eta_\ell$ yields
\begin{equation*}
  \mathbf{A} \leq \sum_{\ell=1}^t \frac{4\eta_\ell}{\rho^2} b_\ell \mathbf{a}_\ell^\top \Sigma_{\ell}^{-1} \mathbf{a}_\ell~.
\end{equation*}
Now, Item 2 of Lemma~\ref{lemma::useful_properties} implies the magnitude of each of the terms $\frac{4\eta_\ell}{\rho^2} b_\ell \mathbf{a}_\ell^\top \Sigma_{\ell}^{-1} \mathbf{a}_\ell$ is at most $4\frac{c(d)\eta_\ell}{\rho^2\gamma_\ell} $. Moreover, Item $3$ of Lemma~\ref{lemma::useful_properties} implies that for each term the conditional expectation $\mathbb{E}\left[ \frac{4\eta_\ell}{\rho^2} b_\ell \mathbf{a}_\ell^\top \Sigma_{\ell}^{-1} \mathbf{a}_\ell      \Big| \mathcal{F}_\ell^- \right]$ equals $\frac{4\eta_\ell d}{\rho}$.
As for the conditional variance, we can write
\begin{align*}
\mathrm{Var}\left[\frac{4\eta_\ell}{\rho^2} b_\ell \mathbf{a}_\ell^\top \Sigma_{\ell}^{-1} \mathbf{a}_\ell - \frac{4\eta_\ell d}{\rho}  \Big| \mathcal{F}_{\ell}^{-}\right] 
& \leq 
\frac{16\eta_\ell^2}{\rho^4}  \mathbb{E}\left[  b^2_\ell \left(\mathbf{a}_\ell^\top \Sigma_{\ell}^{-1} \mathbf{a}_\ell  \right)^2\Big| \mathcal{F}_\ell^-\right] \\
&\stackrel{(i)}{\leq} \frac{16\eta_\ell^2c(d)}{\gamma_\ell\rho^4}  \mathbb{E}\left[  b_\ell  \mathbf{a}_\ell^\top \Sigma_{\ell}^{-1} \mathbf{a}_\ell  | \mathcal{F}_\ell^{-}\right]\\
&\stackrel{(ii)}{=}
\frac{16\eta_\ell^2c(d)d}{\gamma_\ell\rho^3} 
\end{align*}
where $(i)$ follows from Item 2 in Lemma~\ref{lemma::useful_properties}, and $(ii)$ is from Item 3. 
An application of Lemma~\ref{lemma::super_simplified_freedman_norm}
concludes the proof
\end{proof}

We denote by $\mathcal{E}_{\mathbf{III}}$ the event where the bound of Lemma~\ref{lemma::bounding_term_A} holds. By the previous result, $\mathbb{P}\left( \mathcal{E}_{\mathbf{III}} \right) \geq 1-\delta$. Thus if $\mathcal{E}_{\mathbf{III}} $ holds, Equations~\ref{equation::upper_bound_term_B} and~\ref{equation::upper_bound_term_A} imply,
\begin{align*}
\mathbf{III} 
&=  
\sum_{\ell=1}^t \frac{\eta_\ell}{1-\gamma_\ell} \mathbb{E}_{\mathbf{a} \sim p_\ell}\left[ \widetilde{r}^2_\ell(\mathbf{a}) \right] \\
&= 
\Ocal\left(\sum_{\ell=1}^t \frac{\eta_\ell d}{\rho} +  \sqrt{ \ln\left( \frac{t}{\delta}\right)\sum_{\ell=1}^t \frac{\eta_\ell^2 c(d) d}{ \gamma_\ell \rho^3}   } + \max_{\ell \leq t} \frac{c(d)\eta_\ell}{\rho^2 \gamma_\ell}\ln \frac{t}{\delta} +
\sum_{\ell=1}^t \frac{\eta_\ell c(d) d\ln\left(\frac{\ell}{\delta'}\right)  }{\rho \ell \gamma_\ell } \right)~.
\end{align*}

\paragraph{Bounding Term $\mathbf{IV}$.}
Define $\mathrm{supp}(p_{E})  = \{ \mathbf{a} \in \mathcal{A}\,:\,  p_E(\mathbf{a}) > 0 \}$, and recall that
\begin{equation*}
     \mathbf{IV} = -\sum_{\ell=1}^{t} \frac{\gamma_\ell}{1-\gamma_\ell} \sum_{\mathbf{a} \in \mathcal{A}} p_{\mathrm{E}}(\mathbf{a}) \widetilde{r}_\ell(\mathbf{a})~.
\end{equation*}

Let $\mathbf{a}$ be any action in  $\mathrm{supp}(p_{E})$, and set in Lemma~\ref{lemma::weighted_lower_bound_fixed_action_high_prob}
$\alpha_{\ell} = \frac{\gamma_\ell}{1-\gamma_\ell}$. This implies that with probability at least $1-\delta$, 
\begin{align*}
\sum_{\ell=1}^t \frac{\gamma_\ell}{1-\gamma_\ell} \widetilde{r}_\ell(\mathbf{a}) &\geq \sum_{\ell=1}^t \frac{\gamma_\ell}{1-\gamma_\ell} \mathbf{a}^\top \omega_\ell - \Ocal\left(\sqrt{\frac{dt}{\rho} \ln \frac{t}{\delta'} } + \left( \frac{c(d)}{\rho} + 1  \right) \ln\frac{t}{\delta'}\right) \\
&\geq 
-2 \sum_{\ell=1}^t \gamma_\ell - \Ocal\left(\sqrt{\frac{dt}{\rho} \ln \frac{t}{\delta'} } + \left( \frac{c(d)}{\rho} + 1  \right) \ln\frac{t}{\delta'}\right)
\end{align*}
the last inequality following from $| \mathbf{a}^\top \boldsymbol{\omega}_\ell | \leq 1$ and $\frac{1}{1-\gamma_\ell} \leq 2$.

A simple union bound along with the fact that $\sum_{\mathbf{a} \in \mathrm{supp}(p_E)} p_E(\mathbf{a}) = 1$ implies that with probability at least $1-| \mathrm{supp}(p_E)|\delta$
\begin{align*}
\mathbf{IV} 
&\leq 2 \sum_{\ell=1}^t \gamma_\ell 
+ 
\Ocal\left(\sqrt{\frac{dt}{\rho} \ln \frac{t}{\delta'} } + \left( \frac{c(d)}{\rho} + 1  \right) \ln\frac{t}{\delta'}\right)~.
\end{align*}
Similar to before, we denote by $\mathcal{E}_{\mathbf{IV}}$ the event where this bound holds. By the previous result $\mathbb{P}\left(\mathcal{E}_{\mathbf{IV} }\right) \geq 1-| \mathrm{supp}(p_E)|\delta $.

\paragraph{Putting it all together.}
We plug the bounds so obtained on $\mathbf{I} -\mathbf{IV}$ back into Lemma \ref{l:basiclemma}, collect common terms, and overapproximate. We obtain that, when $\mathcal{E}_{\mathbf{I}}\cap \mathcal{E}_{\mathbf{II}}\cap \mathcal{E}_{\mathbf{III}} \cap \mathcal{E}_{\mathbf{IV}}$ holds,
\begin{align*}
\sum_{\ell = 1}^t \left(\bar{\mathbf{a}}^\top \boldsymbol{\omega}_\ell - r_\ell\right) 
&= \Ocal\Biggl(\frac{\ln(\mathcal{A})}{\eta_t} +  \sqrt{\frac{dt}{\rho}\ln \frac{t}{\delta'}} + \max_{\ell \leq t} \left(\frac{c(d)}{\rho \gamma_\ell} + 1\right) \ln \frac{t}{\delta'} +\sum_{\ell=1}^t \gamma_\ell + \sum_{\ell=1}^t \frac{\eta_\ell c(d) d\ln\left(\frac{\ell}{\delta'}\right)  }{\rho \ell \gamma_\ell }   \\
& \qquad\qquad + \sum_{\ell=1}^t \frac{\eta_\ell d}{\rho} + \sqrt{ \ln\left( \frac{t}{\delta}\right)\sum_{\ell=1}^t \frac{\eta_\ell^2 c(d) d}{ \gamma_\ell \rho^3}   } +  \max_{\ell \leq t} \frac{c(d)\eta_\ell}{\rho^2 \gamma_\ell}\ln \frac{t}{\delta}\Biggl)~,
\end{align*}
with $\delta' = \frac{\delta}{|\mathcal{A}|}$.

Now, recall the restriction on $\eta_t$ as in (\ref{equation::constraint_eta}). In order to fulfill this requirement, we set
$$
\eta_t = \Ocal\left(\frac{\rho \gamma_t}{c(d) + \frac{c(d)}{ \sqrt{dt}} \sqrt{\rho \ln |\mathcal{A}| \ln \frac{t}{\delta} }}\right)~.
$$
This gives
\begin{align*}
\sum_{\ell = 1}^t  \left(\bar{\mathbf{a}}^\top \boldsymbol{\omega}_\ell - r_\ell\right)
&= \Ocal\Biggl(
\frac{c(d)\,\ln|\mathcal{A}| }{\rho\gamma_t}\left( 1 + 2 \sqrt{ \frac{\rho\ln |\mathcal{A}| }{dt} \ln \frac{t}{\delta}  }\right) + \max_{\ell \leq t} \left(\frac{c(d)}{\rho \gamma_\ell} + 1\right) \ln |\mathcal{A}| \ln \frac{t}{\delta} \\
&\qquad\qquad +\sum_{\ell=1}^t \gamma_\ell + \sum_{\ell=1}^t \frac{ \ln |\mathcal{A}| \ln\left( \frac{\ell}{\delta}\right)}{\ell}+  \sum_{\ell=1}^t \frac{\gamma_\ell d}{c(d)} + \sqrt{\frac{dt \ln |\mathcal{A}|}{\rho}\ln \frac{12t^2}{\delta}} \\
&\qquad\qquad +\sqrt{\ln \frac{t}{\delta} \sum_{\ell=1}^t \frac{\gamma_\ell d}{c(d) \rho} } + \frac{1}{\rho}\ln \frac{t}{\delta}\Biggl)~.
\end{align*}
We now set $\gamma_\ell$ so as to satisfy (\ref{e:gammarestriction}):
\[
\gamma_\ell =  \min\left\{\sqrt{\frac{ c(d) \ln|\mathcal{A}|\ln \frac{\ell}{\delta}}{\rho\,\ell}},\frac{1}{2}\right\}~.
\]
Under the assumption that $c(d) \geq d$ (see below) this gets

\begin{align*}
\sum_{\ell = 1}^t  \left(\bar{\mathbf{a}}^\top \boldsymbol{\omega}_\ell - r_\ell\right) 
&=
\Ocal\Biggl(\sqrt{  \frac{ c(d) t \ln|\mathcal{A}|}{\rho}  \ln \frac{t}{\delta} } + \rho \ln|\mathcal{A}| \ln \frac{t}{\delta} \ln t + \sqrt{ \frac{dt \ln|\mathcal{A}|}{\rho} \ln \frac{t}{\delta} } \\
&\qquad\qquad + \left( \frac{\ln|\mathcal{A}|}{c(d)} \ln \frac{t}{\delta}  \right)^{1/4} t^{1/4}\left(\frac{1}{\rho} \right)^{3/4} + \frac{1}{\rho} \ln \frac{t}{\delta} \Biggl)~.
\end{align*}
Now, from \cite{bubeck2012regret} (Ch. 5 therein), it is known that with John's exploration, the smallest eigenvalue of $\boldsymbol{\Sigma}_t$ is at least $\frac{\gamma_t}{d}$, so that the function $c(d)$ in Lemma \ref{lemma::useful_properties} is $\leq d$. Moreover, the support of John's exploration distribution has size at most $d(d+1)/2 + 1 \leq 2d^2$. Combining with the last displayed equation, and taking a final union bound so as to make the events $\mathcal{E}_{\mathbf{I}}, \mathcal{E}_{\mathbf{II}}, \mathcal{E}_{\mathbf{III}},$
and $\mathcal{E}_{\mathbf{IV}}$ jointly hold concludes the proof of Theorem \ref{theorem::main_weighted_geomhedge_constantprob_simplified}.

\subsection{Exp4 algorithm for finite policy classes}\label{ssa:exp4}
Consider now the case of the Exp4  algorithm from \cite{a+03}. The algorithm operates with a finite set of policies $\Pi$. An anytime high probability regret guarantee for a biased version of Exp4 can be derived by following a similar pattern as in Section \ref{ssa:geomhedge}, but it can also be derived, e.g., by modifying the high-probability analysis for Exp3 contained in \cite{neu2015explore}. The proof is omitted.
\begin{corollary}\label{cor:exp4}
Let the complexity $R(\Pi)$ of the policy space $\Pi$ be defined as $R(\Pi) = \sqrt{|\Acal|\log |\Pi|}$. Then a version of the Exp4 algorithm from \cite{a+03} exists that is h-stable in that, for $t \rightarrow \infty$ and constant $\rho$ independent of $t$, its regret $\regret(t)$ satisfies
\begin{equation*}
\regret(t) = \Ocal\left(R(\Pi)\,\sqrt{\frac{t}{\rho}  \ln \frac{t}{\delta} }\right)~,
\end{equation*}
with probability at least $1-\delta$,
where the big-oh hides terms in $t$ which are lower order than $\sqrt{t\log t}$ as $t \rightarrow \infty$.
\end{corollary}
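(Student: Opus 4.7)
The plan is to follow the implicit-exploration variant of EXP4 in the style of \cite{neu2015explore}, adapted to the augmented interaction protocol in which the learner observes $o_t = b_t r_t(a_t,x_t)/\rho$ with $b_t \sim \text{Bernoulli}(\rho)$. Let $p_t(a \mid x_t) = \sum_{\pi \in \Pi} q_t(\pi)\,\pi(a \mid x_t)$ be the action distribution induced by the exponential weights $q_t$ over the policy class, and define the biased (implicit-exploration) importance estimator
\[
\widehat{r}_t(a) = \frac{b_t\, r_t(a_t,x_t)\, \indicator{a_t = a}}{\rho\, p_t(a \mid x_t) + \gamma}~,
\qquad
\widehat{r}_t(\pi) = \sum_{a \in \cA} \pi(a \mid x_t)\,\widehat{r}_t(a),
\]
for a tuning parameter $\gamma \in (0,1]$ to be chosen later. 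The key property of this estimator is that $\widehat{r}_t(a) \in [0,\,1/\gamma]$ deterministically, yet $\mathbb{E}[\widehat{r}_t(a) \mid \cF_{t-1}] \leq r_t(a,x_t)$ is only slightly biased downward by a term that is controllable in expectation.

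Next I would run exponential weights over $\Pi$ with time-varying learning rate $\eta_t$, updating $q_{t+1}(\pi) \propto \exp(\eta_{t+1}\sum_{\ell \leq t}\widehat{r}_\ell(\pi))$. The standard exponential-weights potential argument (mirroring Lemma \ref{l:basiclemma} of the Geometric Hedge proof) gives, for every fixed comparator $\pi^\star$,
\begin{align*}
\sum_{\ell = 1}^t \widehat{r}_\ell(\pi^\star) - \sum_{\ell=1}^t \mathbb{E}_{\pi \sim q_\ell}[\widehat{r}_\ell(\pi)]
\;\leq\; \frac{\ln |\Pi|}{\eta_t} + \sum_{\ell=1}^t \eta_\ell\, \mathbb{E}_{\pi \sim q_\ell}\bigl[\widehat{r}_\ell(\pi)^2\bigr]~.
\end{align*}
The second-moment term satisfies $\mathbb{E}_{\pi \sim q_\ell}[\widehat{r}_\ell(\pi)^2] \leq \sum_a p_\ell(a\mid x_\ell)\widehat{r}_\ell(a)^2 \leq \frac{1}{\rho\gamma} \sum_a \widehat{r}_\ell(a)$, whose conditional expectation is at most $|\cA|/\rho$, so after a Freedman-type concentration (Lemma \ref{lemma::super_simplified_freedman_norm}) the second-moment term contributes $\tilde{O}(\eta_t|\cA|t/\rho)$ with high probability, plus a $1/(\rho\gamma)$-scale deviation.

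The crucial step, following Neu, is then to convert estimated rewards to true rewards. Using the inequality $\frac{x}{1+\gamma x/\rho} \leq -\frac{\rho}{\gamma}\ln(1 - \gamma x/\rho)$ applied to the per-round loss of the comparator, together with a martingale exponential moment argument (Lemma A.1 of \cite{neu2015explore} adapted to the $o_t = b_t r_t/\rho$ observation model), one obtains with probability $1-\delta$ simultaneously for every $\pi \in \Pi$ and every $t$,
\[
\sum_{\ell=1}^t \mathbb{E}_{a \sim \pi(\cdot\mid x_\ell)}[r_\ell(a,x_\ell)] - \sum_{\ell=1}^t \widehat{r}_\ell(\pi) \;\leq\; \frac{\ln(|\Pi|/\delta)}{2\gamma}~,
\]
while an analogous one-sided bound applied to the played policy, combined with the Bernoulli factor $b_t$ contributing a variance scaling of $1/\rho$, gives $\sum_\ell \mathbb{E}_{\pi\sim q_\ell}[\widehat{r}_\ell(\pi)] \geq \sum_\ell \mathbb{E}_{a\sim p_\ell}[r_\ell(a,x_\ell)] - \tilde O(\sqrt{|\cA|t/\rho \cdot \ln(1/\delta)})$.

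Combining these three bounds and choosing $\eta_t = \Theta\!\bigl(\sqrt{\rho \ln|\Pi|/(|\cA| t)}\bigr)$ and $\gamma = \Theta(\eta_t)$ balances all terms and yields the claimed rate
\(
\regret(t) = \Ocal\bigl(\sqrt{|\cA|\,\ln|\Pi|\,(t/\rho)\,\ln(t/\delta)}\bigr) = \Ocal\bigl(R(\Pi)\sqrt{(t/\rho)\ln(t/\delta)}\bigr).
\)
An anytime version follows by the standard trick of letting $\eta_t, \gamma_t$ decay as $1/\sqrt{t}$ (plus log factors) and taking a union bound over a geometric grid of horizons via the $\ln\ln t$ device already used in \pref{eqn:tildecrewdef}. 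The main technical obstacle I expect is the joint handling of the two independent sources of randomness in $\widehat{r}_t$ --- the Bernoulli factor $b_t$ that contributes the $1/\rho$ variance inflation, and the implicit exploration bias that is needed for the Neu-style exponential supermartingale argument to go through; one must verify that the $b_t$ factor does not break the exponential moment bound $\mathbb{E}[\exp(\gamma \widehat{r}_t(a)/\rho - \cdot)\mid \cF_{t-1}^-] \leq 1$, which it does not because $b_t \widehat{r}_t(a)$ has the same deterministic upper bound $1/\gamma$ and only smaller conditional mean.
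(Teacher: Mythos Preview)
The paper explicitly omits the proof of this corollary, pointing only to two possible derivations: following the Geometric Hedge template of Section~\ref{ssa:geomhedge}, or ``modifying the high-probability analysis for Exp3 contained in \cite{neu2015explore}.'' Your proposal is precisely the second route, so it aligns with what the authors had in mind; there is nothing more detailed in the paper to compare against.

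One small slip: your intermediate second-moment bound $\sum_a p_\ell(a\mid x_\ell)\widehat r_\ell(a)^2 \le \frac{1}{\rho\gamma}\sum_a \widehat r_\ell(a)$ carries an extra $1/\gamma$. Using $\widehat r_\ell(a)\le 1/(\rho\,p_\ell(a\mid x_\ell)+\gamma)$ and $p_\ell(a\mid x_\ell)/(\rho\,p_\ell(a\mid x_\ell)+\gamma)\le 1/\rho$ gives the cleaner $\frac{1}{\rho}\sum_a \widehat r_\ell(a)$, which is exactly what is needed to reach the conditional-expectation bound $|\cA|/\rho$ you then claim. Your final tuning and rate, and your remark that the Bernoulli factor $b_t$ does not break the Neu supermartingale (same deterministic cap $1/\gamma$, no larger conditional mean), are correct.
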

Regarding extendability and the ability to handle policy removals, this is fairly immediate for Exp4, and we omit the trivial details.

\section{Adversarial Regret Balancing and Elimination}\label{sa:adv}
In this section, we provide the proof of \textsf{Arbe}'s regret bound for adversarial environments. For convenience, we restate the main theorem here:
\arberegret*

The proof of this regret bound relies on the following regret bound of $\textsf{Arbe}$ in between restarts:
\begin{lemma}[Regret per Epoch of \textsf{Arbe}]
\label{lem:epoch_balancing_regret_bound}
Consider a run of \pref{alg:adversarial_epoch_balancing} with \textsf{Arbe}$(\delta, s, t_0)$  let $T \in \bbN \cup \{\infty\}$ be the round when the algorithm restarts ($T=\infty$ if there is no restart). Then the regret against $\Pi_M$ is bounded with probability at least $1 - \poly(M)\delta$ for all $t \in [T]$ simultaneously as 
\begin{align*}
    \regret_{\MM}([t_0+1, t], \Pi_M) & = \Ocal\left(\left(M R(\wt \Pi_{i_\star})  + \frac{R(\wt \Pi_{i_\star})^2}{R(\wt \Pi_{1})}  \sqrt{ i_\star}\right) \sqrt{(t - t_0) \ln \frac{t}{\delta}} + M \ln \frac{\ln t}{\delta}\right).
\end{align*}
Further, if $s \geq i^\star$, then the algorithm does not restart, i.e., $T = \infty$.
\end{lemma}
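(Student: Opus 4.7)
I would begin with the per-learner decomposition
\[
\regret_{\master}([t_0+1,t],\Pi_M) \;=\; \sum_{i=s}^M R_i(t), \qquad R_i(t) = \sum_{\ell=t_0+1}^t \indicator{b_\ell=i}\bigl(\bbE_{a\sim\pi^\star}[r_\ell(a,x_\ell)] - r_\ell(a_\ell^i,x_\ell)\bigr),
\]
where $\pi^\star \in \argmax_{\pi\in\Pi_M}\sum_\ell \bbE_{a\sim\pi}[r_\ell(a,x_\ell)]$. Applying a Freedman/Bernstein inequality to the bounded martingale difference sequence $\{(\indicator{b_\ell=i}-\rho_i)\regr_\ell(a_\ell^i)\}_\ell$, with probability at least $1-\delta/M$ simultaneously for all $t$,
\[
R_i(t) \;\le\; \rho_i\bigl(\crew^\star(t_0,t) - \crew_i(t_0,t)\bigr) + O\Bigl(\sqrt{\rho_i(t-t_0)\ln(t/\delta)} + \ln(\ln t/\delta)\Bigr),
\]
with $\crew_i(t_0,t)=\sum_\ell r_\ell(a_\ell^i,x_\ell)$ and $\crew^\star(t_0,t)=\sum_\ell\bbE_{a\sim\pi^\star}[r_\ell(a,x_\ell)]$. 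A symmetric Freedman bound on $(\indicator{b_\ell=i}/\rho_i-1)r_\ell$ yields $|\tildecrew_i(t_0,t)-\crew_i(t_0,t)|\le\conc_i(t_0,t)$ simultaneously for all $t$, which is essentially how $\conc_i$ is calibrated in~\pref{eqn:tildecrewdef}.

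For $i\ge i_\star$, I would appeal to h-stability and $(M-i)$-extendability of $\alg[i]$ against $\Pi_M$ (granted for $i=i_\star$ by the very definition of $i_\star$, and inherited by $i>i_\star$ in the nested-class setting of interest, and in any case obtainable from extendability of $\alg[i_\star]$ against the linked policies $\indicator{\tilde a_j}$ together with h-stability of $\alg[M]$ at the top). This gives $\crew^\star-\crew_i\le\tilde\Ocal(R(\wt\Pi_i)\sqrt{(t-t_0)/\rho_i})$. Plugging in and using $\rho_j=R(\wt\Pi_j)^{-2}/\sum_k R(\wt\Pi_k)^{-2}$, one sees $\rho_iR(\wt\Pi_i)\sqrt{(t-t_0)/\rho_i} = R(\wt\Pi_i)\sqrt{\rho_i(t-t_0)} = \sqrt{(t-t_0)/\sum_k R(\wt\Pi_k)^{-2}}\le R(\wt\Pi_s)\sqrt{t-t_0}\le R(\wt\Pi_{i_\star})\sqrt{t-t_0}$. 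Summing over the at most $M$ such indices gives the first summand $MR(\wt\Pi_{i_\star})\sqrt{(t-t_0)\ln(t/\delta)}$ of the target bound.

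For $i<i_\star$, the non-triggering of the test~\pref{eqn:algorithm_misspecification_test_condition} on the pair $(\alg[i],\alg[i_\star])$ yields $\tildecrew_{i_\star}-\tildecrew_i\le\conc_i+\conc_{i_\star}+R(\wt\Pi_i)\sqrt{(t-t_0)/\rho_i\ln(t/\delta)}$. Combining with the concentration $|\tildecrew_\bullet-\crew_\bullet|\le\conc_\bullet$ and with $\crew^\star-\crew_{i_\star}\le\tilde\Ocal(R(\wt\Pi_{i_\star})\sqrt{(t-t_0)/\rho_{i_\star}})$ gives
\[
\crew^\star-\crew_i \;\le\; \tilde\Ocal\bigl(R(\wt\Pi_{i_\star})\sqrt{(t-t_0)/\rho_{i_\star}}\bigr) + R(\wt\Pi_i)\sqrt{(t-t_0)/\rho_i\ln(t/\delta)} + \Ocal(\conc_i+\conc_{i_\star}).
\]
Multiplying by $\rho_i$, and noting $\rho_iR(\wt\Pi_{i_\star})/\sqrt{\rho_{i_\star}}=(R(\wt\Pi_{i_\star})^2/R(\wt\Pi_i))\sqrt{\rho_i}$, the leading contribution to $R_i(t)$ is $(R(\wt\Pi_{i_\star})^2/R(\wt\Pi_i))\sqrt{\rho_i(t-t_0)\ln(t/\delta)}$. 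Summing over $i\in\{s,\ldots,i_\star-1\}$, Cauchy--Schwarz gives
\[
\sum_{i<i_\star}\frac{\sqrt{\rho_i}}{R(\wt\Pi_i)}\;\le\;\Bigl(\textstyle\sum_{i<i_\star}\rho_i\Bigr)^{1/2}\Bigl(\textstyle\sum_{i<i_\star}R(\wt\Pi_i)^{-2}\Bigr)^{1/2}\;\le\;\frac{\sqrt{i_\star}}{R(\wt\Pi_1)},
\]
which delivers the second summand $(R(\wt\Pi_{i_\star})^2/R(\wt\Pi_1))\sqrt{i_\star(t-t_0)\ln(t/\delta)}$; the remaining $R(\wt\Pi_i)\sqrt{(t-t_0)/\rho_i\ln(t/\delta)}\le R(\wt\Pi_s)\sqrt{(t-t_0)\ln(t/\delta)}$ per learner and the lower-order Freedman residuals are absorbed into the first summand and into the $M\ln(\ln t/\delta)$ term.

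For the second claim ($s\ge i_\star$), every active learner is h-stable and extendable. Extendability of $\alg[i]$ against the linked policy $\indicator{\tilde a_j}\in\wt\Pi_i$ yields $\crew_j-\crew_i\le\tilde\Ocal(R(\wt\Pi_i)\sqrt{(t-t_0)/\rho_i\ln(t/\delta)})$ for every active pair $i<j$. Converting to $\tildecrew$ via $|\tildecrew_\bullet-\crew_\bullet|\le\conc_\bullet$, the quantity $\tildecrew_j-\tildecrew_i$ remains strictly below the test threshold $\conc_i+\conc_j+R(\wt\Pi_i)\sqrt{(t-t_0)/\rho_i\ln(t/\delta)}$ provided the universal constants inside $\conc$ and inside the threshold are chosen appropriately, so the test~\pref{eqn:algorithm_misspecification_test_condition} never fires and $T=\infty$. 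The main obstacle is this bookkeeping: calibrating the constants in $\conc_i$, in the extendable h-stability bound, and in the threshold multiplying $R(\wt\Pi_i)\sqrt{(t-t_0)/\rho_i\ln(t/\delta)}$ simultaneously, so that (i)~no h-stable learner is spuriously eliminated, and (ii)~the bound on $\crew^\star-\crew_i$ for $i<i_\star$ remains tight enough that aggregation via Cauchy--Schwarz gives the $\sqrt{i_\star}/R(\wt\Pi_1)$ factor (rather than the coarser $\sqrt{M}/R(\wt\Pi_1)$).
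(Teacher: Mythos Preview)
Your proposal is correct and follows essentially the same route as the paper: the per-learner decomposition, the use of h-stability/extendability for $i\ge i_\star$, the non-triggering of test \pref{eqn:algorithm_misspecification_test_condition} against $\alg[i_\star]$ for $i<i_\star$, and the Cauchy--Schwarz aggregation to obtain the $\sqrt{i_\star}/R(\wt\Pi_1)$ factor all match the paper's proof (the paper bounds $1/R(\wt\Pi_i)\le 1/R(\wt\Pi_1)$ first and then applies Cauchy--Schwarz to $\sum_i\sqrt{\rho_i}$, which is equivalent to your version). The second claim is likewise handled via the same argument the paper packages as \pref{lem:no_istar_elimination}.
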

With this result, \pref{thm:arbe_main_regret} can be proven quickly:
\begin{proof}[Proof of Theorem~\ref{thm:arbe_main_regret}]
Denote by $t_0 = 0$ and $t_i$ the round of the $i$-th restart and $\infty$ if it does not exist for $i \in [i_\star]$. By \pref{lem:epoch_balancing_regret_bound}, there can be at most $i_\star - 1$ restarts and thus $t_{i_\star} = \infty$. The total regret of $\textsf{Arbe}(\delta, 1, 0)$ can be decomposed into the regret between two restarts
\begin{align*}
    \regret_{\MM}(t, \Pi_M) = \sum_{i = 1}^{i_\star} \regret([t_{i-1} + 1, \min\{t_i, t \}], \Pi_M)~.
\end{align*}
We can now plug in the bound from \pref{lem:epoch_balancing_regret_bound} for each term on the RHS which gives
\begin{align}
    \regret_{\MM} (t, \Pi_M) = 
    \sum_{s = 1}^{i_\star}
    \Ocal\left(\left(M R(\wt \Pi_{i_\star})  + \frac{R(\wt \Pi_{i_\star})^2}{R(\wt \Pi_{1})}  \sqrt{ i_\star}\right) \sqrt{\bar t_s \ln \frac{t}{\delta}} + M \ln \frac{\ln t}{\delta}\right)~.
\end{align}
where $\bar t_s = \max\{\min\{t_s, t\} - t_{s-1}, 0\}$. We can bound this further using Jensen's inequality and the fact that $\sum_{s=1}^{i_\star} \bar t_s = t$ as
\begin{align}
    \regret_{\MM}(t, \Pi_M) = 
    \Ocal\left(\left(\frac{R(\wt \Pi_{i_\star})}{R(\wt \Pi_{1})}\sqrt{ i_\star} + M\right) R(\wt \Pi_{i_\star})\sqrt{i_\star t \ln \frac{t}{\delta}} + M i_\star \ln \frac{\ln t}{\delta}\right)
\end{align}
When $t \geq i_\star$ the last term is dominated by the others and hence, the proof is complete.
\end{proof}

\subsection{Proof of \pref{lem:epoch_balancing_regret_bound}}

We first show that there are at most $i_\star$ restarts. This is because, due to extendability and h-stability of learners above $i_\star$, the elimination test can never trigger for them. The following lemma makes this argument formal:
\begin{lemma}\label{lem:no_istar_elimination}
With probability at least $1 - \poly(M)\delta$, the elimination test in \pref{eqn:algorithm_misspecification_test_condition} never triggers for $i,j$ with $i_\star \leq i < j \leq M$.
\end{lemma}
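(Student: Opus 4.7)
The plan is to show that, for every pair $i_\star \le i < j \le M$, the left-hand side of the test in \pref{eqn:algorithm_misspecification_test_condition} is bounded by the right-hand side with high probability, simultaneously for all $t$. The argument has three ingredients: (a) concentration of $\tildecrew$ around the would-have-been reward $\crew$ of each base learner; (b) a linking-induced regret bound comparing $\crew_i$ and $\crew_j$ that uses extendability and h-stability of $\alg[i]$; (c) a union bound over pairs.

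First, for each fixed base learner $i \in \{s,\ldots,M\}$, the process $\tildecrew_i(t_0,t) - \sum_{\ell = t_0+1}^t r_\ell(a^i_\ell,x_\ell)$ is a martingale in $t$ with bounded increments (magnitude at most $1/\rho_i$) and conditional variance per step at most $1/\rho_i$. A time-uniform Freedman inequality then gives, with probability at least $1-\delta$ for all $t \ge t_0$,
\[
\bigl|\tildecrew_i(t_0,t) - \crew_i(t_0,t)\bigr| \;\le\; \conc_i(t_0,t),
\qquad \text{where } \crew_i(t_0,t)=\sum_{\ell=t_0+1}^t r_\ell(a^i_\ell,x_\ell),
\]
matching the choice of $\conc_i$ in \pref{eqn:tildecrewdef}. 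This is exactly the concentration intended by the authors when they introduced the $\tildecrew$ estimator.

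Second, fix $i \ge i_\star$ and $j > i$. Because $\alg[i]$ is initialized with the extended policy class $\tilde \Pi_i$ containing the linked-action policies $\indicator{\tilde a_{i+1}},\ldots,\indicator{\tilde a_M}$, and the reward of action $\tilde a_j$ in round $\ell$ is defined to be $r_\ell(a^j_\ell,x_\ell)$, the single-action policy $\indicator{\tilde a_j}$ exactly mirrors $\alg[j]$'s cumulative reward $\crew_j(t_0,t)$. Since $i \ge i_\star$, $\alg[i]$ is $(M-i)$-extendable and h-stable against $\Pi_M$ in the extended environment, so Definition~\ref{def:high_probability_stability} applied in the importance-weighted protocol with sampling probability $\rho_i$ yields, with probability at least $1-\delta$ uniformly in $t$,
\[
\crew_j(t_0,t) - \crew_i(t_0,t) \;\le\; \regret_{\alg[i]}(t-t_0,\{\indicator{\tilde a_j}\}) \;=\; \Ocal\!\left(R(\tilde \Pi_i)\sqrt{\tfrac{t-t_0}{\rho_i}\,\ln \tfrac{t}{\delta}}\right),
\]
which is precisely the ``slack'' term appearing in \pref{eqn:algorithm_misspecification_test_condition}.

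Chaining the two bounds:
\[
\tildecrew_j(t_0,t)-\tildecrew_i(t_0,t) \;\le\; \bigl[\crew_j(t_0,t)-\crew_i(t_0,t)\bigr] + \conc_i(t_0,t) + \conc_j(t_0,t)
\;\le\; \text{RHS of }\pref{eqn:algorithm_misspecification_test_condition},
\]
so the test does not trigger for this pair $(i,j)$. Finally, I would take a union bound over the $\binom{M}{2} = \Ocal(M^2)$ pairs $(i,j)$ and the $M$ concentration events for the individual $\tildecrew_i$, obtaining a $1-\poly(M)\delta$ probability bound. The main technical obstacle I anticipate is (b): making sure that the h-stability bound of Definition~\ref{def:high_probability_stability} truly gives a bound on $\crew_j-\crew_i$ rather than on the importance-weighted surrogate, and that it holds simultaneously for all $t$ (which is why the definition is stated as an anytime guarantee). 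Once the comparator $\indicator{\tilde a_j}$ is identified as a policy in $\tilde \Pi_i$, the rest reduces to plugging definitions together.
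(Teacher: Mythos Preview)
Your proposal is correct and follows essentially the same approach as the paper: concentration of $\tildecrew_i$ around $\crew_i$ via a time-uniform Freedman bound (the paper packages this as \pref{lem:basic_concentration_adv}), then the h-stability/extendability guarantee of $\alg[i]$ against the linked-action comparator $\indicator{\tilde a_j}$ to control $\crew_j-\crew_i$, then a union bound. The only minor omission is that the concentration and h-stability events must also be union-bounded over the at most $M$ possible restarts (values of $t_0$), which the paper absorbs into \pref{lem:basic_concentration_adv}; this keeps the failure probability at $\poly(M)\delta$ and does not affect your argument.
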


\begin{proof}
Let $i,j \in [i_\star, M] = \{i_\star, i_\star + 1, \dots, M\}$ and $t \in \bbN$ and denote by $t_0$ the round of the last restart before $t$.
By definition of $i_\star$ in \pref{sec:ms_setting}, base learner $i$ is h-stable and extendable. We consider the event where the regret bound of $i$ holds and where the statements in \pref{lem:basic_concentration_adv} hold. This happens for all $i \geq i_\star$ with probability at least $1 - \poly(M)\delta$.
Since $i \geq i_\star$ is h-stable and extendable, we have
\begin{align*}
    \max_{\pi \in \wt \Pi_i} \sum_{\ell = t_0 + 1}^t \bbE_{a \sim \pi}[r_\ell(a, x_\ell)] - r_\ell(a^i_\ell, x_\ell) \leq c R(\wt \Pi_{i}) \sqrt{\frac{t - t_0}{\rho_i} \ln \frac{t}{\delta}} 
\end{align*}
where $c$ is an absolute constant. Since the extended policy class $\wt \Pi_i$ includes an action $\wt a_j$ that always follows base learner $\alg[j]$, i.e., $r_\ell(\wt a_j, x_\ell) = r_\ell(a^j_\ell, x_\ell)$, we have in particular
\begin{align*}
    \sum_{\ell = t_0 + 1}^t \left[r_\ell(a^j_\ell, x_\ell) - r_\ell(a^i_\ell, x_\ell)\right] \leq c R(\wt \Pi_{i}) \sqrt{\frac{t - t_0}{\rho_i} \ln \frac{t}{\delta}}~.
\end{align*}
Using now \pref{eqn:conc2} from \pref{lem:basic_concentration_adv}, we have
\begin{align*}
    \tildecrew_j(t_0, t) - \tildecrew_i(t_0, t) 
    &\leq \conc[j](t_0, t) + \conc[i](t_0, t)
    +\sum_{\ell = t_0 + 1}^t \left[r_\ell(a^j_\ell, x_\ell) - r_\ell(a^i_\ell, x_\ell)\right]\\
    &\leq \conc[j](t_0, t) + \conc[i](t_0, t) + c R(\wt \Pi_{i}) \sqrt{\frac{t - t_0}{\rho_i} \ln \frac{t}{\delta}}
\end{align*}
and thus, the test in \pref{eqn:algorithm_misspecification_test_condition} does not trigger for $i$ and $j$ in round $t$.
\end{proof}

We are now ready to do the proof of \pref{lem:epoch_balancing_regret_bound}.
\begin{proof}[Proof of Lemma~\ref{lem:epoch_balancing_regret_bound}]
By \pref{lem:no_istar_elimination} there are at most $i_\star$ restarts and $s \leq i_\star$ at all times. 
The regret of \textsf{Arbe} in rounds $[t_0+1, t]$ against any policy $\pi' \in \Pi_M$ can be written as
\begin{align*}
     \sum_{\ell = t_0 + 1}^t \left[ \bbE_{a \sim \pi'} [r_\ell(a, x_\ell)] - r_\ell(a_\ell, x_\ell)\right]
     = \sum_{i=s}^M \sum_{\ell = t_0 + 1}^t  \left[ \rho_i \bbE_{a \sim \pi'} [r_\ell(a, x_\ell)] - \indicator{b_\ell = i} r_\ell(a_\ell, x_\ell)\right]~.
\end{align*}
For $i \geq i_\star$, we can bound the summand on the RHS directly using h-stability and extendability of $i$ as
\begin{align*}
    \sum_{\ell = t_0 + 1}^t &\left[ \rho_i \bbE_{a \sim \pi'} [r_\ell(a, x_\ell)] - \indicator{b_\ell = i} r_\ell(a_\ell, x_\ell)\right]\\
    &= 
        \rho_i \sum_{\ell = t_0 + 1}^t \left[ \bbE_{a \sim \pi'} [r_\ell(a, x_\ell)] - \frac{\indicator{b_\ell = i} r_\ell(a_\ell^i, x_\ell)}{\rho_i}\right]\\
    &\leq  
        \rho_i \sum_{\ell = t_0 + 1}^t \left[ \bbE_{a \sim \pi'} [r_\ell(a, x_\ell)] - r_\ell(a_\ell^i, x_\ell)\right] + \rho_i \conc[i](t_0, t) \tag{\pref{lem:basic_concentration_adv}}\\
        &\leq c R(\wt \Pi_i) \sqrt{(t - t_0)\rho_i  \ln \frac{t}{\delta}} + \rho_i \conc[i](t_0, t) 
        \tag{h-stability of $\alg[i]$}\\
        &= 
        \Ocal\left(R(\wt \Pi_i) \sqrt{(t - t_0)\rho_i  \ln \frac{t}{\delta}} +  \sqrt{(t - t_0) \rho_i \ln \frac{ \ln t}{\delta}}  +  \ln \frac{\ln t}{\delta}\right) \tag{definition of $\conc[i]$}\\
        &= 
        \Ocal\left(R(\wt \Pi_i) \sqrt{(t - t_0)\rho_i  \ln \frac{t}{\delta}}  +  \ln \frac{\ln t}{\delta}\right) \tag{since $R(\wt \Pi_i) \geq 1$}\\
        &= 
        \Ocal\left( \sqrt{\frac{t - t_0}{\sum_{j=s}^M R(\wt \Pi_j)^{-2}} \ln \frac{t}{\delta}}  +  \ln \frac{\ln t}{\delta}\right) \tag{definition of $\rho_i$}\\
        &= 
        \Ocal\left( \sqrt{\frac{t - t_0}{R(\wt \Pi_{i_\star})^{-2}} \ln \frac{t}{\delta}}  +  \ln \frac{\ln t}{\delta}\right) \tag{since $s \leq i_\star$}\\
        &= 
        \Ocal\left(R(\wt \Pi_{i_\star}) \sqrt{(t - t_0) \ln \frac{t}{\delta}}  +  \ln \frac{\ln t}{\delta}\right)~.\numberthis \label{eqn:hstable_regret_contribution}
\end{align*}
For $i < i_\star$, we cannot rely on h-stability and extendability and instead have to use the fact that the misspecification test in \pref{eqn:algorithm_misspecification_test_condition} did not trigger until the last round where there can be at most a regret of $1$. This allows us to bound
\begin{align*}
    &\sum_{\ell = t_0 + 1}^t  \left[ \rho_i \bbE_{a \sim \pi'} [r_\ell(a, x_\ell)] - \indicator{b_\ell = i} r_\ell(a_\ell, x_\ell)\right]\\
    &= 
        \rho_i \sum_{\ell = t_0 + 1}^t \left[ \bbE_{a \sim \pi'} [r_\ell(a, x_\ell)] - \frac{\indicator{b_\ell = i} r_\ell(a_\ell^i, x_\ell)}{\rho_i}\right]\\
    &\leq 
    \rho_i \sum_{\ell = t_0 + 1}^t \left[ \bbE_{a \sim \pi'} [r_\ell(a, x_\ell)] - \frac{\indicator{b_\ell = i_\star} r_\ell(a_\ell^{i_\star}, x_\ell)}{\rho_{i_\star}}\right]\\
    & \qquad+ \rho_i \conc[i](t_0, t) + \rho_i \conc[i_\star](t_0, t) + R(\wt \Pi_i) \sqrt{ \rho_i (t - t_0) \ln \frac{t}{\delta}} + 1 \tag{misspecification test failed}\\
        &\leq 
    \rho_i \sum_{\ell = t_0 + 1}^t \left[ \bbE_{a \sim \pi'} [r_\ell(a, x_\ell)] - r_\ell(a_\ell^{i_\star}, x_\ell)\right] \tag{\pref{lem:basic_concentration_adv}}\\
    & \qquad+ 2\rho_i \conc[i_\star](t_0, t) + R(\wt \Pi_i) \sqrt{ \rho_i (t - t_0) \ln \frac{t}{\delta}} + 1 \tag{$\conc[i_\star](t_0, t) \geq \conc[i](t_0, t)$} \\
   & \leq \rho_i c R(\wt \Pi_{i_\star}) \sqrt{\frac{t - t_0}{\rho_{i_\star}} \ln \frac{t}{\delta}} +  2\rho_{i} \conc[i_\star](t_0, t) + R(\wt \Pi_i) \sqrt{ \rho_i (t - t_0) \ln \frac{t}{\delta}} + 1 \tag{h-stability of $\alg[i_\star]$}\\
   & = \Ocal \left(  \rho_i  R(\wt \Pi_{i_\star}) \sqrt{\frac{t - t_0}{\rho_{i_\star}} \ln \frac{t}{\delta}} + \sqrt{\rho_i} \sqrt{\frac{\rho_i}{\rho_{i_\star}} (t - t_0) \ln \frac{t}{\delta}} +  R(\wt \Pi_i)\sqrt{\rho_i} \sqrt{  (t - t_0) \ln \frac{t}{\delta}} + \ln \frac{\ln t}{\delta} \right) \tag{definition of $\conc[i_\star]$}\\
   & = \Ocal \left(  \left( \frac{\rho_i  R(\wt \Pi_{i_\star})}{\sqrt{\rho_{i_\star}}} +  R(\wt \Pi_i)\sqrt{\rho_i}\right) \sqrt{  (t - t_0) \ln \frac{t}{\delta}} + \ln \frac{\ln t}{\delta} \right)
   \\
    & = \Ocal \left(  \left( \frac{R(\wt \Pi_{i_\star})^2}{R(\wt \Pi_{i})} +  R(\wt \Pi_i)\right) \sqrt{ \rho_i (t - t_0) \ln \frac{t}{\delta}} + \ln \frac{\ln t}{\delta} \right)
    \\
    &=  \Ocal \left(  \frac{R(\wt \Pi_{i_\star})^2}{R(\wt \Pi_{i})}  \sqrt{ \rho_i (t - t_0) \ln \frac{t}{\delta}} + \ln \frac{\ln t}{\delta} \right) \tag{$R(\wt \Pi_i) \leq R(\wt \Pi_{i_\star})$}
\end{align*}
Finally, combining both bounds yields
\begin{align*}
    \regret_{\MM}([t_0 + 1, t], \Pi_M)
    &= \Ocal\left(\left(\sum_{i = i_\star}^M R(\wt \Pi_{i_\star})  + \sum_{i = s}^{i_\star -1} \frac{R(\wt \Pi_{i_\star})^2}{R(\wt \Pi_{i})}  \sqrt{ \rho_i}\right) \sqrt{(t - t_0) \ln \frac{t}{\delta}} + M \ln \frac{\ln t}{\delta}\right)\\
    &= \Ocal\left(\left(M R(\wt \Pi_{i_\star})  + \frac{R(\wt \Pi_{i_\star})^2}{R(\wt \Pi_{1})}  \sqrt{ i_\star}\right) \sqrt{(t - t_0) \ln \frac{t}{\delta}} + M \ln \frac{\ln t}{\delta}\right)~.
\end{align*}
\end{proof}

\subsection{Concentration Bounds on Reward Sequences}
\label{sec:concentration_bounds}
\begin{lemma}\label{lem:basic_concentration_adv}
With probability at least $1 - \poly(M)\delta$, the following inequalities hold for all base learners $i \in [M]$ for all time steps $t \in \bbN$ where $\alg[i]$ was not eliminated yet
\begin{align}
    \left| \sum_{\ell = t_0 + 1}^t \left[\bbE_{a \sim \pi^i_\ell}\left[r_\ell(a, x_\ell) \right]
    - r_\ell(a^i_\ell, x_\ell) \right] \right| &= \Ocal\left( \sqrt{(t - t_0) \ln \frac{ \ln (t - t_0)}{\delta}} \right)
    \label{eqn:conc1}
    \\
    \left| \sum_{\ell = t_0 + 1}^t \left[\frac{\indicator{b_\ell = i} r_\ell(a_\ell, x_\ell) }{\rho_i}
    - r_\ell(a^i_\ell, x_\ell) \right] \right| &= \Ocal\left( \sqrt{\frac{t - t_0}{\rho_i} \ln \frac{ \ln (t- t_0)}{\delta}}  + \frac{1}{\rho_i} \ln \frac{\ln (t - t_0)}{\delta}\right)
    \label{eqn:conc2}
\end{align}
where $t_0$ is the time of the last restart of \pref{alg:adversarial_epoch_balancing} before $t$ and $\rho_i$ is the probability with which learner $i$ is chosen in round $t$. 
\end{lemma}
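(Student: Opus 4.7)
Both bounds are standard martingale concentration inequalities applied to the appropriate martingale difference sequences (MDS), combined with a union bound over base learners and possible restart times. The key tool will be a time-uniform Freedman-type inequality (such as the one the paper already invokes as \texttt{lemma:simplified\_freedman\_norm} / \texttt{lemma::super\_simplified\_freedman\_norm}), which explains the $\ln\ln(t-t_0)$ factor in both bounds.

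For \pref{eqn:conc1}, fix a base learner $i$ and a possible restart time $t_0$, and let $\cG_\ell$ be the $\sigma$-algebra generated by all randomness up to and including the draw of $b_\ell,x_\ell,\pi_\ell^i$ but excluding the sampling of $a_\ell^i \sim \pi_\ell^i(\cdot\mid x_\ell)$. Define
\[
X_\ell = \bbE_{a\sim \pi_\ell^i}[r_\ell(a,x_\ell)] - r_\ell(a_\ell^i,x_\ell).
\]
Then $\{X_\ell\}$ is an MDS adapted to a natural filtration extending $\cG_\ell$, since $\bbE[r_\ell(a_\ell^i,x_\ell)\mid \cG_\ell] = \bbE_{a\sim \pi_\ell^i}[r_\ell(a,x_\ell)]$. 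Moreover $|X_\ell|\leq 1$ by Assumption~\ref{ass:boundedness}, so the conditional variance is at most $1$ and the sum of conditional variances over $[t_0+1,t]$ is at most $t-t_0$. Applying the time-uniform Freedman inequality yields \pref{eqn:conc1}.

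For \pref{eqn:conc2}, define
\[
Y_\ell = \frac{\indicator{b_\ell=i}\,r_\ell(a_\ell,x_\ell)}{\rho_i} - r_\ell(a_\ell^i,x_\ell).
\]
Since $a_\ell = a_\ell^{b_\ell}$, we have $\indicator{b_\ell=i}\,r_\ell(a_\ell,x_\ell)=\indicator{b_\ell=i}\,r_\ell(a_\ell^i,x_\ell)$; and since $b_\ell$ is drawn independently of $a_\ell^i$ from $\operatorname{Categorical}(\rho_1,\dots,\rho_M)$, conditioning on the $\sigma$-algebra $\cH_\ell$ generated by all variables up to and including $a_\ell^i$ but excluding $b_\ell$ gives $\bbE[Y_\ell\mid \cH_\ell]=0$, so $\{Y_\ell\}$ is an MDS. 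Furthermore, $|Y_\ell|\le 1/\rho_i+1 = \Ocal(1/\rho_i)$, and
\[
\bbE[Y_\ell^2\mid\cH_\ell] \leq \bbE\!\left[\left(\tfrac{\indicator{b_\ell=i}\,r_\ell(a_\ell^i,x_\ell)}{\rho_i}\right)^{\!2}\Big|\cH_\ell\right] \le \frac{1}{\rho_i},
\]
so the sum of conditional variances over $[t_0+1,t]$ is at most $(t-t_0)/\rho_i$. Applying the same time-uniform Freedman inequality with variance proxy $(t-t_0)/\rho_i$ and range $\Ocal(1/\rho_i)$ produces the two-term bound in \pref{eqn:conc2}.

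Finally, the probability bound is obtained by a union bound. Both statements are restricted to rounds where $\alg[i]$ has not yet been eliminated, so for a fixed $i$ only epochs in which $\alg[i]$ is active matter; there are at most $M$ such epochs across the run of \pref{alg:adversarial_epoch_balancing}, and at most $M$ learners, giving $\Ocal(M^2) = \poly(M)$ events to union-bound over. I do not expect any serious obstacle: the only subtlety is making the filtration precise enough that $b_\ell$ is independent of $a_\ell^i$ (handled by defining $\cH_\ell$ to include $a_\ell^i$ but not $b_\ell$), and invoking a Freedman inequality that is already time-uniform so that we need not pay a $\ln t$ factor for stitching.
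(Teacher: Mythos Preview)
Your proposal is correct and takes essentially the same approach as the paper: both construct the natural martingale difference sequences, bound their range and conditional variance, apply a time-uniform Hoeffding/Bernstein inequality (the paper uses \pref{lem:hoeffding_lil} for \pref{eqn:conc1} and \pref{lem:bernstein_lil} for \pref{eqn:conc2}), and union-bound over the at most $M^2$ (learner, epoch) pairs. The only place the paper is slightly more explicit is in handling the random restart time $t_0$ by inserting a stopping-time indicator into the definition of $X_\ell$, whereas you phrase this as ``fix a possible restart time $t_0$''; this is a minor formalization point rather than a gap.
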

\begin{proof}
There can be at most $M$ restarts of \pref{alg:adversarial_epoch_balancing}. Thus, we can prove the concentration bounds for a single restart and a single base learner $i$ and obtain the statement for all restarts with a union bound over $M^2$. 

Consider first \pref{eqn:conc1} and let 
\begin{align*}
    \cF_\ell = \sigma\left(\{r_j, x_j, \pi_1^j, \{a^i_j\}_{i \in [M]}, b_j\}_{j \in [t_0 + 1, \ell - 1]} \cup  \{r_\ell, x_\ell, \pi_\ell^i, \{a^k_\ell\}_{k \in [M] \setminus\{i\}}\}\right)
\end{align*} be the sigma field of all previous reward functions, contexts and actions as well as the context and action in the current round. Further let $\tau$ be the stopping time w.r.t. $\{\cF_\ell\}$ of when the algorithm restarts, and denote 
\begin{align*}
    X_\ell = \indicator{\ell \geq \tau}\left(\bbE_{a \sim \pi^i_\ell}\left[r_\ell(a, x_\ell) \right]
    - r_\ell(a^i_\ell, x_\ell)\right)~.
\end{align*} 
The sequence $\{X_\ell\}_{\ell > t_0}$ is a martingale difference sequence w.r.t. $\{\cF_\ell\}_{\ell > t_0}$ and $X_\ell \in [-1, 1]$ almost surely for all $\ell$. Then by \pref{lem:hoeffding_lil} (setting $m = 1$ and $a_t = -1, b_t = 1$) with probability at least $1 - \delta$, we have for all $t > t_0$
\begin{align*}
   \sum_{\ell = t_0 + 1}^t X_\ell 
   &\leq 1.44 \sqrt{(t - t_0) \left( 1.4 \ln \ln \left(4 \left(t - t_0 \right)\right) + \ln \frac{5.2}{\delta}\right)} = \Ocal\left( \sqrt{(t - t_0) \ln \frac{ \ln (t-t_0)}{\delta}} \right)
\end{align*}
We can apply the same argument to $-\sum_{\ell = t_0 + 1}^t X_\ell$ which proves \pref{eqn:conc1}.

Consider now \pref{eqn:conc2} and let 
\begin{align*}
    \cF_\ell = \sigma\left(\{r_j, x_j, \pi_1^j, \{a^i_j\}_{i \in [M]}, b_j\}_{j \in [t_0 + 1, \ell - 1]} \cup  \{r_\ell, x_\ell, \pi_\ell^i, \{a^k_\ell\}_{k \in [M]}\}\right).
\end{align*}
Again, let $\tau$ be the stopping time w.r.t. $\{\cF_\ell\}_{\ell}$ of when the algorithm restarts and denote 
\begin{align*}
    X_\ell = \indicator{\ell \geq \tau}\left[\frac{\indicator{b_\ell = i} r_\ell(a_\ell^i, x_\ell) }{\rho_i}
    - r_\ell(a^i_\ell, x_\ell) \right]
\end{align*} 
which is a martingale difference sequence w.r.t. $\{\cF_\ell\}_{\ell > t_0}$. We have $X_\ell \leq \frac{1}{\rho_i}$ almost surely and $\bbE_[X_\ell^2 | \cF_\ell] \leq \frac{1}{\rho_i}$. By \pref{lem:bernstein_lil} (with $m = 1 / \rho_i$), this implies that with probability at least $1 - \delta$ for all $t > t_0$
\begin{align*}
    \sum_{\ell = t_0 + 1}^t X_\ell  &\leq  \underset{=: \conc[i](t_0, t)}{\underbrace{1.44 \sqrt{\frac{t - t_0}{\rho_i} \left( 1.4 \ln \ln \left(4 (t-t_0)\right) + \ln \frac{5.2}{\delta}\right)} + \frac{0.41}{\rho_i}  \left( 1.4 \ln \ln \left(4 (t - t_0)\right) + \ln \frac{5.2}{\delta}\right)}}~.
\end{align*}
We define the RHS as the precise definition of $\conc[i](t_0, t)$ used in \pref{alg:adversarial_epoch_balancing}. Note that
\begin{align}
    \conc[i](t_0, t) =  \Ocal\left( \sqrt{\frac{t - t_0}{\rho_i} \ln \frac{ \ln (t - t_0)}{\delta}}  + \frac{1}{\rho_i} \ln \frac{\ln (t - t_0)}{\delta}\right)
\end{align}
are required. Finally, we can apply the exact same argument to $-\sum_{\ell = t_0 + 1}^t X_\ell $ which finishes the proof.
\end{proof}

\section{Adversarial Regret Balancing and Elimination with Best of Both Worlds Regret}\label{app:bestofboth}

In this section, we provide the proofs of the main regret bound for \textsf{Arbe-Gap} in \pref{thm:arbegap_main} and describe the \textsf{Arbe-GapExploit} subroutine in detail.
We restate the theorem her for convenience:
\mainthmmodselbbw*

We break the proof of this statement into several parts based on the phases of the algorithm. The first phase end when \textsf{Arbe-Gap} calls \textsf{Arbe-GapExploit} and the second phase are all rounds played by \textsf{Arbe-GapExploit}. Finally, in case \textsf{Arbe-GapExploit} was called but terminated at some point, we have a final phase where we simply execute \textsf{Arbe}. For the regret in this final phase, we can directly use the guarantees of \textsf{Arbe}. The behavior in the first two phases is analyzed below. The following lemma characterizes the regret and pseudo-regret in the first phase until \textsf{Arbe-GapExploit} is called. It also ensures that if the environment is stochastic, the inputs of \textsf{Arbe-GapExploit} are correct, i.e., $\wh \pi$ is the optimal policy and $\wh \Delta$ is an accurate estimate of its gap.

\begin{restatable}[Guarantee for First Phase]{lemma}{mainresultgapestimation}
\label{lem:phase1_guarantees_combined}
Consider a run of \pref{alg:gap_estimation_balanced} with inputs $t_0 = 0$, arbitrary policy policy $\wh \pi \in \Pi_M$ and $M$ base learners $\alg[1], \dots, \alg[M]$. Further, let $t_{\mathrm{gap}} \in \bbN \cup \{\infty\}$ be the round where the $\textsf{Arbe-GapExploit}$ subroutine is called. Then with probability at least $1 - \poly(M)\delta$, the following conditions hold for all rounds $t \in [2i_\star, t_{\mathrm{gap}}]$. The regret is bounded as 
\begin{align*}
     \regret_{\bbM}(t, \Pi_M) =
    \Ocal\left(\left(M   + \frac{R(\wt \Pi_{i_\star})}{R(\wt \Pi_{1})}  \sqrt{ i_\star}\right) R(\wt \Pi_{i_\star}) \sqrt{t (\ln(t) + i_\star) \ln \frac{t}{\delta}}\right).
\end{align*}
If $\bbB$ is stochastic and there is a unique policy $\pi_\star$ with gap $\Delta > 0 $, then
the gap estimator $\widehat{\Delta}$ and policy $\wh \pi$ passed onto $\textsf{Arbe-GapExploit}$ satisfy $ \frac{\Delta}{2} \leq \widehat{\Delta} \leq \Delta $ and $\wh \pi = \pi_\star$. Further, 
the pseudo-regret is bounded as
\begin{align*}
    \pseudoregret_{\bbM}(t, \Pi_M) &= \Ocal\left(
    \frac{ R(\wt \Pi_{i_\star})^2 R(\wt \Pi_M)^2}{R(\wt \Pi_1)^2}\frac{M^2 i_\star}{\Delta} \ln \left(\frac{M R(\wt \Pi_M)}{\Delta \delta }\right)
     \ln \frac{t}{\delta}\right).
\end{align*}
\end{restatable}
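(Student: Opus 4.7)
I condition on a single high-probability event on which all martingale concentration bounds used below hold simultaneously across the at most $\poly(M)\log t$ (re)starts of $\textsf{Arbe-Gap}$; a union bound gives failure probability $\poly(M)\delta$. Between any two consecutive (re)starts, regardless of which of the three tests (misspecification \pref{eqn:algorithm_misspecification_test_condition}, gap-test, or candidate-policy-test) triggered the restart, the dynamics of $\textsf{Arbe-Gap}$ are exactly those of $\textsf{Arbe}$ run with the augmented pool $\{\alg[s],\dots,\alg[M+1]\}$. Because $R(\wt\Pi_{M+1})\le R(\wt\Pi_M)$, the extra learner only affects constants, so \pref{lem:epoch_balancing_regret_bound} and \pref{lem:no_istar_elimination} apply essentially verbatim: no learner with index $\ge i_\star$ is ever eliminated and the smallest active index $s$ stays $\le i_\star$ throughout.

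Next I bound the total number of epochs. Misspecification restarts number at most $i_\star-1$. For candidate-policy restarts at absolute times $t_1<t_2<\cdots$, the new candidate $\pi_{k+1}$ was selected at most $t_k/4$ times before $t_k$ (since the previous candidate $\pi_k$ occupied at least $3t_k/4$ slots) yet at least $3t_{k+1}/4$ times by $t_{k+1}$, forcing $t_{k+1}\ge 3t_k$; hence there are at most $O(\log t)$ such restarts, and the total number of epochs is $N=O(\log t+i_\star)$. Summing the per-epoch bound and applying Cauchy--Schwarz $\sum_k\sqrt{t_k-t_{k-1}}\le\sqrt{Nt}$ produces the adversarial regret bound claimed in the lemma, with the extra $\sqrt{\log t+i_\star}$ factor arising precisely from this step.

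For the stochastic claims, observe that in an environment with unique optimum $\pi_\star$ of gap $\Delta$, every round selecting a policy $\neq \pi_\star$ contributes at least $\Delta$ in expectation to the pseudo-regret, while the adversarial bound just proved upper bounds the same quantity by $\wt O(\poly(\cdot)\sqrt{t(\log t+i_\star)})$. Thus for $t$ beyond a threshold $T_1=\Theta(\poly(R(\wt\Pi_{i_\star}),M)/\Delta^2)$, $\pi_\star$ must be played in more than $3t/4$ of all rounds, so either $\wh\pi=\pi_\star$ already, or \pref{lin:policy_test} fires and resets $\wh\pi$ to $\pi_\star$ at the next restart; since $\pi_\star$ is thereafter excluded from the test, no further candidate-policy restart occurs. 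Once $\wh\pi=\pi_\star$, a Freedman bound matched exactly to the definition of $W(t_0,t)$ gives $|(\tildecrew_M(t_0,t)-\tildecrew_{M+1}(t_0,t))/(t-t_0)-\bar\Delta_t|\le W(t_0,t)$ w.h.p., while h-stability of $\alg[M]$ against $\Pi_M$ and of $\alg[M+1]$ against $\Pi_M\setminus\{\wh\pi\}$ gives $\bar\Delta_t=\Delta\pm\wt O(R(\wt\Pi_M)/\sqrt{\rho_M(t-t_0)})$, a bias again absorbed into $W(t_0,t)$. The gap test therefore triggers within $t-t_0\le T_2=\Theta(\poly(R(\wt\Pi_M))/\Delta^2)$ rounds, and combining $\wh\Delta_t\ge 2W(t_0,t)$ with the upper confidence bound $\wh\Delta_t+2W(t_0,t)\ge\Delta$ yields $\Delta/2\le\wh\Delta_t\le\Delta$.

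Phase~1 thus ends by $t_{\mathrm{gap}}\le T_1+T_2=O(\poly(\cdot)/\Delta^2)$, and substituting this horizon into the adversarial regret bound of the second paragraph (using $t_{\mathrm{gap}}\log t_{\mathrm{gap}}=\wt O(1/\Delta^2)$) delivers the claimed pseudo-regret, up to the $\log(MR(\wt\Pi_M)/(\Delta\delta))$ factor that propagates from $W(t_0,t)$. The most delicate step is the concentration argument for $\wh\Delta_t$: the estimator is a difference of two importance-weighted sums sharing the same random schedule $\{b_\ell\}$, so matching its Freedman-type confidence width exactly to $W(t_0,t)$ while keeping the h-stability bias also within $W(t_0,t)$ is what ultimately determines $T_2$ and hence the pseudo-regret constant. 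Everything else is a careful bookkeeping adaptation of the arguments already present in the proofs of \pref{thm:arbe_main_regret} and \pref{lem:epoch_balancing_regret_bound}.
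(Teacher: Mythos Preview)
Your overall strategy mirrors the paper's: bound the number of restarts ($i_\star-1$ from elimination, $O(\log t)$ from candidate switches via the $t_{k+1}\ge 3t_k$ argument), invoke a per-epoch regret bound, and sum via Cauchy--Schwarz/Jensen; then, for the stochastic part, use the adversarial rate to force $\wh\pi=\pi_\star$ after $\poly(\cdot)/\Delta^2$ rounds, establish confidence bounds for $\wh\Delta_t$, and wait for the gap test. This is exactly the paper's route (Lemmas~\ref{lem:arbe_gap_restarts}--\ref{lem:arbe_gap_phase1_pseudoregret}).

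There is, however, one genuine gap in your per-epoch claim. You assert that ``the extra learner only affects constants, so \pref{lem:epoch_balancing_regret_bound} applies essentially verbatim.'' It does not. Learner $\alg[M+1]$ operates on $\Pi_{M+1}=\Pi_M\setminus\{\wh\pi\}$ and sits at the \emph{top} of the hierarchy, so it has no linking action upward and is \emph{not} h-stable against $\Pi_M$ (only against $\Pi_M\setminus\{\wh\pi\}$). Hence the $i\ge i_\star$ branch of the proof of \pref{lem:epoch_balancing_regret_bound} cannot be applied to $i=M+1$ when the comparator happens to be $\wh\pi$, and no misspecification test can eliminate $\alg[M+1]$ either. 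The paper closes this hole with a separate argument (\pref{lem:gap_estimation_single_epoch}): because the gap test in Line~\ref{lin:gap_test} has \emph{not} triggered up to the penultimate round, one has $\tildecrew_M(t_0,t)-\tildecrew_{M+1}(t_0,t)\le O(W(t_0,t)(t-t_0))$, which lets you transfer $\alg[M+1]$'s reward estimate to $\alg[M]$'s and then invoke h-stability of $\alg[M]$. Without this step, the $\rho_{M+1}$-fraction of rounds could in principle contribute linear regret against $\wh\pi$.

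A smaller omission: once $\wh\pi=\pi_\star$ you write ``the gap test therefore triggers within $t-t_0\le T_2$,'' but up to $i_\star-1$ misspecification restarts can still reset $t_0$ afterward. The paper accounts for this explicitly in \pref{lem:arbe_gap_phase1_length}, giving $t_{\mathrm{gap}}\le t_{\mathrm{pol}}+i_\star\cdot k$ rather than $T_1+T_2$; this is where the extra $i_\star$ factor in the pseudo-regret bound comes from.
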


The proof of this result can be found in \pref{app:proofs_ballancing_bww}. 
To characterize the regret and pseudo-regret of the second phase, we use the following main properties of the \textsf{Arbe-GapExploit} routine in \pref{alg:exploitation_subroutine}. It ensures that the regret and pseudoregret are well controlled and that the routine never terminates if the environment was indeed stochastic with a gap.
\begin{restatable}[Guarantee for Second Phase]{lemma}{mainresultexploitation}
\label{lem:main_bound_exploitation_guarantee_combined}
Let $\alg$ be an h-stable learner with policy class $\Pi_{\alg}$. Then the regret of \pref{alg:exploitation_subroutine} against $\Pi_{\alg} \cup \{\wh \pi\}$ is bounded with probability at least $1 - \Ocal(\delta)$ for all rounds $t > t_0$ that the algorithm has not terminated yet as
\begin{align*}
        \regret_{\bbM}([t_0 + 1, t], \Pi_{\alg} \cup \{\wh \pi\}) = \Ocal\left( \frac{R(\Pi_{\alg})^2}{\wh \Delta} \left(\ln(t) \ln \frac{t}{\delta}+ \ln \frac{R(\Pi_{\alg})}{\wh \Delta \delta} \right) + \sqrt{(t - t_0) \ln(t) \ln \frac{t}{\delta} }\right).
\end{align*}
Further, if the environment is stochastic with an optimal policy $\pi_\star$ that has a gap $\Delta$ compared to the best policy in $\Pi_{\alg}$ and the inputs satisfy $\wh \pi = \pi_\star$ and $\wh \Delta \leq \Delta \leq 2 \wh \Delta$, then with probability at least $1 - \Ocal(\delta)$ the pseudo-regret of \pref{alg:exploitation_subroutine} is bounded in all rounds $t > t_0$ as
\begin{align*}
    \pseudoregret_{\bbM}([t_0+1, t], \{ \wh \pi \} \cup \Pi_{\alg}) = \Ocal\left( \frac{R(\Pi_{\alg})^2}{\Delta} \left(\ln(t) \ln \frac{t}{\delta} + \ln \frac{R(\Pi_{\alg})}{\Delta \delta}\right)\right)
\end{align*}
and the algorithm never terminates.
\end{restatable}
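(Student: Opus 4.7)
I will organize the argument in three stages: a concentration step for the importance-weighted reward estimators that feed the two termination tests, a non-termination argument in the stochastic case, and a (pseudo-)regret decomposition that handles exploration and exploitation rounds separately. Throughout, let $p_t$ be the probability of selecting $\alg$ in round $t$ (of order $R(\Pi_{\alg})^2/(\wh{\Delta}^2(t-t_0))$) and $q_t = 1-p_t$ the probability of playing $\wh\pi$; let $N_t$ denote the number of rounds in $(t_0,t]$ on which $\alg$ was selected.

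The first step applies Freedman's inequality in the same anytime form used in \pref{lem:basic_concentration_adv} to (i) the martingale difference between the estimator of $\wh\pi$'s cumulative reward and its expectation, and (ii) the analogous importance-weighted estimator for $\alg$'s plays. This yields uniform deviation bounds whose width matches the one used by the termination tests, while a Bernstein bound on the Bernoulli selections controls $N_t$ by $\tilde\Theta\bigl(\sum_{s=t_0+1}^t p_s\bigr) = \tilde\Theta\bigl(R(\Pi_{\alg})^2\ln(t-t_0)/\wh{\Delta}^2\bigr)$. Call the intersection of these high-probability events $\mathcal E$; a union bound gives $\bbP(\mathcal E)\ge 1-\Ocal(\delta)$.

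For the non-termination claim, assume $\wh\pi = \pi_\star$ and $\wh{\Delta} \le \Delta \le 2\wh{\Delta}$. On $\mathcal E$, the empirical per-round gap between $\wh\pi$ and the best policy in $\Pi_{\alg}$ concentrates to a value at least $\Delta\ge\wh{\Delta}$, so the lower-confidence-bound test, which requires the gap estimate to drop below $\wh{\Delta}/2$, cannot fire; likewise, the upper-confidence-bound test, calibrated with a threshold at least $2\wh{\Delta}\ge\Delta$, does not fire either. I expect this to be the main obstacle: the two thresholds and the confidence width must be calibrated in exact concert so that both tests are simultaneously consistent when $\wh{\Delta}$ is only a constant-factor estimate of $\Delta$; this is the precise reason the upper-bound test was introduced, to strip the spurious $\mathrm{poly}(R(\Pi_M))$ factor that a naive one-sided test would produce.

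For the adversarial regret bound, I decompose $\regret_{\bbM}([t_0+1,t],\Pi_{\alg}\cup\{\wh\pi\})$ against any comparator $\pi'$ into an exploration part (rounds where $\alg$ was played) and an exploitation part (rounds where $\wh\pi$ was played), plus an Azuma-type martingale remainder from the randomized selection, which contributes the $\sqrt{(t-t_0)\ln(t)\ln(t/\delta)}$ summand. The exploration contribution is controlled by $h$-stability of $\alg$ at sampling probability $p_t$, giving $\tilde\Ocal\bigl(R(\Pi_{\alg})\sqrt{N_t/p_t}\cdot p_t\bigr)$, which after substituting the values of $p_t$ and $N_t$ collapses to $\tilde\Ocal\bigl(R(\Pi_{\alg})^2/\wh{\Delta}\bigr)$. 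The exploitation contribution is the subtle one: as long as neither test has fired, the empirical per-round gap between $\wh\pi$ and any competitor in $\Pi_{\alg}$ is sandwiched between the two test thresholds, both of order $\wh{\Delta}$, so converting empirical to true rewards via the concentration from the first step bounds the total exploitation regret by $\tilde\Ocal(R(\Pi_{\alg})^2/\wh{\Delta})\bigl(\ln(t)\ln(t/\delta)+\ln(R(\Pi_{\alg})/(\wh{\Delta}\delta))\bigr)$. Finally, for the stochastic pseudo-regret, exploitation rounds contribute zero because $\wh\pi=\pi_\star$, and exploration rounds contribute at most $\Delta\,N_t + \regret_{\alg}(N_t,\Pi_{\alg})$; substituting $N_t = \tilde\Theta(R(\Pi_{\alg})^2\ln t/\wh{\Delta}^2)$ and $\wh{\Delta}\ge\Delta/2$ yields the claimed $\tilde\Ocal(R(\Pi_{\alg})^2\ln t/\Delta)$ bound.
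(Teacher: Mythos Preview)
Your three-stage plan (concentration for the test statistics, non-termination in the stochastic case, exploration/exploitation regret split) mirrors the paper's structure, but there is a genuine gap in how you propose to invoke $h$-stability. The exploitation routine in \pref{alg:exploitation_subroutine} does \emph{not} use a per-round probability $p_t\sim R(\Pi_{\alg})^2/(\wh\Delta^2(t-t_0))$; it proceeds in doubling epochs $e=0,1,\ldots$ of lengths $k_e=2^ek_0$, and within epoch $e$ it (i) \emph{restarts} $\alg$ with failure probability $\delta_e$ and (ii) samples $\alg$ with a \emph{fixed} probability $\rho^e=\Theta\bigl(R(\Pi_{\alg})^2\ln(k_e/\delta_e)/(k_e\wh\Delta^2)\bigr)$. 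This is not a cosmetic difference: $h$-stability in \pref{def:high_probability_stability} is stated only for a constant $\rho$, so the sentence ``controlled by $h$-stability of $\alg$ at sampling probability $p_t$'' and the expression $\tilde\Ocal\bigl(R(\Pi_{\alg})\sqrt{N_t/p_t}\cdot p_t\bigr)$ are not consequences of any guarantee the base learner is assumed to satisfy. The paper's epoch-and-restart device is precisely what lets one apply $h$-stability inside each epoch with the fixed $\rho^e$ and then sum the $\Ocal(\ln t)$ per-epoch bounds; your continuous-time $p_t$ sketch does not admit this reduction.

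A second point your sketch leaves implicit is the mechanism that bounds the regret against $\Pi_{\alg}$. The paper does not simply ``sandwich'' the exploitation regret by the two test thresholds. Instead, because the \emph{lower} test has not fired, one has within epoch $e$ that $\sum_\ell r_\ell(a_\ell,x_\ell)\ge \tildecrew_1 + (1-\rho^e)(\wh\Delta - V)k_e'$; combining with $h$-stability (fixed $\rho^e$) turns $\tildecrew_1$ into $\max_{\pi\in\Pi_{\alg}}\sum_\ell \bbE_{a\sim\pi}[r_\ell(a,x_\ell)]$ minus an error of order $R(\Pi_{\alg})\sqrt{k_e'/\rho^e}\approx \wh\Delta\sqrt{k_ek_e'/c_\rho}$. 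The positive credit $\wh\Delta k_e'$ and this error are of the same order, and only with the constant $c_\rho$ in $\rho^e$ chosen large enough does the credit dominate for every epoch after the zeroth, yielding $\regret_{\bbM}([t_0+1,t],\Pi_{\alg})=\Ocal(\wh\Delta k_0)$ (\pref{lem:exploit_regret_polclass}). Without tracking this cancellation (and the constant tuning it requires), the exploitation-round regret against a comparator in $\Pi_{\alg}$---which in an adversarial environment is a priori linear---is not controlled.
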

The proof of this statement is available in \pref{app:proofs_gapexploit}. While combining \pref{lem:phase1_guarantees_combined} and \pref{lem:main_bound_exploitation_guarantee_combined}  gives the desired pseudo-regret guarantee in \pref{thm:arbegap_main} above for stochastic environments fairly directly, the bound on the regret in \pref{thm:arbegap_main} requires more work. \textsf{Arbe-GapExploit} guarantees only guarantees that the regret is of order $\tilde \Ocal( R(\Pi_{M} \setminus \{ \wh \pi\})^2 / \wh \Delta + \sqrt{t})$ while we would like a bound that does not scale with $ R(\Pi_{M} \setminus \{ \wh \pi\}$ in our final result. To achieve that, we will use the following lemma which states that the length of the initial phase $t_{gap}$ has to be sufficiently large as a function of the gap estimate $\wh \Delta$. This will allow us to absorb the $R(\Pi_{M} \setminus \{ \wh \pi\})^2 / \wh \Delta$ term into the regret of the first phase.
\begin{lemma}\label{lem:gap_tgap_relationship}
Consider a run of \pref{alg:gap_estimation_balanced} with inputs $t_0 = 0, n=1$, arbitrary policy $\wh \pi \in \Pi_M$ and $M$ base learners $\alg[1], \dots, \alg[M]$ with $1 \leq R(\wt \Pi_1) \leq R( \wt \Pi_2) \leq \dots \leq R(\wt \Pi_M)$ where $\wt \Pi_i$ is the extended version of policy class $\Pi_i$ with $(M+1-i)$ additional actions. 
Let $t_{gap}$ be the round where  $\textsf{ArbeGap-Exploit}$ was called with gap estimate $\wh \Delta$. Then with probability at least $1 - \poly(M) \delta$
\begin{align*}
    \wh \Delta = \Omega\left( \frac{R(\wt \Pi_{M})^2}{R(\wt \Pi_{i_\star})} \sqrt{\frac{ \ln \frac{t_{gap}}{\delta}}{t_{gap}}} 
    \right).
\end{align*}

\end{lemma}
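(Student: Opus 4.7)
The plan is to combine the triggering condition of the gap test at round $t_{gap}$ with a lower bound on $1/\rho_M$ derived from the fact that no base learner with index at least $i_\star$ is ever eliminated. The entire argument proceeds on the high-probability event from \pref{lem:no_istar_elimination}.

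First, I would invoke \pref{lem:no_istar_elimination}, which ensures that with probability at least $1 - \poly(M)\delta$ the misspecification test in \pref{eqn:algorithm_misspecification_test_condition} never triggers on a pair $(i,j)$ with $i \geq i_\star$. A short induction over restarts then shows that in every call to \textsf{Arbe-Gap} the smallest active index $s$ satisfies $s \leq i_\star$: initially $s=1\leq i_\star$; a misspecification restart sets the new $s$ to $i+1$ with $i < i_\star$, hence $s \leq i_\star$; and the candidate-policy restart on \pref{lin:policy_test} leaves the base-learner set (and thus $s$) unchanged. Since the complexities are ordered $R(\wt \Pi_1) \leq \cdots \leq R(\wt \Pi_M) = R(\wt \Pi_{M+1})$ and $s \leq i_\star$, lower-bounding the normalising sum by its $j=s$ term yields
\begin{align*}
    \sum_{j=s}^{M+1} R(\wt \Pi_j)^{-2} \;\geq\; R(\wt \Pi_s)^{-2} \;\geq\; R(\wt \Pi_{i_\star})^{-2},
\end{align*}
so that $1/\rho_M = R(\wt \Pi_M)^2 \sum_{j=s}^{M+1} R(\wt \Pi_j)^{-2} \geq R(\wt \Pi_M)^2 / R(\wt \Pi_{i_\star})^2$.

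Second, when the gap test fires at $t = t_{gap}$, the algorithm guarantees $\wh \Delta \geq 2\,\textrm{W}(t_0, t_{gap})$ for the $t_0$ of the active epoch. Keeping only the leading $\sqrt{\cdot}$ term in the definition of $\textrm{W}$ gives the lower bound
\begin{align*}
    \textrm{W}(t_0, t_{gap}) \;=\; \Omega\!\left( \sqrt{\frac{R(\wt \Pi_M)^2 \ln(n(t_{gap}-t_0)/\delta)}{\rho_M\,(t_{gap}-t_0)}} \right),
\end{align*}
and substituting the bound on $1/\rho_M$ from the previous paragraph yields
\begin{align*}
    \wh \Delta \;=\; \Omega\!\left( \frac{R(\wt \Pi_M)^2}{R(\wt \Pi_{i_\star})} \sqrt{\frac{\ln(n(t_{gap}-t_0)/\delta)}{t_{gap}-t_0}} \right).
\end{align*}
Since $n \geq 1$ and the map $x \mapsto \ln(x/\delta)/x$ is nonincreasing for $x \geq e\delta$, the inequality $t_{gap} - t_0 \leq t_{gap}$ lets us replace $t_{gap}-t_0$ by $t_{gap}$ and drop the $n$ inside the logarithm, producing the claimed bound.

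The only genuinely delicate step is the first one, namely propagating $s \leq i_\star$ across both types of restart; once this invariant is in hand, the rest is a direct algebraic chain from the definition of $\rho_M$ and the triggering inequality of the gap test, with no further probabilistic content beyond the single union bound already paid to invoke \pref{lem:no_istar_elimination}.
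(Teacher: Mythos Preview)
Your proposal is correct and follows essentially the same route as the paper: use the gap-test triggering condition $2W(t_0,t_{gap})\leq \wh\Delta$, drop the lower-order term in $W$, lower-bound $1/\rho_M$ via $s\leq i_\star$ (the paper cites \pref{lem:arbe_gap_restarts}, which rests on the same elimination argument you invoke), and then use monotonicity of $x\mapsto \ln(x/\delta)/x$ to pass from $t_{gap}-t_0$ to $t_{gap}$. One cosmetic point: your claimed equality $R(\wt\Pi_M)=R(\wt\Pi_{M+1})$ is not needed and not quite right, since $\wt\Pi_{M+1}$ differs from $\wt\Pi_M$ by two policies; your lower bound on the normalising sum only uses the $j=s$ term, so the relation between $R(\wt\Pi_M)$ and $R(\wt\Pi_{M+1})$ is irrelevant.
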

\begin{proof}
\textsf{ArbeGap} calls \textsf{ArbeGap-Exploit} as soon as $2W(t_0, t) \leq \wh \Delta_t$. Hence, when the call happened in round $t_{gap}$, we must have $2W(t_0, t_{gap}) \leq \wh \Delta$ or, plugging in the definition of $W(t_0, t_{gap})$ with an appropriate absolute constant $c$
\begin{align*}
\frac{cR(\wt \Pi_{M})}{\sqrt{\rho_M}} \sqrt{\frac{\ln \frac{n(k)}{\delta}}{k}} + \frac{c}{\rho_M}   \frac{\ln \frac{n\ln (k)}{\delta}}{k} \leq \wh \Delta,
\end{align*}
where $k = t_{gap} - t_0$. 
We can further lower-bound the LHS as
\begin{align*}
\frac{cR(\wt \Pi_{M})}{\sqrt{\rho_M}} \sqrt{\frac{\ln \frac{nk}{\delta}}{k}} + \frac{c}{\rho_M}   \frac{\ln \frac{n\ln (k)}{\delta}}{k} 
&\geq \frac{cR(\wt \Pi_{M})}{\sqrt{\rho_M}} \sqrt{\frac{\ln \frac{k}{\delta}}{k}} \\
&\geq c R(\wt \Pi_{M})^2 \sqrt{\sum_{i=s}^{M} R(\wt \Pi_{i})^{-2}}  \sqrt{\frac{\ln \frac{k}{\delta}}{k}} \\
&\geq c \frac{R(\wt \Pi_{M})^2}{R(\wt \Pi_{i_\star})^{2}}  \sqrt{\frac{\ln \frac{k}{\delta}}{k}} 
\end{align*}
where the last equation holds because $i_\star$ is never eliminated with high probability \pref{lem:arbe_gap_restarts}. Finally, since the function on the RHS is monotonically decreasing in $k$, we can further lower-bound this quantity by replacing $k$ with $t_{gap} \geq k$. Hence, we have $\wh \Delta \geq c \frac{R(\wt \Pi_{M})^2}{R(\wt \Pi_{i_\star})^{2}}  \sqrt{\frac{\ln \frac{t_{gap}}{\delta}}{t_{gap}}} $.
Reordering terms gives the desires statement.
\end{proof}

We are now ready to state the proof of \pref{thm:arbegap_main}:
\begin{proof}[Proof of Theorem~\ref{thm:arbegap_main}]
We first consider stochastic environments with a gap and apply \pref{lem:phase1_guarantees_combined}. We denote by $t_{gap}$ the round where \textsf{Arbe-Gap} calls \textsf{Arbe-GapExploit}. For all $t \leq t_{gap}$, the pseudo-regret is bounded as 
\begin{align*}
    \pseudoregret_{\bbM}(t, \Pi_M) &= \Ocal\left(
    \frac{ R(\wt \Pi_{i_\star})^2 R(\wt \Pi_M)^2}{R(\wt \Pi_1)^2}\frac{M^2 i_\star}{\Delta} \ln \left(\frac{M R(\wt \Pi_M)}{\Delta \delta }\right)
     \ln \frac{t}{\delta}\right).
\end{align*}
Further,  \textsf{Arbe-GapExploit} can only be called with $\Pi_{\alg} = \Pi_{M} \setminus \{ \pi_\star\}$, $\wh \pi = \pi_\star$ and $\wh \Delta $ that satisfies $\wh \Delta \leq \Delta \leq 2 \wh \Delta$. This allows us to apply \pref{lem:main_bound_exploitation_guarantee_combined} to bound the pseudo-regret of any round played by \textsf{Arbe-GapExploit} as
\begin{align*}
    \pseudoregret_{\bbM}([t_{gap}+1, t], \Pi_M) = \Ocal\left( \frac{R(\Pi_{M})^2}{\Delta} \left(\ln(t) \ln \frac{t}{\delta} + \ln \frac{R(\Pi_{M})}{\Delta \delta}\right)\right).
\end{align*}
It further tells us that, with high probability, \textsf{Arbe-GapExploit} will never return. Hence, we can bound the pseudo-regret of both phases to get a bound on the total pseudo-regret after any number of rounds
\begin{align*}
    \pseudoregret_{\bbM}(t, \Pi_M) &= 
    \Ocal\Bigg(
    \frac{ R(\wt \Pi_{i_\star})^2 R(\wt \Pi_M)^2}{R(\wt \Pi_1)^2}\frac{M^2 i_\star}{\Delta} \ln \left(\frac{M R(\wt \Pi_M)}{\Delta \delta }\right)
     \ln \frac{t_{gap}}{\delta} \\
     & \qquad + \frac{R(\Pi_{M})^2}{\Delta} \left(\ln(t) \ln \frac{t}{\delta} + \ln \frac{R(\Pi_{M})}{\Delta \delta}\right)\Bigg)\\
     &= 
    \Ocal\Bigg(\frac{R(\Pi_{M})^2}{\Delta} \ln(t) \ln \frac{t}{\delta} + 
    \frac{ R(\wt \Pi_{i_\star})^2 R(\wt \Pi_M)^2}{R(\wt \Pi_1)^2}\frac{M^2 i_\star}{\Delta} \ln^2 \left(\frac{M R(\wt \Pi_M)}{\Delta \delta }\right) \Bigg)
\end{align*}
where we upper-bounded $t_{gap}$ using a crude upper-bound $\Ocal\left( \frac{R(\wt \Pi_M)^6M^5}{\Delta^4 \delta^2}\right)$ of the bound in \pref{lem:arbe_gap_phase1_length} which gives that $\ln(t_{gap}) = \Ocal(\ln( MR(\wt \Pi_M) / \Delta \delta))$.

We now move on to the regret bound in any environment. Again, \pref{lem:phase1_guarantees_combined} gives us a regret bound that holds with high probability for any round $t \leq t_{gap}$ of 
\begin{align*}
     \regret_{\bbM}(t, \Pi_M) =
    \Ocal\left(\left(M   + \frac{R(\wt \Pi_{i_\star})}{R(\wt \Pi_{1})}  \sqrt{ i_\star}\right) R(\wt \Pi_{i_\star}) \sqrt{t (\ln(t) + i_\star) \ln \frac{t}{\delta}}\right).
\end{align*}
If $t$ falls into a round that is played by the routine \textsf{Arbe-Gap}, then \pref{lem:main_bound_exploitation_guarantee_combined} the regret after $t_{gap}$ and before $t$ is bounded as
\begin{align*}
        \regret_{\bbM}([t_{gap} + 1, t], \Pi_{M}) = \Ocal\left( \frac{R(\Pi_{M})^2}{\wh \Delta} \left(\ln(t) \ln \frac{t}{\delta}+ \ln \frac{R(\Pi_{M})}{\wh \Delta \delta} \right) + \sqrt{(t - t_{gap}) \ln(t) \ln \frac{t}{\delta} }\right).
\end{align*}
Using \pref{lem:gap_tgap_relationship}, we can further bound $\frac{1}{\wh \Delta} = \Ocal\left(
\frac{R(\wt \Pi_{i_\star})}{R( \wt \Pi_M)^2} \sqrt{\frac{t_{gap}}{\ln(1 / \delta)}}\right)$ and plugging this into the bound above gives
\begin{align*}
        &\regret_{\bbM}([t_{gap} + 1, t], \Pi_{M})\\
        &= \Ocal\left( R(\wt \Pi_{i_\star})\sqrt{t_{gap}} \left(\ln^{3/2}(t) \ln^{1/2} \frac{t}{\delta}+ \ln (t_{gap} R(\Pi_{M})) \right) + \sqrt{(t - t_{gap}) \ln(t) \ln \frac{t}{\delta} }\right)\\
        &= \Ocal\left( R(\wt \Pi_{i_\star})\sqrt{t}\ln^{3/2}(t) \ln^{1/2} \frac{t}{\delta} + \sqrt{t \ln(t) \ln \frac{t}{\delta} }\right).
\end{align*}
Here, we also used the fact that $t_{gap} = \Omega(R(\Pi_M))$ since the test in Line~\ref{lin:gap_test} of \pref{alg:gap_estimation_balanced} can only trigger when $W(t_0, t) \leq R(\Pi_M)^2$ which is only possible after at least $\Omega(R(\Pi_M))$ rounds. Finally, if $t$ falls into a round after \textsf{Arbe-GapExploit} returned (in round $t_{adv}$, then the regret since the return can be bounded using the regret bound of \textsf{Arbe} in \pref{thm:arbe_main_regret} as
\begin{align*}
\regret_{\bbM}([t_{adv} + 1, t], \Pi_{M}) =
    \Ocal\left(  \left(\frac{R(\widetilde{\Pi}_{i_\star})}{R(\widetilde{\Pi}_{1})}\postdeadline{\sqrt{i_\star}} + \postdeadline{M} \right) R(\widetilde{\Pi}_{i_\star})\sqrt{ i_\star  t \ln \frac{t}{\delta} } \right)~,
\end{align*}
Combining the bounds from all three possible phases gives the following bound that holds for all $t > M^2$ as
\begin{align*}
    \regret_{\bbM}(t, \Pi_M) &=
    \Ocal\Bigg(\left(M   + \frac{R(\wt \Pi_{i_\star})}{R(\wt \Pi_{1})}  \sqrt{ i_\star}\right) R(\wt \Pi_{i_\star}) \sqrt{t (\ln(t) + i_\star) \ln \frac{t}{\delta}}\\
    & \qquad
    +
     R(\wt \Pi_{i_\star})\sqrt{t} \ln^{3/2}(t) \ln^{1/2} \frac{t}{\delta}
    \Bigg)\\
    &=
    \Ocal\left(\left(\frac{R(\wt \Pi_{i_\star})}{R(\wt \Pi_{1})}  \sqrt{ i_\star} + M + \ln(t)\right) R(\wt \Pi_{i_\star}) \sqrt{t (\ln(t) + i_\star) \ln \frac{t}{\delta}}\right)
\end{align*}
\end{proof}

\subsection{Description of the Second Phase: Gap Exploitation}
\begin{algorithm2e}[t]
\textbf{Input:} current round $t_0$, learner $\alg$ with policy class $\Pi_{\alg}$, candidate policy $\widehat{\pi}$, gap estimate $\widehat \Delta$, failure probability $\delta$ \\ %
Initialize $k_0 =  \Theta\left(\frac{ R^2(\Pi_{\alg})
}{\wh \Delta^2 }
\ln \frac{R(\Pi_{\alg})}{\widehat{\Delta} \delta}   \right)$ 
\\
\For{epoch $e=0, 1, 2 \ldots $ } {
    Set next epoch length $k_{e+1} = 2 k_{e}$ and final round $t_{e+1} = t_e + k_e$\\
    Set learner probability $ \rho^e =  \Theta\left( \frac{ R^2(\Pi_{\alg})
}{k_e \widehat{\Delta}^2 }
\ln \frac{k_e}{\delta_e} \right)$ where $\delta_e = \frac{\delta}{(e+1)^2}$\\
    
    Restart $\alg$ with failure probability $\delta_e$\\
    \For{round $t = t_e + 1, t_e + 2, \dots, t_{e+1}$}{

    Set $\pi_{t}^1$ as the current policy of $\alg$ and $\pi_t^0 = \wh \pi$\\
    Sample $b_t \sim \operatorname{Bernoulli}(\rho^e)$\\
    Get context $x_t$ and compute $a_t^i \sim \pi_t^i(\cdot | x_t)$ for $i \in \{0, 1\}$\\
    Play action $a_t = a_t^{b_t}$ and receive reward $r_t(a_t, x_t)$\\
    Update learner $\alg$ with reward $\frac{b_t r_t(a_t, x_t)}{\rho^e}$\\
    
    Set $V(t) = \Theta\left( R(\Pi_{\alg})\sqrt{\frac{\ln \frac{t - t_e}{\delta_e}}{\rho^e(t - t_e)} } + \frac{\ln \frac{\ln(t - t_e)}{\delta_e}}{t - t_e}\right)$ \\ %
    
    \If{$ \frac{\tildecrew_0(t_e + 1, t) - \tildecrew_1(t_e + 1, t)}{t - t_e}   <   \widehat{\Delta} - V(t)
    $}{
    \label{line::exploit_test_one}
       \textbf{return} \tcp{environment is adversarial}
    }
    
    \If{
    $\frac{\tildecrew_0(t_e + 1, t) - \tildecrew_1(t_e + 1, t)}{t - t_e} > 4 \wh \Delta + V(t)$
    }{
    \label{line::exploit_test_two}
         \textbf{return} \tcp{environment is adversarial}
    }
}
}

\caption{\textsf{Arbe-GapExploit}}
\label{alg:exploitation_subroutine}
\end{algorithm2e}

We present the \textsf{Arbe-GapExploit} algorithm of the Gap Exploitation phase as a general procedure that takes a focus policy $\widehat{\pi}$, a policy class $\Pi_{\mathbb{A}}$ and a gap estimate $\widehat \Delta$ and is tasked with testing the hypothesis `$\widehat \pi $ is the optimal policy, and has a gap of order $\Theta(\widehat \Delta)$', incurring in small regret while doing so. 

We use \textsf{Arbe-GapExploit} with input policy class $\Pi_{\mathbb{A}} = \Pi_M$. If it ever returns, the learner can conclude the environment is adversarial and thus, start playing \textsf{Arbe} with the policy classes $\Pi_s, \ldots, \Pi_M$ not yet eliminated by the misspecification tests during \textsf{Arbe-Gap}. We develop results for the more general case when the input algorithm and policy class equal $\alg$ and $\Pi_{\alg}$. In case the environment is stochastic with gap $\Delta$ and $\wh \Delta = \Theta(\Delta)$, we show \textsf{Arbe-GapExploit} has a pseudo regret of order $\Ocal\left( \frac{R(\Pi_{\alg})^2}{ \Delta} \left(\ln(t) \ln \frac{t}{\delta}+ \ln \frac{R(\Pi_{\alg})}{ \Delta \delta} \right) \right)$ at time $t$ (see \pref{lem:arbe_gap_exploit_regregretstochastic}). Similarly we show that \textsf{Arbe-GapExploit} has an adversarial regret rate of order $\Ocal\left( \frac{R(\Pi_{\alg})^2}{\wh \Delta} \left(\ln(t) \ln \frac{t}{\delta}+ \ln \frac{R(\Pi_{\alg})}{\wh \Delta \delta} \right) + \sqrt{(t - t_0) \ln(t) \ln \frac{t}{\delta} }\right)$ at time $t$ (see \pref{lem:arbe_gap_exploit_regregretadversarial}). The adversarial rate consists of a poly-logarithmic factor with an upfront multiplier of order $\frac{R^2(\Pi_\mathbb{A})}{\widehat \Delta}$ plus a factor scaling with $\sqrt{t - t_0}$. Since in our case $\wh \Delta = \Omega\left( \frac{R(\wt \Pi_{M})^2}{R(\wt \Pi_{i_\star})} \sqrt{\frac{M \ln \frac{t_{gap}}{\delta}}{t_{gap}}} \right)$ (see \pref{lem:gap_tgap_relationship}) and $\Pi_{\alg} = \Pi_M$, the adversarial regret has an upper bound of the form $\widetilde{\Ocal}\left( R(\wt \Pi_{i_\star}) \sqrt{t_\mathrm{gap}} + \sqrt{ t-t_\mathrm{gap} } \right)$ (where $\widetilde \Ocal$ hides polylogarithmic factors) thus satisfying a model selection guarantee.  
    
In \textsf{Arbe-GapExploit} we divide time into epochs indexed from $e = 0, 1, \ldots$ of length $k_e = k_0 \cdot 2^{e}$ where $k_0 =\Theta\left(  \frac{  R^2(\widetilde{\Pi}_{\alg}) \log \frac{R(\widetilde \Pi_{\alg})}{\widehat{\Delta} \delta}   }{\widehat{\Delta}^2 } \right)$. During epoch $e$, learner $\alg$ is sampled with probability $\rho^e = \Theta\left( \frac{ R^2(\widetilde \Pi_{\alg})\log\left(\frac{ k_e}{\delta_e}\right) }{k_e\widehat \Delta^2 } \right)$. We define $k_0$ and $\rho^e$ so that for all epochs $1-\rho^e \geq \frac{1}{2}$. Thus for all $t \in \{ t_e + 1, \cdots, t_{e+1}\}$, it holds that $\left| \tildecrew_{0}(t_e+1, t) - \sum_{\ell =t_e+1}^{t} \bbE_{a \sim \wh\pi}\left[ r_\ell(a, x_\ell)\right]\right| = \Ocal\left(\sqrt{(t-t_e) \ln \frac{t-t_e}{\delta_e}}\right)$. By the h-stability of $\alg$, and using a concentration argument, we prove that $\tildecrew_{1}(t_e+1, t)$ can be used to estimate $\max_{\pi \in \Pi_{\alg} \backslash \{ \widehat \pi \}} \sum_{\ell =t_e+1}^{t} \bbE_{a \sim \pi}\left[ r_\ell(a, x_\ell)\right] $ up to $V(t) = \Theta\left( R(\Pi_{\alg})\sqrt{\frac{\ln \frac{t - t_e}{\delta_e}}{\rho^e(t - t_e)} } + \frac{\ln \frac{\ln(t - t_e)}{\delta_e}}{t - t_e}\right) $ accuracy. 
 
When the environment is stochastic,  $\wh \pi$ is the optimal policy and $\Delta/2 \leq \wh \Delta \leq \Delta$ it follows that 
$$ 
(t-t_e)\wh \Delta\ \leq \sum_{\ell =t_e+1}^{t} \bbE_{a \sim \wh\pi, x \sim \Dcal}\left[ r(a, x)\right] -  \max_{\pi \in \Pi_{\alg} \backslash \{ \widehat \pi \}} \sum_{\ell =t_e+1}^{t} \bbE_{a \sim \pi, x \sim \Dcal}\left[ r(a, x)\right] \leq 4\wh \Delta~.
$$ 
Therefore when the condition in line~\ref{line::exploit_test_one} or~\ref{line::exploit_test_two} of \pref{alg:exploitation_subroutine} trigger, we would have found evidence that 
$$ 
\sum_{\ell =t_e+1}^{t} \bbE_{a \sim \wh\pi, x \sim \Dcal}\left[ r(a, x)\right] -  \max_{\pi \in \Pi_{\alg} \backslash \{ \widehat \pi \}} \sum_{\ell =t_e+1}^{t} \bbE_{a \sim \pi, x \sim \Dcal}\left[ r(a, x)\right] < (t-t_e)\wh \Delta
$$ 
or 
$$
\sum_{\ell =t_e+1}^{t} \bbE_{a \sim \wh\pi, x \sim \Dcal}\left[ r(a, x)\right] -  \max_{\pi \in \Pi_{\alg} \backslash \{ \widehat \pi \}} \sum_{\ell =t_e+1}^{t} \bbE_{a \sim \pi, x \sim \Dcal}\left[ r(a, x)\right] >  4(t-t_e)\wh \Delta
$$ 
thus indicating the environment cannot be stochastic.

The observations above imply that in case the environment is stochastic the tests of lines~\ref{line::exploit_test_one} and~\ref{line::exploit_test_two} in \pref{alg:exploitation_subroutine} do not trigger. Let us jump to the task of analyzing the regret of \textsf{Arbe-GapExploit} in stochastic environments. We will assume $t$ lies in epoch $e$. By the h-stability of $\alg$, the sum of its pseudo-rewards in a stochastic environment satisfies 
\begin{equation*}\sum_{\ell=t_e+1}^t \bbE_{a \sim \pi^1_\ell, x\sim \Dcal} \left[r(a,x)\right] + \underbrace{\Ocal\left(R(\Pi_{\alg} \backslash \{ \wh \pi\}) \sqrt{\frac{(t-t_e)}{\rho^e} \ln \frac{t-t_e}{\delta_e} }\right)}_{\mathbf{I}}\geq \max_{\pi \in \Pi_{\alg} \backslash \{ \wh \pi \}} \sum_{\ell=t_e+1}^t \bbE_{a \sim \pi, x\sim \Dcal} \left[ r(a,x)\right]~.
\end{equation*}
In this case, pseudo-regret is only incurred when $b_t = 1$. From $t_e+1$ to $t$ the variable $b_t$ equals $1$ an average of $\rho^e(t-t_e)$ times. Let us use the notation $\pi_\star' = \max_{\pi \in \Pi_{\alg} \backslash \{ \wh \pi\}} \bbE_{a \sim \pi, x \sim \Dcal }\left[ r(x,a)\right] $. The regret collected during these rounds can be upper bounded by \begin{small}\begin{equation*}
\Ocal\left(\underbrace{\Delta \cdot \rho^e (t-t_e)}_{\text{Regret of $\pi_\star'$ w.r.t. $\wh \pi$}} + \underbrace{\rho^e \times \mathbf{I}}_{\text{Regret of $\alg$ w.r.t. $\pi_\star'$ }} \right)= \Ocal\left(\Delta \rho^e (t-t_e) + R(\Pi_{\alg} \backslash \{ \wh \pi\})) \sqrt{\rho^e(t-t_e) \ln \frac{t-t_e}{\delta_e} }\right).\end{equation*}\end{small}
Substituting in the value of $\rho^e$ and using the fact that $\wh \Delta = \Theta(\Delta)$ when the environment is stochastic allow us to write
\begin{align*}
\Delta \cdot \rho^e (t-t_e) + \rho^e \cdot \mathbf{I} &= \Ocal\left( \frac{R^2(\Pi_{\alg})}{k_e \Delta} (t-t_e) \ln \left(\frac{k_e}{\delta_e}\right) + \frac{R^2(\Pi_{\alg})}{\Delta}\sqrt{\frac{t-t_e}{k_e} \ln \left( \frac{t-t_e}{\delta_e}\right) \ln\left( \frac{k_e}{\delta_e}\right) } \right) \\
&=\Ocal\left( \frac{R^2(\Pi_{\alg})}{ \Delta} \ln \left(\frac{k_e}{\delta_e}\right) \right)~.
\end{align*}
Summing over all epochs $e' \leq e$, and using $\sum_{e'=0}^e \ln \left( \frac{k_{e'}}{\delta_{e'}} \right) = \Ocal\left( \ln(t) \ln\left( \frac{t}{\delta}\right) + \ln\left(  \frac{R(\Pi_{\alg})}{\Delta \delta} \right) \right)$ we conclude that
\begin{align*}
    \pseudoregret_{\bbM}([t_0+1, t], \{ \wh \pi \} \cup \Pi_{\alg}) = \Ocal\left( \frac{R(\Pi_{\alg})^2}{\Delta} \left(\ln \frac{R(\Pi_{\alg})}{\Delta \delta} + \ln(t) \ln \frac{t}{\delta}\right) \right)
\end{align*}
and thus the proof sketch of \pref{lem:arbe_gap_exploit_regregretstochastic}.

We now bound the adversarial regret of \textsf{Arbe-GapExploit} during the timesteps before Lines~\ref{line::exploit_test_one} or~\ref{line::exploit_test_two} of \pref{alg:exploitation_subroutine} trigger. In this case, the h-stability of $\alg$ implies,
\begin{equation}\label{equation::supporting_inequality_explanation_exploit}
    \sum_{\ell=t_e+1}^t \bbE_{a \sim \pi_\ell^1} \left[r_\ell(a,x_\ell)\right] + \underbrace{\Ocal\left(R(\Pi_{\alg} \backslash \{ \wh \pi\}) \sqrt{\frac{(t-t_e)}{\rho^e} \ln \frac{t-t_e}{\delta_e} }\right)}_{\mathbf{II}(t,t_e)}\geq \max_{\pi \in \Pi_{\alg} \backslash \{ \wh \pi \}} \sum_{\ell=t_e+1}^t \bbE_{a \sim \pi} \left[ r_\ell(a,x_\ell)\right].
\end{equation}
While Lines~\ref{line::exploit_test_one} or~\ref{line::exploit_test_two} of \pref{alg:exploitation_subroutine} have not triggered, we can certify that with high probability 
\begin{equation}\label{equation::supporting_inequality_explanation_exploit_one}
\wh \Delta (t-t_e) -V_e(t)(t-t_e) \leq    \sum_{\ell=t_e+1}^t \bbE_{a \sim \wh \pi} \left[r_\ell(a,x_\ell)\right] - \sum_{\ell=t_e+1}^t \bbE_{a \sim \pi^1_\ell} \left[r_\ell(a,x_\ell)\right] \leq 4\wh \Delta (t-t_e) + V_e(t)(t-t_e).
\end{equation}
In the above, we used the notation $V_e(t)$ to denote the $V(t)$ of epoch $e$. Hence the regret collected during rounds $\{t_e+1, \ldots, t\}$ can be upper bounded by the sum of three terms
\begin{align*}
& \underbrace{ \max\left(\sum_{\ell=t_e+1}^t \bbE_{a \sim \wh \pi} \left[r_\ell(a,x_\ell)\right] - \sum_{\ell=t_e+1}^t \bbE_{a \sim \pi_\ell} \left[r_\ell(a,x_\ell)\right] , 0\right)}_{\mathbf{B}} \\
&+\underbrace{ \max\left( \max_{\pi \in \Pi_{\alg} \backslash \{ \wh \pi \}} \sum_{\ell=t_e+1}^t \bbE_{a \sim \pi} \left[ r_\ell(a,x_\ell)\right] - \sum_{\ell=t_e+1}^t \bbE_{a \sim \wh \pi} \left[r_\ell(a,x_\ell)\right] , 0\right)}_{\mathbf{C}}\\ &+\underbrace{\sum_{\ell=t_e+1}^t \bbE_{a \sim \wh \pi} \left[r_\ell(a,x_\ell)\right] - r_\ell(a_\ell, x_\ell)}_{\mathbf{D}}~.
\end{align*}
From $t_e+1$ to $t$ the variable $b_t$ equals $1$ an average of $\rho^e(t-t_e)$ times. Note that $V(t) (t-t_e) \approx \Ocal\left( \wh \Delta \sqrt{k_e (t-t_e)}  \right)$. We can bound $\mathbf{B}, \mathbf{C}, \mathbf{D}$ individually as follows:
\begin{itemize}
\item $\mathbf{B}$ is the Pseudo-Regret of $\alg$ w.r.t $\widehat{\pi}$, and can be upper bounded by $\Ocal\left(\wh\Delta \cdot \rho^e (t-t_e) + \wh \Delta \cdot \rho^e \sqrt{ (t-t_e) k_e}\right) = \Ocal\left( \wh \Delta \cdot \rho^e \sqrt{ (t-t_e) k_e}\right)$ as a consequence of multiplying the right hand side of Equation~\ref{equation::supporting_inequality_explanation_exploit_one} by $\rho^e$.
\item $\mathbf{C}$ is the Pseudo-Regret of $\wh \pi$ w.r.t. $\Pi_{\alg}\backslash \{ \wh\pi \}$, and can be upper bounded by $\mathbf{II}(t,t_e) - \widehat \Delta (t-t_e) + V_e(t)(t-t_e)$ as a consequence of combining Equations~\ref{equation::supporting_inequality_explanation_exploit} and the left hand side of~\ref{equation::supporting_inequality_explanation_exploit_one}. 
\item $\mathbf{D}$ is the difference between sample rewards vs. Pseudo-Rewards and can be bound by Hoeffding's inequality. 
\end{itemize}
Let $t_e' = \min(t, t_e)$. Summing the upper bound $\mathbf{C}$ over all epochs $e' \leq e$, we choose the multiplier $c$ in $\rho^e = \frac{ cR^2(\Pi_{\alg})
}{k_e \widehat{\Delta}^2 }
\ln \frac{k_e}{\delta_e}$ such that\footnote
{
Increasing the value of $c$ implies we have to set $k_0$ to be larger. This has the only effect of increasing the constant on the RHS of Eq.~\ref{equation::upper_bound_support_termII_sketch_exploit}. 
} 
\begin{align}
    \sum_{e' \leq e}\mathbf{II}(t_{e'}', t_{e'}) - \wh \Delta (t_{e'}'-t_{e'}) + V_{e'}(t_{e'}')(t_{e'}'-t_{e'}) &=    \sum_{e' \leq e} \Ocal\left( \wh \Delta \sqrt{\frac{k_e(t_{e'}'-t_{e'})}{c}} \right)  - \wh \Delta (t_{e'}'-t_{e'}) \notag\\  
    &\leq   \Ocal\left(\wh \Delta k_0\right). \label{equation::upper_bound_support_termII_sketch_exploit} 
\end{align}

Combining the bounds for $\mathbf{B}$ and $\mathbf{D}$,
\begin{equation*}
    \Ocal\left(\underbrace{\wh\Delta \cdot \rho^e (t-t_e)}_{\mathbf{B}}  + \underbrace{\sqrt{ (t-t_e) \ln \left( \frac{t-t_e}{\delta_e} \right)  } }_{\mathbf{D}}\right)= \Ocal\left(\frac{R^2(\Pi_{\alg})}{ \wh \Delta} \ln \left(\frac{k_e}{\delta_e}\right) + \sqrt{ (t-t_e) \ln \left( \frac{t-t_e}{\delta_e} \right)  }\right).
\end{equation*}

Summing over all epochs and using the upper bounds 
\begin{align*}
    \sum_{e'=0}^e \ln \left( \frac{k_{e'}}{\delta_{e'}} \right) &= \Ocal\left( \ln(t) \ln\left( \frac{t}{\delta}\right) + \ln\left(  \frac{R(\Pi_{\alg})}{\wh\Delta \delta} \right) \right)\\
    \sum_{e'=0}^e  \sqrt{ (t_{e'}'-t_{e'}) \ln \left( \frac{t'_{e'}-t_{e'}}{\delta_{e'}} \right)  } &= \Ocal\left( \sqrt{(t - t_0) \ln(t) \ln \frac{t}{\delta} }  \right)
\end{align*}
and the bound from Eq.~\ref{equation::upper_bound_support_termII_sketch_exploit} along with the observation $\wh \Delta k_0  = \Ocal\left( \frac{ R^2(\Pi_{\alg})
}{\wh \Delta }
\ln \frac{R(\Pi_{\alg})}{\widehat{\Delta} \delta}\right)$ allows us to conclude,
\begin{align*}
        \regret_{\bbM}([t_0 + 1, t], \Pi_{\alg} \cup \{\wh \pi\}) = \Ocal\left( \frac{R(\Pi_{\alg})^2}{\wh \Delta} \left(\ln(t) \ln \frac{t}{\delta}+ \ln \frac{R(\Pi_{\alg})}{\wh \Delta \delta} \right) + \sqrt{(t - t_0) \ln(t) \ln \frac{t}{\delta} }\right)~,
\end{align*}
and with it we finish the proof sketch of \pref{lem:arbe_gap_exploit_regregretadversarial}. Combining these results finalizes the proof of \pref{lem:main_bound_exploitation_guarantee_combined}.

\subsection{Analysis of the First Phase}\label{app:proofs_ballancing_bww}
We will use the notation $t_\mathrm{gap}$ to denote the (random) time when the \textsf{Arbe-Gap} estimation phase ends (see Algorithm~\ref{alg:gap_estimation_balanced}).

\mainresultgapestimation*

\subsubsection{Adversarial Guarantees}
We first start by bounding the number of restarts of the algorithm:
\begin{lemma}[Number of Restarts of \textsf{Arbe-Gap}]
\label{lem:arbe_gap_restarts}
Consider a run of \pref{alg:gap_estimation_balanced} with inputs $t_0 = 0$, arbitrary policy policy $\wh \pi \in \Pi_M$ and $M$ base learners $\alg[1], \dots, \alg[M]$. Then for any total number of rounds $t$, there are at most $\ln(t)$ restarts due to a change in candidate policy (Line~\ref{lin:policy_test} in \pref{alg:gap_estimation_balanced}) up to that round $t$. Further, with probability at least $1 - \poly(M)\delta$, there are at most $i_\star - 1$ restarts due an elimination of a base learner (Line~\ref{lin:arbegap2} in \pref{alg:gap_estimation_balanced}).
\end{lemma}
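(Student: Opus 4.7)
I will treat the two claims separately, since they rely on essentially disjoint ideas.

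\emph{Bounding restarts from candidate-policy switches.} The plan is a purely combinatorial counting argument based on the test in Line~\ref{lin:policy_test} of \pref{alg:gap_estimation_balanced}. Let $t_1 < t_2 < \dots$ be the rounds at which restarts are triggered by this test, and let $\wh \pi_k$ denote the candidate policy installed at restart $k$ (so $\wh \pi_{k+1} \neq \wh \pi_k$ by construction of the test). Denote by $N_\pi(t)$ the number of rounds up to time $t$ in which policy $\pi$ was ``selected'' in the sense of Line~\ref{lin:policy_test}. By the triggering condition at $t_k$ we have $N_{\wh \pi_k}(t_k) > 3 t_k / 4$. On the other hand, at the next restart $t_{k+1}$, some policy $\wh \pi_{k+1} \neq \wh \pi_k$ satisfies $N_{\wh \pi_{k+1}}(t_{k+1}) > 3 t_{k+1}/4$. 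Since at each round exactly one policy is selected, the counts of all other policies sum to at most $t_{k+1} - N_{\wh \pi_{k+1}}(t_{k+1}) < t_{k+1}/4$, so in particular $N_{\wh \pi_k}(t_{k+1}) < t_{k+1}/4$. Monotonicity $N_{\wh \pi_k}(t_k) \leq N_{\wh \pi_k}(t_{k+1})$ then yields $3 t_k / 4 < t_{k+1}/4$, i.e.\ $t_{k+1} > 3 t_k$. Iterating from $t_1 \geq 9$ gives that after $K$ restarts we have $t_K \geq 9 \cdot 3^{K-1}$, so $K = \Ocal(\ln t)$ as claimed.

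\emph{Bounding restarts from base-learner eliminations.} Here I would reduce to the argument already used for \pref{lem:no_istar_elimination}. The elimination step (Line~\ref{lin:arbegap2}) is governed by exactly the same test \eqref{eqn:algorithm_misspecification_test_condition} as in \textsf{Arbe}, applied to base learners $\alg[s], \ldots, \alg[M+1]$ with the extended policy classes $\wt \Pi_i$. For any $i \geq i_\star$, $\alg[i]$ is h-stable and $(M-i)$-extendable in $\bbB$ against $\Pi_M$ by definition of $i_\star$; since $\wt \Pi_i$ contains the linked policies $\indicator{\tilde a_{i+1}}, \dots, \indicator{\tilde a_M}$ and (at the top of the hierarchy) the duplicate $\alg[M+1]$ operates on $\Pi_M\setminus\{\wh\pi\}\subseteq \Pi_M$, the same chain of inequalities used in \pfref{lem:no_istar_elimination}, combined with \pref{lem:basic_concentration_adv}, shows that with probability at least $1 - \poly(M)\delta$ the test \eqref{eqn:algorithm_misspecification_test_condition} never fires for any pair $(i,j)$ with $i_\star \leq i < j \leq M+1$, across all rounds and all epochs. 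Consequently, every elimination must occur at an index $i < i_\star$; as indices are strictly increasing across epochs (the algorithm restarts with $s \leftarrow i+1$), there can be at most $i_\star - 1$ such restarts.

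\emph{Main obstacle.} Neither step is individually hard, and the counting argument for Part~1 is deterministic. The only real subtlety is that Part~2 must hold \emph{simultaneously} over all $\Ocal(\ln t)$ candidate-switch epochs induced by Part~1, and across all pairs $(i,j)$ within each epoch; this is handled by a union bound that inflates the failure probability by a factor polynomial in $M$ and logarithmic in $t$, already absorbed in the $\poly(M)\delta$ budget. I would organize the write-up so that Part~1 is proved first (deterministically), and then Part~2 is established under the good event of \pref{lem:basic_concentration_adv} applied epoch-by-epoch, with the union bound taken at the end.
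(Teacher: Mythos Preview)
Your proposal is correct and follows essentially the same route as the paper: Part~1 is the identical counting argument (the paper also derives $t_{k+1}\geq 3t_k$ from the $3/4$ versus $1/4$ selection fractions and uses $t_1\geq 9$ to get at most $\log_3(t)-1\leq \ln t$ restarts), and Part~2 is likewise reduced to \pref{lem:no_istar_elimination}. The only nuance worth tightening is your union bound over the $\Ocal(\ln t)$ candidate-switch epochs in Part~2: the paper does not absorb a $\ln t$ factor into $\poly(M)\delta$ directly but instead (see the proof of \pref{lem:arbegap_adversarial_regret}) inflates the absolute constant in $\conc[i]$ by a fixed factor so that the concentration events hold with the nominal $\delta$ uniformly across all restarts.
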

\begin{proof}
We first show the bound on the number of restarts due to changes in the candidate policy $\wh \pi$. Let $t_1, t_2, \dots$ be the rounds at which a restart is triggered in Line~\ref{lin:policy_test} of \pref{alg:gap_estimation_balanced} and $\wh \pi_1, \wh \pi_2, \dots$ be the candidate policies selected at those restarts. For each restart $i$, we know that $\wh \pi_i$ was selected in at least $\frac{3t_i}{4}$ of the first $t_i$ rounds and therefore also $t_{i+1}$ rounds. However, since the policy changed from $\wh \pi_i$ to $\wh \pi_{i+1}$ at round $t_{i+1}$, we also know that $\wh \pi_i$ can only be selected at most $\frac{t_{i+1}}{4}$ of the first $t_{i+1}$ rounds. Combining both bounds yields
\begin{align*}
    \frac{3t_i}{4} \leq \frac{t_{i+1}}{4}
\end{align*}
and thus $t_{i+1} \geq 3 t_i$ holds for all $i$. Since also $t_1 \geq 9$ by the condition in the algorithm,  up to round $t$, there can only be $\log_3(t)-1 \leq \ln(t) - 1 \leq \ln(t)$ restarts.

Finally, the number of restarts due to base learner elimination is bounded by $i_\star - 1$ since this condition can never trigger for $i \geq i_\star$ by \pref{lem:no_istar_elimination} (which also holds for $\textsf{Arbe-Gap}$).
\end{proof}

The following lemma now bounds the regret within each restart:

\begin{lemma}\label{lem:gap_estimation_single_epoch}
Consider a run of \pref{alg:gap_estimation_balanced} with \textsf{Arbe-Gap}$(\delta, s, t_0, \widehat{\pi})$ where $s \leq i_\star$ and let $T \in \bbN \cup \{\infty\}$ be the round when the algorithm restarts or calls \textsf{Arbe-GapExploit} ($T=\infty$ if there is no restart or transition to the second phase). Then the regret against $\Pi_M$ is bounded with probability at least $1 - \poly(M)\delta$ for all $t \in [t_0 + 1, T] = \{t_0 + 1, t_0 + 2, \dots, T\}$ simultaneously as 
\begin{align*}
    \regret_{\bbM}([t_0+1, t], \Pi_M) & = \Ocal\left(\left(M R(\wt \Pi_{i_\star})  + \frac{R(\wt \Pi_{i_\star})^2}{R(\wt \Pi_{1})}  \sqrt{ i_\star}\right) \sqrt{(t - t_0) \ln \frac{t}{\delta}} + M \ln \frac{\ln t}{\delta}\right).
\end{align*}
\end{lemma}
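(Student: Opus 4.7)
The proof will follow the same template as \pref{lem:epoch_balancing_regret_bound} for \textsf{Arbe}, since within a single epoch \textsf{Arbe-Gap} runs essentially the same regret balancing scheme (Lines~\ref{lin:init_arbegap}--\ref{lin:arbegap2} of \pref{alg:gap_estimation_balanced} mirror \pref{alg:adversarial_epoch_balancing}). There are only two adjustments: (i) the set of active learners is $\{s,\ldots,M+1\}$ instead of $\{s,\ldots,M\}$ because of the auxiliary gap-estimation learner $\alg[M+1]$ with policy class $\Pi_M\setminus\{\wh\pi\}$; and (ii) $\alg[M+1]$ does not have $\wh\pi$ in its policy class and so cannot on its own claim h-stability against the full $\Pi_M$. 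Both adjustments will turn out to be benign.

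\emph{Step 1 (Decomposition).} Fix $\pi^\star \in \argmax_{\pi \in \Pi_M}\sum_{\ell=t_0+1}^t \bbE_{a\sim\pi}[r_\ell(a,x_\ell)]$ and write
\begin{align*}
\regret_{\bbM}([t_0+1,t],\Pi_M) \;=\; \sum_{i=s}^{M+1} R_i, \quad
R_i \;:=\; \sum_{\ell=t_0+1}^{t}\left[\rho_i\,\bbE_{a\sim\pi^\star}[r_\ell(a,x_\ell)] - \indicator{b_\ell=i}\,r_\ell(a_\ell,x_\ell)\right].
\end{align*}

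\emph{Step 2 (Bounding $R_i$ for $i_\star \le i \le M$).} This is verbatim the h-stable/extendable case of \pref{lem:epoch_balancing_regret_bound}. Using \pref{lem:basic_concentration_adv} to replace the empirical reward of $\alg[i]$ by its expectation, then h-stability of $\alg[i]$ against $\Pi_M$ (guaranteed by extendability and the definition of $i_\star$), and finally the identity $R(\wt\Pi_i)\sqrt{\rho_i}\le R(\wt\Pi_{i_\star})$ that follows from the definition of $\rho_i$ with $s\le i_\star$, one gets
\[
R_i \;=\; \Ocal\!\left(R(\wt\Pi_{i_\star})\sqrt{(t-t_0)\ln(t/\delta)} + \ln(\ln t/\delta)\right).
\]

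\emph{Step 3 (Bounding $R_{M+1}$).} Since $\alg[M+1]$'s policy class omits $\wh\pi$, we cannot apply h-stability against $\Pi_M$ directly. Instead, by \pref{lem:no_istar_elimination} (which extends to \textsf{Arbe-Gap} unchanged since the test \pref{eqn:algorithm_misspecification_test_condition} is the same) the elimination test for the pair $(i_\star,M+1)$ has not triggered at round $t$. Combining this with the concentration bound \pref{eqn:conc2} and h-stability of $\alg[i_\star]$ against $\pi^\star$ (as in the $i<i_\star$ case of \pref{lem:epoch_balancing_regret_bound}) yields the same bound as in Step~2 up to constants, namely $R_{M+1}=\Ocal(R(\wt\Pi_{i_\star})\sqrt{(t-t_0)\ln(t/\delta)}+\ln(\ln t/\delta))$.

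\emph{Step 4 (Bounding $R_i$ for $s\le i<i_\star$ and assembly).} For each such $i$, the elimination test for the pair $(i,i_\star)$ has not triggered, so exactly the calculation in the proof of \pref{lem:epoch_balancing_regret_bound} gives
\[
R_i \;=\; \Ocal\!\left(\tfrac{R(\wt\Pi_{i_\star})^2}{R(\wt\Pi_i)}\sqrt{\rho_i(t-t_0)\ln(t/\delta)} + \ln(\ln t/\delta)\right).
\]
Summing Steps~2--4 and using $\sqrt{\rho_i}/R(\wt\Pi_i)=(\sum_{j=s}^{M+1}R(\wt\Pi_j)^{-2})^{-1/2}R(\wt\Pi_i)^{-2}\le R(\wt\Pi_1)^{-1}R(\wt\Pi_i)^{-1}$, together with $|\{s,\ldots,M+1\}|=\Ocal(M)$ and $\sum_{i<i_\star}R(\wt\Pi_i)^{-1}\le i_\star/R(\wt\Pi_1)$ bounded via Cauchy--Schwarz by $\sqrt{i_\star}/R(\wt\Pi_1)$ after folding in $R(\wt\Pi_{i_\star})$, yields the claimed bound. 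Finally, a union bound over the $\poly(M)$ concentration events collects the failure probability.

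\textbf{Main obstacle.} The only nontrivial point is Step~3: handling $\alg[M+1]$, whose restricted policy class blocks a direct h-stability argument against $\Pi_M$. The resolution is to route the comparison through $\alg[i_\star]$ using the still-valid elimination test between the two, which effectively reduces $\alg[M+1]$ to a ``non-h-stable'' learner and lets us reuse the $i<i_\star$ analysis of \pref{lem:epoch_balancing_regret_bound} to absorb its contribution into the leading $MR(\wt\Pi_{i_\star})$ term.
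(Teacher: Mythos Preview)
Your Steps 1, 2, and 4 are correct and match the paper. Step 3, however, has a genuine gap.

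The elimination test \pref{eqn:algorithm_misspecification_test_condition} for a pair $(i,j)$ with $i<j$, when it has \emph{not} triggered, yields
\(
\tildecrew_j \le \tildecrew_i + \conc_i + \conc_j + R(\wt\Pi_i)\sqrt{(t-t_0)\rho_i^{-1}\ln(t/\delta)},
\)
i.e., a \emph{lower} bound on $\tildecrew_i$ in terms of $\tildecrew_j$. In the ``$i<i_\star$'' argument you are trying to reuse, the non-h-stable learner sits at the \emph{smaller} index $i$, and the test against $j=i_\star$ lower-bounds $\tildecrew_i$. Here $\alg[M+1]$ is at the \emph{largest} index: the pair $(i_\star,M+1)$ gives only $\tildecrew_{M+1}\le \tildecrew_{i_\star}+[\cdots]$, an \emph{upper} bound on $\tildecrew_{M+1}$, which translates into a \emph{lower} bound on $R_{M+1}$ --- the wrong direction. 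There is no learner above $M+1$ to play the role of $j$, so the analogy cannot be completed. Equivalently, if the optimal comparator happens to be $\pi^\star=\wh\pi$, then $\alg[M+1]$'s regret against $\Pi_M$ could be order $(t-t_0)$ with nothing in your argument preventing it.

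The paper handles $R_{M+1}$ differently: it uses that the \emph{gap test} in Line~\ref{lin:gap_test} has not triggered, which gives directly
\(
\tildecrew_M(t_0,t)-\tildecrew_{M+1}(t_0,t)\le \Ocal\bigl(W(t_0,t)\,(t-t_0)\bigr).
\)
One then writes
\(
R_{M+1}=\rho_{M+1}\bigl[\sum_\ell \bbE_{a\sim\pi^\star}[r_\ell]-\tildecrew_M\bigr]+\rho_{M+1}\bigl[\tildecrew_M-\tildecrew_{M+1}\bigr],
\)
bounds the first bracket via h-stability of $\alg[M]$ (scaled by $\rho_{M+1}/\rho_M\le 1$), and the second via the gap-test bound together with the definition of $W$ and $\rho_M$, which collapses to $\Ocal\bigl(R(\wt\Pi_{i_\star})\sqrt{(t-t_0)\ln(t/\delta)}\bigr)$. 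So the missing ingredient in your Step~3 is precisely the gap test, not the elimination test.
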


\begin{proof}
The regret of \textsf{Arbe-Gap} in rounds $[t_0+1, t]$ against any policy $\pi' \in \Pi_M$ can be written as
\begin{align*}
     \sum_{\ell = t_0 + 1}^t \left[ \bbE_{a \sim \pi'} [r_\ell(a, x_\ell)] - r_\ell(a_\ell, x_\ell)\right]
     = \sum_{i=s}^{M+1} \sum_{\ell = t_0 + 1}^t  \left[ \rho_i \bbE_{a \sim \pi'} [r_\ell(a, x_\ell)] - \indicator{b_\ell = i} r_\ell(a_\ell, x_\ell)\right]~.
\end{align*}
For $i \in [s, M]$, we can follow the analysis of \textsf{Arbe} and apply the arguments in the proof of \pref{lem:epoch_balancing_regret_bound} verbatim. This yields with probability at least $ 1 - \poly(M)\delta$
\begin{align*}
    \sum_{i=s}^{M} \sum_{\ell = t_0 + 1}^t  \left[ \rho_i \bbE_{a \sim \pi'} [r_\ell(a, x_\ell)] - \indicator{b_\ell = i} r_\ell(a_\ell, x_\ell)\right] \qquad \qquad \qquad\qquad\qquad\\
    \qquad =\Ocal\left(\left(M R(\wt \Pi_{i_\star})  + \frac{R(\wt \Pi_{i_\star})^2}{R(\wt \Pi_{1})}  \sqrt{ i_\star}\right) \sqrt{(t - t_0) \ln \frac{t}{\delta}} + M \ln \frac{\ln t}{\delta}\right).
    \numberthis\label{eqn:1Mregret}
\end{align*}
It only remains to bound the regret contribution of the special base learner $\alg[M+1]$, which does not exist in \textsf{Arbe}. To do so, we will use the fact that the gap test in Line~\ref{lin:gap_test} can only trigger in the last round before a restart happens. This allows us to relate the regret of $\alg[M+1]$ to that of $\alg[M]$. For the regret of $\alg[M]$, we again use the arguments in the proof of \pref{lem:epoch_balancing_regret_bound} (\pref{eqn:hstable_regret_contribution} specifically) verbatim to show
\begin{align}
    \sum_{\ell = t_0 + 1}^t  \left[ \rho_M \bbE_{a \sim \pi'} [r_\ell(a, x_\ell)] - \indicator{b_\ell = M} r_\ell(a_\ell, x_\ell)\right]
     = \Ocal\left(R(\wt \Pi_{i_\star}) \sqrt{(t - t_0) \ln \frac{t}{\delta}}  +  \ln \frac{\ln t}{\delta}\right).
     \label{eqn:Mregret}
\end{align}

The regret contribution of $\alg[M+1]$ can now be bounded as
\begin{align*}
   & \sum_{\ell = t_0 + 1}^t  \left[ \rho_{M+1} \bbE_{a \sim \pi'} [r_\ell(a, x_\ell)] - \indicator{b_\ell = M+1} r_\ell(a_\ell, x_\ell)\right] 
   \\
    &=   \rho_{M+1}\left( \left[ \sum_{\ell = t_0 + 1}^t   \bbE_{a \sim \pi'} [r_\ell(a, x_\ell)] \right] - \tildecrew_{M+1}(t_0+1, t)\right) \tag{definition of $\tildecrew_{M+1}$}
    \\
    &= 
    \rho_{M+1}\left( \left[ \sum_{\ell = t_0 + 1}^t   \bbE_{a \sim \pi'} [r_\ell(a, x_\ell)] \right] - \tildecrew_{M}(t_0+1, t)\right) \\
    &\quad + 
    \rho_{M+1}\left(\tildecrew_{M}(t_0+1, t) - \tildecrew_{M+1}(t_0+1, t)\right)
    \\%
        &\leq  
    \rho_{M+1}\left( \left[ \sum_{\ell = t_0 + 1}^t   \bbE_{a \sim \pi'} [r_\ell(a, x_\ell)] \right] - \tildecrew_{M}(t_0+1, t)\right) \\ &\quad +  
    7\rho_{M+1} W(t_0, t) (t - t_0) + 1 \tag{gap test not triggered at $t-1$}
    \\%
            &\leq  
    \frac{\rho_{M+1}}{\rho_M} \Ocal\left(R(\wt \Pi_{i_\star}) \sqrt{(t - t_0) \ln \frac{t}{\delta}}  +  \ln \frac{\ln t}{\delta}\right)  +  
    7\rho_{M+1} W(t_0, t) (t - t_0) + 1 \tag{\pref{eqn:Mregret}}
    \\%
                &\leq  
    \Ocal\left(R(\wt \Pi_{i_\star}) \sqrt{(t - t_0) \ln \frac{t}{\delta}}  +  \ln \frac{\ln t}{\delta}\right)  +  
    7\rho_{M} W(t_0, t) (t - t_0)  \tag{$\rho_M \geq \rho_{M+1}$}
    \\%
                    &=  
    \Ocal\left(R(\wt \Pi_{i_\star}) \sqrt{(t - t_0) \ln \frac{t}{\delta}}  +  \ln \frac{\ln t}{\delta}\right) 
     + \Ocal\left(  R( \widetilde \Pi_{M} )\sqrt{\rho_M (t-t_0)\ln \frac{t}{\delta}}  \right)
     \tag{definition of $W$}    
     \\%
                    &= 
    \Ocal\left(R(\wt \Pi_{i_\star}) \sqrt{(t - t_0) \ln \frac{t}{\delta}}  +  \ln \frac{\ln t}{\delta} +
    \sqrt{\frac{t-t_0}{\sum_{i=s}^{M+1} R(\wt \Pi_i)^{-2}}\ln \frac{t}{\delta}}  \right)
    \tag{definition of $\rho_M$}\\ 
                        &=  
    \Ocal\left(R(\wt \Pi_{i_\star}) \sqrt{(t - t_0) \ln \frac{t}{\delta}}  +  \ln \frac{\ln t}{\delta} \right)
    \tag{$s \leq i_\star$}
\end{align*}
Note that $\rho_M \geq \rho_{M+1}$ holds without loss of generality since $\wt \Pi_{M+1}$ contains 2 fewer policies than $\wt \Pi_M$. Finally, the desired statement follows by combining the previous display with \pref{eqn:1Mregret}.
\end{proof}

Equipped with the previous two lemmas, we can now prove the regret bound of the first phase for adversarial environments:
\begin{lemma}\label{lem:arbegap_adversarial_regret}
Consider a run of \pref{alg:gap_estimation_balanced} with inputs $t_0 = 0$, arbitrary policy policy $\wh \pi \in \Pi_M$ and $M$ base learners $\alg[1], \dots, \alg[M]$. Further, let $t_{\mathrm{gap}} \in \bbN \cup \{\infty\}$ be the round where the $\textsf{Arbe-GapExploit}$ subroutine is called. Then with probability at least $1 - \poly(M)\delta$, the following conditions hold for all rounds $t \in [2 i_\star, t_{\mathrm{gap}}]$. The regret is bounded as 
\begin{align*}
     \regret_{\bbM}(t, \Pi_M) = 
    \Ocal\left(\left(M R(\wt \Pi_{i_\star})  + \frac{R(\wt \Pi_{i_\star})^2}{R(\wt \Pi_{1})}  \sqrt{ i_\star}\right) \sqrt{t (\ln(t) + i_\star) \ln \frac{t}{\delta}}\right).
\end{align*}
\end{lemma}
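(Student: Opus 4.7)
The plan is to combine \pref{lem:arbe_gap_restarts} (which caps the number of restarts) with \pref{lem:gap_estimation_single_epoch} (which bounds the regret in each inter-restart epoch) via a Cauchy--Schwarz argument. First, I would let $0 = \tau_0 < \tau_1 < \cdots < \tau_N$ denote the restart rounds of \pref{alg:gap_estimation_balanced} occurring in $[0, t]$, set $\tau_{N+1} = t$, and condition on the high-probability event where $N \leq \ln(t) + i_\star - 1$ and where each epoch begins with smallest active index $s \leq i_\star$. Both hold with probability at least $1 - \poly(M)\delta$ by \pref{lem:arbe_gap_restarts}.

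Next, I would decompose the total regret as
\begin{align*}
    \regret_{\bbM}(t, \Pi_M) = \sum_{k=0}^{N} \regret_{\bbM}([\tau_k + 1, \tau_{k+1}], \Pi_M)
\end{align*}
and apply \pref{lem:gap_estimation_single_epoch} to each of the $N+1 \leq \ln(t) + i_\star$ epochs, taking an outer union bound on $\delta$ (harmless, since this factor appears only inside a logarithm). Writing
\begin{align*}
    C := M R(\wt \Pi_{i_\star}) + \frac{R(\wt \Pi_{i_\star})^2}{R(\wt \Pi_{1})}\sqrt{i_\star},
\end{align*}
each summand is bounded by $\Ocal\bigl(C \sqrt{(\tau_{k+1}-\tau_k)\ln(t/\delta)} + M \ln(\ln t / \delta)\bigr)$.

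The final step is Cauchy--Schwarz: $\sum_{k=0}^N \sqrt{\tau_{k+1}-\tau_k} \leq \sqrt{(N+1)\,t} \leq \sqrt{(\ln t + i_\star)\,t}$. Combining,
\begin{align*}
    \regret_{\bbM}(t, \Pi_M) = \Ocal\left(C \sqrt{t(\ln t + i_\star)\ln(t/\delta)} + M(\ln t + i_\star)\ln(\ln t / \delta)\right).
\end{align*}
Using $C \geq M R(\wt \Pi_{i_\star}) \geq M$ together with the assumption $t \geq 2 i_\star$, the additive $M(\ln t + i_\star)\ln(\ln t / \delta)$ term is dominated by the first one (it is lower order in $t$, while the multiplicative factor in the first term already exceeds $M$), which recovers the claimed bound.

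The only subtlety I anticipate is that the $\tau_k$ are random stopping times determined by the interaction with the environment, so the per-epoch concentration bounds underlying \pref{lem:gap_estimation_single_epoch} must hold uniformly across all possible realizations of the epoch partition. This is already built into the anytime ``for all $t \in \bbN$'' form of those bounds and their $\poly(M)\delta$ union bounds, so once invoked the remainder reduces to the deterministic bookkeeping above.
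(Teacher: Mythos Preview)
Your approach is correct and mirrors the paper's proof: split into inter-restart epochs, apply \pref{lem:gap_estimation_single_epoch} to each, combine via Cauchy--Schwarz (the paper phrases it as Jensen), and absorb the additive $M(\ln t + i_\star)\ln(\ln t/\delta)$ term using $t \geq 2i_\star$. The one subtlety the paper treats more carefully is the union bound over restarts: the within-epoch anytime property you invoke in your last paragraph holds only for a fixed $t_0$, not across the $\Ocal(\ln t)$ candidate-policy restarts, so the paper instead inflates the absolute constants in $\conc[i]$ by a factor of $3$, which is equivalent to using failure probability $\delta/n^2$ in the $n$-th restart and keeps the total at $\poly(M)\delta$ --- this is what your earlier ``outer union bound on $\delta$'' should be made to mean.
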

\begin{proof}
Let $\tau_0, \tau_1, \dots$ be the rounds where $\textsf{Arbe-Gap}$ restarts or eventually calls $\textsf{Arbe-GapExploit}$. By convention, we set $\tau_0 = 0$ and $\tau_i = \infty$ when there are less than $i$ total calls to $\textsf{Arbe-Gap}$. 
By \pref{lem:arbe_gap_restarts}, there are at most $\ln(t) + i_\star$ calls of $\textsf{Arbe-Gap}$ up to round $t$ with probability at least $ 1 - \poly(M) \delta$ for all $t \in \bbN$ jointly.
Further, by \pref{lem:gap_estimation_single_epoch}, the regret in each of these calls is bounded with probability $1 - \poly(M) \delta$ as well. If we were to apply a naive union bound over all $\ln(t) + i_\star$, then our failure probability would increase at a rate of $\ln(t)$. However, we can easily avoid this by choosing the absolute constants in the definition of $\conc[i]$ appropriately. A factor of 3 larger is sufficient. This ensures that each of these terms is effectively at least as large as if we had invoked them with $\frac{\delta}{n^2}$ instead of $\delta$ in the $n$-th restart of $\textsf{Arbe-Gap}$. We now illustrate why this is true. Let $c'$ be the absolute constant such that 
\begin{align*}
    \conc[i](t_0, t) = c' \sqrt{\frac{t - t_0}{\rho_i} \ln \frac{\ln t}{\delta}} + \frac{c'}{\rho_i}\ln \frac{\ln t}{\delta}
\end{align*} satisfies the necessary concentration bounds (see \pref{sec:concentration_bounds}) for a single restart of $\textsf{Arbe-Gap}$. Now, we have 
\begin{align*}
     3c' \sqrt{\frac{t - t_0}{\rho_i} \ln \frac{\ln t}{\delta}} + \frac{3c'}{\rho_i}\ln \frac{\ln t}{\delta}
     &\geq 
     c' \sqrt{\frac{t - t_0}{\rho_i} \ln \frac{(\ln t)^9}{\delta^9}} + \frac{c'}{\rho_i}\ln \frac{(\ln t)^3}{\delta^3}\\
     &\geq 
     c' \sqrt{\frac{t - t_0}{\rho_i} \ln \frac{(\ln t)^3}{\delta}} + \frac{c'}{\rho_i}\ln \frac{(\ln t)^3}{\delta}\\
     &\geq 
     c' \sqrt{\frac{t - t_0}{\rho_i} \ln \frac{\ln t}{\frac{\delta}{\ln^2 t}}} + \frac{c'}{\rho_i}\ln \frac{\ln t}{\frac{\delta}{\ln^2 t}}\\     
     &\geq 
     c' \sqrt{\frac{t - t_0}{\rho_i} \ln \frac{\ln t}{\frac{\delta}{n^2}}} + \frac{c'}{\rho_i}\ln \frac{\ln t}{\frac{\delta}{n^2}}
\end{align*}
where the last step holds because the number of calls $n$ to $\textsf{Arbe-Gap}$ due to a change in candidate policy  at round $t$ is bounded as $\ln(t)$.
Hence, with this choice of constant, we can ensure that the statement of \pref{lem:gap_estimation_single_epoch} holds with probability at least $1 - \poly(M)\delta$ jointly for all calls of $\textsf{Arbe-Gap}$ (for the remaining $i_\star \leq M$ restarts possible due to elimination of a base learner, we apply a standard union bound).

Now, just as in the proof of \pref{thm:arbe_main_regret}, we write the regret of $\textsf{Arbe-Gap}$ using $\bar t_s = \max\{\min\{\tau_s, t\} - \tau_{s-1}, 0\}$ as 
\begin{align*}
    &\regret_{\MM}(t, \Pi_M)\\ 
    &= \sum_{i = 1}^{\ln(t) + i_\star} \regret([\tau_{i-1} + 1, \min\{\tau_i, t \}], \Pi_M)\\
    &= \sum_{i = 1}^{\ln(t) + i_\star}
    \Ocal\left(\left(M R(\wt \Pi_{i_\star})  + \frac{R(\wt \Pi_{i_\star})^2}{R(\wt \Pi_{1})}  \sqrt{ i_\star}\right) \sqrt{\bar t_i \ln \frac{t}{\delta}} + M \ln \frac{\ln t}{\delta}\right) \tag{\pref{lem:gap_estimation_single_epoch}}\\
    &= 
    \Ocal\left(\left(M R(\wt \Pi_{i_\star})  + \frac{R(\wt \Pi_{i_\star})^2}{R(\wt \Pi_{1})}  \sqrt{ i_\star}\right) \sqrt{(\ln(t) + i_\star)\sum_{i = 1}^{\ln(t) + i_\star} \bar t_i \ln \frac{t}{\delta}}\right) \\
    & \quad + \Ocal \left(M (\ln(t) + i_\star) \ln \frac{\ln t}{\delta}\right) 
    \tag{Jensen's inequality}
    \\
    &= 
    \Ocal\left(\left(M R(\wt \Pi_{i_\star})  + \frac{R(\wt \Pi_{i_\star})^2}{R(\wt \Pi_{1})}  \sqrt{ i_\star}\right) \sqrt{t (\ln(t) + i_\star) \ln \frac{t}{\delta}}\right) \\
    &\quad + \Ocal\left(M (\ln(t) + i_\star) \ln \frac{\ln t}{\delta}\right) \tag{$\sum_{s} \bar t_s \leq t$}\\          &= 
    \Ocal\left(\left(M R(\wt \Pi_{i_\star})  + \frac{R(\wt \Pi_{i_\star})^2}{R(\wt \Pi_{1})}  \sqrt{ i_\star}\right) \sqrt{t (\ln(t) + i_\star) \ln \frac{t}{\delta}}\right) \tag{$t \geq 2i_\star$ by assumption}
\end{align*}
This concludes the proof.
\end{proof}

\subsubsection{Stochastic Guarantees}
As a first step, we show that $\textsf{Arbe-Gap}$ always maintains valid confidence bounds on the gap of the candidate policy in a stochastic environment:
\begin{lemma}[Confidence bounds on the gap]\label{lem:valid_gap_bounds}
Consider a run of \pref{alg:gap_estimation_balanced} with inputs $n=1, t_0 = 0$, arbitrary policy policy $\wh \pi \in \Pi_M$ and $M$ base learners $\alg[1], \dots, \alg[M]$ with $1 \leq R(\wt \Pi_1) \leq R( \wt \Pi_2) \leq \dots \leq R(\wt \Pi_M)$ where $\wt \Pi_i$ is the extended version of policy class $\Pi_i$ with $(M+1-i)$ additional actions. Assume the environment $\bbB$ is stochastic and there is a policy $\pi_\star$ with gap $\Delta > 0$. Then with probability at least $1 - \poly(M)\delta$ in all rounds $t \in \bbN$
\begin{align*}
\wh \Delta_t \leq \Delta_{\wh \pi} \leq \wh \Delta_t + 2 \textrm{W}(t_0, t) 
\end{align*}
where $\textrm{W}(t_0, t)$ is the term used in the definition of $\wh \Delta_t$ in the algorithm with
\begin{align*}
  \textrm{W}(t_0, t) = \Theta\left( \frac{R(\wt \Pi_{M})}{\sqrt{\rho_M}} \sqrt{\frac{\ln \frac{n(t - t_0)}{\delta}}{t - t_0}} + \frac{1}{\rho_M}   \frac{\ln \frac{n\ln (t - t_0)}{\delta}}{t - t_0} \right),
\end{align*}
and $t_0$ and $n$ are the time and number of the last restart and before $t$ and $\Delta_{\wh \pi} = \indicator{ \wh \pi = \pi_\star} \Delta$ is the gap of the candidate policy $\wh \pi$ in round $t$.
\end{lemma}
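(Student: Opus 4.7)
The plan is to show that $(\tildecrew_M(t_0, t) - \tildecrew_{M+1}(t_0, t))/(t-t_0)$ concentrates around $\mu^*_M - \mu^*_{M+1}$, where $\mu^*_i = \max_{\pi \in \Pi_i} \mu(\pi)$ and $\mu(\pi) = \bbE_{x \sim \Dcal, a \sim \pi}[r(a,x)]$, and to verify the identity $\mu^*_M - \mu^*_{M+1} = \Delta_{\wh \pi}$. The identity is immediate from uniqueness of $\pi_\star$: if $\wh \pi = \pi_\star$, removing $\wh \pi$ drops the maximum by exactly $\Delta$, whereas if $\wh \pi \neq \pi_\star$ then $\pi_\star$ remains in $\Pi_{M+1} = \Pi_M \setminus \{ \wh \pi\}$ and both maxima equal $\mu^* = \mu(\pi_\star)$.

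For each $i \in \{M, M+1\}$, I will bound $|\tildecrew_i(t_0,t) - (t-t_0)\mu^*_i|$ by $O((t-t_0)\textrm{W}(t_0,t)/2)$ via the decomposition
\begin{align*}
\tildecrew_i(t_0, t) - (t-t_0)\mu^*_i
&= \underbrace{\Bigl(\tildecrew_i(t_0, t) - \textstyle\sum_{\ell} r_\ell(a^i_\ell, x_\ell)\Bigr)}_{\text{(a): IW concentration}}
+ \underbrace{\Bigl(\textstyle\sum_\ell r_\ell(a^i_\ell, x_\ell) - \textstyle\sum_\ell \mu(\pi^i_\ell)\Bigr)}_{\text{(b): reward-noise / draw martingale}} \\
&\quad+ \underbrace{\Bigl(\textstyle\sum_\ell \mu(\pi^i_\ell) - (t-t_0)\mu^*_i\Bigr)}_{\text{(c): pseudo-regret of }\alg[i]}~.
\end{align*}
Term (a) is bounded by $\conc_i(t_0, t)$ by \pref{lem:basic_concentration_adv}. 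Term (b) is a bounded-increment martingale controlled by \pref{lem:bernstein_lil}-type tail bounds at order $O(\sqrt{(t-t_0) \ln(n\ln(t-t_0)/\delta)})$. Term (c) is nonpositive in one direction (each $\mu(\pi^i_\ell) \leq \mu^*_i$) and bounded from below by applying the h-stability regret guarantee of $\alg[i]$ with the comparator $\pi_i^\star = \arg\max_{\pi \in \Pi_i} \mu(\pi)$ and then peeling off a further martingale difference $\sum_\ell \bbE_{\pi^*_i}[r(a, x_\ell)] - (t-t_0)\mu(\pi^*_i)$; this gives $|(c)| = O\bigl(R(\wt \Pi_i)\sqrt{(t-t_0)\ln(t/\delta)/\rho_i} + \sqrt{(t-t_0) \ln(1/\delta)}\bigr)$. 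For $i = M+1$, h-stability is inherited from $\alg[M]$ via the policy-removal reduction sketched in \pref{section::removing_individual_policy}. Since $R(\wt \Pi_M)$ and $R(\wt \Pi_{M+1})$ as well as $\rho_M$ and $\rho_{M+1}$ are within constants of each other, summing the two bounds and matching to the definition of $\textrm{W}(t_0, t)$ yields
\[
\bigl|\tildecrew_M(t_0, t) - \tildecrew_{M+1}(t_0, t) - (t-t_0)\Delta_{\wh \pi}\bigr| \leq (t-t_0)\,\textrm{W}(t_0, t)~.
\]
Substituting into $\wh \Delta_t = (\tildecrew_M - \tildecrew_{M+1})/(t-t_0) - \textrm{W}(t_0,t)$ and rearranging yields the claimed two-sided bound $\wh \Delta_t \leq \Delta_{\wh \pi} \leq \wh \Delta_t + 2\textrm{W}(t_0, t)$.

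Finally, to obtain the $\poly(M)\delta$ failure probability uniformly over all $t$ and all restart epochs, I will invoke each concentration inequality with failure probability $\delta/n^2$ in the $n$-th epoch. Lemma \ref{lem:arbe_gap_restarts} controls the total number of restarts, and the extra factor of $n$ inside the logarithms of $\textrm{W}(t_0, t)$ is exactly what absorbs the union-bound payment, as in the constant-inflation trick used in the proof of \pref{lem:arbegap_adversarial_regret}. The main obstacle will be keeping all the concentration slacks from (a), (b), and (c) aligned with the single functional form of $\textrm{W}(t_0, t)$ so that no stray constants or extra terms slip through; a secondary subtlety is being explicit about the fact that, because $\alg[M+1]$ is a faithful copy of $\alg[M]$ with one policy removed via the action-removal reduction, h-stability transfers cleanly from $\alg[M]$ on $\Pi_M$ to $\alg[M+1]$ on $\Pi_{M+1}$.
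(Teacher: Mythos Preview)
Your proposal is correct and follows essentially the same approach as the paper's proof. The paper also decomposes $\tildecrew_i(t_0,t)$ around $(t-t_0)\mu^*_i$ via \pref{lem:basic_concentration_adv} for the importance-weighted step, h-stability of $\alg[M]$ and $\alg[M+1]$ for the regret step, and \pref{lem:xr_concentration} for the stochastic-environment martingale step, then uses $R(\wt\Pi_{M+1}) \leq R(\wt\Pi_M)$ and $\rho_{M+1} \geq \rho_M$ to collapse both sides into the single $\textrm{W}(t_0,t)$ and rebinds $\delta \mapsto \delta/n^2$ for the union bound over restarts; the only cosmetic difference is that the paper presents separate upper- and lower-bound chains for $\tildecrew_i$ rather than your single three-term decomposition, but the ingredients and order of application are the same.
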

\begin{proof}
First, consider a single restart of \textsf{Arbe-Gap}. We note that both base learner $\alg[M]$ and $\alg[M+1]$ are h-stable on their respective policy classes $\wt \Pi_M$ and $\wt \Pi_{M+1}$ by assumption (removing a single policy usually does not impede $h$-stability, see e.g. \pref{app:geometric_hedge}). Further, denote by $\pi_\star^i = \argmin_{\pi \in \wt \Pi_i} \bbE_{a \sim \pi, x \sim \Dcal}[r(a, x)]$ a best policy in policy class $ \wt \Pi_i$. We can use these insights and definition to derive the following lower-bound on $\tildecrew_{i}(t_0, t)$ for $i \in \{M, M+1\}$ that holds uniformly with probability at least $1 - \poly(M)\delta$
\begin{align*}
&\tildecrew_i(t_0, t) \\
&\geq \sum_{\ell = t_0 + 1}^t r_\ell(a^i_\ell, x_\ell) - \Ocal\left( \sqrt{\frac{t - t_0}{\rho_i} \ln \frac{ \ln (t- t_0)}{\delta}}  + \frac{1}{\rho_i} \ln \frac{\ln (t - t_0)}{\delta}\right) \tag{\pref{lem:basic_concentration_adv}}\\
&\geq 
\max_{\pi' \in \wt \Pi_i} \sum_{\ell = t_0 + 1}^t \bbE_{a \sim \pi'}[r_\ell(a, x_\ell)] - \Ocal\left( R(\wt \Pi_{i}) \sqrt{\frac{t - t_0}{\rho_i} \ln \frac{t- t_0}{\delta}}  + \frac{1}{\rho_i} \ln \frac{\ln (t - t_0)}{\delta}\right) \tag{h-stability of $\alg[i]$}\\
&\geq 
\sum_{\ell = t_0 + 1}^t \bbE_{a \sim \pi^i_\star}[r_\ell(a, x_\ell)] - \Ocal\left( R(\wt \Pi_{i}) \sqrt{\frac{t - t_0}{\rho_i} \ln \frac{ t- t_0}{\delta}}  + \frac{1}{\rho_i} \ln \frac{\ln (t - t_0)}{\delta}\right)
\tag{$\pi_\star^i \in \wt \Pi_i$}\\
&\geq 
\sum_{\ell = t_0 + 1}^t \bbE_{a \sim \pi^i_\star, x \sim \Dcal}[r(a, x)] - \Ocal\left( R(\wt \Pi_{i}) \sqrt{\frac{t - t_0}{\rho_i} \ln \frac{ t- t_0}{\delta}}  + \frac{1}{\rho_i} \ln \frac{\ln (t - t_0)}{\delta}\right)\\
&\quad - \Ocal\left( \sqrt{(t - t_0) \ln \frac{\ln t}{\delta}} \right)
\tag{\pref{lem:xr_concentration}}\\
&=
\sum_{\ell = t_0 + 1}^t \bbE_{a \sim \pi^i_\star, x \sim \Dcal}[r(a, x)] - \Ocal\left( R(\wt \Pi_{i}) \sqrt{\frac{t - t_0}{\rho_i} \ln \frac{ t- t_0}{\delta}}  + \frac{1}{\rho_i} \ln \frac{\ln (t - t_0)}{\delta}\right)~.
\end{align*}
Conversely, using similar concentration arguments, we can derive the following upper-bound for $\tildecrew_{i}(t_0, t)$ for $i \in \{M, M+1\}$ that holds uniformly with probability at least $1 - \poly(M)\delta$:
\begin{align*}
    &\tildecrew_i(t_0, t) \\
    &\leq \sum_{\ell = t_0 + 1}^t r_\ell(a^i_\ell, x_\ell) + \Ocal\left( \sqrt{\frac{t - t_0}{\rho_i} \ln \frac{ \ln (t- t_0)}{\delta}}  + \frac{1}{\rho_i} \ln \frac{\ln (t - t_0)}{\delta}\right) \tag{\pref{lem:basic_concentration_adv}}\\
    &\leq \sum_{\ell = t_0 + 1}^t \bbE_{a \sim \pi_\ell^i}[r_\ell(a, x_\ell)] + \Ocal\left( \sqrt{\frac{t - t_0}{\rho_i} \ln \frac{ \ln (t- t_0)}{\delta}}  + \frac{1}{\rho_i} \ln \frac{\ln (t - t_0)}{\delta}\right) \tag{\pref{lem:basic_concentration_adv}}\\
    &\leq \sum_{\ell = t_0 + 1}^t \bbE_{a \sim \pi_\ell^i, x \sim \Dcal}[r(a, x)] +  \Ocal\left( \sqrt{\frac{t - t_0}{\rho_i} \ln \frac{ \ln (t- t_0)}{\delta}}  + \frac{1}{\rho_i} \ln \frac{\ln (t - t_0)}{\delta}\right)
    \tag{\pref{lem:xr_concentration}}\\
    &\leq \sum_{\ell = t_0 + 1}^t \bbE_{a \sim \pi_\star^i, x \sim \Dcal}[r(a, x)] +  \Ocal\left( \sqrt{\frac{t - t_0}{\rho_i} \ln \frac{ \ln (t- t_0)}{\delta}}  + \frac{1}{\rho_i} \ln \frac{\ln (t - t_0)}{\delta}\right)
    \tag{def. of $\pi^i_\star$}
\end{align*}
Note that $\Delta_{\wh \pi} = \bbE_{a \sim \pi_\star^M, x \sim \Dcal}[r(a, x)] - \bbE_{a \sim \pi_\star^{M+1}, x \sim \Dcal}[r(a, x)]$ which is either $\Delta$ if $\wh \pi = \pi_\star$ or $0$ if $\wh \pi \neq \pi^\star$. Combining the bounds on $\tildecrew_{i}$ with a union bound, we can derive the following deviation bound
\begin{align*}
&- \Ocal\left( \left(\frac{R(\wt \Pi_{M})}{\sqrt{\rho_M}} + \frac{1}{\sqrt{\rho_{M+1}}}\right) \sqrt{\frac{\ln \frac{t - t_0}{\delta}}{t - t_0}} + \left(\frac{1}{\rho_M} + \frac{1}{\rho_{M+1}}\right)  \frac{\ln \frac{\ln (t - t_0)}{\delta}}{t - t_0} \right)
\\
& \qquad \quad \leq  \frac{\tildecrew_{M}(t_0, t) - \tildecrew_{M+1}(t_0, t)}{t - t_0}
    - \Delta_{\wh \pi}
    \leq\\
    & +\Ocal\left( \left(\frac{R(\wt \Pi_{M+1})}{\sqrt{\rho_{M+1}}} + \frac{1}{\sqrt{\rho_{M}}}\right) \sqrt{\frac{\ln \frac{t - t_0}{\delta}}{t - t_0}} + \left(\frac{1}{\rho_M} + \frac{1}{\rho_{M+1}}\right)  \frac{\ln \frac{\ln (t - t_0)}{\delta}}{t - t_0} \right)~.
\end{align*}
We can further simplify those bounds by noting that $R(\wt \Pi_{M+1}) \leq R(\wt \Pi_{M})$ and thus also $\rho_{M+1} \geq \rho_{M}$ since $\wt \Pi_{M}$ is identical to $\wt \Pi_{M+1}$ but contains two more policies. Thus, we can bound the magnitude of the upper and lower bound further by 
\begin{align*}
    \Ocal\left( \frac{R(\wt \Pi_{M})}{\sqrt{\rho_M}} \sqrt{\frac{\ln \frac{t - t_0}{\delta}}{t - t_0}} + \frac{1}{\rho_M}   \frac{\ln \frac{\ln (t - t_0)}{\delta}}{t - t_0} \right).
\end{align*} 
We now rebind $\delta$ by $\frac{\delta}{n^2}$ and apply a union bound over all restarts of \textsf{Arbe-Gap}. Thus, we can choose a constant in the definition of 
\begin{align*}
    \textrm{W}(t_0, t) = \Theta\left( \frac{R(\wt \Pi_{M})}{\sqrt{\rho_M}} \sqrt{\frac{\ln \frac{n(t - t_0)}{\delta}}{t - t_0}} + \frac{1}{\rho_M}   \frac{\ln \frac{n\ln (t - t_0)}{\delta}}{t - t_0} \right)
\end{align*} 
large enough so that
\begin{align*}
   - \textrm{W}(t_0, t) \leq  \frac{\tildecrew_{M}(t_0, t) - \tildecrew_{M+1}(t_0, t)}{t - t_0}
    - \Delta_{\wh \pi}
    \leq \textrm{W}(t_0, t)
\end{align*}
holds for all $t$ in all possible restarts of \textsf{Arbe-Gap} with the desired $1 - \poly(M)\delta$ probability.
\end{proof}
The lemma above immediately implies the correctness of the first phase, in the sense that if the algorithm moves on to the second phase in a stochastic environment, the candidate policy has to be optimal and the gap estimate is accurate up to a multiplicative factor:

\begin{corollary}\label{cor:gap_estimate_correctness}
Consider a run of \pref{alg:gap_estimation_balanced} with inputs $n=1, t_0 = 0$, arbitrary policy policy $\wh \pi \in \Pi_M$ and $M$ base learners $\alg[1], \dots, \alg[M]$ with $1 \leq R(\wt \Pi_1) \leq R( \wt \Pi_2) \leq \dots \leq R(\wt \Pi_M)$ where $\wt \Pi_i$ is the extended version of policy class $\Pi_i$ with $(M+1-i)$ additional actions. Assume the environment $\bbB$ is stochastic and there is a policy $\pi_\star$ with gap $\Delta > 0$. Then with probability at least $1 - \poly(M)\delta$ the policy $\wh \pi$ and gap estimate $\wh \Delta$ passed to \textsf{Arbe-GapExploit} satisfy
\begin{align*}
    \wh \pi = \pi_\star \qquad \textrm{and} \qquad \wh \Delta \leq \Delta \leq 2\wh \Delta.    
\end{align*}
\end{corollary}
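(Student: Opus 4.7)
The plan is to derive the claim essentially as a short consequence of \pref{lem:valid_gap_bounds} combined with the precise form of the gap test in Line~\ref{lin:gap_test} of \pref{alg:gap_estimation_balanced}. First, I would invoke \pref{lem:valid_gap_bounds} to obtain, with probability at least $1 - \poly(M)\delta$, the two-sided confidence bound
\[
\wh \Delta_t \;\leq\; \Delta_{\wh \pi} \;\leq\; \wh \Delta_t + 2\,\textrm{W}(t_0, t)
\]
holding in every round $t$ and across all restarts of \textsf{Arbe-Gap}, where $\Delta_{\wh\pi} = \indicator{\wh\pi = \pi_\star}\,\Delta$. All subsequent reasoning is performed on this $1-\poly(M)\delta$ probability event.

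Next, let $t^\star$ denote the (random) round at which \textsf{Arbe-Gap} invokes \textsf{Arbe-GapExploit}, and let $\wh \pi$ and $\wh \Delta = \wh \Delta_{t^\star}$ be the corresponding inputs passed on. By the triggering condition of Line~\ref{lin:gap_test}, we have $2\,\textrm{W}(t_0, t^\star) \leq \wh \Delta_{t^\star}$. Combining this with the lower confidence bound $\wh \Delta_{t^\star} \leq \Delta_{\wh \pi}$ yields $\Delta_{\wh \pi} \geq 2\,\textrm{W}(t_0, t^\star) > 0$. Since $\Delta_{\wh \pi} \in \{0, \Delta\}$ by definition, this strictly positive quantity forces $\wh \pi = \pi_\star$ and therefore $\Delta_{\wh \pi} = \Delta$, proving the first half of the corollary.

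For the multiplicative accuracy of the gap estimate, the lower bound $\wh \Delta = \wh \Delta_{t^\star} \leq \Delta_{\wh \pi} = \Delta$ is immediate from the left inequality of \pref{lem:valid_gap_bounds}. For the matching upper bound, the right inequality of \pref{lem:valid_gap_bounds} gives
\[
\Delta \;=\; \Delta_{\wh \pi} \;\leq\; \wh \Delta_{t^\star} + 2\,\textrm{W}(t_0, t^\star) \;\leq\; \wh \Delta_{t^\star} + \wh \Delta_{t^\star} \;=\; 2\wh \Delta,
\]
where the second step again applies the triggering condition $2\textrm{W}(t_0, t^\star) \leq \wh \Delta_{t^\star}$. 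This establishes $\wh \Delta \leq \Delta \leq 2\wh \Delta$.

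The main (and essentially only) subtlety to guard against is ensuring that the high-probability event supplied by \pref{lem:valid_gap_bounds} genuinely covers the random stopping time $t^\star$ together with its enclosing restart index $n$ and starting time $t_0$. This is already built into \pref{lem:valid_gap_bounds} via its internal union bound over restarts (absorbed into the $\poly(M)$ factor), so no extra concentration work is needed; the corollary follows by simply evaluating that uniform bound at the (data-dependent) round where the algorithm commits to \textsf{Arbe-GapExploit}.
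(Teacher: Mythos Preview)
Your proposal is correct and follows essentially the same approach as the paper's proof: both invoke \pref{lem:valid_gap_bounds} to get the two-sided bound $\wh\Delta_t \leq \Delta_{\wh\pi} \leq \wh\Delta_t + 2W(t_0,t)$, then combine it with the triggering condition $2W(t_0,t)\leq \wh\Delta_t$ to conclude that $\Delta_{\wh\pi}>0$ (hence $\wh\pi=\pi_\star$) and that $\wh\Delta\leq\Delta\leq 2\wh\Delta$. Your added remark about the uniform event covering the random stopping time is a helpful clarification but is indeed already handled by the lemma, as you note.
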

\begin{proof}
The statement follows from \pref{lem:valid_gap_bounds} and the condition in Line~\ref{lin:gap_test} of \pref{alg:gap_estimation_balanced}. First, note that the test cannot trigger when $\wh \pi \neq \pi_\star$ since $\wh \Delta_t \leq 0$ in this case. Second, since $\wh \Delta$ satisfies $2 W(t_0, t) \leq \wh \Delta$ and $\wh \Delta \leq \Delta \leq \wh \Delta + 2 W(t_0, t)$ when the test triggers, we have
\begin{align*}
    \wh \Delta \leq \Delta \leq \wh \Delta + 2 W(t_0, t) \leq 2 \wh \Delta~,
\end{align*}
as claimed.
\end{proof}

We now move on to show that if there is a policy with a gap, the alorithm has to identify it within a certain number of rounds:
\begin{lemma}[\textsf{Arbe-Gap} selects the right candidate policy]
\label{lem:arbe_gap_policy_identification}
Consider a run of \pref{alg:gap_estimation_balanced} with inputs $t_0 = 0$, arbitrary policy policy $\wh \pi \in \Pi_M$ and $M$ base learners $\alg[1], \dots, \alg[M]$ with $1 \leq R(\wt \Pi_1) \leq R( \wt \Pi_2) \leq \dots \leq R(\wt \Pi_M)$ where $\wt \Pi_i$ is the extended version of policy class $\Pi_i$ with $(M+1-i)$ additional actions. Assume the environment $\bbB$ is stochastic and there is a policy $\pi_\star$ with gap $\Delta > 0$. Then with probability at least $1 - \poly(M)\delta$ the number of rounds until $\pi_\star$ is always chosen as the candidate policy $\wh \pi$ is bounded as 
\begin{align*}
\Ocal\left( \left( M^2   + \frac{R(\wt \Pi_{i_\star})^2}{R(\wt \Pi_{1})^2}   i_\star\right) \frac{R(\wt \Pi_{i_\star})^2 i_\star}{\Delta^2} \ln^2 \frac{M R(\wt \Pi_{i_\star})}{\Delta \delta}\right).
\end{align*}
\end{lemma}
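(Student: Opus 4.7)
The plan is to combine the adversarial regret guarantee of \textsf{Arbe-Gap} from \pref{lem:arbegap_adversarial_regret} with the observation that, in a stochastic environment with a unique gap of $\Delta$, every round in which a policy other than $\pi_\star$ is played contributes at least $\Delta$ to the pseudo-regret. Since the overall regret of \textsf{Arbe-Gap} grows only as $\tilde{\Ocal}(\sqrt{t})$, the number of suboptimal selections is at most $\tilde{\Ocal}(\sqrt{t}/\Delta)$. Once $t$ is large enough that this quantity falls below $t/4$, the policy $\pi_\star$ must have been played in strictly more than $3t/4$ of all rounds, which either coincides with $\wh \pi = \pi_\star$ already or triggers the candidate-policy test in Line~\ref{lin:policy_test} of \pref{alg:gap_estimation_balanced} with $\pi_\star$ as the new candidate.

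Concretely, let $N_t(\pi)$ denote the number of rounds $\ell \le t$ in which the policy actually executed (after resolving linked actions) equals $\pi$. The bound of \pref{lem:arbegap_adversarial_regret}, together with a standard Azuma--Hoeffding argument that converts $\regret_{\bbM}(t, \Pi_M)$ into $\pseudoregret_{\bbM}(t, \Pi_M)$ in i.i.d.\ environments (the difference is a bounded martingale-difference sum and is thus $\Ocal(\sqrt{t \ln(1/\delta)})$ with high probability), yields with probability $1-\poly(M)\delta$ that
\begin{align*}
\pseudoregret_{\bbM}(t, \Pi_M) \le C\sqrt{t}, \qquad C = \Ocal\!\left(\Bigl(M + \frac{R(\wt\Pi_{i_\star})}{R(\wt\Pi_{1})}\sqrt{i_\star}\Bigr) R(\wt\Pi_{i_\star}) \sqrt{(\ln t + i_\star)\ln(t/\delta)}\right),
\end{align*}
for all $t \in [2 i_\star, t_{\mathrm{gap}}]$. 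Since $\pi_\star$ is the unique optimal policy with gap $\Delta$, $\pseudoregret_{\bbM}(t, \Pi_M) \ge \Delta\,(t - N_t(\pi_\star))$, giving $t - N_t(\pi_\star) \le C\sqrt{t}/\Delta$. This is strictly less than $t/4$ whenever $t \ge 16\,C^2/\Delta^2$; expanding $C^2$ (using $(a+b)^2 \le 2a^2+2b^2$) and resolving the self-referential $\ln t$ via the standard fixed-point inequality ($T = \Omega(X \ln^2(X/\delta))$ suffices for $T \ge X \ln^2(T/\delta)$) produces precisely the threshold $T^\star$ stated in the lemma.

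It remains to argue that once $t \ge T^\star$ the candidate policy is and remains $\wh \pi = \pi_\star$. By the same bound, $N_t(\pi_\star) > 3t/4$ holds for \emph{every} $t \ge T^\star$, and since the non-negative counts $\{N_t(\pi)\}_{\pi \in \Pi_M}$ sum to $t$, no other policy can exceed $3t/4$; consequently, Line~\ref{lin:policy_test} can only ever fire with $\pi_\star$ as the new candidate. Hence at $T^\star$ the test triggers once if $\wh \pi \ne \pi_\star$ and sets $\wh \pi = \pi_\star$, after which the test never fires again and $\wh \pi = \pi_\star$ persists for the rest of the run. The main technical hurdle is the joint high-probability book-keeping across the $\Ocal(\ln t)$ candidate-policy restarts that may occur before $T^\star$: each restart invokes \pref{lem:arbegap_adversarial_regret} afresh, but as in the proof of that lemma we absorb the union bound by inflating the confidence widths by a constant factor so that the guarantee, and hence the pseudo-regret bound above, holds jointly across all restarts.
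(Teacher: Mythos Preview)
Your proposal is correct and follows essentially the same approach as the paper's proof: convert the adversarial regret bound of \pref{lem:arbegap_adversarial_regret} into a pseudo-regret bound via a martingale concentration argument, lower-bound the pseudo-regret by $\Delta\,(t - N_t(\pi_\star))$, and solve for the threshold $t$ at which $N_t(\pi_\star) > 3t/4$ forces and thereafter preserves $\wh\pi = \pi_\star$. The only cosmetic difference is that the paper first absorbs the $\sqrt{\ln t + i_\star}$ factor into a constant-times-$\ln(t/\delta)$ prefactor before invoking a dedicated threshold lemma (\pref{lem:concentration_threshold}), whereas you keep $C$ as a function of $t$ and resolve the self-reference at the end; the resulting bound is the same.
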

\begin{proof}
By \pref{lem:arbegap_adversarial_regret}, with probability at least $1 - \poly(M)\delta$ the regret of \textsf{Arbe-Gap} is bounded for all rounds $t \in [2 i_\star, t_{\mathrm{gap}}]$ as
\begin{align*}
     \regret_{\bbM}(t, \Pi_M) &= 
    \Ocal\left(\left(M R(\wt \Pi_{i_\star})  + \frac{R(\wt \Pi_{i_\star})^2}{R(\wt \Pi_{1})}  \sqrt{ i_\star}\right) \sqrt{t (\ln(t) + i_\star) \ln \frac{t}{\delta}}\right)\\
    &=
    \Ocal\left(\left(M R(\wt \Pi_{i_\star}) \sqrt{ i_\star}  + \frac{R(\wt \Pi_{i_\star})^2}{R(\wt \Pi_{1})}   i_\star\right) \sqrt{t} \ln \frac{t}{\delta}\right)~.
\end{align*}
By the concentration argument in \pref{lem:regret_pseudoregret_concentration}, the same bound can be established for the pseudo-regret
\begin{align}
    \pseudoregret_{\bbM}(t, \Pi_M) &\leq  c\left(M R(\wt \Pi_{i_\star}) \sqrt{ i_\star}  + \frac{R(\wt \Pi_{i_\star})^2}{R(\wt \Pi_{1})}   i_\star\right) \sqrt{t} \ln \frac{t}{\delta}
    \label{eqn:pseudoregb}
\end{align}
for some sufficiently large absolute constant $c$. Now denote $\gamma = c\left(M R(\wt \Pi_{i_\star}) \sqrt{ i_\star}  + \frac{R(\wt \Pi_{i_\star})^2}{R(\wt \Pi_{1})}   i_\star\right)$ and consider the value 
\begin{align*}
    t' = \frac{16^2  \gamma^2}{\Delta^2} \ln^2 \frac{8 \gamma}{\Delta \delta}~.
\end{align*}
Then by the properties of $\frac{\ln(t)}{\sqrt{t}}$ investigated in \pref{lem:concentration_threshold}, we can bound for $t \geq t'$
\begin{align*}
    \pseudoregret_{\bbM}(t, \Pi_M)
    &\leq \gamma \sqrt{t} \ln \frac{t}{\delta} \tag{\pref{eqn:pseudoregb}}\\
    &= \gamma t \frac{\ln(t / \delta)}{\sqrt{t}}
    \leq \gamma t \frac{\Delta}{4 \gamma} \tag{\pref{lem:concentration_threshold}}\\
    &= \frac{t}{4} \Delta~.
\end{align*}
We have shown that the adversarial regret rate implies that the pseudo-regret for rounds $t \geq t'$ has to be bounded by $\frac{t}{4} \Delta$. Since each policy but $\pi^\star$ incurs a pseudo-regret at least $\Delta$ per round, \textsf{Arbe-Gap} has to select $\pi^\star$ in at least $\frac{3}{4}t$ among all $t$ rounds to satisfy this pseudo-regret bound. As a result, a switch of the candidate policy to $\pi^\star$ would be triggered if it is not already the candidate policy. Further, no other policy can be selected more than a quarter of the times, thus the candidate policy has to be $\pi^\star$ in all rounds $t \geq t'$.
\end{proof}

\begin{lemma}[Number of Rounds in the First Phase]
\label{lem:arbe_gap_phase1_length}
Consider a run of \pref{alg:gap_estimation_balanced} with inputs $t_0 = 0, n=1$, arbitrary policy $\wh \pi \in \Pi_M$ and $M$ base learners $\alg[1], \dots, \alg[M]$ with $1 \leq R(\wt \Pi_1) \leq R( \wt \Pi_2) \leq \dots \leq R(\wt \Pi_M)$ where $\wt \Pi_i$ is the extended version of policy class $\Pi_i$ with $(M+1-i)$ additional actions. Assume the environment $\bbB$ is stochastic and there is a policy $\pi_\star$ with gap $\Delta > 0$. Then with probability at least $1 - \poly(M)\delta$ the number of rounds until the algorithm enters the second phase by calling $\textsf{ArbeGap-Exploit}$ is bounded as
\begin{align*}
\Ocal \left( \left(\frac{ R(\wt \Pi_M)^4}{R(\wt \Pi_1)^2 } 
    +  M  R(\wt \Pi_{i_\star})^2  \right)
    \frac{Mi_\star }{\Delta^2}
    \ln^2 \frac{M R(\wt \Pi_{i_\star})}{\Delta \delta}
    \right)~.
\end{align*}
\end{lemma}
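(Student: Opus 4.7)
The plan is to combine three ingredients already developed in the section: the candidate-policy identification time from \pref{lem:arbe_gap_policy_identification}, the gap confidence bound from \pref{lem:valid_gap_bounds}, and the restart count from \pref{lem:arbe_gap_restarts}. First I would invoke \pref{lem:arbe_gap_policy_identification} to conclude that with probability $1 - \poly(M)\delta$, after at most
\[
t_1 \;=\; \Ocal\Bigl(\bigl(M^2 + \tfrac{R(\wt\Pi_{i_\star})^2}{R(\wt\Pi_1)^2}\,i_\star\bigr)\tfrac{R(\wt\Pi_{i_\star})^2 i_\star}{\Delta^2}\,\ln^2\tfrac{MR(\wt\Pi_{i_\star})}{\Delta\delta}\Bigr)
\]
rounds the candidate policy $\wh\pi$ equals $\pi_\star$ in every subsequent round, so from that point on no further restart can be triggered by Line~\ref{lin:policy_test}. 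By \pref{lem:arbe_gap_restarts}, only at most $i_\star - 1$ elimination restarts (Line~\ref{lin:arbegap2}) can occur throughout the run; crucially, each such restart \emph{preserves} the candidate $\wh\pi$, so once $\wh\pi = \pi_\star$ it stays so across the remaining restarts.

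Next I would compute how long after a restart it takes for the gap test in Line~\ref{lin:gap_test} to fire, conditioned on $\wh\pi = \pi_\star$. By \pref{lem:valid_gap_bounds} we have $\wh\Delta_t \geq \Delta - 2\textrm{W}(t_0,t)$, and since $\Delta \leq 1 \leq R(\wt\Pi_M)^2$ the upper cap in the test is non-binding. Thus it suffices to ensure $\textrm{W}(t_0,t) \leq \Delta/4$, which forces $\wh\Delta_t \geq \Delta/2 \geq 2\textrm{W}(t_0,t)$. Using $\rho_M^{-1} \leq (M+1)\,R(\wt\Pi_M)^2/R(\wt\Pi_1)^2$ (from $\rho_M = R(\wt\Pi_M)^{-2}/\sum_j R(\wt\Pi_j)^{-2}$ and $R(\wt\Pi_j) \geq R(\wt\Pi_1)$), and noting $n \leq \ln t + i_\star$, the defining expression for $\textrm{W}(t_0,t)$ is at most
\[
\Ocal\!\left(\tfrac{\sqrt{M}\,R(\wt\Pi_M)^2}{R(\wt\Pi_1)}\sqrt{\tfrac{\ln(n(t-t_0)/\delta)}{t-t_0}} \;+\; \tfrac{M R(\wt\Pi_M)^2}{R(\wt\Pi_1)^2}\tfrac{\ln(n\ln(t-t_0)/\delta)}{t-t_0}\right).
\]
Applying \pref{lem:concentration_threshold} to invert the $\sqrt{\ln(\cdot)/(\cdot)}$ dominant term and then absorbing the lower-order term, one obtains a bound on the waiting time of
\[
K \;=\; \Ocal\!\left(\tfrac{M\,R(\wt\Pi_M)^4}{R(\wt\Pi_1)^2\,\Delta^2}\,\ln^2\tfrac{M\,R(\wt\Pi_M)}{\Delta\delta}\right)
\]
rounds for $\textrm{W}(t_0,t) \leq \Delta/4$ to be achieved.

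Finally I would do the restart bookkeeping. Partition all rounds after $t_1$ into the at most $i_\star$ intervals delimited by the remaining elimination restarts plus the (potential) interval ending at $t_{\mathrm{gap}}$. Within any such interval that does \emph{not} end in a gap-test trigger, $\textrm{W}(t_0,t)$ must stay above $\Delta/4$ throughout, so by the previous paragraph the interval has length at most $K+1$. Therefore the total time after $t_1$ is at most $i_\star(K+1)$, and adding $t_1$ yields $t_{\mathrm{gap}} \leq t_1 + i_\star K + i_\star$. Using $i_\star \leq M$ and $R(\wt\Pi_{i_\star}) \leq R(\wt\Pi_M)$, the second term inside $t_1$ is already dominated by $i_\star K$, and combining everything gives the claimed bound
\[
t_{\mathrm{gap}} \;=\; \Ocal\!\left(\bigl(\tfrac{R(\wt\Pi_M)^4}{R(\wt\Pi_1)^2} + M\,R(\wt\Pi_{i_\star})^2\bigr)\tfrac{M i_\star}{\Delta^2}\,\ln^2\tfrac{M\,R(\wt\Pi_{i_\star})}{\Delta\delta}\right).
\]
The main obstacle is the interaction between the two restart mechanisms: in principle elimination restarts could reset the clock used by $\textrm{W}(t_0,t)$ arbitrarily often. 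The point that makes the proof go through is that eliminations are globally bounded by $i_\star - 1$ and they preserve the identified candidate $\pi_\star$, so a geometric blow-up is avoided and at most a multiplicative $i_\star$ factor is lost on top of the single-epoch waiting time $K$.
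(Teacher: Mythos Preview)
Your proposal is correct and follows essentially the same route as the paper: both decompose $t_{\mathrm{gap}}$ as (policy-identification time from \pref{lem:arbe_gap_policy_identification}) $+$ $i_\star \times$ (single-epoch waiting time for the gap test), using \pref{lem:valid_gap_bounds} to control the latter and \pref{lem:arbe_gap_restarts} to cap the number of remaining elimination restarts. The only noteworthy difference is that you invert the $\textrm{W}(t_0,t)\le \Delta/4$ condition via \pref{lem:concentration_threshold} (for $\ln(t/\delta)/\sqrt t$), yielding a $\ln^2$ factor in $K$, whereas the paper inverts the squared condition $\ln(k/(\delta/n))/k \le \Theta(\Delta^2\rho_M/R(\wt\Pi_M)^2)$ via \pref{lem:concentration_threshold2} and obtains only a single $\ln$ in $K$; since $t_1$ already carries $\ln^2$, both versions land on the same final bound.
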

\begin{proof}
By \pref{lem:arbe_gap_policy_identification}, after a certain number of rounds $t_{pol}$, the candidate policy has to be $\pi_\star$ at all rounds. Hence, there can be no restarts due to candidate policy switches anymore. According to \pref{lem:arbe_gap_restarts}, there can only be up to $i_\star$ restarts after round $t_{pol}$ due to elimination of a base learner. We will in the following show that if \textsf{ArbeGap} has been (re)started with candidate policy $\pi_\star$ and there are no other restarts in the meantime, it has to switch to the second phase within a certain number of rounds $k$. The total number of rounds in the first phase, is then bounded by
\begin{align*}
    t_{pol} + i_\star \cdot k~.
\end{align*}
We will now show a bound on $k$. By \pref{lem:valid_gap_bounds}, we have at all times that $\wh \Delta_t \leq \Delta$ (since $\wh \pi = \pi_\star$ by assumption) and the algorithm moves on to the second phase as soon as $2W(t_0, t) \leq \wh \Delta_t$. Note that the condition of $\wh \Delta_t \leq \Delta \leq 1 \leq R(\wt \Pi_M)^2$ is always satisfies in stochastic environments. Hence,the algorithm cannot stay in the first phase if $2W(t_0, t) \leq \Delta$ or, plugging in the definition of $W(t_0, t)$ with an appropriate absolute constant $c$
\begin{align*}
\frac{cR(\wt \Pi_{M})}{\sqrt{\rho_M}} \sqrt{\frac{\ln \frac{n(t - t_0)}{\delta}}{t - t_0}} + \frac{c}{\rho_M}   \frac{\ln \frac{n\ln (t - t_0)}{\delta}}{t - t_0} \leq \Delta.
\end{align*}
Hence, we can obtain an value for the bound $k$ by identifying a value that satisfies
\begin{align*}
    \frac{\ln \frac{k}{\delta / n}}{k} \leq \frac{\Delta \rho_M}{2c} \quad \textrm{and}
    \frac{\ln \frac{k}{\delta / n}}{k} \leq \frac{\Delta^2 \rho_M}{4c^2 R(\wt \Pi_M)^2}~.
\end{align*}
Since $\Delta \in (0, 1]$ and $c, R(\wt \Pi_M) \geq 1$ without loss of generality, it is sufficient to only consider the condition on the right. By \pref{lem:concentration_threshold2}, we can set $k$ as
\begin{align*}
    k = \frac{16c^2 R(\wt \Pi_M)^2}{\Delta^2 \rho_M} \ln \left( \frac{2n}{\delta}\frac{4c^2 R(\wt \Pi_M)^2}{\Delta^2 \rho_M} \right)
    = \Ocal \left( \frac{M R(\wt \Pi_M)^4}{R(\wt \Pi_1)^2 \Delta^2} \ln \frac{ R(\wt \Pi_M)}{\delta \Delta} \right)~,
\end{align*}
where we used the fact that $n \leq \ln  t_{pol} + i_\star = \Ocal\left(\frac{R(\wt \Pi_M)}{\delta \Delta}\right)$.
Hence, the total length of the first phase can be at most
\begin{align*}
    t_{pol} + i_\star \cdot k
    &= \Ocal \left( \frac{i_\star M R(\wt \Pi_M)^4}{R(\wt \Pi_1)^2 \Delta^2} \ln \frac{ R(\wt \Pi_M)}{\Delta \delta } 
    + \left( M^2   + \frac{R(\wt \Pi_{i_\star})^2}{R(\wt \Pi_{1})^2}   i_\star\right) \frac{R(\wt \Pi_{i_\star})^2 i_\star}{\Delta^2} \ln^2 \frac{M R(\wt \Pi_{i_\star})}{\Delta \delta}
    \right)\\
    &= \Ocal \left( \left(\frac{ R(\wt \Pi_M)^4}{R(\wt \Pi_1)^2 } 
    +  M  R(\wt \Pi_{i_\star})^2  \right)
    \frac{Mi_\star }{\Delta^2}
    \ln^2 \frac{M R(\wt \Pi_{i_\star})}{\Delta \delta}
    \right)\\
    &= \Ocal \left( \frac{M^2 i_\star R(\wt \Pi_M)^4}{R(\wt \Pi_1)^2 \Delta^2} \ln^2 \frac{M R(\wt \Pi_M)}{\Delta \delta }
    \right)~,
\end{align*}
as claimed.
\end{proof}

\begin{lemma}[Pseudo-Regret of the First Phase]
\label{lem:arbe_gap_phase1_pseudoregret}
Consider a run of \pref{alg:gap_estimation_balanced} with inputs $t_0 = 0, n=1$, arbitrary policy $\wh \pi \in \Pi_M$ and $M$ base learners $\alg[1], \dots, \alg[M]$ with $1 \leq R(\wt \Pi_1) \leq R( \wt \Pi_2) \leq \dots \leq R(\wt \Pi_M)$ where $\wt \Pi_i$ is the extended version of policy class $\Pi_i$ with $(M+1-i)$ additional actions. Assume the environment $\bbB$ is stochastic and there is a policy $\pi_\star$ with gap $\Delta > 0$.
Let $t_{gap}$ be the round where  $\textsf{ArbeGap-Exploit}$ was called. Then with probability at least $1 - \poly(M)\delta$, the pseudo-regret in all rounds $t \in [2 i_\star, t_{gap}]$ is bounded as
\begin{align*}
\pseudoregret_{\bbM}(t, \Pi_M) =
\Ocal\left(
    \frac{ R(\wt \Pi_{i_\star})^2 R(\wt \Pi_M)^2}{R(\wt \Pi_1)^2}\frac{M^2 i_\star}{\Delta} \ln \left(\frac{M R(\wt \Pi_M)}{\Delta \delta }\right)
     \ln \frac{t}{\delta}\right).
\end{align*}
\end{lemma}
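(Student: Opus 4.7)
The plan is to reduce this pseudo-regret bound directly to two results already established earlier in the appendix: the adversarial regret guarantee for the first phase (\pref{lem:arbegap_adversarial_regret}) and the bound on the length of the first phase (\pref{lem:arbe_gap_phase1_length}). Neither step involves new concentration work on the algorithm's internal tests, so the entire argument is essentially an algebraic reduction combined with one regret-to-pseudo-regret passage.

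First, I would invoke \pref{lem:arbegap_adversarial_regret} on the current run. With probability at least $1-\poly(M)\delta$, simultaneously for every $t\in[2i_\star,t_{\mathrm{gap}}]$,
\begin{align*}
\regret_{\bbM}(t,\Pi_M)
&= \Ocal\!\left(\!\Big(M+\tfrac{R(\wt\Pi_{i_\star})}{R(\wt\Pi_1)}\sqrt{i_\star}\Big)R(\wt\Pi_{i_\star})\sqrt{t(\ln t+i_\star)\ln\tfrac{t}{\delta}}\right).
\end{align*}
The second step converts this high-probability realised-regret bound into the corresponding bound on pseudo-regret. This uses a standard Azuma--Hoeffding argument on the martingale difference $\bbE_{x\sim\Dcal,a\sim\pi_\ell}[r(a,x)] - r_\ell(a_\ell,x_\ell)$ (this is exactly the content of the auxiliary lemma \texttt{regret\_pseudoregret\_concentration} that was used in the proof of \pref{lem:arbe_gap_policy_identification}). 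Hence, with probability at least $1-\poly(M)\delta$, the same bound holds for $\pseudoregret_{\bbM}(t,\Pi_M)$ on the full interval $[2i_\star,t_{\mathrm{gap}}]$.

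Next, since $t\le t_{\mathrm{gap}}$, I would monotonically upper-bound $\sqrt{t}$ by $\sqrt{t_{\mathrm{gap}}}$ in this expression. Then I would plug in the bound from \pref{lem:arbe_gap_phase1_length}, which yields
\begin{align*}
\sqrt{t_{\mathrm{gap}}}
=\Ocal\!\left(\tfrac{M\sqrt{i_\star}\,R(\wt\Pi_M)^{2}}{R(\wt\Pi_1)\,\Delta}\,\ln\tfrac{M R(\wt\Pi_M)}{\Delta\delta}\right).
\end{align*}
Multiplying this through the prefactor $\bigl(M+\tfrac{R(\wt\Pi_{i_\star})}{R(\wt\Pi_1)}\sqrt{i_\star}\bigr)R(\wt\Pi_{i_\star})$ produces two summands. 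The first is $\tfrac{M^2\sqrt{i_\star}\,R(\wt\Pi_{i_\star})R(\wt\Pi_M)^2}{R(\wt\Pi_1)\Delta}$ and the second is $\tfrac{M i_\star R(\wt\Pi_{i_\star})^2 R(\wt\Pi_M)^2}{R(\wt\Pi_1)^2\Delta}$; both are dominated by $\tfrac{R(\wt\Pi_{i_\star})^2 R(\wt\Pi_M)^2}{R(\wt\Pi_1)^2}\tfrac{M^2 i_\star}{\Delta}$ once one uses the monotonicity $R(\wt\Pi_{i_\star})\ge R(\wt\Pi_1)$.

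The remaining factor $\sqrt{(\ln t+i_\star)\ln(t/\delta)}$ is absorbed by noting that the statement writes the logarithmic part as $\ln\!\bigl(\tfrac{MR(\wt\Pi_M)}{\Delta\delta}\bigr)\ln\tfrac{t}{\delta}$, which dominates $\sqrt{(\ln t+i_\star)\ln(t/\delta)}\cdot\ln\tfrac{MR(\wt\Pi_M)}{\Delta\delta}$ (using $\sqrt{ab}\le a+b$ and that $i_\star\le M$ can be absorbed into the $\ln(MR(\wt\Pi_M)/(\Delta\delta))$ factor). This gives exactly the claimed bound. The only genuine obstacle is purely bookkeeping: making sure that the $\poly(M)\delta$ failure budgets of the two ingredient lemmas compose correctly and that the various logarithmic factors are absorbed in the right places; no new probabilistic or algorithmic argument is required.
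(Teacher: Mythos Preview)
Your approach is essentially identical to the paper's: invoke \pref{lem:arbegap_adversarial_regret}, pass to pseudo-regret via \pref{lem:regret_pseudoregret_concentration}, and substitute the bound on $t_{\mathrm{gap}}$ from \pref{lem:arbe_gap_phase1_length}. One bookkeeping point is off: in your last paragraph you claim the residual $\sqrt{(\ln t+i_\star)\ln(t/\delta)}$ is dominated by $\ln(t/\delta)$ after ``absorbing $i_\star\le M$ into the $\ln(MR(\wt\Pi_M)/(\Delta\delta))$ factor.'' That absorption is not valid---$i_\star$ is polynomial in $M$, not logarithmic, so $\sqrt{i_\star}$ cannot be hidden in a log. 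The correct route (which the paper takes) is to pair the $\sqrt{i_\star}$ already present in the prefactor of $\sqrt{t_{\mathrm{gap}}}$ with $\sqrt{\ln t+i_\star}$ \emph{before} multiplying out: $\sqrt{i_\star}\sqrt{(\ln t+i_\star)\ln(t/\delta)}\le (i_\star+\sqrt{i_\star\ln t})\sqrt{\ln(t/\delta)}=\Ocal(i_\star\ln(t/\delta))$. Equivalently, in your ordering, the leftover $\sqrt{i_\star}$ can be absorbed into the \emph{polynomial} prefactor (your two summands still sit below $\tfrac{R(\wt\Pi_{i_\star})^2R(\wt\Pi_M)^2}{R(\wt\Pi_1)^2}\tfrac{M^2i_\star}{\Delta}$ after multiplying by $\sqrt{i_\star}$, using $\sqrt{i_\star}\le M$ and $R(\wt\Pi_{i_\star})\ge R(\wt\Pi_1)$), just not into the logarithm. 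With that fix the argument goes through exactly as you outline.
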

\begin{proof}
First, we bound the pseudo-regret by regret through a simple concentration argument in \pref{lem:regret_pseudoregret_concentration}
\begin{align*}
    \pseudoregret_{\bbM}(t, \Pi_M) \leq \regret_{\bbM}(t, \Pi_M) + \Ocal\left(\sqrt{t \ln \frac{\ln t}{\delta}} \right).
\end{align*}
Next, we bound the regret by \pref{lem:arbegap_adversarial_regret} which holds for any environment (adversarial but also stochastic). This yields
\begin{align*}
    \pseudoregret_{\bbM}(t, \Pi_M) = \Ocal\left(\left(M R(\wt \Pi_{i_\star})  + \frac{R(\wt \Pi_{i_\star})^2}{R(\wt \Pi_{1})}  \sqrt{ i_\star}\right) \sqrt{t (\ln(t) + i_\star) \ln \frac{t}{\delta}}\right)
\end{align*}
and finally, we use the bound on $t \leq t_{gap}$, the length of the first phase from \pref{lem:arbe_gap_phase1_length} to replace $t ( \ln(t) + i_\star)$ above which gives
\begin{align*}
    &\pseudoregret_{\bbM}(t, \Pi_M)\\
    &\ = \Ocal\left(\left(M R(\wt \Pi_{i_\star})  + \frac{R(\wt \Pi_{i_\star})^2}{R(\wt \Pi_{1})}  \sqrt{ i_\star}\right) 
    \frac{ R(\wt \Pi_M)^2}{R(\wt \Pi_1) } 
    \frac{M\sqrt{i_\star} }{\Delta}
     \ln \left(\frac{M R(\wt \Pi_M)}{\Delta \delta }\right)
    \sqrt{
    (\ln(t) + i_\star) \ln \frac{t}{\delta}}\right)\\
    &\ = 
    \Ocal\left(\left(M   + \frac{R(\wt \Pi_{i_\star})}{R(\wt \Pi_{1})}  \sqrt{ i_\star}\right) 
    \frac{M i_\star R(\wt \Pi_{i_\star}) R(\wt \Pi_M)^2}{R(\wt \Pi_1) \Delta} \ln \left(\frac{M R(\wt \Pi_M)}{\Delta \delta }\right)
     \ln \frac{t}{\delta}\right)\\
     &\ = 
    \Ocal\left(
    \frac{M^2 i_\star R(\wt \Pi_{i_\star})^2 R(\wt \Pi_M)^2}{R(\wt \Pi_1)^2 \Delta} \ln \left(\frac{M R(\wt \Pi_M)}{\Delta \delta }\right)
     \ln \frac{t}{\delta}\right)~,
\end{align*}
as claimed.
\end{proof}

\subsection{Analysis of \textsf{Arbe-GapExploit}}
\label{app:proofs_gapexploit}
In this section we prove the following result:
\mainresultexploitation*

\subsubsection{Guarantees for stochastic environments}
\begin{lemma}[\pref{alg:exploitation_subroutine} does not terminate in stochastic environemnts]\label{lem:exploit_phase_confband}
Assume the environment is stochastic and there is an optimal policy $\pi_\star$ with a gap $\Delta$ compared to the best policy in $\Pi_{\alg}$. If \pref{alg:exploitation_subroutine} is called with inputs $\wh \pi = \pi_\star$ and $\wh \Delta \leq \Delta \leq 2 \wh \Delta$, then with probability at least $1 - \Ocal(\delta)$ it never terminates.
\end{lemma}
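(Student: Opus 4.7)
The plan is to show that under the stochastic assumption with correct inputs, the two termination tests in Lines~\ref{line::exploit_test_one} and~\ref{line::exploit_test_two} of \pref{alg:exploitation_subroutine} never fire, by sandwiching the quantity $\frac{\tildecrew_0(t_e+1,t) - \tildecrew_1(t_e+1,t)}{t - t_e}$ tightly around the true gap $\Delta$ within each epoch. The basic intuition: $\tildecrew_0$ is an unbiased estimator of the cumulative reward of $\wh \pi = \pi_\star$, while $\tildecrew_1$ is an importance-weighted estimator of the cumulative reward of $\alg$, which by h-stability is close to the best policy in $\Pi_{\alg}$. Since $\pi_\star$ has gap $\Delta$ over $\Pi_{\alg}$, the difference should be approximately $\Delta$.

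I would fix an epoch $e$ with length $k_e$, learner probability $\rho^e$, and failure probability $\delta_e = \delta/(e+1)^2$. First, applying \pref{lem:basic_concentration_adv}-style Bernstein concentration (with $b_t \sim \mathrm{Bernoulli}(1-\rho^e)$ for $\pi_\star$ and $b_t \sim \mathrm{Bernoulli}(\rho^e)$ for $\alg$), I would obtain with probability $1 - \Ocal(\delta_e)$ and uniformly for all $t \in (t_e, t_{e+1}]$:
\begin{align*}
\left| \tfrac{\tildecrew_0(t_e+1, t)}{t - t_e} - \bbE_{a \sim \pi_\star, x\sim \Dcal}[r(a,x)] \right| &= \Ocal\!\left(\sqrt{\tfrac{\ln (\ln(t-t_e)/\delta_e)}{t - t_e}}\right), \\
\left| \tfrac{\tildecrew_1(t_e+1, t)}{t - t_e} - \tfrac{1}{t-t_e}\sum_{\ell=t_e+1}^t \bbE_{a \sim \pi^1_\ell, x\sim\Dcal}[r(a,x)] \right| &= \Ocal\!\left(\sqrt{\tfrac{\ln(\ln(t-t_e)/\delta_e)}{\rho^e(t-t_e)}} + \tfrac{\ln(\ln(t-t_e)/\delta_e)}{\rho^e(t-t_e)}\right).
\end{align*}
Both deviations are absorbed into $V(t) = \Theta\bigl(R(\Pi_{\alg})\sqrt{\ln(1/\delta_e)/(\rho^e(t-t_e))} + \ln(\ln/\delta_e)/(t-t_e)\bigr)$ for an appropriate absolute constant (note $R(\Pi_{\alg}) \geq 1$).

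Next, I would invoke the h-stability of $\alg$ on its (restarted) run with failure probability $\delta_e$. Under the extended interaction protocol of \pref{def:high_probability_stability} with $\rho = \rho^e$, this gives
\[
\max_{\pi \in \Pi_{\alg}} \tfrac{1}{t-t_e}\sum_{\ell=t_e+1}^t \bbE_{a \sim \pi, x \sim \Dcal}[r(a,x)] - \tfrac{1}{t-t_e}\sum_{\ell=t_e+1}^t \bbE_{a \sim \pi^1_\ell, x \sim \Dcal}[r(a,x)] = \Ocal\!\left(R(\Pi_{\alg})\sqrt{\tfrac{\ln((t-t_e)/\delta_e)}{\rho^e(t-t_e)}}\right),
\]
again bounded by (a constant multiple of) $V(t)$. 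Combining the three bounds and using that $\pi_\star \notin \Pi_{\alg}$ has gap exactly $\Delta$ over the best policy in $\Pi_{\alg}$ yields, with probability $1 - \Ocal(\delta_e)$ and simultaneously for all $t \in (t_e, t_{e+1}]$,
\[
\Delta - V(t) \;\leq\; \tfrac{\tildecrew_0(t_e+1,t) - \tildecrew_1(t_e+1,t)}{t - t_e} \;\leq\; \Delta + V(t),
\]
after suitably enlarging the constant hidden in the definition of $V$.

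Given this sandwich and the input assumption $\wh\Delta \leq \Delta \leq 2\wh\Delta$, the test on Line~\ref{line::exploit_test_one} would require $\Delta - V(t) < \wh \Delta - V(t)$, i.e., $\Delta < \wh \Delta$, which contradicts $\wh \Delta \leq \Delta$. Similarly the test on Line~\ref{line::exploit_test_two} would require $\Delta + V(t) > 4 \wh \Delta + V(t)$, i.e., $\Delta > 4 \wh \Delta$, which contradicts $\Delta \leq 2 \wh \Delta$. Hence neither termination condition fires within epoch $e$. Finally, I would take a union bound over epochs: since $\sum_{e \geq 0} \delta_e = \delta \sum_{e \geq 0} (e+1)^{-2} = \Ocal(\delta)$, the algorithm never terminates with probability $1 - \Ocal(\delta)$. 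The main obstacle is simply verifying that the explicit constants in the algorithm's definition of $V(t)$, $k_0$, and $\rho^e$ are large enough to absorb every deviation term on the left-hand side of the sandwich into a single $V(t)$; this is a bookkeeping exercise rather than a conceptual one.
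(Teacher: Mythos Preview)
Your proposal is correct and follows essentially the same approach as the paper: concentrate $\tildecrew_0$ and $\tildecrew_1$ around the respective pseudo-rewards, invoke h-stability of $\alg$ to relate $\alg$'s policies to the best policy in $\Pi_{\alg}$, use the gap assumption to sandwich the average difference around $\Delta$, apply $\wh\Delta \leq \Delta \leq 2\wh\Delta$ to rule out both tests, and finish with a union bound over epochs via $\sum_e \delta_e = \Ocal(\delta)$. The only cosmetic difference is that the paper bounds directly in terms of $\wh\Delta$ and $2\wh\Delta$ rather than first sandwiching around $\Delta$; your framing is equally valid and your closing remark about absolute-constant bookkeeping in $V(t)$ is exactly the point where the argument needs care.
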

\begin{proof}
Let $e$ be an arbitrary epoch. By following the steps of \pref{lem:basic_concentration_adv}, we can show that with probability at least $1 - \Ocal(\delta_e)$ for all time steps in epoch $e$
\begin{align*}
   \left| \tildecrew_1(t_e+1, t) - \sum_{\ell = t_e + 1}^t r_\ell(a_\ell^1, x_\ell) \right| &\leq \Ocal \left(\sqrt{\frac{t - t_e}{\rho^e} \ln \frac{\ln (t - t_e)}{\delta_e}} + \frac{1}{\rho^e} \ln \frac{\ln (t - t_e)}{\delta_e}\right)\\
      \left| \tildecrew_0(t_e+1, t) - \sum_{\ell = t_e + 1}^t r_\ell(a_\ell^0, x_\ell) \right| &\leq \Ocal \left(\sqrt{\frac{t - t_e}{1 - \rho^e} \ln \frac{\ln (t - t_e)}{\delta_e}} + \frac{1}{1 - \rho^e} \ln \frac{\ln (t - t_e)}{\delta_e}\right)
\end{align*}
and for $i \in \{0, 1\}$
\begin{align*}
    \left| \sum_{\ell = t_e + 1}^t r_\ell(a_\ell^i, x_\ell) - \sum_{\ell = t_e + 1}^t \bbE_{a \sim \pi_\ell^i}[r_\ell(a, x_\ell)\right| \leq \Ocal \left(\sqrt{(t - t_e) \ln \frac{\ln (t - t_e)}{\delta_e}}\right).
\end{align*}
Combining these bound together with \pref{lem:regret_pseudoregret_concentration}, we have with $\pi_{\star, \alg}$ being the best policy in $\Pi_{\alg}$
\begin{align*}
    &\tildecrew_0(t_e + 1, t) - \tildecrew_1(t_e + 1, t) \\
    & \leq 
     \sum_{\ell = t_e + 1}^t \left[\bbE_{a \sim \pi_\ell^0, x \sim \Dcal}[r(a, x)] - \bbE_{a \sim \pi_\ell^1, x \sim \Dcal}[r(a, x) ] \right] \\
     & \qquad +
     \Ocal \left(\sqrt{\frac{t - t_e}{\rho^e} \ln \frac{\ln (t - t_e)}{\delta_e}} + \frac{1}{\rho^e} \ln \frac{\ln (t - t_e)}{\delta_e}\right)
     \\
     & = \sum_{\ell = t_e + 1}^t \left[\bbE_{a \sim \pi_\star, x \sim \Dcal}[r(a, x)] - 
     \bbE_{a \sim \pi_{\star, \alg}, x \sim \Dcal}[r(a, x)] \right] + \Ocal \left(\sqrt{\frac{t - t_e}{\rho^e} \ln \frac{\ln (t - t_e)}{\delta_e}}\right) 
     \\ & \qquad
     +
     \sum_{\ell = t_e + 1}^t \left[
     \bbE_{a \sim \pi_{\star, \alg}, x \sim \Dcal}[r(a, x)] 
     -\bbE_{a \sim \pi_\ell^1, x \sim \Dcal}[r(a, x) ] \right] + \Ocal \left(\frac{1}{\rho^e} \ln \frac{\ln (t - t_e)}{\delta_e}\right)\\
     & = \Delta (t - t_e) + \pseudoregret_{\alg}([t_e+1, t], \Pi_{\alg})
     +
     \Ocal \left(\sqrt{\frac{t - t_e}{\rho^e} \ln \frac{\ln (t - t_e)}{\delta_e}} + \frac{1}{\rho^e} \ln \frac{\ln (t - t_e)}{\delta_e}\right)
     \\
     & \leq \Delta (t - t_e) + \regret_{\alg}([t_e+1, t], \Pi_{\alg}) +
     \Ocal \left(\sqrt{\frac{t - t_e}{\rho^e} \ln \frac{\ln (t - t_e)}{\delta_e}} + \frac{1}{\rho^e} \ln \frac{\ln (t - t_e)}{\delta_e}\right) \tag{\pref{lem:regret_pseudoregret_concentration}}\\
     & \leq \Delta (t - t_e) + \Ocal\left( R(\Pi_{\alg}) \sqrt{\frac{t - t_e}{\rho^e}\ln \frac{t - t_e}{\delta_e}} + \frac{1}{\rho^e} \ln \frac{\ln (t - t_e)}{\delta_e}\right)
     \tag{$\alg$ is h-stable}\\
     & \leq 2 \wh \Delta (t - t_e) + \Ocal\left( R(\Pi_{\alg}) \sqrt{\frac{t - t_e}{\rho^e}\ln \frac{t - t_e}{\delta_e}} + \frac{1}{\rho^e} \ln \frac{\ln (t - t_e)}{\delta_e}\right)
     \tag{$\Delta \leq 2 \wh \Delta$}
\end{align*}
where we used the fact that $\rho^e \leq 1/2 \leq 1 - \rho^e$ by the choice of constants (see \pref{lem:exploit_regret_polclass}).
This chain of inequalities holds with probability $1 - \Ocal(\delta_e)$. Combining this with a union bound, this implies that with this probability at least $1 - \Ocal(\delta)$, the test Line~\ref{line::exploit_test_two} never triggers. Similarly, we can lower-bound the same term as
\begin{align*}
    &\tildecrew_0(t_e + 1, t) - \tildecrew_1(t_e + 1, t) \\
    & \geq 
     \sum_{\ell = t_e + 1}^t \left[\bbE_{a \sim \pi_\ell^0, x \sim \Dcal}[r(a, x)] - \bbE_{a \sim \pi_\ell^1, x \sim \Dcal}[r(a, x) ] \right] \\
     & \qquad -
     \Ocal \left(\sqrt{\frac{t - t_e}{\rho^e} \ln \frac{\ln (t - t_e)}{\delta_e}} + \frac{1}{\rho^e} \ln \frac{\ln (t - t_e)}{\delta_e}\right)
     \\
         & \geq 
     \sum_{\ell = t_e + 1}^t \left[\bbE_{a \sim \pi_\star, x \sim \Dcal}[r(a, x)] - \bbE_{a \sim \pi_{\star, \alg}, x \sim \Dcal}[r(a, x) ] \right] \tag{$\pi^0_\ell = \pi_\star$}\\
     & \qquad -
     \Ocal \left(\sqrt{\frac{t - t_e}{\rho^e} \ln \frac{\ln (t - t_e)}{\delta_e}} + \frac{1}{\rho^e} \ln \frac{\ln (t - t_e)}{\delta_e}\right)
     \\
         & \geq 
     \Delta (t - t_e) -
     \Ocal \left(\sqrt{\frac{t - t_e}{\rho^e} \ln \frac{\ln (t - t_e)}{\delta_e}} + \frac{1}{\rho^e} \ln \frac{\ln (t - t_e)}{\delta_e}\right)     \\
         & \geq 
     \wh \Delta (t - t_e) -
     \Ocal \left(\sqrt{\frac{t - t_e}{\rho^e} \ln \frac{\ln (t - t_e)}{\delta_e}} + \frac{1}{\rho^e} \ln \frac{\ln (t - t_e)}{\delta_e}\right) \tag{$\wh \Delta \leq \Delta$}
\end{align*}
Hence, after combining these statements with a union bound, this implies that with this probability at least $1 - \Ocal(\delta)$, the test Line~\ref{line::exploit_test_one} never triggers. A final union bound for both tests completes the proof.
\end{proof}

\begin{lemma}[Pseudo-regret of \pref{alg:exploitation_subroutine}]\label{lem:arbe_gap_exploit_regregretstochastic}
Assume the environment is stochastic and there is an optimal policy $\pi_\star$ with a gap $\Delta$. If \pref{alg:exploitation_subroutine} is called with inputs $\wh \pi = \pi_\star$ and $\wh \Delta = \Theta(\Delta)$, then the pseudo-regret of the algorithm is bounded with probability at least $1 - \Ocal(\delta)$ as
\begin{align*}
    \pseudoregret_{\bbM}([t_0+1, t], \{ \wh \pi \} \cup \Pi_{\alg}) = \Ocal\left( \frac{R(\Pi_{\alg})^2}{\Delta} \left(\ln \frac{R(\Pi_{\alg})}{\Delta \delta} + \ln(t) \ln \frac{t}{\delta}\right) \right)
\end{align*}
for all rounds $t \geq t_0$ where the algorithm has not terminated.
\end{lemma}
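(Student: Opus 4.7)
The plan is to first invoke \pref{lem:exploit_phase_confband} to ensure that, with probability at least $1 - \Ocal(\delta)$, \pref{alg:exploitation_subroutine} never terminates in a stochastic environment with the given inputs. Thus we are free to analyze the pseudo-regret as the algorithm runs indefinitely through the doubling-epoch schedule $k_e = k_0 \cdot 2^e$ and $\rho^e = \Theta(R^2(\Pi_{\alg}) \ln(k_e/\delta_e)/(k_e \wh\Delta^2))$ with $\delta_e = \delta/(e+1)^2$. We will also use a union bound over epochs (relying on the $(e+1)^{-2}$ decay of $\delta_e$) to justify that all high-probability events we need hold simultaneously for all epochs.

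Next, I would fix an arbitrary round $t$ in epoch $e$ and decompose the pseudo-regret accumulated during $[t_e+1, t]$ according to whether $b_\ell = 0$ or $b_\ell = 1$. On rounds with $b_\ell = 0$, the played policy is $\wh\pi = \pi_\star$, so the pseudo-regret contribution is exactly zero. On rounds with $b_\ell = 1$, the played policy is $\pi^1_\ell$ from $\alg$, and I would split its regret against $\pi_\star$ into the gap of the best policy in $\Pi_{\alg}$ (at most $\Delta$ per round) plus the pseudo-regret of $\alg$ against that best policy. Applying h-stability of $\alg$ (under the reduced observation rate $\rho^e$) together with \pref{lem:regret_pseudoregret_concentration} to transfer from regret to pseudo-regret gives
\begin{align*}
\pseudoregret_{\bbM}([t_e+1, t], \{\wh\pi\} \cup \Pi_{\alg})
&\leq \Delta \cdot \rho^e (t - t_e) + \rho^e \cdot \Ocal\!\left(R(\Pi_{\alg}) \sqrt{\tfrac{t-t_e}{\rho^e} \ln \tfrac{t-t_e}{\delta_e}}\right)\\
&= \Ocal\!\left(\Delta \rho^e (t-t_e) + R(\Pi_{\alg}) \sqrt{\rho^e (t-t_e) \ln \tfrac{k_e}{\delta_e}}\right)~.
\end{align*}

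Now I would plug in $\rho^e$ and use $t - t_e \leq k_e$ together with $\wh\Delta = \Theta(\Delta)$. Both terms collapse to $\Ocal(R(\Pi_{\alg})^2 \ln(k_e/\delta_e) / \Delta)$ per epoch, so after summing over all epochs $e'$ completed by round $t$ we get
\begin{align*}
\pseudoregret_{\bbM}([t_0+1, t], \{\wh\pi\} \cup \Pi_{\alg})
= \Ocal\!\left(\tfrac{R(\Pi_{\alg})^2}{\Delta} \sum_{e' = 0}^{\log_2(t/k_0)} \ln \tfrac{k_{e'}}{\delta_{e'}}\right).
\end{align*}
The final step is a careful summation: expand $\ln(k_{e'}/\delta_{e'}) = \ln k_0 + e' \ln 2 + \ln(1/\delta) + 2\ln(e'+1)$ and sum over $e' \leq \log_2(t/k_0)$, so that the geometric/arithmetic sum contributes $\Ocal(\ln(t/k_0) \cdot \ln(k_0/\delta) + \ln(t) \ln(t/\delta))$, which combined with the definition of $k_0 = \Theta(R^2(\Pi_{\alg}) \wh\Delta^{-2} \ln(R(\Pi_{\alg})/(\wh\Delta \delta)))$ simplifies to $\ln(R(\Pi_{\alg})/(\Delta\delta)) + \ln(t) \ln(t/\delta)$, yielding the stated bound.

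The main obstacle will be the last step: tracking the logarithmic factors with the correct multiplicative structure. The naive sum $\sum_{e'} \ln(k_{e'}/\delta_{e'})$ produces terms of the form $\log^2 t$ and $\log t \cdot \log(k_0/\delta)$, and one must verify that the dependency on $R(\Pi_{\alg})$ and $\Delta$ stays inside the $\ln(R(\Pi_{\alg})/(\Delta\delta))$ term (rather than being multiplied by $\ln t$) so that the final bound matches the claimed form. A secondary subtlety is ensuring the per-epoch restart of $\alg$ with failure probability $\delta_e$ lets us union-bound all epochs simultaneously at the cost of a constant factor, since $\sum_e \delta_e = \Ocal(\delta)$.
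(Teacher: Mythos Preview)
Your approach is essentially the paper's. The one step you gloss over is how the displayed per-epoch bound acquires its $\rho^e$ factors: the meta-algorithm's pseudo-regret accumulates only on rounds with $b_\ell=1$, so what you must control is $\sum_\ell b_\ell Y_\ell$ with $Y_\ell = \Delta + \bbE_{a\sim\pi^\star_{\alg}}[r]-\bbE_{a\sim\pi^1_\ell}[r] \geq 0$, whereas h-stability (plus \pref{lem:regret_pseudoregret_concentration}) only bounds $\sum_\ell Y_\ell$ over \emph{all} rounds. The paper bridges this via a Bernstein-type martingale bound (\pref{lem:bernstein_lil}) on $\sum_\ell (b_\ell-\rho^e)Y_\ell$ followed by AM--GM to absorb the variance term, obtaining $\sum_\ell b_\ell Y_\ell \leq 2\rho^e\sum_\ell Y_\ell + \Ocal(\ln(\ln k_e'/\delta_e))$; without this step the $\rho^e$ savings does not appear and the bound would be $\Ocal(R(\Pi_\alg)\sqrt{k_e'/\rho^e})$ per epoch, which is far too large. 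Your final summation concern is handled in the paper by splitting into $j(t)=0$ (which supplies the $\ln(R(\Pi_\alg)/(\Delta\delta))$ term from $k_0$) and $j(t)\geq 1$ (where $k_e \leq 2t$, so each summand is $\Ocal(\ln(t/\delta))$ and there are $\Ocal(\ln t)$ of them), which sidesteps the $\ln k_0\cdot\ln t$ cross-term you worried about.
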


\begin{proof}
The pseudo-regret of \pref{alg:exploitation_subroutine} can be decomposed into the regret in each epoch as
\begin{align*}
    \pseudoregret_{\bbM}([t_0+1, t], \{ \wh \pi \} \cup \Pi_{\alg})
    = \sum_{e=0}^{j(t)} \pseudoregret_{\bbM}([t_e+1, t'_e], \{ \wh \pi \} \cup \Pi_{\alg})
\end{align*}
where $t'_e = \min\{t_{e+1}, t\}$ and $j(t) = \min\{ e \in \bbN \colon t_{e+1} \geq t\}$ is the epoch at time $t$. We consider the regret in each epoch separately as
\begin{align*}
    &\pseudoregret_{\bbM}([t_e+1, t'_e], \{ \wh \pi \} \cup \Pi_{\alg})\\
    &= \sum_{\ell = t_e+1}^{t'_{e}} \left(\bbE_{a \sim \pi^\star, x \sim \Dcal}[r(a, x)] - \bbE_{a \sim \pi_t, x \sim \Dcal}[r(a, x)]\right) \\
     &= \sum_{\ell = t_e+1}^{t'_{e}} b_\ell \left(\bbE_{a \sim \pi^\star, x \sim \Dcal}[r(a, x)] - \bbE_{a \sim \pi_\ell^1, x \sim \Dcal}[r(a, x)]\right) \tag{$\wh \pi = \pi^\star$} \\
     &\leq \sum_{\ell = t_e+1}^{t'_{e}} b_\ell  \left(\Delta + \bbE_{a \sim \pi^\star_{\alg}, x \sim \Dcal}[r(a, x)] - \bbE_{a \sim \pi_\ell^1, x \sim \Dcal}[r(a, x)]\right)
\end{align*}
where $\pi^\star_{\alg}$ is the best policy in $\Pi_{\alg}$ which incurs a pseudo-regret of at most $\Delta$ per round. We now apply a concentration argument.
Denote $Y_\ell = \indicator{\ell > t_e} \left(\Delta + \bbE_{a \sim \pi^\star_{\alg}, x \sim \Dcal}[r(a, x)] - \bbE_{a \sim \pi_\ell, x \sim \Dcal}[r(a, x)]\right)$ and let $\Fcal_\ell$ be the sigma-field that includes $\{\pi_{t}^1\}_{t > t_0}^{\ell}$, $\{b_t\}_{t > t_0}^{\ell - 1}$ and $t_e$. Note that $Y_\ell$ is $\Fcal_\ell$-measurable and $\sum_{\ell = t_e+1}^{t'_{e}} (b_\ell - \rho^e) Y_\ell$ is a martingale difference sequence w.r.t. $\Fcal_{\ell}$. Since $|(b_\ell - \rho^e) Y_\ell| \leq 2$ and the sequence of conditional variance is bounded as $V_\ell = \sum_{\ell = t_e+1}^{t'_{e}} \rho^e (1 - \rho^e) Y_\ell \leq \sum_{\ell = t_e+1}^{t'_{e}} \rho^e Y_\ell$, we can apply \pref{lem:bernstein_lil} and get with probability at least $1 - \delta_e$ for all rounds $t_e \leq \tilde t \leq t_{e+1}$
\begin{align*}
    \sum_{\ell = t_e+1}^{\tilde t} (b_\ell - \rho^e) Y_\ell 
    &\leq \Ocal\left(\sqrt{\sum_{\ell = t_e+1}^{\tilde t} \rho^e Y_\ell \ln \frac{\ln(\tilde t - t_e)}{\delta_e} } +\ln \frac{\ln(\tilde t - t_e)}{\delta_e} \right)\\
    & \leq \sum_{\ell = t_e+1}^{\tilde t} \rho^e Y_\ell + \leq \Ocal\left(\ln \frac{\ln(\tilde t - t_e)}{\delta_e} \right) \tag{AM-GM inequality}
\end{align*}
This holds in particular for $\tilde t = t'_e$ and with shorthand $k'_e = t'_e - t_e$, this gives
\begin{align*}
    &\pseudoregret_{\bbM}([t_e+1, t'_e], \{ \wh \pi \} \cup \Pi_{\alg})\\
    &\leq 2 \rho^e \sum_{\ell = t_e+1}^{t'_{e}}  \left(\Delta + \bbE_{a \sim \pi^\star_{\alg}, x \sim \Dcal}[r(a, x)] - \bbE_{a \sim \pi_\ell, x \sim \Dcal}[r(a, x)]\right) 
    + \Ocal\left(\ln \frac{\ln(k'_e)}{\delta_e} \right)
    \\
    & = 2 \rho^e k_e' \Delta + \rho^e \pseudoregret_{\alg}([t_e+1, t'_e], \Pi_{\alg}) + \Ocal\left(\ln \frac{\ln(k'_e)}{\delta_e} \right)\\
    & \leq  2 \rho^e k_e' \Delta + \rho^e \regret_{\alg}([t_e+1, t'_e], \Pi_{\alg}) + \Ocal\left(\rho^e \sqrt{k_e' \ln \frac{\ln(k_e')}{\delta_e}} + \ln \frac{\ln(k'_e)}{\delta_e} \right) \tag{\pref{lem:regret_pseudoregret_concentration}}\\
    & \leq  2 \rho^e k_e' \Delta + \Ocal\left(\rho^e R(\Pi_{\alg}) \sqrt{\frac{k_e'}{\rho^e} \ln \frac{k_e'}{\delta_e}} + \ln \frac{\ln(k'_e)}{\delta_e} \right) \tag{$\alg$ is h-stable}\\
    & \leq  2 \rho^e k_e' \Delta + \Ocal\left( R(\Pi_{\alg}) \sqrt{\rho^e k_e' \ln \frac{k_e'}{\delta_e}} + \ln \frac{\ln(k'_e)}{\delta_e} \right)\\
        & \leq  \Ocal\left( \frac{R(\Pi_{\alg})^2 \Delta}{ \wh \Delta^2} \ln \frac{k_e}{\delta_e} +  R(\Pi_{\alg}) \sqrt{ \frac{R(\Pi_{\alg})^2}{\wh \Delta^2} \ln \left(\frac{k_e}{\delta_e}\right) \ln \frac{k_e'}{\delta_e}} + \ln \frac{\ln(k'_e)}{\delta_e} \right) \tag{definition of $\rho^e$ and $k_e' \leq k_e$}\\
         & \leq  \Ocal\left( \frac{R(\Pi_{\alg})^2}{\Delta} \ln \frac{k_e}{\delta_e}  \right) \tag{$\wh \Delta = \Theta(\Delta)$}
\end{align*}
Now, we can combine this pseudo-regret bound across all epochs and get
\begin{align*}
    \pseudoregret_{\bbM}([t_0+1, t], \{ \wh \pi \} \cup \Pi_{\alg})
    &\leq \sum_{e=0}^{j(t)} \Ocal\left( \frac{R(\Pi_{\alg})^2}{\Delta} \ln \frac{k_e}{\delta_e}  \right) \\
    &\leq \Ocal\left( j(t) \frac{R(\Pi_{\alg})^2}{\Delta} \ln \frac{t j(t)}{\delta}  \right) \\
    & = \Ocal \left( \frac{R(\Pi_{\alg})^2}{\Delta} \ln(t) \ln \frac{t}{\delta}  \right) \tag{$j(t) = \Ocal(\ln(t))$}
\end{align*}
when $j(t) \geq 1$ since $k_e \leq k_{j(t)} \leq 2t$ in this case. In the other case, where $j(t) = 0$, we have
\begin{align*}
     \pseudoregret_{\bbM}([t_0+1, t], \{ \wh \pi \} \cup \Pi_{\alg}) \leq \Ocal\left( \frac{R(\Pi_{\alg})^2}{\Delta} \ln \frac{k_0}{\delta_0}  \right)
     \leq \Ocal\left( \frac{R(\Pi_{\alg})^2}{\Delta} \ln \frac{R(\Pi_{\alg})}{\Delta \delta}  \right).
\end{align*}
Hence, combining both cases gives the final bound
\begin{align*}
     \pseudoregret_{\bbM}([t_0+1, t], \{ \wh \pi \} \cup \Pi_{\alg}) = \Ocal\left( \frac{R(\Pi_{\alg})^2}{\Delta} \left(\ln \frac{R(\Pi_{\alg})}{\Delta \delta} + \ln(t) \ln \frac{t}{\delta}\right) \right).
\end{align*}
\end{proof}

\subsubsection{Exploitation Subroutine Guarantees for Adversarial Environments }

\begin{lemma}\label{lem:exploit_regret_polclass}
Assume the absolute constant in the length of the initial epoch $k_0$ of \pref{alg:exploitation_subroutine} is chosen large enough. Then
the regret of \pref{alg:exploitation_subroutine} against $\Pi_{\alg}$ is bounded with probability at least $1 - \Ocal(\delta)$ for all rounds $t$ that the algorithm has not terminate yet as
\begin{align*}
        \regret_{\bbM}([t_0 + 1, t], \Pi_{\alg}) = \Ocal( \wh \Delta k_0).
\end{align*}
\end{lemma}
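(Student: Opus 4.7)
I plan to follow the sketch given right before \pref{alg:exploitation_subroutine} in the main text, specialized to the particular regret against the restricted class $\Pi_{\alg}$. Fix an epoch $e$ that the algorithm has not terminated in, set $t'_e = \min\{t, t_{e+1}\}$, and let $\pi^\star \in \argmax_{\pi \in \Pi_{\alg}} \sum_{\ell = t_e+1}^{t'_e}\mathbb{E}_{a \sim \pi}[r_\ell(a, x_\ell)]$. Write $A = \sum \mathbb{E}_{a\sim\pi^\star}[r_\ell(a, x_\ell)]$, $B = \sum r_\ell(a^1_\ell, x_\ell)$, $C = \sum r_\ell(a^0_\ell, x_\ell)$ (the realized cumulative rewards of $\alg$'s and $\wh\pi$'s proposals), and $D = \sum r_\ell(a_\ell, x_\ell)$. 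By a standard Azuma argument on the Bernoulli draws $b_\ell$, the per-epoch regret $A - D$ equals $\rho^e(A - B) + (1-\rho^e)(A - C)$ up to an additive $O(\sqrt{(t'_e - t_e)\ln(1/\delta_e)})$ concentration term.

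The crux is the per-epoch bound. The $h$-stability of $\alg$ (applied with effective sampling rate $\rho^e$ and failure probability $\delta_e$) together with a reward-deviation bound gives $A - B \leq \mathbf{II}_e := O(R(\Pi_{\alg})\sqrt{(t'_e - t_e)/\rho^e \cdot \ln((t'_e - t_e)/\delta_e)})$. Since the test at Line~\ref{line::exploit_test_one} has not yet triggered at round $t'_e$, we have $\tildecrew_0(t_e+1, t'_e) - \tildecrew_1(t_e+1, t'_e) \geq (\widehat{\Delta} - V_e(t'_e))(t'_e - t_e)$; converting these importance-weighted estimates to realized cumulative rewards via \pref{lem:basic_concentration_adv} yields $B - C \leq -(\widehat{\Delta} - V_e(t'_e))(t'_e - t_e) + O_{\mathrm{conc}}$. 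Writing $A - C = (A-B) + (B-C)$ and using $1 - \rho^e \geq 1/2$, the per-epoch regret is bounded by $2\mathbf{II}_e - \tfrac{\widehat{\Delta}}{2}(t'_e - t_e) + O_{\mathrm{conc}}$ (the $V_e(t'_e)(t'_e - t_e)$ contribution is absorbed since the defining expression for $V_e(t'_e)(t'_e - t_e)$ is of the same order as $\mathbf{II}_e$). Substituting $\rho^e = cR^2(\Pi_{\alg})/(k_e \widehat{\Delta}^2)\ln(k_e/\delta_e)$, one checks $\mathbf{II}_e = O(\widehat{\Delta}\sqrt{(t'_e - t_e) k_e/c})$, where the absolute constant $c$ is made as large as we wish by enlarging the absolute constant inside $k_0$.

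Next I sum this per-epoch bound over $e \leq J(t)$. For completed epochs ($e < J(t)$, $t'_e - t_e = k_e$), each contributes on the order of $\widehat{\Delta} k_e(2/\sqrt{c} - 1/2)$, which is strictly negative for $c$ large enough (say $c \geq 64$); because $k_e = 2^e k_0$ grows geometrically, the total negative contribution from completed epochs is $-\Theta(\widehat{\Delta}(k_{J(t)} - k_0))$. For the current epoch I apply AM-GM to $2\sqrt{k_{J(t)}(t - t_{J(t)})/c} \leq (k_{J(t)} + (t - t_{J(t)}))/\sqrt{c}$ and combine with the subtraction $-\widehat{\Delta}(t - t_{J(t)})/2$; the net positive residual is of order $\widehat{\Delta} k_{J(t)}/\sqrt{c}$. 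For $c$ large enough, the completed-epoch negatives dominate the current-epoch positive, leaving only an $O(\widehat{\Delta} k_0)$ remainder. A union bound over epochs with $\delta_e = \delta/(e+1)^2$ yields the $1 - O(\delta)$ probability via $\sum_e (e+1)^{-2} < \infty$.

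The main obstacle is this delicate constant calibration: the per-epoch $\mathbf{II}_e$ terms grow with $k_e$ and, without the enlargement of $c$, would accumulate a $\log t$ factor over geometrically many epochs and break the $t$-independent bound. The assumption that ``the absolute constant in $k_0$ is chosen large enough'' is exactly what ensures $4/\sqrt{c} \leq 1/2$ (or a similar threshold), so that the negative $\widehat{\Delta}(t'_e - t_e)$ contributions from completed epochs strictly dominate the positive $\mathbf{II}_e$ ones and the telescoped sum collapses to $O(\widehat{\Delta} k_0)$ rather than growing with $t$.
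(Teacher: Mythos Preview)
Your proposal is correct and follows essentially the same route as the paper: bound the per-epoch regret by combining h-stability of $\alg$ with the fact that the first test has not triggered, obtain a term of the form $O(\wh\Delta\sqrt{k_e k_e'/c}) - \Theta(\wh\Delta k_e')$, and then calibrate the constant $c$ so that the positive square-root terms are dominated by the negative linear ones when summed over epochs. The only cosmetic differences are that the paper uses the exact identity $\sum_\ell r_\ell(a_\ell,x_\ell) = \rho^e\,\tildecrew_1 + (1-\rho^e)\,\tildecrew_0$ (avoiding your extra Azuma step on the $b_\ell$'s) and handles the epoch sum by the single inequality $\sum_e k_e \leq 2(t-t_0)$ rather than splitting into completed versus current epochs; neither change affects the argument's substance.
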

\begin{proof}
Consider any round $t > t_0$ before the test in Line~\ref{line::exploit_test_one} or~\ref{line::exploit_test_two} triggers. 
The total reward in relevant rounds can be decomposed into epochs $e = 0, \dots, j(t)$ as follows where $j(t) = \max\{ e \in \bbN \colon t_e < t\}$ is the epoch of round $t$ and $t'_e = \min\{t_{e+1}, t\}$.
\begin{align*}
    \sum_{\ell = t_0 + 1  }^{t} r_\ell(a_\ell, x_\ell)
    = \sum_{e = 0}^{j(t)} \sum_{\ell = t_e + 1}^{t'_{e}} r_\ell(a_\ell, x_\ell)
\end{align*}
Further, let $k'_e = t'_e - t_e$ be the number of rounds in the $e$-th epoch and consider now a single epoch $e$. We can write
\begin{align*}
   & \sum_{\ell = t_e + 1}^{t_e'} r_\ell(a_\ell, x_\ell) \\
   &= \rho^e \tildecrew_1(t_e + 1, t_e') + (1 - \rho^e) \tildecrew_0(t_e + 1, t_e') \\
    &\geq \rho^e \tildecrew_1(t_e + 1, t_e')  + (1 - \rho^e) \left(\tildecrew_1(t_e + 1, t_e') + \wh \Delta k_e' - V(t_e') k_e' \right) \tag{first test did not trigger}\\
        &\geq \tildecrew_1(t_e + 1, t_e')  +(1 - \rho^e) \left( \wh \Delta k_e' - V(t_e')k_e' \right) \\
        &\geq \sum_{\ell = t_e + 1}^{t_e'} r_\ell(a_\ell^1, x_\ell)  +(1 - \rho^e) \left( \wh \Delta k_e' - V(t_e')k_e' \right)
        - \Ocal\left(\sqrt{\frac{k_e'}{\rho^e} \ln \frac{\ln(k_e')}{\delta_e} } + \frac{1}{\rho^e} \ln \frac{\ln(k_e')}{\delta_e}\right)
        \tag{see proof of \pref{lem:exploit_phase_confband}}\\
                &\geq \max_{\pi \in \Pi_{\alg}} \sum_{\ell = t_e + 1}^{t_e'} \bbE_{a \sim \pi}[r_\ell(a, x_\ell)] 
                 - \Ocal\left(R(\Pi_{\alg}) \sqrt{\frac{k_e'}{\rho^e} \ln \frac{k_e'}{\delta_e} }  + \frac{1}{\rho^e} \ln \frac{\ln(k_e')}{\delta_e}\right)
                 \\
                 & \qquad +(1 - \rho^e) \left( \wh \Delta k_e' - V(t_e')k_e' \right)
        \tag{$\alg$ is h-stable}\\
                        &\geq \max_{\pi \in \Pi_{\alg}} \sum_{\ell = t_e + 1}^{t_e'} \bbE_{a \sim \pi}[r_\ell(a, x_\ell)] +(1 - \rho^e)  \wh \Delta k_e'  - \Ocal\left(R(\Pi_{\alg}) \sqrt{\frac{k_e'}{\rho^e} \ln \frac{k_e'}{\delta_e} } + \frac{1}{\rho^e} \ln \frac{\ln(k_e')}{\delta_e}\right)
        \tag{definition of $V(t_e')$}
\end{align*}
Hence, by rearranging terms, we get that the regret in a single epoch is bounded as
\begin{align*}
    &\regret_{\bbM}([t_e + 1, t'_e], \Pi_{\alg}) \leq (\rho^e - 1)  \wh \Delta k_e' + \Ocal\left(R(\Pi_{\alg}) \sqrt{\frac{k_e'}{\rho^e} \ln \frac{k_e'}{\delta_e} } + \frac{1}{\rho^e} \ln \frac{\ln(k_e')}{\delta_e}\right).
\end{align*}
To further upper-bound these terms, we first derive useful expression for the inverse probability of playing $\alg$.
Here, we make the constants in the definition of the initial epoch length $k_0$ explicit. Specifically, we assume that $k_0 = \frac{c_0 R(\Pi_{\alg})^2}{\wh \Delta^2}\ln \frac{2 c_0 R(\Pi_{\alg})}{\wh \Delta \delta}$
where $c_0$ is the absolute constant. We have
\begin{align*}
    \frac{1}{\rho^e} &= 
    \frac{k_e \wh \Delta^2}{c_\rho R(\Pi_{\alg})^2 \ln \frac{k_e}{\delta_e}}
\end{align*}
Plugging this bound on the inverse probability back into the expression for the regret per epoch above, we get with $c$ as the constant in the $\Ocal$ notation
\begin{align*}
    \regret_{\bbM}([t_e + 1, t'_e], \Pi_{\alg}) &\leq (\rho^e - 1)  \wh \Delta k_e' + \wh \Delta  \sqrt{\frac{6 c k_e' k_e}{c_\rho}  } + \frac{6c \wh \Delta^2 k_e}{c_0 R(\Pi_{\alg})^2}\\
    &= \wh \Delta k_e' \left(- 1 + \rho^e + \sqrt{\frac{6 c k_e}{c_\rho k_e'}} + \frac{6c \wh \Delta k_e}{k_e' c_\rho R(\Pi_{\alg})^2}\right)\\
    &\leq \frac{\wh \Delta}{2} \left(- k_e' + \sqrt{\frac{12 c}{c_\rho} k_e k_e'} + \frac{12 c}{ c_\rho } \frac{\wh \Delta}{R(\Pi_{\alg})} k_e\right)
    .\tag{$\rho^e \leq 1/2$}
\end{align*}
In the last step, we used $\rho^e \leq \rho^0 \leq 1/2$ which we can establish by choosing the constants $c_\rho$ and $c_{0}$ in the definition of $\rho$ and $k_0$ appropriately. Specifically,
\begin{align*}
    \rho^e \leq \rho^0 = c_\rho \frac{R(\Pi_{\alg})^2 \ln \frac{k_0}{\delta_0}}{\wh \Delta^2 k_0} =
    \frac{c_\rho}{c_0} \frac{\ln \frac{k_0}{\delta_0}}{\ln \frac{R(\Pi_{\alg})}{\delta \wh \Delta}}
    \leq
    \frac{c_\rho}{c_0} \frac{\ln \frac{c_0 R(\Pi_{\alg})^3}{\delta^2 \wh \Delta^3}}{\ln \frac{c_0 R(\Pi_{\alg})}{\delta \wh \Delta}}
    \leq 3 \frac{c_\rho}{c_0}.
\end{align*}
Thus, choosing $c_0 \geq 6 c_\rho$ is sufficient to ensure $\rho^e \leq 1/2$.
Summing now the bound above over epochs gives
\begin{align*}
    \regret_{\bbM}([t_0 + 1, t], \Pi_{\alg})
    &\leq \frac{\wh \Delta}{2} \left(- t + t_0 + \sqrt{\frac{12 c}{c_\rho}}  \sum_{e=0}^{j(t)} \sqrt{k_e k_e'} + \frac{12 c}{ c_\rho } \frac{\wh \Delta}{R(\Pi_{\alg})} \sum_{e=0}^{j(t)} k_e\right)~.
\end{align*}
We now distinguish between two cases. First consider the case where $j(t) \geq 1$ and assume that $c_\rho \geq  9 \cdot 12c$. Since in this case, the number of rounds $t - t_0$ is at least half the sum of epoch lengths, $\sum_{e=0}^{j(t)} k_e$, we have
\begin{align*}
    \regret_{\bbM}([t_0 + 1, t], \Pi_{\alg})
    &\leq \frac{\wh \Delta}{2} \left(- t + t_0 + \frac{1}{3}  \sum_{e=0}^{j(t)} \sqrt{k_e k_e'} + \frac{1}{9}  \sum_{e=0}^{j(t)} k_e\right)\\
    & \leq \frac{\wh \Delta}{2} \left(- t + t_0 + \frac{4}{9} \sum_{e=0}^{j(t)} k_e\right)
    \leq \frac{\wh \Delta}{2} \left(- t + t_0 + \frac{8}{9} (t - t_0)\right) \leq 0~.
\end{align*}
In the other case, when $j(t) = 0$, we have
\begin{align*}
        \regret_{\bbM}([t_0 + 1, t], \Pi_{\alg})
    &\leq \frac{\wh \Delta}{2} \left(- k_0' + \sqrt{\frac{12 c}{c_\rho}}  \sqrt{k_0 k_0'} + \frac{12 c}{ c_\rho } \frac{\wh \Delta}{R(\Pi_{\alg})}  k_0\right) = \Ocal( \wh \Delta k_0).
\end{align*}
Hence, the regret against $\Pi_{\alg}$ is always bounded as
\begin{align*}
        \regret_{\bbM}([t_0 + 1, t], \Pi_{\alg}) = \Ocal( \wh \Delta k_0).
\end{align*}
\end{proof}

\begin{lemma}\label{lem:exploit_regret_whpi}
The regret of \pref{alg:exploitation_subroutine} against $\wh \pi$ is bounded with probability at least $1 - \Ocal(\delta)$ for all rounds $t$ that the algorithm has not terminated yet as
\begin{align*}
        \regret_{\bbM}([t_0 + 1, t], \{\wh \pi\}) = \Ocal\left( \frac{R(\Pi_{\alg})^2}{\wh \Delta} \left(\ln(t) \ln \frac{t}{\delta}+ \ln \frac{R(\Pi_{\alg})}{\Delta \delta} \right) + \sqrt{(t - t_0) \ln(t) \ln \frac{t}{\delta} }\right).
\end{align*}
\end{lemma}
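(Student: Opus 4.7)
The strategy is to mirror the structure of the proof of \pref{lem:exploit_regret_polclass}, but now exploit the \emph{upper} test in Line~\ref{line::exploit_test_two} instead of the lower test in Line~\ref{line::exploit_test_one}. I will decompose the regret against $\wh \pi$ into a sum over epochs $e = 0, 1, \ldots, j(t)$, where within each epoch the test condition not having triggered at round $t_e' = \min\{t_{e+1}, t\}$ directly yields
\[
\tildecrew_0(t_e + 1, t_e') - \tildecrew_1(t_e + 1, t_e') \;\leq\; 4\wh\Delta\, k_e' + V(t_e')\, k_e'~,
\]
where $k_e' = t_e' - t_e$. This is the key relation I will turn into an upper bound on the instantaneous gap of $\alg$'s policy against $\wh\pi$.

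Next, using concentration arguments essentially identical to those in \pref{lem:basic_concentration_adv} and in the proof of \pref{lem:exploit_phase_confband} (applied to $b_\ell$ and $1-b_\ell$ with sampling probabilities $\rho^e$ and $1-\rho^e \geq 1/2$), I will convert the bound on $\tildecrew_0(t_e + 1, t_e') - \tildecrew_1(t_e + 1, t_e')$ into
\[
\sum_{\ell=t_e+1}^{t_e'} \bigl[\bbE_{a\sim\wh\pi}[r_\ell(a,x_\ell)] - r_\ell(a_\ell^1,x_\ell)\bigr] \;\leq\; 4\wh\Delta\, k_e' + V(t_e')\, k_e' + \Ocal\!\left(\sqrt{\tfrac{k_e'}{\rho^e}\ln\tfrac{k_e'}{\delta_e}} + \tfrac{1}{\rho^e}\ln\tfrac{\ln k_e'}{\delta_e}\right)~.
\]
Since $a_\ell \ne a_\ell^1$ only when $b_\ell = 0$, a second martingale concentration (Bernstein with variance $\rho^e$) for the sequence $(b_\ell - \rho^e)\bigl(\bbE_{a\sim\wh\pi}[r_\ell(a,x_\ell)] - r_\ell(a_\ell^1,x_\ell)\bigr)$ will give
\[
\regret_{\bbM}([t_e+1, t_e'], \{\wh\pi\}) \;\leq\; \rho^e \sum_{\ell=t_e+1}^{t_e'} \bigl[\bbE_{a\sim\wh\pi}[r_\ell(a,x_\ell)] - r_\ell(a_\ell^1,x_\ell)\bigr] + \Ocal\!\left(\sqrt{k_e'\ln\tfrac{\ln k_e'}{\delta_e}} + \ln\tfrac{\ln k_e'}{\delta_e}\right)~.
\]

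Combining these two displays and plugging in $\rho^e = \Theta\bigl(R(\Pi_{\alg})^2 \ln(k_e/\delta_e)/(k_e \wh\Delta^2)\bigr)$ and $V(t_e') = \Theta\bigl(R(\Pi_{\alg})\sqrt{\ln(k_e'/\delta_e)/(\rho^e k_e')} + \ln(\ln k_e'/\delta_e)/k_e'\bigr)$, the key cancellations are: $\rho^e \wh\Delta k_e' \leq \rho^e \wh\Delta k_e = \Ocal(R(\Pi_{\alg})^2 \ln(k_e/\delta_e)/\wh\Delta)$, and $\rho^e V(t_e') k_e' \leq R(\Pi_{\alg}) \sqrt{\rho^e k_e' \ln(k_e'/\delta_e)} = \Ocal(R(\Pi_{\alg})^2 \ln(k_e/\delta_e)/\wh\Delta)$. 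The leftover term $\sqrt{k_e'\ln(\ln k_e'/\delta_e)}$ is the only one that does not absorb into $R(\Pi_{\alg})^2/\wh\Delta$.

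Finally, summing over epochs using $j(t) = \Ocal(\ln t)$, $\sum_e \ln(k_e/\delta_e) = \Ocal(\ln(t)\ln(t/\delta) + \ln(R(\Pi_{\alg})/(\wh\Delta\delta)))$ for the logarithmic terms, and Cauchy--Schwarz together with $\sum_e k_e' = t - t_0$ for the leftover $\sqrt{k_e'}$ terms, yields the claimed bound $\Ocal\bigl(\frac{R(\Pi_{\alg})^2}{\wh\Delta}(\ln t \ln\tfrac{t}{\delta} + \ln\tfrac{R(\Pi_{\alg})}{\wh\Delta\delta}) + \sqrt{(t-t_0)\ln t \ln\tfrac{t}{\delta}}\bigr)$. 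The main obstacle is simply keeping the concentration accounting honest: the $R(\Pi_{\alg})$ factor inside $V(t)$ must be paid for by $\sqrt{\rho^e}$, which in turn forces the specific choice of initial epoch length $k_0$ and the doubling schedule; this is exactly the constant-calibration step done in \pref{lem:exploit_regret_polclass}, and the same calibration carries over here.
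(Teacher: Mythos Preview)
Your proposal is correct and reaches the same per-epoch bound as the paper, but the paper's argument is more direct in one place. Where you first concentrate $\tildecrew_0$ and $\tildecrew_1$ separately to $\sum_\ell \bbE_{a\sim\wh\pi}[r_\ell(a,x_\ell)]$ and $\sum_\ell r_\ell(a_\ell^1,x_\ell)$, and then invoke a third Bernstein step on $(b_\ell-\rho^e)Y_\ell$ to pass from $\sum_\ell b_\ell Y_\ell$ to $\rho^e\sum_\ell Y_\ell$, the paper instead uses the exact algebraic identity
\[
\sum_{\ell=t_e+1}^{t_e'} r_\ell(a_\ell,x_\ell) \;=\; (1-\rho^e)\,\tildecrew_0(t_e+1,t_e') + \rho^e\,\tildecrew_1(t_e+1,t_e')
\]
(which holds by definition of the importance-weighted estimates). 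Plugging the upper-test inequality $\tildecrew_1 \geq \tildecrew_0 - 4\wh\Delta k_e' - V(t_e')k_e'$ into this identity immediately gives $\sum_\ell r_\ell(a_\ell,x_\ell) \geq \tildecrew_0 - \rho^e(4\wh\Delta k_e' + V(t_e')k_e')$, after which a \emph{single} concentration step (for $\tildecrew_0$ only, with the benign factor $1/(1-\rho^e)\leq 2$) yields the per-epoch regret bound $\Ocal\bigl(\rho^e k_e'(\wh\Delta + V(t_e')) + \sqrt{k_e'\ln(k_e'/\delta_e)}\bigr)$. Your route needs the $\sqrt{k_e'/\rho^e}$ deviation on the $\tildecrew_1$ side, which you then have to knock back down by the factor $\rho^e$; the paper simply never incurs it. The remaining bookkeeping---substituting $\rho^e$, summing over epochs, handling the $j(t)=0$ versus $j(t)\geq 1$ cases---is essentially identical in both approaches.
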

\begin{proof}
Consider any round $t > t_0$ before the test in Line~\ref{line::exploit_test_one} or~\ref{line::exploit_test_two} triggers. 
The total reward in relevant rounds can be decomposed into epochs $e = 0, \dots, j(t)$ as follows where $j(t) = \max\{ e \in \bbN \colon t_e < t\}$ is the epoch of round $t$ and $t'_e = \min\{t_{e+1}, t\}$.
\begin{align*}
    \sum_{\ell = t_0 + 1  }^{t} r_\ell(a_\ell, x_\ell)
    = \sum_{e = 0}^{j(t)} \sum_{\ell = t_e + 1}^{t'_{e}} r_\ell(a_\ell, x_\ell)
\end{align*}
Further, let $k'_e = t'_e - t_e$ be the number of rounds in the $e$-th epoch and consider now a single epoch $e$. We can write
\begin{align*}
   & \sum_{\ell = t_e + 1}^{t_e'} r_\ell(a_\ell, x_\ell) \\
   &= (1 - \rho^e) \tildecrew_0(t_e + 1, t_e') + \rho^e \tildecrew_1(t_e + 1, t_e')  \\
    &\geq (1 - \rho^e)\tildecrew_0(t_e + 1, t_e') + \rho^e  \left(\tildecrew_0(t_e + 1, t_e')  - 4 \wh \Delta k_e' - V(t_e') k_e' \right) \tag{second test did not trigger}\\
        &\geq \tildecrew_0(t_e + 1, t_e')  - \rho^e \left(  4\wh \Delta k_e' + V(t_e')k_e' \right) \\
        &\geq \sum_{\ell = t_e + 1}^{t_e'} r_\ell(a_\ell^1, x_\ell)   - \rho^e \left(  4\wh \Delta k_e' + V(t_e')k_e' \right)
        - \Ocal\left(\sqrt{\frac{k_e'}{1 - \rho^e} \ln \frac{\ln(k_e')}{\delta_e} } + \frac{1}{1 - \rho^e} \ln \frac{\ln k_e'}{\delta_e}\right)
        \tag{see proof of \pref{lem:exploit_phase_confband}}\\
                &\geq \sum_{\ell = t_e + 1}^{t_e'} \bbE_{a \sim \wh \pi}[r_\ell(a, x_\ell)] 
                  - \rho^e \left(  4\wh \Delta k_e' + V(t_e')k_e' \right)
        - \Ocal\left(\sqrt{\frac{k_e'}{1 - \rho^e} \ln \frac{\ln(k_e')}{\delta_e} } + \frac{1}{1 - \rho^e} \ln \frac{\ln k_e'}{\delta_e}\right).
\end{align*}
The last step here follows since $\rho^e k_e' V(t_e') \leq \Ocal( R(\Pi_{\alg}) \sqrt{\rho^e k_e' \ln \frac{k_e'}{\delta_e}} + \ln \frac{\ln k_e'}{\delta_e})$
By rearranging terms, we can bound the regret against $\wh \pi$ in epoch $e$ as
\begin{align*}
    &\regret_{\bbM}([t_e + 1, t'_e], \{\wh \pi\}) \\
    &\leq \Ocal\left(\rho^e  k_e'(\wh \Delta + V(t_e')) + 
        \sqrt{\frac{k_e'}{1 - \rho^e} \ln \frac{k_e'}{\delta_e} } + \frac{1}{1 - \rho^e} \ln \frac{\ln k_e'}{\delta_e}\right)\\
    &\leq \Ocal\left( \frac{R(\Pi_{\alg})^2}{\wh \Delta} \ln \frac{k_e}{\delta_e} + \sqrt{\frac{k_e'}{1 - \rho^e} \ln \frac{k_e'}{\delta_e} } + \frac{1}{1 - \rho^e} \ln \frac{\ln k_e'}{\delta_e}\right)
    \tag{definition of $\rho^e$, $V(t'_e)$ and $k_e \geq k_e'$}\\    
    &\leq \Ocal\left( \frac{R(\Pi_{\alg})^2}{\wh \Delta} \ln \frac{k_e}{\delta_e} + \sqrt{\frac{k_e'}{1 - \rho^e} \ln \frac{k_e'}{\delta_e} } + \frac{1}{1 - \rho^e} \ln \frac{\ln k_e'}{\delta_e}\right)\\
        &\leq \Ocal\left( \frac{R(\Pi_{\alg})^2}{\wh \Delta} \ln \frac{k_e}{\delta_e} + \sqrt{k_e' \ln \frac{k_e'}{\delta_e} } + \ln \frac{\ln k_e'}{\delta_e}\right)
        \tag{$\rho^e \leq 1/2$}\\
        &\leq \Ocal\left( \frac{R(\Pi_{\alg})^2}{\wh \Delta} \ln \frac{k_e}{\delta_e} + \sqrt{k_e' \ln \frac{k_e'}{\delta_e} }\right).
\end{align*}
Note that $\rho^e \leq 1/2$ holds for appropriate constants in the definition of $k_0$ and $\rho^e$ (see \pref{lem:exploit_regret_polclass}).
We can now sum the regret over all epochs $e$ and get 
\begin{align*}
    &\regret_{\bbM}([t_0 + 1, t], \{\wh \pi\}) \leq \sum_{e=0}^{j(t)}  \Ocal\left( \frac{R(\Pi_{\alg})^2}{\wh \Delta} \ln \frac{k_e}{\delta_e} + \sqrt{k_e' \ln \frac{k_e'}{\delta_e} }\right)\\
    &\leq \sum_{e=0}^{j(t)}  \Ocal\left( \frac{R(\Pi_{\alg})^2}{\wh \Delta} \ln \frac{(j(t) + 1) k_{j(t)}}{\delta} + \sqrt{k_e' \ln \frac{(j(t) + 1) k_{j(t)}}{\delta} }\right)\\
    &\leq \Ocal\left( (j(t) + 1)\frac{R(\Pi_{\alg})^2}{\wh \Delta} \ln \frac{(j(t) + 1) k_{j(t)}}{\delta} + \sqrt{(j(t) + 1) (t - t_0) \ln \frac{(j(t) + 1) k_{j(t)}}{\delta} }\right).
\end{align*}
We now distinguish between two cases. First, $j(t) = 0$, in which case
\begin{align*}
    \regret_{\bbM}([t_0 + 1, t], \{\wh \pi\}) &= \Ocal\left( \frac{R(\Pi_{\alg})^2}{\wh \Delta} \ln \frac{R(\Pi_{\alg})}{\Delta \delta} + \sqrt{ k_0 \ln \frac{R(\Pi_{\alg})}{\wh \Delta \delta} }\right)\\
    &= \Ocal\left( \frac{R(\Pi_{\alg})^2}{\wh \Delta} \ln \frac{R(\Pi_{\alg})}{\wh \Delta \delta}\right)
\end{align*}
and the case where $j(t) > 0$, where  $j(t) = \Ocal(\ln (t - t_0)) = \Ocal( \ln (t))$ and $k_{j(t)} \leq 2 (t - t_0)$
\begin{align*}
    \regret_{\bbM}([t_0 + 1, t], \{\wh \pi\}) = \Ocal\left( \frac{R(\Pi_{\alg})^2}{\wh \Delta} \ln(t - t_0) \ln \frac{t - t_0}{\delta} + \sqrt{(t - t_0) \ln(t - t_0) \ln \frac{t - t_0}{\delta} }\right)\\
     = \Ocal\left( \frac{R(\Pi_{\alg})^2}{\wh \Delta} \ln(t) \ln \frac{t}{\delta} + \sqrt{(t - t_0) \ln(t) \ln \frac{t}{\delta} }\right).
\end{align*}
Combining both cases gives
\begin{align*}
    \regret_{\bbM}([t_0 + 1, t], \{\wh \pi\}) = \Ocal\left( \frac{R(\Pi_{\alg})^2}{\wh \Delta} \left(\ln(t) \ln \frac{t}{\delta}+ \ln \frac{R(\Pi_{\alg})}{\wh \Delta \delta} \right) + \sqrt{(t - t_0) \ln(t) \ln \frac{t}{\wh \delta} }\right).
\end{align*}
\end{proof}

\begin{lemma}[Regret in adversarial environments]\label{lem:arbe_gap_exploit_regregretadversarial}
Assume the absolute constant in the length of the initial epoch $k_0$ and sampling probabilities $\rho^e$ of \pref{alg:exploitation_subroutine} is chosen large enough. Then the regret of \pref{alg:exploitation_subroutine} against $\Pi_{\alg} \cup \{\wh \pi\}$ is bounded with probability at least $1 - \Ocal(\delta)$ for all rounds $t$ that the algorithm has not terminated yet as
\begin{align*}
        \regret_{\bbM}([t_0 + 1, t], \Pi_{\alg} \cup \{\wh \pi\}) = \Ocal\left( \frac{R(\Pi_{\alg})^2}{\wh \Delta} \left(\ln(t) \ln \frac{t}{\delta}+ \ln \frac{R(\Pi_{\alg})}{\wh \Delta \delta} \right) + \sqrt{(t - t_0) \ln(t) \ln \frac{t}{\delta} }\right).
\end{align*}
\end{lemma}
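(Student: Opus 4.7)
The plan is to observe that the regret against the union $\Pi_{\alg} \cup \{\wh \pi\}$ is, by the definition of regret as a maximum over policies, simply the maximum of the regret against $\Pi_{\alg}$ and the regret against $\{\wh \pi\}$ separately. Both of these quantities have already been controlled in \pref{lem:exploit_regret_polclass} and \pref{lem:exploit_regret_whpi}, respectively, so the remaining work is only to combine the two bounds and simplify.

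Concretely, I would argue as follows. First, apply a union bound over the conclusions of \pref{lem:exploit_regret_polclass} and \pref{lem:exploit_regret_whpi}; on the resulting event, which has probability at least $1 - \Ocal(\delta)$, both bounds hold simultaneously for every round $t > t_0$ at which \pref{alg:exploitation_subroutine} has not yet terminated. Second, use
\[
\regret_{\bbM}([t_0+1, t], \Pi_{\alg} \cup \{\wh \pi\})
= \max\bigl\{\regret_{\bbM}([t_0+1, t], \Pi_{\alg}),\, \regret_{\bbM}([t_0+1, t], \{\wh \pi\})\bigr\},
\]
which follows directly from the definition of regret as a supremum over the comparator class. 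Third, plug in the two bounds. For the $\wh \pi$ term we already have exactly the expression appearing in the statement. For the $\Pi_{\alg}$ term we get $\Ocal(\wh \Delta k_0)$ from \pref{lem:exploit_regret_polclass}, and substituting the definition $k_0 = \Theta\bigl( R(\Pi_{\alg})^2 \wh \Delta^{-2} \ln(R(\Pi_{\alg})/(\wh \Delta \delta)) \bigr)$ yields
\[
\wh \Delta k_0 \;=\; \Ocal\!\left(\frac{R(\Pi_{\alg})^2}{\wh \Delta} \ln \frac{R(\Pi_{\alg})}{\wh \Delta \delta}\right),
\]
which is dominated by the first term of the $\wh \pi$ bound. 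Hence taking the maximum returns the bound in the statement.

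Because this is just a combination step, there is no genuine obstacle: the two supporting lemmas do all the real work, and the only point that requires a moment of care is ensuring that the events defined in their proofs can be intersected. Since each is built from the epoch-level concentration arguments used in the proof of \pref{lem:exploit_phase_confband} (with per-epoch failure probability $\delta_e$ summing to $\Ocal(\delta)$), a single union bound is enough, and the absolute-constant requirements on $k_0$ and $\rho^e$ needed for \pref{lem:exploit_regret_polclass} (notably $\rho^e \le 1/2$ and $c_0 \ge 6 c_\rho$) are inherited unchanged. This completes the proof.
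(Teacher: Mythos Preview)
Your proposal is correct and matches the paper's own proof essentially verbatim: combine \pref{lem:exploit_regret_polclass} and \pref{lem:exploit_regret_whpi}, then absorb $\wh \Delta k_0$ by substituting the definition of $k_0$. The only cosmetic difference is that you take the maximum of the two regret bounds while the paper writes their sum, which is of course equivalent inside the $\Ocal(\cdot)$.
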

\begin{proof}
By combining \pref{lem:exploit_regret_polclass} and \pref{lem:exploit_regret_whpi}, we have
\begin{align*}
       & \regret_{\bbM}([t_0 + 1, t], \Pi_{\alg} \cup \{\wh \pi\})\\
       & \qquad = \Ocal\left(     \wh \Delta k_0 
    +  \frac{R(\Pi_{\alg})^2}{\wh \Delta} \left(\ln(t) \ln \frac{t}{\delta}+ \ln \frac{R(\Pi_{\alg})}{\wh \Delta \delta} \right) + \sqrt{(t - t_0) \ln(t) \ln \frac{t}{\delta} }\right)
\end{align*}
and by plugging in the definition of $k_0$, we get
\begin{align*}
            \regret_{\bbM}([t_0 + 1, t], \Pi_{\alg} \cup \{\wh \pi\}) &= \Ocal\left( \frac{R(\Pi_{\alg})^2}{\wh \Delta} \left(\ln(t) \ln \frac{t}{\delta}+ \ln \frac{R(\Pi_{\alg})}{\wh \Delta \delta} \right) + \sqrt{(t - t_0) \ln(t) \ln \frac{t}{\delta} }\right).
\end{align*}

\end{proof}

\subsection{Concentration Bounds in Stochastic Environments}
\begin{lemma}\label{lem:regret_pseudoregret_concentration}
In stochastic environments, the regret and pseudo-regret of any algorithm $\alg$ against a policy class $\Pi'$ satisfy with probability at least $1 - \delta$ for all rounds $t \in \bbN$
\begin{align}
\pseudoregret_{\alg}(t, \Pi') - \regret_{\alg}(t, \Pi') = \Ocal \left(\sqrt{t \ln \frac{\ln t}{\delta}} \right)~.
\end{align}
\end{lemma}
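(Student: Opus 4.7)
The plan is to prove only the one-sided bound $\pseudoregret_{\alg}(t,\Pi') - \regret_{\alg}(t,\Pi') \leq \Ocal(\sqrt{t\,\ln(\ln t/\delta)})$, which is all that is invoked at the call sites (e.g.\ in \pref{lem:arbe_gap_exploit_regregretstochastic} and \pref{lem:arbe_gap_phase1_pseudoregret}). I would establish this by decomposing the difference into two bounded martingale difference sums and applying the anytime Hoeffding-type (finite LIL) inequality already used in the proof of \pref{lem:basic_concentration_adv}.

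First I would fix a pseudo-regret maximizer $\pi^\star \in \argmax_{\pi \in \Pi'}\bbE_{x\sim\mathcal{D},\,a\sim\pi}[r(a,x)]$. The key observation is that $\pi^\star$ depends only on the unknown-but-fixed pair $(\mathcal{D},r)$, so it carries no data-dependence. By definition of pseudo-regret, and by using $\pi^\star$ as a (possibly suboptimal) comparator inside the $\max$ in the definition of regret,
\begin{align*}
\pseudoregret_{\alg}(t,\Pi') - \regret_{\alg}(t,\Pi') \leq \sum_{\ell=1}^t X_\ell + \sum_{\ell=1}^t Y_\ell,
\end{align*}
where
\begin{align*}
X_\ell &\ldef \bbE_{x\sim\mathcal{D},\,a\sim\pi^\star}[r(a,x)] - \bbE_{a\sim\pi^\star}[r_\ell(a,x_\ell)],\\
Y_\ell &\ldef r_\ell(a_\ell,x_\ell) - \bbE_{x\sim\mathcal{D},\,a\sim\pi_\ell}[r(a,x)].
\end{align*}

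Next I would verify the martingale structure. With $\mathcal{F}_\ell$ the natural filtration containing all history up to and including the choice of $\pi_\ell$, the context $x_\ell\sim\mathcal{D}$ is independent of $\mathcal{F}_\ell$ and $\bbE[r_\ell(a,x)] = r(a,x)$ for every fixed $(a,x)$, so $\bbE[X_\ell \mid \mathcal{F}_\ell] = 0$. Expanding the filtration to also include $x_\ell$, and using $a_\ell\sim\pi_\ell(\cdot\mid x_\ell)$ together with the unbiasedness of the noise in $r_\ell$, an analogous computation gives $\bbE[Y_\ell \mid \mathcal{F}_\ell,\,x_\ell] = 0$. Since rewards lie in $[0,1]$, both sequences are bounded in $[-1,1]$ almost surely. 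Applying the anytime Hoeffding LIL bound (\pref{lem:hoeffding_lil}, used in exactly the same way as in the proof of \pref{lem:basic_concentration_adv}) to each of the two sums separately gives $\left|\sum_{\ell=1}^t X_\ell\right|,\,\left|\sum_{\ell=1}^t Y_\ell\right| = \Ocal(\sqrt{t\,\ln(\ln t/\delta)})$ simultaneously for all $t\in\bbN$ with probability at least $1-\delta$ each, and a union bound finishes the argument.

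The main obstacle --- really, the only subtlety --- is insisting that $\pi^\star$ be the deterministic pseudo-regret maximizer defined by $(\mathcal{D},r)$, and not the data-dependent empirical maximizer appearing inside the regret definition: otherwise the comparator-side terms would fail to be martingale differences and one would be forced to pay a uniform-convergence penalty over $\Pi'$, which could not be absorbed into a $\sqrt{t}$-sized correction without introducing a complexity factor. Once $\pi^\star$ is fixed deterministically, the rest is a routine anytime concentration computation, and in particular the stated bound is independent of $|\Pi'|$.
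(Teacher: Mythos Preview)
Your proposal is correct and essentially identical to the paper's proof: the paper also fixes the deterministic pseudo-regret optimizer $\pi_\star\in\Pi'$, drops the $\max$ in the regret by evaluating at $\pi_\star$, decomposes the resulting difference into the same two bounded martingale-difference sums (your $X_\ell$ and $Y_\ell$), and applies \pref{lem:hoeffding_lil} to each. Your emphasis on why $\pi^\star$ must be the data-independent comparator is exactly the point that makes the argument go through without a complexity factor.
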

\begin{proof}
Let $\pi_\star \in \argmax_{\pi \in \Pi'}  \bbE_{a \sim \pi, x \sim \Dcal}[r(a, x)]$ be the best policy in $\Pi'$. Then
\begin{align*}
    &\pseudoregret_{\alg}(t, \Pi') - \regret_{\alg}(t, \Pi')\\
    &=
    \sum_{\ell=1}^t \bbE_{a \sim \pi_\star, x \sim \Dcal}[r(a, x)]
    - \sum_{\ell=1}^t \bbE_{a \sim \pi_\ell, x \sim \Dcal}[r(a, x)]\\
    &\quad - \left(\max_{\pi' \in \Pi'} \sum_{\ell = 1}^t \bbE_{a \sim \pi'}[r_\ell(a, x_\ell)]
    -
    \sum_{\ell=1}^t r_\ell(a_\ell, x_\ell)
    \right)\\
    &=\sum_{\ell=1}^t r_\ell(a_\ell, x_\ell) - \sum_{\ell=1}^t \bbE_{a \sim \pi_\ell, x \sim \Dcal}[r(a, x)]\\
    &\quad + \sum_{\ell=1}^t \bbE_{a \sim \pi_\star, x \sim \Dcal}[r(a, x)] -  \max_{\pi' \in \Pi'} \sum_{\ell = 1}^t \bbE_{a \sim \pi'}[r_\ell(a, x_\ell)]\\
        &\leq \sum_{\ell=1}^t r_\ell(a_\ell, x_\ell) - \sum_{\ell=1}^t \bbE_{a \sim \pi_\ell, x \sim \Dcal}[r(a, x)]\\
    &\quad + \sum_{\ell=1}^t \bbE_{a \sim \pi_\star, x \sim \Dcal}[r(a, x)] - \sum_{\ell = 1}^t \bbE_{a \sim \pi_\star}[r_\ell(a, x_\ell)] \tag{$\pi_\star \in \Pi'$}\\
            &\leq 2 \times 1.44 \sqrt{\max(2t , 2) \left( 1.4 \ln \ln \left(2 \left(\max\left(\frac{2t}{2} , 1 \right)\right)\right) + \ln \frac{5.2}{\delta}\right)} \tag{\pref{lem:hoeffding_lil}}\\
            &= \Ocal \left(\sqrt{t \ln \frac{\ln t}{\delta}} \right)~.
\end{align*}
Here, the last main step is to apply the time-uniform Hoeffding bound from \pref{lem:hoeffding_lil} to the the two differences individually. Both are martingale sequences that are bounded, i.e., $r_\ell(a_\ell, x_\ell) - \bbE_{a \sim \pi_\ell, x \sim \Dcal}[r(a, x)] \in [-1, 1]$ and $\bbE_{a \sim \pi_\star, x \sim \Dcal}[r(a, x)] - \bbE_{a \sim \pi_\star}[r_\ell(a, x_\ell)] \in [-1, 1]$. 
\end{proof}

\begin{lemma}\label{lem:xr_concentration}
Let $t_0 \in \bbN$ be a possibly random time and let $\{\pi_\ell\}_{\ell \geq t_0}$ be a possibly random sequence of policies $\pi_\ell \colon \Xcal \rightarrow \Delta_\Acal$ so that for all $\ell \geq t_0$, $\pi_\ell$ is independent of $\{(r_j, x_j)\}_{j \geq \ell}$ and $t_0$ is independent of $\{(r_j, x_j)\}_{j \geq t_0}$. Then with probability at least $1 - 2\delta$
\begin{align*}
    \left| \sum_{\ell = t_0 + 1}^{t} \left[\bbE_{a \sim \pi_\ell}[r_\ell(a, x_\ell)]
    - \bbE_{a \sim \pi_\ell, x \sim \Dcal}[r(a, x)] \right] 
    \right|
    &= \Ocal\left(\sqrt{(t - t_0) \ln \frac{\ln (t - t_0)}{\delta}}\right)~.
\end{align*}
\end{lemma}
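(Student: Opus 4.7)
My plan is to write the summand as a sum of two martingale differences, one capturing reward noise and the other capturing context randomness, then apply the time-uniform Hoeffding inequality (\pref{lem:hoeffding_lil}) to each. Specifically, I would decompose each term as
\begin{align*}
\bbE_{a\sim \pi_\ell}[r_\ell(a,x_\ell)] - \bbE_{a\sim\pi_\ell, x \sim \Dcal}[r(a,x)]
= U_\ell + V_\ell,
\end{align*}
where $U_\ell = \bbE_{a\sim\pi_\ell}[r_\ell(a,x_\ell)] - \bbE_{a\sim\pi_\ell}[r(a,x_\ell)]$ and $V_\ell = \bbE_{a\sim \pi_\ell}[r(a, x_\ell)] - \bbE_{a\sim\pi_\ell, x \sim \Dcal}[r(a,x)]$.

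Second, I would set up an appropriate filtration. Let $\Hcal_{\ell-1}$ be the sigma-algebra generated by $\{(x_j, r_j)\}_{j < \ell}$ together with $\{\pi_j\}_{j \leq \ell}$ and $t_0$; let $\Hcal_{\ell-1}^{+}$ additionally include $x_\ell$. Under the stated independence of $\pi_\ell$ from future rewards/contexts and of $t_0$ from $\{(r_j,x_j)\}_{j \geq t_0}$, we get that $\bbE[U_\ell \mid \Hcal_{\ell-1}^{+}] = 0$ (since $\bbE[r_\ell(a,x_\ell) \mid x_\ell] = r(a, x_\ell)$) and $\bbE[V_\ell \mid \Hcal_{\ell-1}] = 0$ (since $x_\ell \sim \Dcal$ is independent of $\pi_\ell$). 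Thus $U_\ell$ and $V_\ell$ are each martingale difference sequences, each almost-surely in $[-1,1]$ because rewards lie in $[0,1]$.

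Third, I would deal with the random starting time $t_0$. Since $t_0$ is independent of the future data, $\indicator{\ell > t_0}$ is $\Hcal_{\ell-1}$-measurable, so $\tilde U_\ell = \indicator{\ell > t_0}U_\ell$ and $\tilde V_\ell = \indicator{\ell > t_0}V_\ell$ remain bounded martingale differences w.r.t.\ the same filtration. Applying \pref{lem:hoeffding_lil} with $m=1$, $a_\ell=-1$, $b_\ell = 1$ to each of the partial sums yields, with probability at least $1 - \delta$ each and simultaneously in $t$,
\begin{align*}
\Bigl|\sum_{\ell=1}^t \tilde U_\ell\Bigr| + \Bigl|\sum_{\ell=1}^t \tilde V_\ell\Bigr| = \Ocal\Bigl(\sqrt{(t - t_0) \ln \tfrac{\ln(t - t_0)}{\delta}}\Bigr),
\end{align*}
after absorbing the constant $1.44$ factors and noting that only $(t - t_0)$ indices contribute. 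A union bound over the two events and the triangle inequality give the claim.

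The routine bookkeeping is slightly delicate because of the random time $t_0$; the main subtlety is verifying that one can legitimately multiply the increments by $\indicator{\ell > t_0}$ without breaking the martingale property. Once that is settled via the independence hypothesis on $t_0$, everything reduces to two applications of the standard time-uniform Hoeffding bound, so no further obstacle is anticipated.
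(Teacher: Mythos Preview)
Your approach is essentially the paper's: multiply by $\indicator{\ell>t_0}$, check the martingale-difference property via the stated independence, and apply \pref{lem:hoeffding_lil}. The paper, however, does \emph{not} split the summand into $U_\ell+V_\ell$; it treats the full difference
\[
Z_\ell=\indicator{\ell>t_0}\bigl(\bbE_{a\sim\pi_\ell}[r_\ell(a,x_\ell)]-\bbE_{a\sim\pi_\ell,x\sim\Dcal}[r(a,x)]\bigr)
\]
directly as one martingale difference with respect to $\cF_\ell=\sigma(\{r_j,x_j,\pi_j\}_{j<\ell}\cup\{\pi_\ell,t_0\})$, then applies \pref{lem:hoeffding_lil} once per sign. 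Your extra decomposition is harmless but unnecessary, and it doubles the number of Hoeffding applications, so a literal accounting would give failure probability $4\delta$ rather than the $2\delta$ in the statement.

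One small fix: to obtain the $(t-t_0)$ scaling from \pref{lem:hoeffding_lil} you must take the increment bounds to be $a_\ell=-\indicator{\ell>t_0}$, $b_\ell=\indicator{\ell>t_0}$ (so $W_t=\sum_\ell (b_\ell-a_\ell)^2/4=t-t_0$), not the fixed $a_\ell=-1,b_\ell=1$ you wrote, which would give $W_t=t$. Your parenthetical ``only $(t-t_0)$ indices contribute'' is the right intuition but has to be encoded in the bounds, exactly as the paper does with $Z_\ell\in[-\indicator{\ell>t_0},\indicator{\ell>t_0}]$.
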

\begin{proof}
Define the sigma-algebra $\Fcal_\ell = \sigma( \{r_j, x_j, \pi_j\}_{j \leq \ell} \cup \{\pi_\ell, t_0\})$ and let 
\begin{align*}
    Z_\ell = \begin{cases}
    0 & \textrm{ if } \ell \leq t_0\\
    \bbE_{a \sim \pi_\ell}[r_\ell(a, x_\ell)]
    - \bbE_{a \sim \pi_\ell, x \sim \Dcal}[r(a, x)] & \textrm{otherwise.}
    \end{cases}
\end{align*}
The sequence $\{Z_\ell\}_{\ell \in \bbN}$ is a martingale difference sequence w.r.t. $\Fcal_{\ell}$ and $Z_\ell \in [-\indicator{\ell > t_0},+\indicator{\ell > t_0}]$. As a result, we can apply \pref{lem:hoeffding_lil} and get that
\begin{align*}
    \sum_{\ell = t_0 + 1}^{t}& \left[\bbE_{a \sim \pi_\ell}[r_\ell(a, x_\ell)]
    - \bbE_{a \sim \pi_\ell, x \sim \Dcal}[r(a, x)] \right] = \sum_{\ell = 1}^t Z_t
    \\
    &\leq 1.44 \sqrt{\max(2(t-t_0) , 2) \left( 1.4 \ln \ln \left(2 \left(\max\left(\frac{2(t-t_0)}{2} , 1 \right)\right)\right) + \ln \frac{5.2}{\delta}\right)}
    \\
    &= \Ocal \left(\sqrt{(t - t_0) \ln \frac{\ln (t - t_0)}{\delta}}\right)
\end{align*}
holds for all $t$ with probability at least $1 - \delta$. Applying the same argument to $-Z_\ell$ and a union bound completes the proof.
\end{proof}

Notice that, because contexts are i.i.d., a simple anytime Hoeffding bound implies the random variable $\maxrew(t)$ (recall this quantity is defined as the maximum sum of pseudo-expectations over realized contexts) is larger than $t \mathbb{E}_{a \sim \pi_\star(\cdot |x), x\sim \mathcal{D} } \left[ r(a,x)  \right]$ (up to a factor of $\widetilde{\mathcal{O}}(\sqrt{t})$). 
\begin{lemma}\label{lemma::expected_max_rewards_close}
If the environment $\bbB$ is stochastic 
then with probability at least $1-\delta$ 
\begin{equation*}
     t \mathbb{E}_{a \sim \pi_\star(\cdot |x), x\sim \mathcal{D} } \left[ r(a,x)  \right]   \leq \max_{\pi \in \Pi} \sum_{\ell=1}^t \mathbb{E}_{a \sim \pi}\left[ r_\ell( a, x_\ell  )  \right] + \Ocal\left(\sqrt{ t \ln \frac{t}{\delta} } \right)
\end{equation*}
for all $t \in \mathbb{N}$.
\end{lemma}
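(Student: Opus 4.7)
The plan is to reduce the bound to a one-line concentration argument after a simple lower-bound step on the max. Since the optimal policy $\pi_\star$ lies in $\Pi$, the first move is to drop the maximum by substituting $\pi_\star$:
\[
\max_{\pi \in \Pi} \sum_{\ell=1}^t \mathbb{E}_{a \sim \pi}\left[ r_\ell(a, x_\ell) \right] \;\geq\; \sum_{\ell=1}^t \mathbb{E}_{a \sim \pi_\star}\left[ r_\ell(a, x_\ell) \right],
\]
so it suffices to show that with probability at least $1-\delta$, uniformly in $t$,
\[
t\, \mathbb{E}_{a \sim \pi_\star,\, x \sim \mathcal{D}}[r(a,x)] - \sum_{\ell=1}^t \mathbb{E}_{a \sim \pi_\star}[r_\ell(a, x_\ell)] = \Ocal\!\left(\sqrt{t \ln \tfrac{t}{\delta}}\right).
\]

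Next, I would set up the martingale. Let $\mathcal{F}_\ell$ denote the sigma-algebra of all randomness up through round $\ell$, and define
\[
Z_\ell \;=\; \mathbb{E}_{a \sim \pi_\star,\, x \sim \mathcal{D}}[r(a,x)] \;-\; \mathbb{E}_{a \sim \pi_\star(\cdot|x_\ell)}[r_\ell(a, x_\ell)].
\]
Since $x_\ell \sim \mathcal{D}$ independently of $\mathcal{F}_{\ell-1}$ and $\mathbb{E}[r_\ell(a,x)\mid x, \mathcal{F}_{\ell-1}] = r(a,x)$ by the stochastic-reward assumption, iterated expectation gives
\[
\mathbb{E}\!\left[\mathbb{E}_{a \sim \pi_\star(\cdot|x_\ell)}[r_\ell(a, x_\ell)] \,\Big|\, \mathcal{F}_{\ell-1}\right] = \mathbb{E}_{x \sim \mathcal{D}}\!\left[\sum_a \pi_\star(a|x)\, r(a,x)\right] = \mathbb{E}_{a \sim \pi_\star,\, x \sim \mathcal{D}}[r(a,x)],
\]
so $\{Z_\ell\}$ is an $\mathcal{F}_\ell$-martingale difference sequence. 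Moreover $Z_\ell \in [-1,1]$ since rewards lie in $[0,1]$.

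The final step is to apply the anytime Azuma/Hoeffding-LIL bound (Lemma~\ref{lem:hoeffding_lil}, invoked in the same style as \pref{lem:xr_concentration}) to the sequence $\{Z_\ell\}$, which yields that with probability at least $1-\delta$, simultaneously for every $t \in \mathbb{N}$,
\[
\sum_{\ell=1}^t Z_\ell \;=\; \Ocal\!\left(\sqrt{t \ln \tfrac{\ln t}{\delta}}\right) \;=\; \Ocal\!\left(\sqrt{t \ln \tfrac{t}{\delta}}\right).
\]
Combining this with the lower bound on the max completes the argument. There is no significant obstacle here; the only thing to double-check is the martingale property, which rides entirely on the stochastic-reward assumption $\mathbb{E}[r_\ell(a,x)] = r(a,x)$ and the i.i.d.\ draws $x_\ell \sim \mathcal{D}$.
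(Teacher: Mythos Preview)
Your proposal is correct and follows essentially the same approach as the paper: define a bounded martingale difference sequence comparing the population optimum to the per-round reward of $\pi_\star$, apply the anytime Hoeffding bound (Lemma~\ref{lem:hoeffding_lil}), and then use $\pi_\star \in \Pi$ to pass to the max. The only cosmetic difference is that you fold both the context randomness and the reward noise into a single martingale via $r_\ell$, whereas the paper's $Z_\ell$ is written with the mean reward $r$; your version is arguably slightly cleaner since it directly matches the $r_\ell$ appearing in the lemma statement.
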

\begin{proof}
Consider the martingale sequence $\{ Z_\ell \}_{\ell=1}^\infty$ defined as $Z_\ell =  \mathbb{E}_{a \sim \pi_\star(\cdot |x), x\sim \mathcal{D} } \left[ r(a,x)  \right] - \mathbb{E}_{a \sim \pi_\star(\cdot |x_t) } \left[ r(a,x)  | x_t\right] $. By definition $|Z_\ell| \leq 2$ for all $\ell$. Therefore by an anytime Hoeffding bound 
(Lemma~\ref{lem:hoeffding_lil} in Appendix \ref{sa:additional_lemmas}), with probability at least $1-\delta$,

\begin{equation*}
    \sum_{\ell=1}^t Z_\ell  = \Ocal\left(\sqrt{ t \ln \frac{t}{\delta} } \right)
\end{equation*}

for all $t\in\mathbb{N}$. Thus,

\begin{align*}
     t \mathbb{E}_{a \sim \pi_\star(\cdot |x), x\sim \mathcal{D} } \left[ r(a,x)  \right]   &\leq \sum_{\ell=1}^t \mathbb{E}_{a \sim \pi_\star}\left[ r_\ell( a, x_\ell  )   \right]+ \Ocal\left(\sqrt{ t \ln \frac{t}{\delta} } \right)\\
     &\leq \max_{\pi \in \Pi} \sum_{\ell=1}^t \mathbb{E}_{a \sim \pi}\left[ r_\ell( a, x_\ell  )  \right]+ \Ocal\left(\sqrt{ t \ln \frac{t}{\delta} } \right)
\end{align*}

\end{proof}

Lemma~\ref{lemma::expected_max_rewards_close} implies that an algorithm that an algorithm that competes with $\maxrew(t) $ in turn can compete against $ t \mathbb{E}_{a \sim \pi_\star(\cdot |x), x\sim \mathcal{D} } \left[ r(a,x)  \right]$. This fact will help us argue that an adversarial algorithm has good performance in a stochastic environment.

Let's show that when the contexts are i.i.d., the counterfactual reward of any algorithm that decides what policy to play before the context is revealed is (up to a $\sqrt{t \ln \frac{t}{\delta}}$ factor) upper bounded by $t \mathbb{E}_{a \sim \pi_\star(\cdot |x), x\sim \mathcal{D} } \left[ r(a,x)  \right]$,

\begin{lemma}\label{lemma::fixed_pi_star_optimal}
If the environment is stochastic with i.i.d. contexts and the algorithm decides on its policy $\pi_t$ before observing context $x_t$ then with probability at least $1-\delta$,
\begin{equation*}
     \sum_{\ell=1}^t \mathbb{E}_{a \sim \pi_\ell(\cdot | x_\ell)}\left[r(a, x_\ell) | x_\ell\right]   \leq t \mathbb{E}_{a \sim \pi_\star(\cdot |x), x\sim \mathcal{D} } \left[ r(a,x)  \right] + \Ocal\left(\sqrt{ t \ln \frac{t}{\delta} } \right)
\end{equation*}
for all $t \in \mathbb{N}$.
\end{lemma}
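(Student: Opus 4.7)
The plan is to set up a martingale difference sequence exploiting the fact that $\pi_\ell$ is chosen prior to the realization of $x_\ell$, and then invoke an anytime Hoeffding bound. Concretely, let $\mathcal{F}_{\ell-1}$ be the $\sigma$-algebra generated by $\{x_j, r_j, \pi_j\}_{j < \ell} \cup \{\pi_\ell\}$, so that $\pi_\ell$ is $\mathcal{F}_{\ell-1}$-measurable while $x_\ell$ is independent of $\mathcal{F}_{\ell-1}$ (by the i.i.d.\ assumption on contexts and the ``decide before observing'' assumption on the algorithm). Define
\[
Z_\ell \;=\; \mathbb{E}_{a \sim \pi_\ell(\cdot\mid x_\ell)}\!\left[r(a,x_\ell)\mid x_\ell\right] \;-\; \mathbb{E}_{a \sim \pi_\ell,\, x \sim \mathcal{D}}\!\left[r(a,x)\right].
\]
By construction $\mathbb{E}[Z_\ell \mid \mathcal{F}_{\ell-1}] = 0$ (integrating over $x_\ell \sim \mathcal{D}$ with $\pi_\ell$ held fixed), and $|Z_\ell| \le 2$ by boundedness of rewards.

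Next, I would use the optimality of $\pi_\star$ in the stochastic sense: for any (random) policy $\pi_\ell$, conditioning on $\mathcal{F}_{\ell-1}$ and using the definition of $\pi_\star$ as the policy maximizing $\mathbb{E}_{a\sim\pi,x\sim\mathcal{D}}[r(a,x)]$,
\[
\mathbb{E}_{a \sim \pi_\ell,\, x \sim \mathcal{D}}[r(a,x)] \;\le\; \mathbb{E}_{a \sim \pi_\star,\, x \sim \mathcal{D}}[r(a,x)].
\]
Summing over $\ell \in [t]$, this gives a deterministic bound
\[
\sum_{\ell=1}^t \mathbb{E}_{a \sim \pi_\ell,\, x \sim \mathcal{D}}[r(a,x)] \;\le\; t\,\mathbb{E}_{a \sim \pi_\star,\, x \sim \mathcal{D}}[r(a,x)].
\]

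Finally, I would apply the anytime Hoeffding inequality (Lemma~\ref{lem:hoeffding_lil}) to the martingale $\sum_{\ell=1}^t Z_\ell$ to conclude that with probability at least $1-\delta$, simultaneously for all $t \in \mathbb{N}$,
\[
\sum_{\ell=1}^t Z_\ell \;=\; \mathcal{O}\!\left(\sqrt{t\,\ln\tfrac{t}{\delta}}\right).
\]
Rewriting this as $\sum_{\ell=1}^t \mathbb{E}_{a \sim \pi_\ell(\cdot|x_\ell)}[r(a,x_\ell)\mid x_\ell] \le \sum_{\ell=1}^t \mathbb{E}_{a \sim \pi_\ell, x \sim \mathcal{D}}[r(a,x)] + \mathcal{O}(\sqrt{t \ln(t/\delta)})$ and combining with the optimality step yields the claim. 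There is no real obstacle; the only subtlety is the clean definition of the filtration ensuring that $\pi_\ell$ is measurable at time $\ell-1$ so that $Z_\ell$ is truly a martingale difference. This mirrors the structure of Lemma~\ref{lemma::expected_max_rewards_close} but applied to the algorithm's own policies rather than to $\pi_\star$.
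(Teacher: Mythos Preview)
Your proposal is correct and follows essentially the same approach as the paper: define the martingale difference $Z_\ell = \mathbb{E}_{a \sim \pi_\ell(\cdot|x_\ell)}[r(a,x_\ell)\mid x_\ell] - \mathbb{E}_{a \sim \pi_\ell, x \sim \mathcal{D}}[r(a,x)]$, apply the anytime Hoeffding bound (Lemma~\ref{lem:hoeffding_lil}), and then use the optimality of $\pi_\star$ to replace the per-round expectation by $\mathbb{E}_{a \sim \pi_\star, x \sim \mathcal{D}}[r(a,x)]$. If anything, your treatment of the filtration is more explicit than the paper's.
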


\begin{proof}
Similar to the proof of Lemma~\ref{lemma::expected_max_rewards_close}, consider the martingale sequence $\{ Z_\ell \}_{\ell=1}^\infty$ defined as $Z_\ell = \mathbb{E}_{a \sim \pi_\ell(\cdot | x_\ell) } \left[ r(a, x_\ell)  | x_\ell \right] - \mathbb{E}_{a \sim \pi_\ell( \cdot | x), x \sim \mathcal{D} } \left[ r(a, x)\right]$ . By definition $|Z_\ell| \leq 2$. Therefore by the anytime Hoeffding bound of Lemma~\ref{lem:hoeffding_lil}, with probability at least $1-\delta$
\begin{equation*}
    \sum_{\ell=1}^t Z_\ell = \sum_{\ell=1}^t \mathbb{E}_{a \sim \pi_\ell(\cdot | x_\ell)}\left[r(a, x_\ell) | x_\ell\right] - \sum_{\ell=1}^t  \mathbb{E}_{a \sim \pi_\ell(\cdot |x), x\sim \mathcal{D} } \left[ r(a,x)  \right] = \Ocal\left(\sqrt{ t \ln \frac{t}{\delta} } \right)
\end{equation*}
for all $t\in\mathbb{N}$. 
By definition $\pi_\star$ is the policy satisfying $\pi_\star = \argmax_{\pi} \mathbb{E}_{a \sim \pi, x \sim \mathcal{D} } \left[ r(a,x) \right]$ and therefore for any $\pi_\ell$, the inequality $\mathbb{E}_{a \sim \pi_\ell, x \sim \mathcal{D} } \left[ r(a,x) \right] \leq \mathbb{E}_{a \sim \pi_\star, x \sim \mathcal{D} } \left[ r(a,x) \right]$ holds. 
\end{proof}

Lemma~\ref{lemma::fixed_pi_star_optimal} implies that in the case of i.i.d. contexts a learner that selects a policy based on historical data (that is the learner selects a policy to play before the context is revealed) cannot do substantially better than playing $\pi_\star = \argmax_{\pi \in \Pi} \mathbb{E}_{a \sim \pi(\cdot |x), x\sim \mathcal{D} } \left[ r(a,x)  \right] $ during all timesteps. This will prove helpful when deriving bounds for the gap estimation phase.

\section{Additional Technical Lemmas}\label{sa:additional_lemmas}

\begin{lemma}[Time-uniform Hoeffding bound]\label{lem:hoeffding_lil}
Let $S_t = \sum_{i=1}^t Y_t$ be a martingale sequence w.r.t. some sigma algebra $\cF_t$ and let $Y_t \in [a_t, b_t]$ almost surely for $a_t, b_t$ measurable in $\cF_t$. Then with probability at least $1 - \delta$ for all $t \in \mathbb N$
\begin{align*}
    S_t &\leq  1.44 \sqrt{\max(W_t , m) \left( 1.4 \ln \ln \left(2 \left(\max\left(\frac{W_t}{m} , 1 \right)\right)\right) + \ln \frac{5.2}{\delta}\right)}
\end{align*}
where $W_t = \sum_{i=1}^t \frac{(b_i - a_i)^2}{4}$ and $m > 0$ arbitrary but fixed.
\end{lemma}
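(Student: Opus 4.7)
The plan is to prove the bound by building a nonnegative exponential supermartingale from Hoeffding's lemma and applying a geometric peeling argument over the predictable variance proxy $W_t$ to obtain a time-uniform, $W_t$-adaptive inequality. First, since $Y_t \in [a_t, b_t]$ conditional on $\cF_{t-1}$, Hoeffding's lemma gives $\bbE[\exp(\lambda Y_t) \mid \cF_{t-1}] \leq \exp(\lambda^2 (b_t - a_t)^2 / 8)$ for every $\lambda \in \bbR$, so that the process
\begin{align*}
    M_t(\lambda) = \exp\Bigl(\lambda S_t - \tfrac{\lambda^2}{2} W_t\Bigr),
\end{align*}
with $W_t = \sum_{i \leq t} (b_i - a_i)^2/4$, is a nonnegative supermartingale with $M_0(\lambda) = 1$. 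Ville's maximal inequality then implies that for each fixed $\lambda > 0$ and $\eta \in (0,1)$, with probability at least $1 - \eta$, simultaneously for all $t \in \bbN$, $\lambda S_t \leq \tfrac{\lambda^2}{2} W_t + \ln(1/\eta)$.

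The second step is the peeling. Because the optimal $\lambda$ for the bound above depends on the unknown terminal value of $W_t$, I would partition the event $\{W_t \geq m\}$ into dyadic shells $\cG_k = \{t : W_t \in [m \cdot 2^k, m \cdot 2^{k+1}]\}$ for $k = 0, 1, 2, \ldots$, apply the single-$\lambda$ bound above on each shell with $\lambda_k$ tuned to $W_t \approx m \cdot 2^{k+1}$ and with failure probability $\eta_k$, and take a union bound. Choosing $\eta_k$ of the form $\eta_k = c_1 \delta (k+1)^{-p}$ for constants $c_1, p > 1$ with $\sum_k \eta_k \leq \delta$ absorbs the peeling loss into the stated numerical constants $1.4$ and $5.2$. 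On the shell $\cG_k$ one has $k \leq \log_2(W_t / m)$, so $\ln(1/\eta_k) \lesssim \ln \ln(2 W_t / m) + \ln(1/\delta)$, producing the characteristic iterated-logarithm factor. Optimizing $\lambda_k$ on each shell and tracking constants yields a bound of the form $S_t \leq \sqrt{C_1 W_t (C_2 \ln \ln(2 W_t / m) + \ln(C_3/\delta))}$ uniformly over $\{t : W_t \geq m\}$.

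For the regime $\{t : W_t < m\}$, I would apply Ville's inequality one additional time with a single $\lambda$ calibrated to the ``floor'' value $m$ and failure probability absorbed into $\delta$, so that the maximum with $m$ inside the square root handles small $W_t$. Combining the two regimes and collecting constants into the given $1.44$, $1.4$, and $5.2$ produces exactly the displayed inequality. The main obstacle is purely bookkeeping: choosing the peeling ratio (I expect $2$), the polynomial weighting $(k+1)^{-p}$, and the optimization over $\lambda_k$ so that the leading constant becomes $1.44 = \sqrt{2.08}$ rather than something larger. No new probabilistic idea beyond Hoeffding's lemma plus Ville's inequality plus a union bound is needed — this is a standard line-of-iterated-logarithm construction, e.g.\ in the spirit of Howard et al.\ on time-uniform confidence sequences, and the proof could alternatively be imported verbatim from that literature.
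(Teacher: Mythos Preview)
Your proposal is correct and follows essentially the same route as the paper: the paper's proof simply invokes Howard et al.'s time-uniform concentration framework directly (identifying $S_t$ as a sub-$\psi_N$, hence sub-$\psi_P$, process via Hoeffding's lemma and then applying their Theorem~1 with the stitching parameters $s=1.4$, $\eta=2$), whereas you have unpacked that same machinery into its constituent Hoeffding-plus-Ville-plus-geometric-peeling steps. You even note this equivalence yourself at the end of your sketch, so there is no substantive difference to discuss.
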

\begin{proof}
By entry ``Hoeffding I'' in Table~3 of \citet{howard2018uniform}, $S_t$ is a sub-$\psi_N$ process with variance process $W_t$. Further, by Proposition~2 in \citet{howard2020time}, this implies that it is also a sub-$\psi_P$ process with $c = 0$. We now apply \pref{lem:howard_bound} to achieve the desired result. 
\end{proof}

\begin{lemma}[Time-uniform Bernstein bound]
\label{lem:bernstein_lil}
Let $S_t = \sum_{i=1}^t Y_i$ be a martingale sequence w.r.t. a sigma algebra $\cF_t$ and let $Y_t \leq c$ a.s. for some parameter $c > 0$. Then with probability at least $1 - \delta$ for all $t \in \mathbb{N}$
\begin{align*}
    S_t &\leq  1.44 \sqrt{\max(W_t , m) \left( 1.4 \ln \ln \left(2 \left(\max\left(\frac{W_t}{m} , 1 \right)\right)\right) + \ln \frac{5.2}{\delta}\right)}\\
   & \qquad + 0.41 c  \left( 1.4 \ln \ln \left(2 \left(\max\left(\frac{W_t}{m} , 1\right)\right)\right) + \ln \frac{5.2}{\delta}\right)
\end{align*}
where $W_t = \sum_{i=1}^t \bbE[Y_i^2 | \cF_i]$ and $m > 0$ is arbitrary but fixed.
\end{lemma}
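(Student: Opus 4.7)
The plan is to mirror almost exactly the proof of the companion Hoeffding-type bound (\pref{lem:hoeffding_lil}), substituting the Bernstein/Bennett MGF inequality for the Hoeffding MGF inequality at the start. The whole argument is an appeal to the general time-uniform concentration machinery of \citet{howard2018uniform} and \citet{howard2020time}, so there is no probabilistic heavy lifting to do; the work is purely bookkeeping to verify that our process falls into the right class and that the stated constants come out correctly.

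First I would verify that $S_t$ is a sub-$\psi_P$ process with variance process $W_t = \sum_{i \le t}\bbE[Y_i^2\mid\cF_i]$ and scale parameter $c$. The hypothesis $Y_i \le c$ almost surely, combined with $\bbE[Y_i\mid\cF_{i-1}] = 0$, is the classical Bennett setting: one has the one-sided MGF bound $\bbE[\exp(\lambda Y_i)\mid\cF_{i-1}] \le \exp(\psi_P(\lambda)\,\bbE[Y_i^2\mid\cF_i])$ for $\lambda \in [0, 1/c)$, where $\psi_P(\lambda) = \lambda^2/(2(1-c\lambda))$ is the standard sub-Poisson CGF. This is the ``Bennett'' entry in Table~3 of \citet{howard2018uniform} and places $S_t$ in the sub-$\psi_P$ family with variance process $W_t$ and scale $c$, exactly as required.

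Next, I would plug this characterization directly into the uniform boundary result \pref{lem:howard_bound} (the same black-box used for the Hoeffding case). That lemma, applied to a sub-$\psi_P$ process with stitching geometric parameter chosen identically to the Hoeffding proof, yields a boundary of the form
\[
S_t \le 1.44\sqrt{\max(W_t,m)\Bigl(1.4\ln\ln\bigl(2\max(W_t/m,1)\bigr) + \ln\tfrac{5.2}{\delta}\Bigr)} + 0.41\,c\,\Bigl(1.4\ln\ln\bigl(2\max(W_t/m,1)\bigr) + \ln\tfrac{5.2}{\delta}\Bigr),
\]
valid simultaneously for all $t \in \bbN$ with probability at least $1-\delta$. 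The first (sub-Gaussian-like) term is identical in form to the Hoeffding bound, and the second linear-in-$c$ term is precisely the additional contribution one picks up from the sub-Poisson correction; the constants $1.44$, $0.41$, $1.4$, $5.2$ are those produced by the specific geometric stitching parameters used in the Howard et al. uniform-boundary construction, and match those already used in \pref{lem:hoeffding_lil}.

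The only real obstacle I foresee is a bookkeeping one: making sure that the version of $\pref{lem:howard_bound}$ invoked here is stated for sub-$\psi_P$ processes (with the scale parameter $c$ appearing explicitly), rather than only for sub-$\psi_N$ processes as needed for Hoeffding. If the stated form of \pref{lem:howard_bound} is already general enough to take a nonzero $c$, the proof reduces to one line of citation; otherwise one needs a brief parallel derivation of the stitched boundary for the sub-Poisson family, which is still a direct application of the peeling argument in \citet{howard2020time}. No step involves any delicate probability-theoretic calculation beyond what is already done in the companion Hoeffding lemma.
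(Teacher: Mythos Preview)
Your proposal is correct and follows essentially the same two-step route as the paper: identify $S_t$ as a sub-$\psi_P$ process via the ``Bennett'' entry of Table~3 in \citet{howard2018uniform}, then invoke the general boundary result \pref{lem:howard_bound}. The paper's proof is literally those two citations, so your only additional content is the (reasonable) sanity check that \pref{lem:howard_bound} is stated for general scale parameter $c$; the paper simply applies it directly.
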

\begin{proof}
By entry ``Bennett'' in Table~3 of \citet{howard2018uniform}, $S_t$ is a sub-$\psi_P$ process with variance process $W_t$ and parameter $c = 0$. We now apply \pref{lem:howard_bound} to achieve the desired result. 
\end{proof}

\begin{lemma}[General concentration result]
\label{lem:howard_bound}
In the terminology of \citet{howard2018uniform}, let $S_t = \sum_{i=1}^t Y_i$ be a sub-$\psi_P$ process with parameter $c = 0$ and variance process $W_t$. Then with probability at least $1 - \delta$ for all $t \in \mathbb{N}$
\begin{align*}
    S_t &\leq  1.44 \sqrt{\max(W_t , m) \left( 1.4 \ln \ln \left(2 \left(\max\left(\frac{W_t}{m} , 1 \right)\right)\right) + \ln \frac{5.2}{\delta}\right)}\\
   & \qquad + 0.41 c  \left( 1.4 \ln \ln \left(2 \left(\max\left(\frac{W_t}{m} , 1\right)\right)\right) + \ln \frac{5.2}{\delta}\right)
\end{align*}
where $m > 0$ is arbitrary but fixed.
\end{lemma}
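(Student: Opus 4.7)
\medskip

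\noindent\textbf{Proof proposal.} The plan is to invoke the stitching (a.k.a.\ peeling) framework developed by Howard et al.\ for sub-$\psi_P$ processes. The starting point is the defining supermartingale property: by the sub-$\psi_P$ condition with parameter $c$, the process $M_t^\lambda \;=\; \exp\bigl(\lambda S_t - \psi_P(\lambda)\, W_t\bigr)$, where $\psi_P(\lambda) = \lambda^2 / (2(1-c\lambda))$, is a nonnegative supermartingale for every $\lambda \in [0, 1/c)$ (and for every $\lambda \geq 0$ when $c=0$). Applied at a fixed time via Markov's inequality, this yields an exponential tail bound whose optimal $\lambda$ depends on the unknown value of $W_t$. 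Making such a bound hold uniformly in $t$ is the task of the stitching argument.

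The key steps I would carry out, in order, are: (i) fix a geometric grid $\{m\eta^{k}\}_{k\geq 0}$ over the range of the variance process $W_t$, with ratio $\eta>1$ to be optimized later; (ii) on each ``slab'' $\{W_t \in [m\eta^{k-1}, m\eta^k)\}$, apply the fixed-$\lambda$ supermartingale bound with $\lambda$ tuned to $m\eta^k$, which gives a sub-Gaussian/sub-gamma style tail of the form $\sqrt{m\eta^k \cdot u_k} + c\, u_k$ where $u_k$ is the allotted log-failure budget; (iii) distribute the total failure probability $\delta$ across the slabs using weights $w_k \propto 1/(k+1)^s$ for some $s>1$ (this is what produces the iterated-logarithm factor, since $\ln(1/w_k) \asymp \ln\ln(\eta^k) \asymp \ln\ln(W_t/m)$); (iv) take a union bound over all slabs and combine the resulting bounds into a single expression valid uniformly over $t \in \bbN$. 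Passing to the supremum on each slab then replaces $\eta^k$ with $W_t/m$ (up to the factor $\eta$), producing the $\max(W_t, m)$ and $\ln\ln(2\max(W_t/m, 1))$ quantities appearing in the statement. Finally, I would verify that at the boundary case $W_t < m$ the grid term $k=0$ already yields the correct constant behavior, which explains the $\max$ with $m$ in the final bound.

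The main obstacle is not the structural argument --- that is exactly the machinery of Howard et al.\ --- but rather producing the explicit numerical constants $1.44$, $1.4$, $5.2$, and $0.41$. These arise from jointly optimizing the grid ratio $\eta$, the exponent $s$ governing the weights $w_k$, and the sub-$\psi_P$ conjugate $\psi_P^\star$ at the chosen $\lambda_k$. In practice I would not redo this optimization by hand; instead, the cleanest route is to cite the polynomial stitched boundary (Theorem 1 and the ``Bernstein'' / ``Hoeffding I'' rows of Table 4 in \citet{howard2018uniform}) and specialize the published parameter choices, reading off the constants directly from their tabulated values. The remaining computation is algebraic simplification to put the boundary in the form stated in the lemma, together with checking that the transition between the $W_t \leq m$ and $W_t > m$ regimes is handled by the $\max$ operations in both the square-root term and the linear $c$ term.
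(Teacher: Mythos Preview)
Your proposal is correct and takes essentially the same approach as the paper: both invoke Theorem~1 of \citet{howard2018uniform} directly, with the paper specifying the particular parameter choices $s=1.4$ and $\eta=2$ (referencing Equation~10 there) to obtain the stated constants. Your write-up expands on the underlying stitching mechanism, but the actual proof is just a citation.
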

\begin{proof}
This bound follows from \citet{howard2018uniform} Theorem~1 with Equation~10 in that paper by setting $s=1.4$ and $\eta = 2$.
\end{proof}

\begin{lemma}\label{lemma:simplified_freedman}
Suppose $\{ X_t \}_{t=1}^\infty$ is a martingale difference sequence with $| X_t | \leq b$. Let 
\begin{equation*}
    \mathrm{Var}_\ell(X_\ell) = \mathbf{Var}( X_\ell | X_1, \cdots, X_{\ell-1})
\end{equation*}
Let $V_t = \sum_{\ell=1}^t \mathrm{Var}_\ell(X_\ell)$ be the sum of conditional variances of $X_t$.  Then we have that for any $\delta' \in (0,1)$ and $t \in \mathbb{N}$
\begin{equation*}
    \mathbb{P}\left(  \sum_{\ell=1}^t X_\ell >    2\sqrt{V_t}A_t + 3b A_t^2 \right) \leq \delta'~,
\end{equation*}
where $A_t = \sqrt{2 \ln \ln \left(2 \left(\max\left(\frac{V_t}{b^2} , 1\right)\right)\right) + \ln \frac{6}{\delta'}}$.
\end{lemma}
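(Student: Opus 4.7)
\textbf{Proof plan for Lemma~\ref{lemma:simplified_freedman}.} The target inequality is a Freedman-type bound in which the deviation scales with the \emph{random} conditional variance $V_t$ rather than a pre-specified upper bound for it, at the price of an iterated-logarithm factor. My plan is to derive it from a single-scale Freedman inequality via a geometric peeling (stratification) argument over the possible values of $V_t$. The iterated logarithm will then emerge naturally from the $\sum_k 1/k^2$ weighting needed to union-bound across peeling levels. An alternative route is to invoke the Howard--Ramdas--McAuliffe machinery directly via \pref{lem:howard_bound}: since $|X_t|\le b$ implies (by Bennett's MGF bound) that $S_t=\sum_\ell X_\ell$ is sub-$\psi_P$ with variance process $V_t$ and scale $c$ proportional to $b$, one can apply \pref{lem:howard_bound} with $m = b^2$ and simplify. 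I will favor the peeling approach as it is self-contained and exposes where the $A_t$ shape comes from.

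\textbf{Step 1: a single-scale Freedman bound.} For fixed $v>0$ and $\delta''\in(0,1)$, the classical Freedman inequality gives
\[
\mathbb{P}\!\left(\sum_{\ell=1}^{t} X_\ell > \sqrt{2 v \,\ln(1/\delta'')}\; +\; \tfrac{b}{3}\ln(1/\delta''),\;\; V_t \le v\right) \;\le\; \delta''~.
\]
This is our building block.

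\textbf{Step 2: peeling over $V_t$.} I define levels $v_0 = b^2$ and $v_k = b^2 \cdot 2^{k}$ for $k \ge 1$, and apply Step~1 at each $v_k$ with failure budget $\delta''_k = \tfrac{\delta'}{C\,k^2}$ for a suitable normalization constant $C$ (so that $\sum_{k\ge 1} \delta''_k \le \delta'$ and the residual covers $k=0$). On the event $\{V_t \in (v_{k-1}, v_k]\}$ we have $v_k \le 2\max(V_t,b^2)$ and $k \le 1 + \log_2(\max(V_t,b^2)/b^2)$, so $\ln(1/\delta''_k) = \ln(C k^2/\delta') \lesssim 2\ln\ln\!\bigl(2\max(V_t/b^2,1)\bigr) + \ln(6/\delta')$, which is exactly $A_t^2$ (for appropriate $C\le 6$). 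Substituting into the Freedman tail and using $\sqrt{2v_k\ln(1/\delta''_k)} \le 2\sqrt{V_t}\,A_t$ and $\tfrac{b}{3}\ln(1/\delta''_k) \le \tfrac{b}{3} A_t^2 \le 3 b A_t^2$ (after absorbing slack) yields, on that slice, $\sum_\ell X_\ell \le 2\sqrt{V_t}\,A_t + 3 b A_t^2$. A union bound over $k\in\{0,1,2,\dots\}$ then gives the claim.

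\textbf{Step 3: the corner case $V_t < b^2$.} The definition $A_t^2 \ge \ln(6/\delta') + 2\ln\ln 2$ together with the $\max(V_t/b^2,1)$ inside $A_t$ ensures that the $k=0$ slice $\{V_t \le b^2\}$ also satisfies the target bound (with room to spare), because in that regime the $3 b A_t^2$ term alone already dominates $\sqrt{2 b^2 \ln(1/\delta'_0)} + \tfrac{b}{3}\ln(1/\delta'_0)$. Combining this with Step~2 and collecting the probability budgets gives the stated bound at level $\delta'$.

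\textbf{Main obstacle.} The real work is constant tracking. The statement has specific constants $2$, $3$, and $6$, while Freedman naturally yields $\sqrt{2}$ and $1/3$, and the peeling introduces a $\sum 1/k^2 = \pi^2/6$ factor. I expect to have to absorb the $\pi^2/6$ and the peeling-base-2 slack into the choice of the constant $6$ inside $\ln(6/\delta')$, and to verify that $3 b A_t^2$ (rather than $\tfrac{b}{3}A_t^2$) is loose enough to cover the cross-terms produced when writing $\sqrt{2 v_k \ln(1/\delta''_k)}$ as a multiple of $\sqrt{V_t}\,A_t$. This is arithmetic, but it is the part where a careless bookkeeping would produce the wrong constants.
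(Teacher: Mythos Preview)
Your alternative route---invoking \pref{lem:bernstein_lil} (equivalently \pref{lem:howard_bound}) with $c=b$ and $m=b^2$ and then simplifying---is precisely what the paper does. The paper's proof is four lines: apply \pref{lem:bernstein_lil}, relax the constants $1.44\to 2$, $1.4\to 2$, $5.2\to 6$, $0.41\to 1$ to obtain $S_t\le 2\max(\sqrt{V_t},b)\,A_t + bA_t^2$, then use $\max(\sqrt{V_t},b)\le \sqrt{V_t}+b$ and $A_t\ge 1$ to absorb the extra $2bA_t$ into $2bA_t^2$, giving $2\sqrt{V_t}A_t + 3bA_t^2$.

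Your preferred peeling route is sound at the structural level and would produce a bound of this \emph{shape}, but matching the exact constants in the statement is more fragile than you indicate. With base-$2$ peeling you get $k\le 1+\log_2\!\bigl(\max(V_t/b^2,1)\bigr)$, so $2\ln k$ is asymptotically $2\ln\log_2 x \approx 2\ln\ln x + 2\ln(1/\ln 2)$, which exceeds the target $2\ln\ln(2x)$ by a constant that does \emph{not} fit inside $\ln(6/\delta')$. To hit the constants $(2,3,6)$ exactly via peeling you would need either a different peeling base (e.g.\ base $e$) or to inflate the $6$, and even then the $\sqrt{2v_k\ln(1/\delta''_k)}\le 2\sqrt{V_t}A_t$ step costs an extra $\sqrt{2}$ from $v_k\le 2V_t$ that has to be reconciled with the leading $\sqrt{2}$ from Freedman. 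So the ``arithmetic'' you flag as the main obstacle is genuinely tight; the paper's approach sidesteps it entirely by borrowing the constants already tuned inside the Howard--Ramdas--McAuliffe bound.
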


\begin{proof}
We are in a position to use \ref{lem:bernstein_lil} (with $c = b$). Let $S_t = \sum_{\ell=1}^t X_t$ and $W_t = \sum_{\ell=1}^t \mathrm{Var}_\ell(X_\ell)$. Let's set $m = b^2$. It follows that with probability $1-\delta'$ for all $t \in \mathbb{N}$

\begin{align*}
    S_t &\leq  1.44 \sqrt{\max(W_t , b^2) \left( 1.4 \ln \ln \left(2 \left(\max\left(\frac{W_t}{b^2} , 1 \right)\right)\right) + \ln \frac{5.2}{\delta'}\right)}\\
   & \qquad + 0.41 b  \left( 1.4 \ln \ln \left(2 \left(\max\left(\frac{W_t}{b} , 1\right)\right)\right) + \ln \frac{5.2}{\delta'}\right)\\
   &\leq 2 \sqrt{\max(W_t , b^2) \left( 2 \ln \ln \left(2 \left(\max\left(\frac{W_t}{b^2} , 1 \right)\right)\right) + \ln \frac{6}{\delta'}\right)}\\
   & \qquad + b  \left( 2 \ln \ln \left(2 \left(\max\left(\frac{W_t}{b^2} , 1\right)\right)\right) + \ln \frac{6}{\delta'}\right) \\
   &= 2\max(\sqrt{W_t}, b)A_t + bA_t^2\\
   &\leq 2\sqrt{W_t}A_t + 2bA_t + bA_t^2\\
   &\stackrel{(i)}{\leq} 2\sqrt{W_t}A_t + 3b A_t^2~,
\end{align*}
where $A_t = \sqrt{2 \ln \ln \left(2 \left(\max\left(\frac{W_t}{b^2} , 1\right)\right)\right) + \ln \frac{6}{\delta'}}$. Inequality $(i)$ follows because $A_t \geq 1$.

Setting $V_t = W_t$ we conclude the proof.
\end{proof}

In turn, a corollary of the previous lemma is the following.

\begin{lemma}\label{lemma:simplified_freedman_norm}
Suppose $\{ X_t \}_{t=1}^\infty$ is a martingale difference sequence with $| X_t | \leq b_t$ with $b_t$ a non-decreasing deterministic sequence. Let 
\begin{equation*}
    \mathrm{Var}_\ell(X_\ell) = \mathbf{Var}( X_\ell | X_1, \cdots, X_{\ell-1})
\end{equation*}
Let $V_t = \sum_{\ell=1}^t  \mathrm{Var}_\ell(X_\ell) 
$ be the sum of conditional variances of $X_t$. Then for any $\delta \in (0,1)$ and $t \in \mathbb{N}$ we have
\begin{equation*}
    \mathbb{P}\left(  \sum_{\ell=1}^t X_\ell >    2\sqrt{V_t}A_t + 3B_t A_t^2 \right) \leq \delta~,
\end{equation*}
where
$A_t(\delta) =2 \sqrt{ \ln \frac{12t^2}{\delta }}$.
\end{lemma}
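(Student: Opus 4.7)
The plan is to reduce this statement to the previously established \pref{lemma:simplified_freedman} by exploiting the monotonicity of $b_t$. Interpreting $B_t$ as $b_t$ (the largest pointwise bound among $X_1,\ldots,X_t$), first I would observe that since $b_\ell$ is non-decreasing, we have the \emph{uniform} bound $|X_\ell|\leq b_t$ for every $\ell\in\{1,\ldots,t\}$. This lets us apply \pref{lemma:simplified_freedman} to the truncated sequence $X_1,\ldots,X_t$ with $b:=b_t$, yielding, with probability at least $1-\delta$,
\begin{equation*}
\sum_{\ell=1}^t X_\ell \;\leq\; 2\sqrt{V_t}\,\widetilde A_t + 3b_t\,\widetilde A_t^{\,2}, \qquad \widetilde A_t=\sqrt{2\ln\ln\!\left(2\max\!\left(\tfrac{V_t}{b_t^2},1\right)\right)+\ln\tfrac{6}{\delta}}.
\end{equation*}

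Next I would show that $\widetilde A_t \leq A_t(\delta)=2\sqrt{\ln(12 t^2/\delta)}$, which makes the conclusion immediate by monotonicity in $A_t$ of both terms on the right-hand side. The crucial ingredient is the deterministic estimate $V_t = \sum_{\ell=1}^t \operatorname{Var}_\ell(X_\ell) \leq \sum_{\ell=1}^t b_\ell^2 \leq t\,b_t^2$, so that $\max(V_t/b_t^2,1)\leq t$ and therefore
\begin{equation*}
2\ln\ln(2\max(V_t/b_t^2,1)) \;\leq\; 2\ln\ln(2t) \;\leq\; 2\ln(2t),
\end{equation*}
using $\ln\ln x\leq \ln x$ for $x\geq e$ (the edge case $t=1$ is handled directly since then $V_t\leq b_t^2$ and $\widetilde A_t^2\leq \ln(6/\delta)+2\ln\ln 2$). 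Combining, $\widetilde A_t^2 \leq 2\ln(2t)+\ln(6/\delta) \leq 4\ln(12t^2/\delta) = A_t(\delta)^2$, since on the right-hand side the additive and logarithmic constants dominate $2\ln 2 + \ln 6$ and the $2\ln t$ term is comfortably absorbed by $8\ln t$.

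There is no real obstacle here since both inequalities are elementary; the only mild subtlety is handling the $\ln\ln$ factor cleanly and keeping track of constants so that $\widetilde A_t\leq A_t(\delta)$ holds uniformly in $t$ and $\delta\in(0,1)$. Once $\widetilde A_t\leq A_t(\delta)$ is established, the two-term bound from \pref{lemma:simplified_freedman} immediately implies $\sum_{\ell=1}^t X_\ell\leq 2\sqrt{V_t}\,A_t+3 b_t\,A_t^2$ on the same event of probability $\geq 1-\delta$, which is the desired conclusion.
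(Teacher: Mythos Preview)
Your argument is correct for the statement as literally written (a pointwise-in-$t$ tail bound), and the reduction to \pref{lemma:simplified_freedman} via the truncated sequence with uniform envelope $b_t$ is exactly the right move. However, the paper's proof takes a different route and proves something strictly stronger: it applies \pref{lemma:simplified_freedman} with the shrunk parameter $\delta' = \delta/(2t^2)$ for each $t$ and then takes a union bound over $t\in\mathbb N$. This produces the inequality \emph{simultaneously} for all $t$ on a single event of probability $\geq 1-\delta$.

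The distinction matters because the $12t^2/\delta$ inside $A_t(\delta)$ has different origins in the two arguments. In your approach it is just a (very generous) overapproximation of $\ln(6/\delta) + 2\ln\ln(2t)$; in the paper's approach it is forced by the union bound, since $\ln(6/\delta') = \ln(12t^2/\delta)$. More importantly, the downstream applications in the paper (e.g.\ \pref{lemma::weighted_lower_bound_fixed_action_high_prob}, \pref{lemma::bounding_event_II}, \pref{lemma::bounding_term_A}) all invoke this lemma to get a bound that holds ``simultaneously for all $t$'', which your pointwise argument does not deliver. To recover the intended simultaneous guarantee, you would need to replace $\delta$ by $\delta/(2t^2)$ before invoking \pref{lemma:simplified_freedman} and then union bound, after which your overapproximation step becomes essentially automatic.
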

\begin{proof}
For any $t$ define a martingale difference sequence $\{X_\ell^{(t)}\}_{\ell =1}^\infty$ as follows:
\begin{equation*}
    X_\ell^{(t)} = \begin{cases}
        X_{\ell} &\text{ if } \ell \leq t\\
        0 &\text{otherwise }
        \end{cases}
\end{equation*} 
We apply Lemma~\ref{lemma:simplified_freedman} with parameter $\delta' = \frac{\delta}{2t^2}$ and $b = B_t$, and then overapproximate. A union bound over $t \in \mathbb{N}$ gets the desired result. 
\end{proof}

\begin{lemma}\label{lemma::super_simplified_freedman_norm}
Suppose $\{ X_t \}_{t=1}^\infty$ is a martingale difference sequence with $| X_t | \leq b_t$ with $b_t$ a non-decreasing deterministic sequence. Let 
\begin{equation*}
    \mathrm{Var}_\ell(X_\ell) = \mathbf{Var}( X_\ell | X_1, \cdots, X_{\ell-1})
\end{equation*}
Let $V_t = \sum_{\ell=1}^t  \mathrm{Var}_\ell(X_\ell) 
$ be the sum of conditional variances of $X_t$. Then we have that for any $\delta \in (0,1)$ and $t \in \mathbb{N}$
\begin{equation*}
    \mathbb{P}\left(  \sum_{\ell=1}^t X_\ell >    4\sqrt{V_t \ln \frac{12t^2}{\delta}} + 12b_t \ln \frac{12t^2}{\delta} \right) \leq \delta~,
\end{equation*}

\end{lemma}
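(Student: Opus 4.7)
My plan is to derive the stated bound as a direct instantiation of \pref{lemma:simplified_freedman_norm}, which already provides the time-uniform Freedman-type concentration inequality. I would not attempt a new martingale argument here; the two-paragraph structure is really one step of simplification followed by algebraic substitution.

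First I would invoke \pref{lemma:simplified_freedman_norm} verbatim, using the same martingale difference sequence $\{X_t\}$ and the same deterministic non-decreasing bound $b_t$ on $|X_t|$. That lemma gives, with probability at least $1-\delta$ simultaneously,
\begin{equation*}
    \sum_{\ell=1}^t X_\ell \;\leq\; 2\sqrt{V_t}\,A_t + 3\,b_t\,A_t^2,
    \qquad A_t = 2\sqrt{\ln\!\tfrac{12 t^2}{\delta}},
\end{equation*}
where I am reading the $B_t$ appearing in the statement of \pref{lemma:simplified_freedman_norm} as the uniform magnitude bound $b_t$ (it is the parameter passed in via $b = B_t$ in that lemma's proof).

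Next I would just plug in the explicit form of $A_t$. The linear-in-$\sqrt{V_t}$ term becomes $2\sqrt{V_t}\cdot 2\sqrt{\ln(12t^2/\delta)} = 4\sqrt{V_t \ln(12t^2/\delta)}$, and the quadratic-in-$A_t$ term becomes $3 b_t \cdot 4\ln(12t^2/\delta) = 12\,b_t \ln(12t^2/\delta)$. Summing these two contributions yields exactly the advertised bound
\begin{equation*}
\sum_{\ell = 1}^t X_\ell \;\leq\; 4\sqrt{V_t \ln \tfrac{12t^2}{\delta}} + 12\,b_t \ln \tfrac{12t^2}{\delta},
\end{equation*}
which holds on the same probability-$(1-\delta)$ event, completing the proof.

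There is essentially no obstacle: the only place one needs to be careful is reconciling the notation ($B_t$ vs.\ $b_t$) between the previous lemma's statement and its proof, and making sure the lemma is applied at the fixed horizon $t$ rather than attempting a union bound (the time-uniformity is already inherited from \pref{lemma:simplified_freedman_norm}). This lemma is purely a ``cleaner restatement'' designed to be quoted in the rest of the appendix without repeatedly writing out the iterated logarithm/$\max(V_t,b^2)$ form.
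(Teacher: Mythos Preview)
Your proposal is correct and matches the paper's own approach: both simply invoke \pref{lemma:simplified_freedman_norm} and substitute the explicit form $A_t = 2\sqrt{\ln(12t^2/\delta)}$ into the bound $2\sqrt{V_t}A_t + 3 b_t A_t^2$. The paper's proof additionally remarks on why the iterated-log term from \pref{lemma:simplified_freedman} can be absorbed (via $V_t \leq t b_t^2$), but this is already implicit in the overapproximation step of \pref{lemma:simplified_freedman_norm} that you are citing, so your version is equally complete.
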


\begin{proof}

Notice that by definition $V_t(\mathbf{a}) \leq tb_t^2$. Therefore $\max( \frac{V_t(\mathbf{a})}{b^2_t} , 1) \leq t  $ and

\begin{equation*}
    A_t(\delta') \leq \sqrt{ 2\ln \ln 2t + \ln \frac{12t^2}{\delta'}  }\leq 2\sqrt{\ln \frac{12t^2}{\delta'}} := \tilde{A}_t(\delta')~. 
\end{equation*}
Substituting this upper bound in the statement of Lemma~\ref{lemma:simplified_freedman_norm} yields the result. 
\end{proof}

\begin{lemma}\label{lem:concentration_threshold}
Let $c, \delta \in (0, 1]$  and $t \geq \frac{16}{c^2} \ln^2 \frac{2}{c\delta}$. Then
\begin{align*}
    \frac{\ln \frac{t}{\delta}}{\sqrt{t}} \leq c
\end{align*}
\end{lemma}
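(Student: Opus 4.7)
The plan is to show the stronger statement that the function $\phi(t) := c\sqrt{t} - \ln(t/\delta)$ is nonnegative for all $t \geq t_0$, where $t_0 := \tfrac{16}{c^2}\ln^2\tfrac{2}{c\delta}$. I will do this in two steps: first verify $\phi$ is nondecreasing on $[t_0,\infty)$, then check $\phi(t_0) \geq 0$.

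For monotonicity, I would compute $\phi'(t) = \tfrac{c}{2\sqrt{t}} - \tfrac{1}{t} = \tfrac{c\sqrt{t}-2}{2t}$, which is nonnegative iff $t \geq 4/c^2$. Since $c,\delta \in (0,1]$ we have $2/(c\delta) \geq 2$, so $\ln\tfrac{2}{c\delta} \geq \ln 2 > 1/2$, giving $t_0 \geq 16(\ln 2)^2/c^2 \geq 4/c^2$. Hence $\phi$ is nondecreasing on $[t_0,\infty)$ and it suffices to establish $\phi(t_0)\geq 0$.

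For the evaluation at $t_0$, write $L := \ln\tfrac{2}{c\delta} = \ln 2 + u + v$ with $u := \ln(1/c) \geq 0$ and $v := \ln(1/\delta) \geq 0$. Then $\sqrt{t_0} = (4/c)L$, so $c\sqrt{t_0} = 4L$, while
\[
\ln(t_0/\delta) = \ln 16 + 2\ln L + \ln\tfrac{1}{c^2\delta} = 4\ln 2 + 2u + v + 2\ln L.
\]
Since $4L = 4\ln 2 + 4u + 4v$, the desired inequality $\phi(t_0)\geq 0$ reduces to $2u + 3v \geq 2\ln L = 2\ln(\ln 2 + u + v)$.

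The main (and only mildly delicate) step is to verify this last inequality. The cleanest way is to apply $\ln x \leq x - 1$ to the right-hand side, obtaining
\[
2\ln(\ln 2 + u + v) \leq 2(\ln 2 + u + v) - 2 = 2u + 2v + 2\ln 2 - 2.
\]
Since $2\ln 2 - 2 < 0 \leq v$, this yields $2\ln L \leq 2u + 3v$ as required, concluding $\phi(t_0)\geq 0$ and hence $\phi(t) \geq 0$ for all $t \geq t_0$, which is exactly the claim.
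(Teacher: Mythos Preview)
Your proof is correct and follows essentially the same two-step approach as the paper: reduce to $t=t_0$ by a monotonicity argument, then verify the inequality at $t_0$ via an elementary logarithm bound. The only cosmetic differences are that you work with $\phi(t)=c\sqrt{t}-\ln(t/\delta)$ rather than $f(t)=\ln(t/\delta)/\sqrt{t}$ for the monotonicity step, and you use $\ln x \le x-1$ where the paper uses $\ln x \le x/2$ for the evaluation at $t_0$.
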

\begin{proof}
The function $f(t) = \frac{\ln \frac{t}{\delta}}{\sqrt{t}}$ is non-increasing on $(e^2 \delta, \infty)$. Thus, it is sufficient to show that the inequality holds for $t = 16\frac{\ln^2 \frac{2}{c\delta}}{c^2}$. Since $\ln(x) \leq \frac{x}{2}$ for all $x \in \bbR_+$, we can upper-bound this value of $t$ as
\begin{align*}
    t \leq \frac{16}{c^4 \delta^2}.
\end{align*}
This implies
\begin{align*}
    \ln \frac{t}{\delta} \leq \ln \frac{16}{c^4 \delta^4} = 4 \ln \frac{2}{c \delta} 
    = c \sqrt{16 \frac{\ln^2 (2 / c / \delta)}{c^2}} = c \sqrt{t}
\end{align*}
which proves the claim.
\end{proof}

\begin{lemma}\label{lem:concentration_threshold2}
Let $c, \delta \in (0, 1]$  and $t \geq \frac{4}{c} \ln \frac{2}{c\delta}$. Then
\begin{align*}
    \frac{\ln \frac{t}{\delta}}{t} \leq c
\end{align*}
\end{lemma}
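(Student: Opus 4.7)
The plan is to mimic the proof of Lemma~\ref{lem:concentration_threshold} directly above. The function $f(t) = \ln(t/\delta)/t$ has derivative $(1 - \ln(t/\delta))/t^2$, so $f$ is non-increasing on $[e\delta, \infty)$. Because $c,\delta \in (0,1]$, we have $\ln(2/(c\delta)) \geq \ln 2 > 0$, and thus the threshold $t^\star := \frac{4}{c}\ln\frac{2}{c\delta}$ is at least $\frac{4\ln 2}{c} > e\delta$. Hence it suffices to verify the inequality at $t = t^\star$.

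First I would substitute $u := \ln(2/(c\delta)) \in [\ln 2, \infty)$ so that $t^\star = 4u/c$ and $t^\star/\delta = \frac{4u}{c\delta} = 2e^u u$ (using $e^u = 2/(c\delta)$). The inequality to be checked, $\ln(t^\star/\delta) \leq c t^\star = 4u$, then becomes
\begin{equation*}
\ln 2 + u + \ln u \;\leq\; 4u,
\qquad\text{i.e.,}\qquad \ln u + \ln 2 \;\leq\; 3u .
\end{equation*}

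The last step is the only piece of real work. Define $g(u) = 3u - \ln u - \ln 2$; then $g'(u) = 3 - 1/u$, so $g$ has a unique minimum at $u = 1/3$, where $g(1/3) = 1 + \ln(3/2) > 0$. Therefore $g(u) > 0$ for all $u > 0$, giving the desired inequality and in particular $f(t^\star) \leq c$. Monotonicity of $f$ then extends the bound to all $t \geq t^\star$, completing the proof.

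I do not expect any real obstacle; the main (and only) subtlety is tracking that $u \geq \ln 2$ rather than $u \geq 1$, which is why I handle the inequality $\ln u + \ln 2 \leq 3u$ via the global minimum of $g$ rather than a naive bound like $\ln u \leq u$.
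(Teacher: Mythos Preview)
Your proof is correct and follows essentially the same approach as the paper: both reduce to checking the inequality at the threshold $t^\star = \frac{4}{c}\ln\frac{2}{c\delta}$ via monotonicity of $f(t) = \ln(t/\delta)/t$ on $(e\delta,\infty)$. The only difference is in the verification step: the paper uses the single inequality $\ln x \leq x/2$ to bound $t^\star \leq 4/(c^2\delta)$ and then computes $\ln(t^\star/\delta) \leq 2\ln(2/(c\delta)) = \frac{c}{2}t^\star$, whereas you substitute $u=\ln(2/(c\delta))$ and minimize $g(u)=3u-\ln u-\ln 2$ to verify the equivalent inequality; both are fine, the paper's route being marginally more economical.
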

\begin{proof}
The function $f(t) = \frac{\ln \frac{t}{\delta}}{t}$ is non-increasing on $(e \delta, \infty)$. Thus, it is sufficient to show that the inequality holds for $t = 4\frac{\ln \frac{2}{c\delta}}{c}$. Since $\ln(x) \leq \frac{x}{2}$ for all $x \in \bbR_+$, we can upper-bound this value of $t$ as
\begin{align*}
    t \leq \frac{4}{c^2 \delta}.
\end{align*}
This implies
\begin{align*}
    \ln \frac{t}{\delta} \leq \ln \frac{4}{c^2 \delta^2} = 2 \ln \frac{2}{c \delta} 
    = \frac{c}{2} t \leq c t
\end{align*}
which proves the claim.
\end{proof}

\end{document}